\documentclass{article}

\PassOptionsToPackage{numbers, compress}{natbib}
\usepackage[main, final]{neurips_2026}

\usepackage[utf8]{inputenc} 
\usepackage[T1]{fontenc}    
\usepackage{hyperref}       
\usepackage{url}            
\usepackage{booktabs}       
\usepackage{amsfonts}       
\usepackage{nicefrac}       
\usepackage{microtype}      
\usepackage[dvipsnames]{xcolor}

\usepackage{amsmath,amsfonts,bm}

\def\eqref#1{equation~\ref{#1}}

\def\1{\bm{1}}

\def\A{\operatorname{A}}
\def\SA{\operatorname{SA}}
\def\LN{\operatorname{LN}}

\def\vzero{{\bm{0}}}
\def\vone{{\bm{1}}}

\def\vb{{\bm{b}}}

\def\vt{{\bm{t}}}
\def\vu{{\bm{u}}}
\def\vv{{\bm{v}}}
\def\vw{{\bm{w}}}
\def\vx{{\bm{x}}}

\def\vz{{\bm{z}}}

\def\mA{{\bm{A}}}
\def\mB{{\bm{B}}}

\def\mD{{\bm{D}}}

\def\mG{{\bm{G}}}
\def\mH{{\bm{H}}}
\def\mI{{\bm{I}}}

\def\mK{{\bm{K}}}

\def\mP{{\bm{P}}}
\def\mQ{{\bm{Q}}}
\def\mR{{\bm{R}}}
\def\mS{{\bm{S}}}
\def\mT{{\bm{T}}}

\def\mV{{\bm{V}}}
\def\mW{{\bm{W}}}
\def\mX{{\bm{X}}}
\def\mY{{\bm{Y}}}

\def\mZeros{\vzero} 

\def\mLambda{{\bm{\Lambda}}}

\DeclareMathAlphabet{\mathsfit}{\encodingdefault}{\sfdefault}{m}{sl}
\SetMathAlphabet{\mathsfit}{bold}{\encodingdefault}{\sfdefault}{bx}{n}

\def\sP{{\mathbb{P}}}

\def\sX{{\mathbb{X}}}

\newcommand{\R}{\mathbb{R}}

\DeclareMathOperator*{\argmax}{arg\,max}

\newcommand{\RR}{\mathbb{R}}

\usepackage{hyperref}
\usepackage{url}
\usepackage{enumitem}
\usepackage{subcaption}

\usepackage{bbm}
\usepackage{titlesec}
\usepackage{amsfonts}
\usepackage{hyperref}
\usepackage{mathtools}
\usepackage{tikz}
\usetikzlibrary{matrix}
\usepackage{verbatim}
\usepackage{caption,booktabs}
\usepackage{tabularx,booktabs}
\usepackage{bbm}
\usepackage{wrapfig,lipsum,booktabs}
\usepackage{capt-of}
\usepackage[original]{pict2e}
\usepackage{diagbox}
\usepackage{cancel}
\usepackage{natbib}
\usepackage{amsthm} 
\usepackage{paracol}
\usepackage{multirow}
\usepackage{tcolorbox}

\usepackage{amssymb} 
\usepackage{graphicx} 
\usepackage{wrapfig}
\usepackage{cleveref}
\Crefname{figure}{Fig.}{Figs.}

\usepackage[export]{adjustbox}

\newtheorem{theorem}{Theorem}

\newtheorem{lemma}[theorem]{Lemma}

\usepackage{titletoc}

\definecolor{tumblue}{RGB}{0, 101, 189}
\definecolor{tumdarkblue}{RGB}{0, 82, 147}
\definecolor{tumlightblue}{RGB}{100, 160, 200}
\definecolor{tumlighterblue}{RGB}{152, 198, 234}
\definecolor{tumgray}{RGB}{153, 153, 153}
\definecolor{tumorange}{RGB}{227,114,34}
\definecolor{tumgreen}{RGB}{162,173,0}
\definecolor{tumlightgray}{RGB}{218, 215, 203}

\definecolor{darkblue}{rgb}{0.0,0.0,0.65}
\definecolor{darkred}{rgb}{0.68,0.05,0.0}
\definecolor{darkgreen}{rgb}{0.0,0.5,0.3}
\definecolor{darkpurple}{rgb}{0.47,0.09,0.29}

\colorlet{mycolor}{Gray}

\hypersetup{
   colorlinks=true,
   citecolor=tumblue,
   linkcolor=tumblue,
   filecolor=tumblue,
   urlcolor=tumblue,
}

\title{How Do Transformers Learn to Represent Symmetries?}

\author{
    Eduardo Santos-Escriche\thanks{Equal contribution}\\
    Technical University of Munich\\ Munich Center for Machine Learning 
      \And
    Valerie Engelmayer\footnotemark[1]\\
    Technical University of Munich\\ Munich Center for Machine Learning 
    \AND
    Ya-Wei Eileen Lin \\
    Technical University of Munich\\ Munich Center for Machine Learning 
    \And
    Stefanie Jegelka \\
    Technical University of Munich\\ Munich Center for Machine Learning \\
    Massachusetts Institute of Technology
}

\begin{document}
\setlength{\abovedisplayskip}{3pt}
\setlength{\belowdisplayskip}{3pt}
\frenchspacing

\maketitle

\begin{abstract}
Training Transformer-based architectures with finite data augmentation has become an increasingly popular approach in geometric machine learning.  Despite its empirical success, the interplay between the Transformer architecture, invariance to different symmetries, and augmentation budgets remains underexplored.  In this paper, we study the ability of a vanilla Transformer to learn various symmetries through finite data augmentation for point cloud datasets.  We identify an ordering of increasing \textit{learnability} across the following symmetry groups: \textit{(i)} non-angle-preserving symmetries,  \textit{(ii)} angle-preserving symmetries, and \textit{(iii)} base angle-preserving subgroups, such as translation, rotation, and scale. For the base angle-preserving groups, we further investigate the Transformer's extrapolation behavior and conduct a structural analysis of the trained models, allowing us to identify interpretable mechanisms that induce invariance.  Finally, we extend our analysis to equivariant functions and show that the detected mechanisms for approximate invariance can also provide a key building block for learned equivariance.
Our project page is available at \href{https://transformers-learn-symmetries.github.io/}{this https URL}.
\end{abstract}

\section{Introduction}
\label{sec:introduction}

Data augmentation has been increasingly used as a common technique to induce approximate symmetry in machine learning models, since it provides practitioners with a flexible and model-agnostic approach that can be straightforwardly combined with off-the-shelf architectures.
In particular, it has become popular in various scientific fields such as materials science \cite{gibson2022data}, computational chemistry \cite{elhag2025learning}, structural biology \cite{abramson2024accurate, deng2026mutppi+}, as well as in computer vision \cite{shorten2019survey}, robotics \cite{laskin2020reinforcement} and reinforcement learning \cite{ma2025comprehensive}.
A notable example is the departure of AlphaFold3 \cite{abramson2024accurate} from the hard architectural equivariance of AlphaFold2 \cite{jumper2021highly}, and many other works have recently followed suit, e.g., MCFs \cite{wang2024swallowing}, TransIP \cite{elhag2025learning}, Zatom-1 \cite{morehead2026zatom}, or TABASCO \cite{vonessen2025tabasco}.

While this technique has been extensively studied in the literature assuming an infinite number of augmentations \cite{chen2020group, lyle2020benefits}, this assumption strays far from reality: compute budgets and thus the possible number of augmentations are \textit{finite}. 
A model only sees a limited number of transformed views of each sample, and the resulting symmetry is \textit{approximate}.
Recent theoretical work has begun to study this finite data augmentation regime \cite{tahmasebi2025achieving, tahmasebi2025data}, however, a central question still remains open: \textit{which architectures can efficiently learn which symmetries from finite data augmentation?}

This question is especially relevant for \textit{Transformers} \cite{vaswani2023attentionneed, elhage2021mathematical}.
The popularity and scalability of Transformers, together with their empirical success on downstream tasks, have been widely accepted as sufficient justification for their consistent adoption as the core architecture for many augmentation-based approaches.
However, the interplay between their architecture, data, and augmentation budgets for the learned symmetries has not yet been explored.
Their broad applicability does not address whether Transformers are equally capable of learning all types of symmetries from data augmentation, or whether their architecture is better aligned with some transformation groups than others.

In this paper, we investigate this question 
through a systematic study of a vanilla Transformer’s ability to learn different symmetries under limited data augmentation budgets, 
and examine how these learned symmetries are represented in trained models.
Our main contributions are the following:
\vspace{-1mm}
\begin{itemize}
[leftmargin=2.6em, labelsep=0.5em]
    \item We systematically investigate the ability of a vanilla Transformer to learn invariance to different symmetries from limited data augmentation and identify its special affinity for angle-preserving symmetries and, in particular, \textit{base groups}, e.g., translation, rotation, and scale.  
    \item We evaluate the generalization of learned angle-preserving symmetry to transformation ranges that are not observed during training, demonstrating that Transformers can acquire invariant behaviors that extend to unseen transformation ranges.
    \item We analyze trained Transformers and uncover interpretable mechanisms that implement invariance to each base angle-preserving symmetry, which we further connect to invariant theories and to previously observed empirical phenomena.
    \item We show that the identified mechanisms underlying approximate invariance in Transformers can also provide key building blocks for learning equivariance.
\end{itemize}

\begin{figure}[t]
    \centering
    \includegraphics[width=0.9\linewidth]{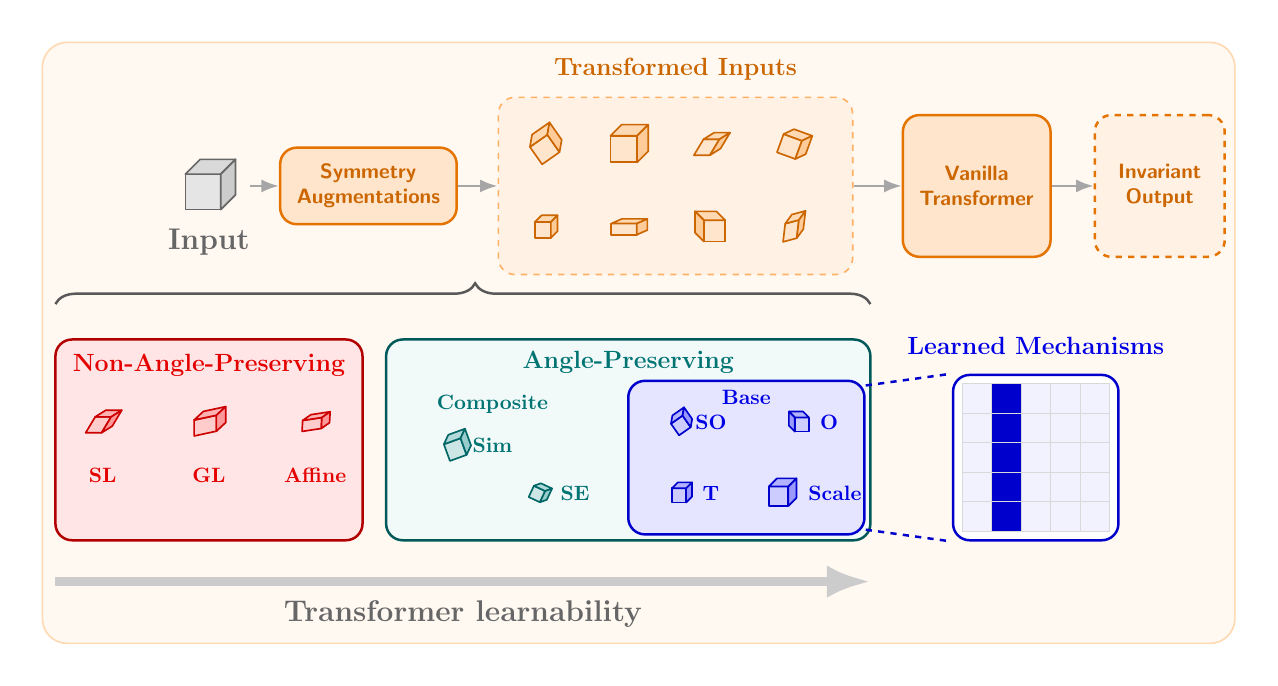}
    \vspace{-3mm}
    \caption{
    \textbf{Overview of our study and main findings.}
    Under finite data augmentation, vanilla Transformers learn invariance more efficiently for angle-preserving symmetries, especially their base groups, than for non-angle-preserving transformations. For these base symmetries, we identify \textit{interpretable mechanisms} underlying the learned approximate invariance in the trained models.
    }
    \label{fig:teaser}
    \vspace{-1em}
\end{figure}

\section{Related work}
\label{sec:related_work}

Incorporating symmetries into machine learning models has been explored for a wide range of symmetry groups, including permutations in sets \cite{zaheerDeepSets2017}, translations \cite{krizhevsky2012imagenet}, roto-translations \cite{fuchs2020se}, or local gauge transformations \cite{cohen2019gauge}. 
Among the multiple equivariant machine learning paradigms, the two most common ones are: \textit{(i)} \textit{hard equivariance through architectural constraints}, where layers are designed to be equivariant
to the desired symmetry 
(e.g., GCNN \cite{cohenGroupEquivariantConvolutional2016} or EMLP \cite{finzi2021practical}), and \textit{(ii)} \textit{approximate equivariance through data augmentation} \cite{hu2021forcenet, gerken2022equivariance, wang2022data, quiroga2020revisiting, chen2020group, xu2023comprehensive}, where symmetry-transformed samples are incorporated into the training distribution.
Beyond these paradigms, frame averaging \citep{puny2022frame, duval2023faenet, lin2024equivariance} seeks to achieve equivariance by averaging over a small input-dependent subset of the full group. Another approach, canonicalization \citep{kaba2023equivariance, ma2024canonicalization, lin2024equivariant, lin2026adaptive}, implements equivariance by defining a map from all inputs that are equivalent under the symmetry group to the same canonical form.
Symmetry can also be incorporated through multi-task learning \cite{elhag2025relaxed}.

Especially related to our work are the numerous extensions of standard Transformer architectures designed to enforce equivariance with respect to different symmetries, including
SE(3) \cite{fuchs2020se, liaoequiformer, liao2023equiformerv2, liao2026equiformerv3}, Lie groups \cite{hutchinson2021lieTransformer}, geometric algebras \cite{brehmer2023geometric, spinner2024lorentz}, local gauge transformations \cite{he2021gauge}, and the combination of continuous translations and Platonic symmetries \cite{islam2025platonic}. 
Such approaches can improve data efficiency and generalization by incorporating known structural priors into the model
\cite{dehmamyAutomaticSymmetryDiscovery2021, cohenGroupEquivariantConvolutional2016, bulusu2021generalization}.
However, enforcing exact equivariance also introduces several limitations, including increased implementation complexity, scalability challenges, and an overly restrictive hypothesis space. The latter is specifically relevant when the underlying symmetry is only approximate, for example because of noise, discretization effects, or other imperfections in the data.
These limitations have motivated a closer examination of the practical benefits of exact equivariance at scale \cite{brehmerDoesEquivarianceMatter2024} and have contributed to a recent shift toward alternative formulations of symmetry-aware models in scientific tasks with large amounts of data.
Besides the data augmentation methods discussed in \S\ref{sec:introduction}, one increasingly explored direction is to encourage equivariance through additional loss terms to explicitly bias the models towards equivariance,
e.g., Orb-v3 \cite{rhodes2025orb}, REMUL \cite{elhag2025learning}, and LieAugmenter \cite{santos2025lieaugmenter}.

\begin{table}[t]
    \centering
    \fbox{%
        \begin{minipage}{\dimexpr\textwidth-2\fboxsep-2\fboxrule\relax}
            \vspace{0.5em}
            \centering
            \renewcommand{\arraystretch}{1.5}
            
            \resizebox{0.98\linewidth}{!}{%
                \begin{tabular}{@{} l @{\hspace{4em}} l @{}}
                    $\operatorname{O}(d) = \{\mQ \in \mathbb{R}^{d \times d} : \mQ^\top \mQ = \mI_d\}$ &
                    $\operatorname{Scale}(d) = \{s \mI_d : s \in \mathbb{R}_{>0}\}$ \\
                    
                    $\operatorname{SO}(d) = \{\mQ \in \mathbb{R}^{d \times d} : \mQ^\top \mQ = \mI_d, \det(\mQ) = 1\}$ &
                    $\operatorname{SE}(d) = \operatorname{T}(d) \rtimes \operatorname{SO}(d)$ \\
                    
                    $\operatorname{T}(d) = \{\vw \in \mathbb{R}^d\}$ &
                    $\operatorname{Sim}(d) = \operatorname{T}(d) \rtimes (\operatorname{Scale}(d) \times \operatorname{SO}(d))$
                \end{tabular}%
            }
            
            \vspace{0.5em}
            \hrule
            \vspace{0.5em}
            
            \resizebox{0.98\linewidth}{!}{%
                \begin{tabular}{@{} l l l @{}}
                    $\operatorname{SL}(d) = \{\mA \in \mathbb{R}^{d \times d} : \det(\mA) = 1\}$ &
                    $\operatorname{GL}(d) = \{\mA \in \mathbb{R}^{d \times d} : \det(\mA) \neq 0\}$ &
                    $\operatorname{Aff}(d) = \operatorname{T}(d) \rtimes \operatorname{GL}(d)$
                \end{tabular}%
            }
            \vspace{0.5em}
        \end{minipage}%
    }
    
    \vspace{0.5em}
    \caption{The symmetry groups studied in this work. Symmetries are separated into angle-preserving (\textit{top}) and non-angle-preserving transformations (\textit{bottom}).}
    \label{tab:symmetry_groups}
   \vspace{-1.5 em}
\end{table}

\section{Finite augmentation reveals a symmetry inductive bias in Transformers}
\label{sec:motivation}

Motivated by recent work increasingly combining Transformer-based models with data augmentation to learn representations that remain unchanged under specific transformations,
we investigate whether Transformers can learn different types of symmetries in controlled settings.
Specifically, we ask whether increasing the number of augmented views during training is sufficient for a Transformer to learn invariance across a broad range of transformation groups (see Fig.~\ref{fig:sym_inductive_bias} for the groups we study).

\noindent\textbf{Architecture.}
Seeking that our analysis faithfully reflects the ability of Transformers to learn these symmetries, we consider a small vanilla Transformer architecture, without positional encodings or masks, consisting of two blocks, each with a single attention head (see Eq.~(\ref{eq:Transformer-block}) for a precise formulation). We explore the effect of incorporating additional architectural components in \S\ref{app:additional_components}.

\noindent\textbf{Symmetries.}
In this analysis, we study multiple symmetry groups that are common across many different domains, which we characterize in Table~\ref{tab:symmetry_groups}.
Invariance to any of those groups $G$ is defined as the output of the Transformer $\Psi$ remaining unchanged under the action of said group on the input domain $\mathcal{X} \subseteq \mathbb{R}^d$, i.e., $\forall g \in G, x \in \mathcal{X}$ we have $\Psi(g \cdot x) = \Psi(x)$.

\begin{figure}
    \centering
    \begin{minipage}[c]{0.45\textwidth}
        \includegraphics[width=\linewidth]{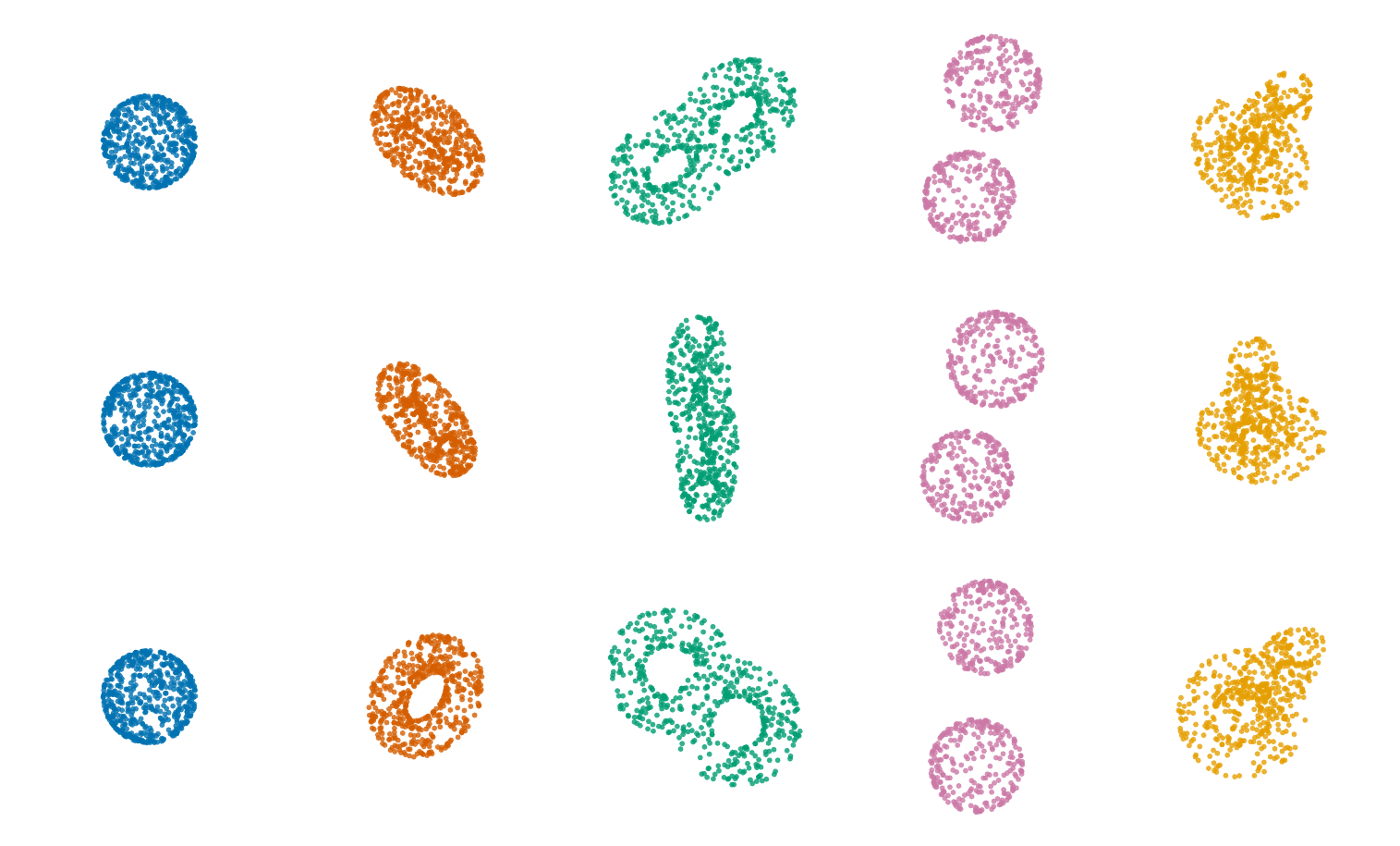}
        \includegraphics[width=\linewidth]{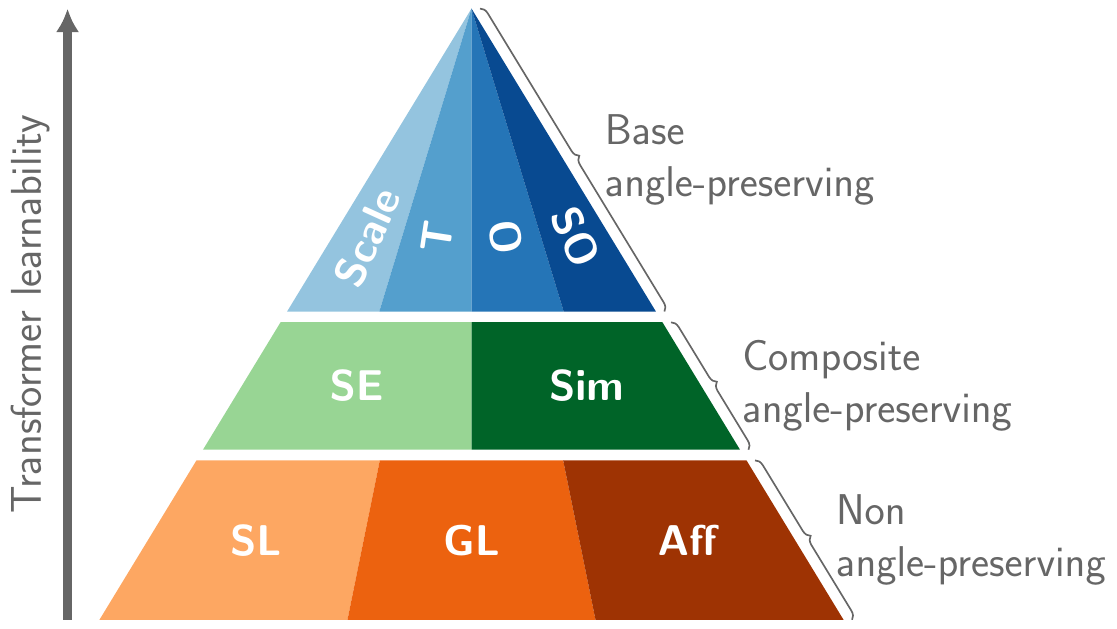}
    \end{minipage}
    \hfill
    \begin{minipage}[c]{0.5\textwidth}
        \begin{minipage}[b]{0.48\linewidth}
        \centering
        \includegraphics[height=4cm]{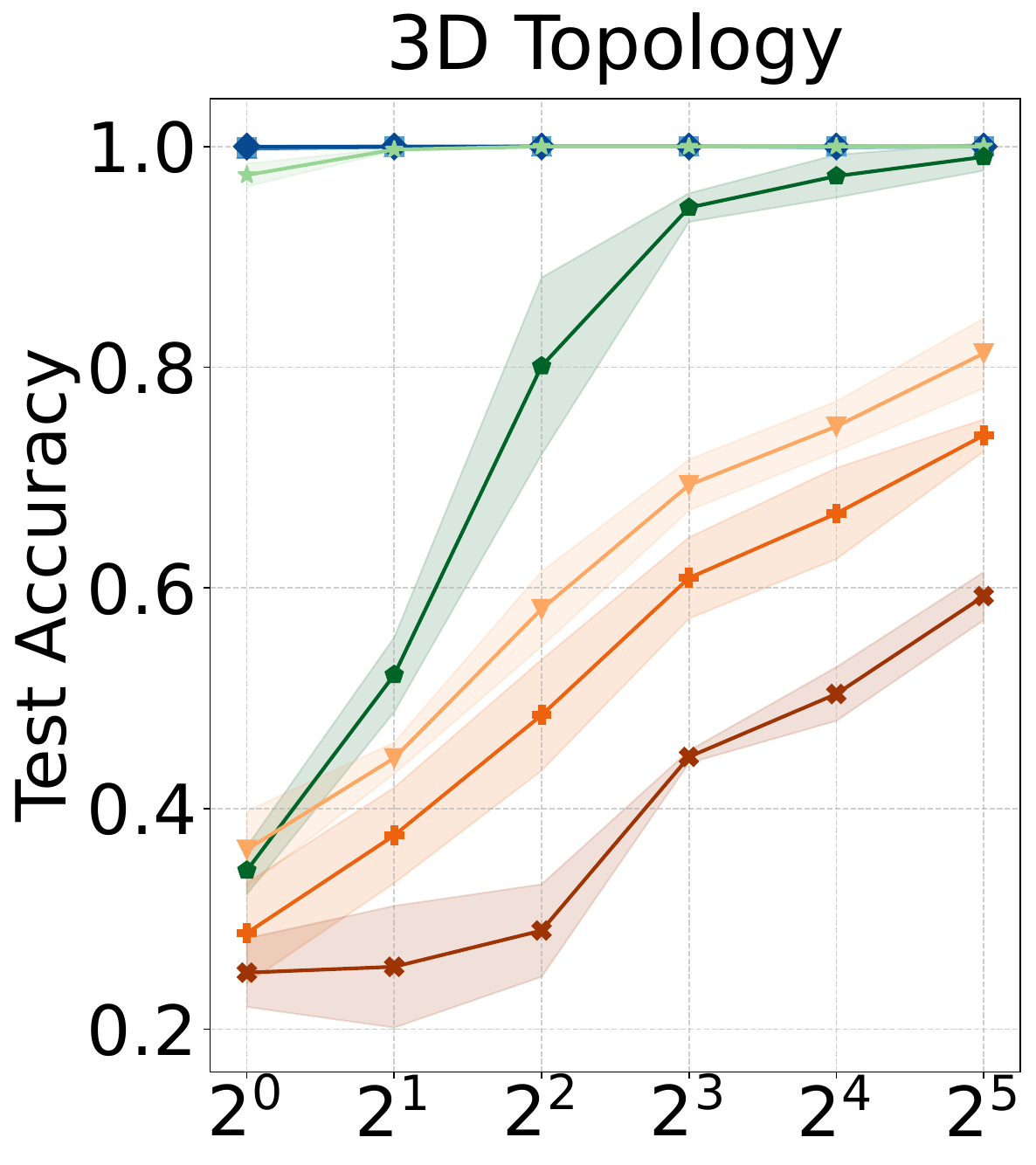}
    \end{minipage}
    \hfill
    \begin{minipage}[b]{0.5\linewidth}
        \centering
        \includegraphics[height=4cm]{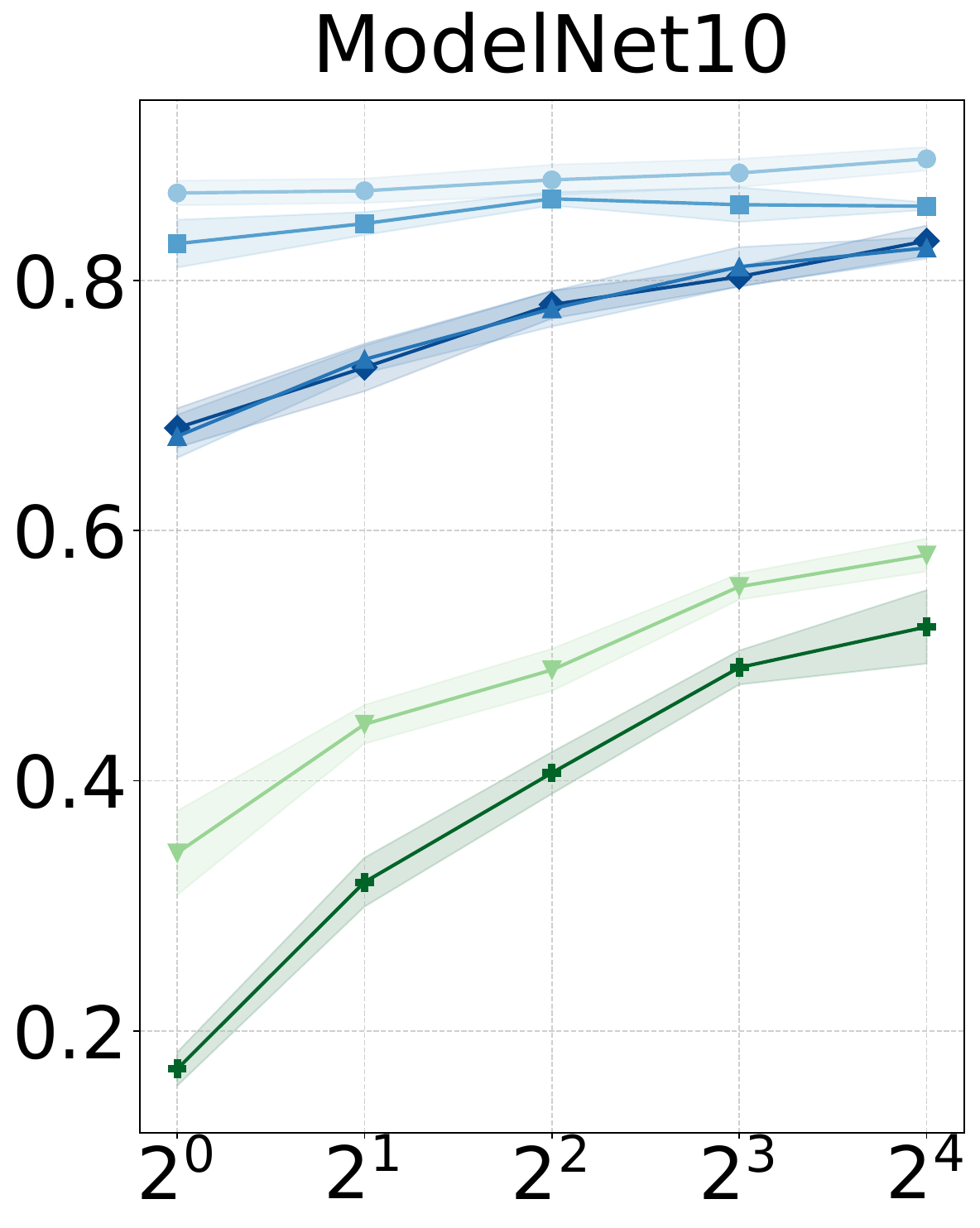}
    \end{minipage}
    \\
    
    \begin{minipage}[b] {0.48\linewidth}
        \centering
        
\includegraphics[height=4cm]{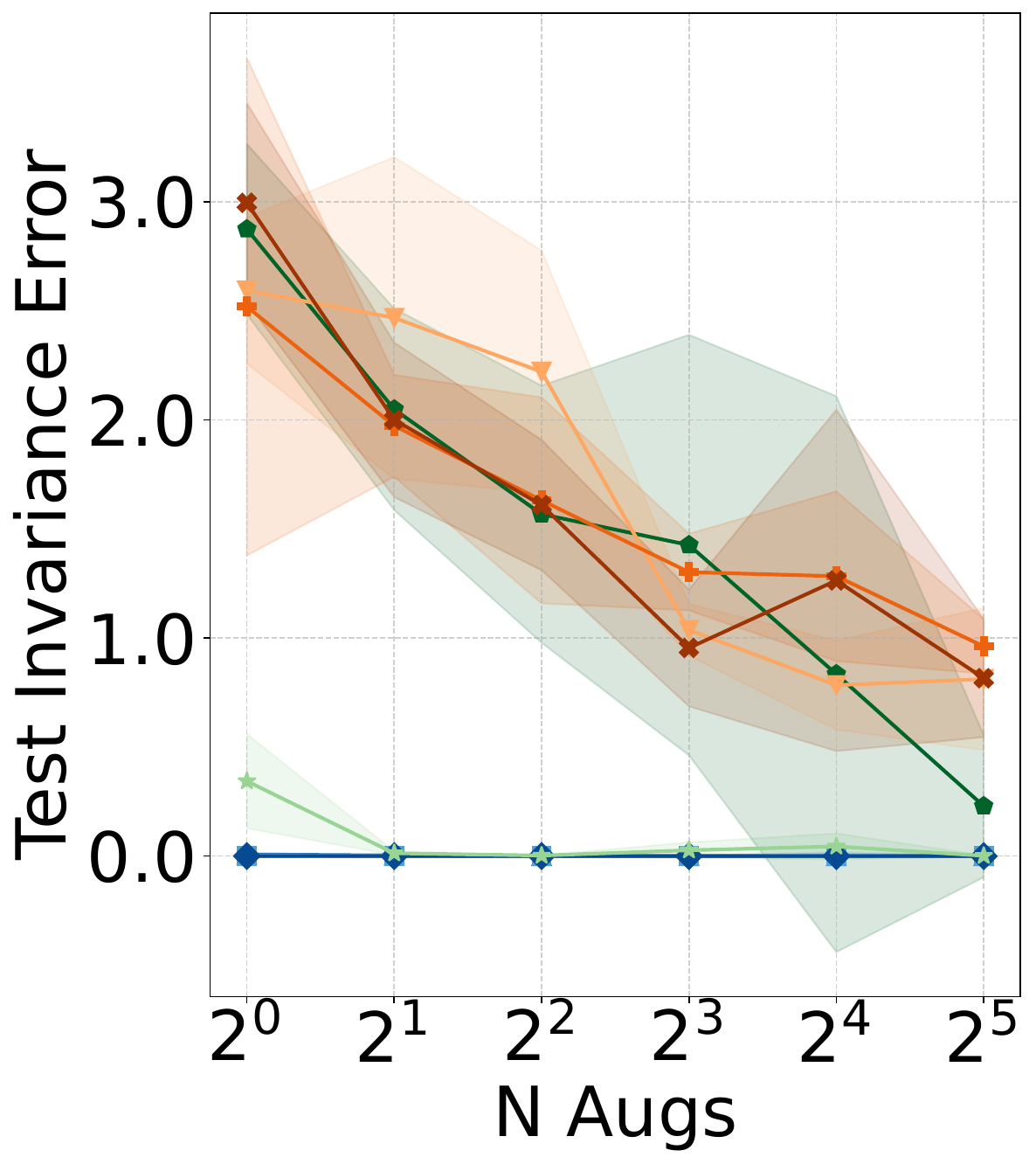}
    \end{minipage}
    \hfill
    \begin{minipage}[b]{0.48\linewidth}
        \centering
        \includegraphics[height=4cm]{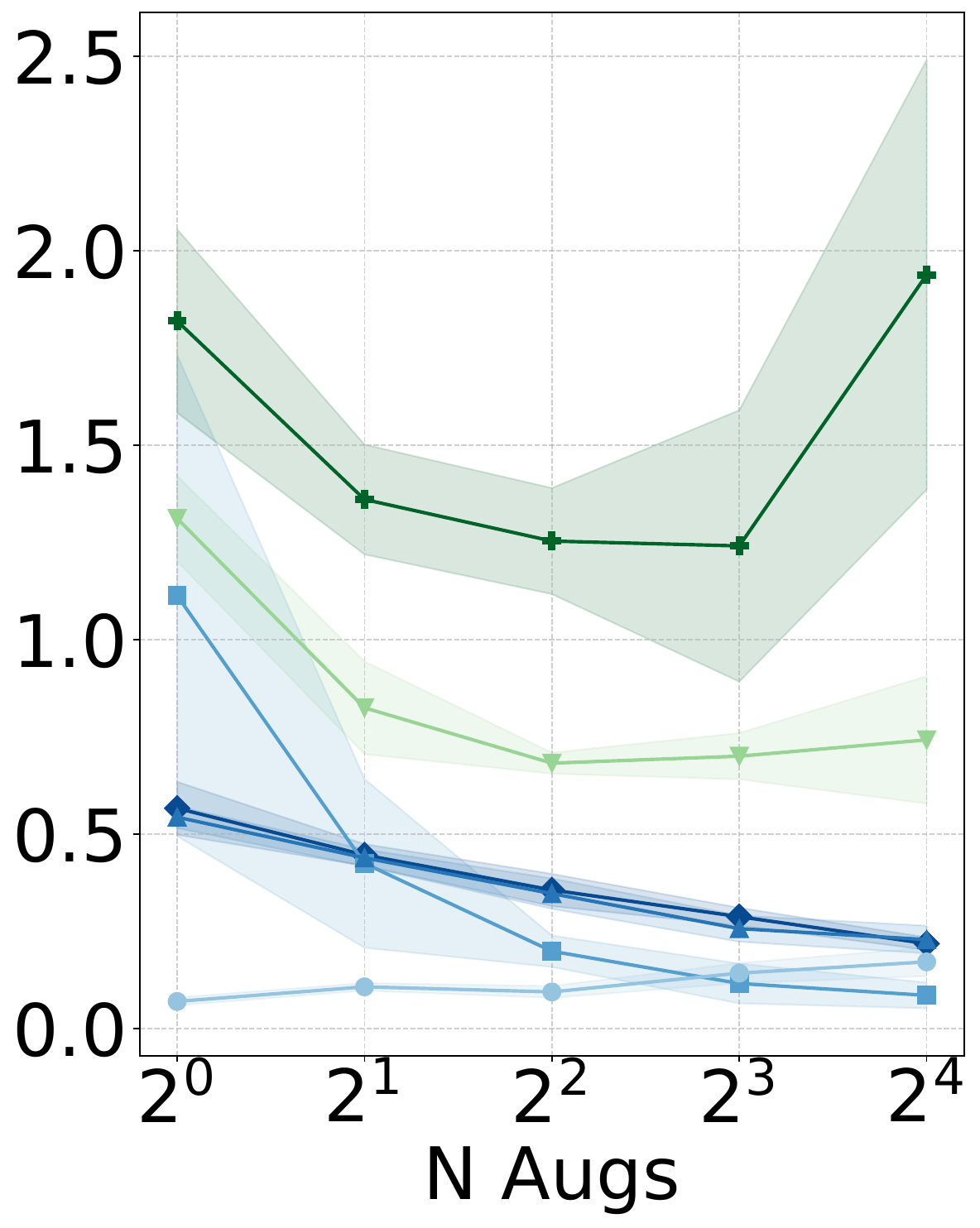}
    \end{minipage}
    \end{minipage}
    \vspace{-2mm}
     \caption{
    (\textit{Left}) Samples from different classes of the 3D Topology dataset (top), and legend for the plots on the right, summarizing our findings (bottom).
    (\textit{Right}) Mean $\pm$ std for test accuracies (top) and invariance errors (bottom) across 5 random seeds. Angle-preserving symmetries are \textit{efficiently} learned by the Transformer, while it struggles with non-angle-preserving ones. 
     }
    \label{fig:sym_inductive_bias}
        \vspace{-1em}
\end{figure}

\noindent\textbf{Datasets.} We study two complementary datasets. 
First, we build a synthetic topology-based dataset, called \textit{3D Topology}.
This dataset consists of five classes of point clouds exhibiting distinct topological properties, including genus, connectivity, and ambient entanglement (see bottom-left of Fig.~\ref{fig:modelnet} for an illustration). 
The labels are invariant to all the considered transformations. 
Thus, it serves to evaluate whether data augmentation can induce invariance to these larger symmetry groups. 
Second, we consider ModelNet10 \cite{wu20153d}, a widely used 3D shape classification dataset containing point clouds for 10 object categories. The task is to predict the category from point coordinates, and the label is invariant to global rigid motions and point permutations. 
Further details on the datasets are in \S\ref{app:experiment_details_motivation}.

\noindent\textbf{Metrics.} To evaluate whether a Transformer learns a given symmetry through data augmentation, we use two complementary criteria. First, we measure the downstream classification accuracy on a test set formed by randomly transforming the prototype samples according to the same symmetry group used during training. Second, we quantify the learned invariance using: 
$
    \frac{1}{N K} \sum_{i=1}^N \sum_{j=1}^K \mathrm{KL} \left( \Psi(x_i) \ \vert\vert \  \Psi( g_j x_i) \right)$, 
which contrasts the class-probability distributions for pairs of original and augmented samples.
We also analyze alternative invariance errors in \S\ref{app:experiment_details_invariance}.
The model is trained only with the classification objective on augmented samples and the KL-based invariance error is used solely as a diagnostic metric to track invariance and is not part of the training objective. High accuracy together with low invariance loss therefore provides evidence that the model has learned an approximately invariant classifier for the corresponding symmetry.

\noindent\textbf{Results.} 
Fig.~\ref{fig:sym_inductive_bias} reveals a clear separation between two families of symmetries.
On the one hand, transformations associated with angle-preserving symmetries and their subgroups, including $\operatorname{SO}(3)$, $\operatorname{O}(3)$, $\operatorname{T}(3)$, $\operatorname{SE}(3)$, $\operatorname{Scale}(3)$, and $\operatorname{Sim}(3)$, are learned \textit{efficiently}. For these groups, only a small number of augmentations is required to achieve high test accuracy and low invariance loss. 
On the other hand, non-angle-preserving transformation groups such as $\operatorname{Affine}(3)$, $\operatorname{SL}(3)$, and $\operatorname{GL}(3)$ require substantially larger augmentation budgets. Even with many augmentations, their accuracy remains lower and their invariance loss higher than what is observed for the angle-preserving groups.

The ModelNet10 results in Fig.~\ref{fig:sym_inductive_bias} show a more detailed stratification.
The base groups $\operatorname{T}(3)$ and $\operatorname{Scale}(3)$, achieve the strongest performance in both test accuracy and invariance loss. Their performance is also stable as the number of augmentations increases, suggesting that these transformations are learned with a small augmentation budget.
The other base groups $\operatorname{SO}(3)$ and $\operatorname{O}(3)$, also achieve strong performance, but require a larger augmentation budget. 
Finally, the Transformer struggles more with compositional groups formed by combining those base transformations, i.e., $\operatorname{SE}(3)$ and $\operatorname{Sim}(3)$. 
Thus, the fact that the model can efficiently learn individual base symmetries does not automatically imply an ability to learn their compositions with the same efficiency.

\begin{tcolorbox}[colback=mycolor!10, left=8pt,
right=8pt,
top=3pt,
bottom=3pt,
colframe=mycolor!70!black,, title=Finding 1: A hierarchy in the learnability of symmetries with Transformers]
Experimental evidence reveals the following ordering of symmetry families from easiest to hardest to learn for a Transformer: 
\begin{enumerate}
    \item base angle-preserving groups
    \item compositional angle-preserving symmetries
    \item non-angle-preserving symmetries
\end{enumerate}
\end{tcolorbox}
Based on these observations, below we focus on the base angle-preserving groups.
This allows us to better isolate how each symmetry is learned and which mechanisms may support it.

\section{Learning base angle-preserving symmetries}
\label{sec:ood}

So far, we have observed that a Transformer can learn base angle-preserving symmetries efficiently when trained with augmentations sampled from the same support as test augmentations. 
To more rigorously investigate its ability to learn to extrapolate within the symmetries, we conduct further experiments testing in- and out-of-distribution generalization on ModelNet10 for each of these groups. 

\begin{wrapfigure}[43]{r}{0.48\textwidth}
\vspace{-5mm}
    \centering
    \includegraphics[width=\linewidth]{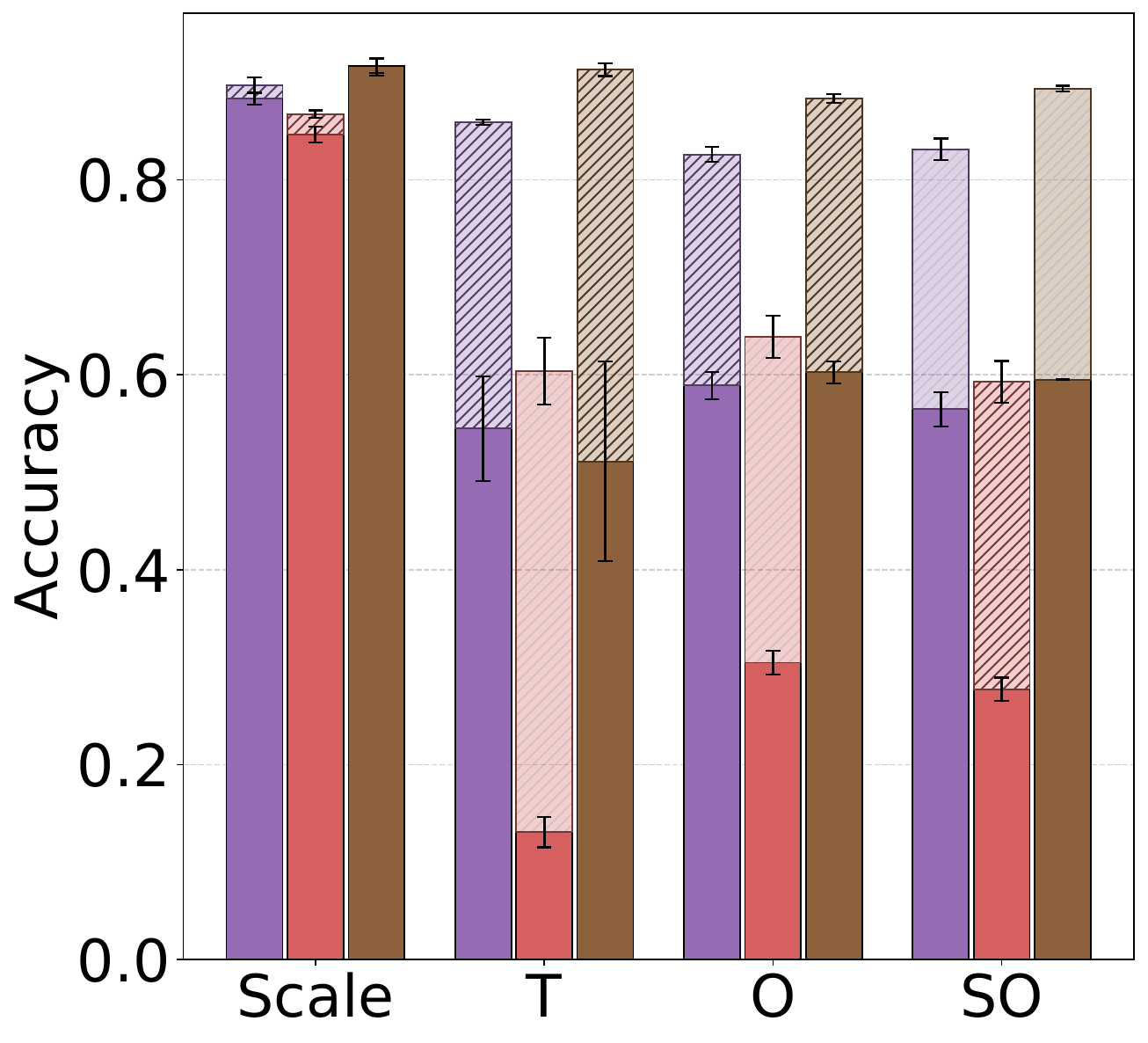}
    
    \hspace{1.2cm}\includegraphics[width=0.7\linewidth]{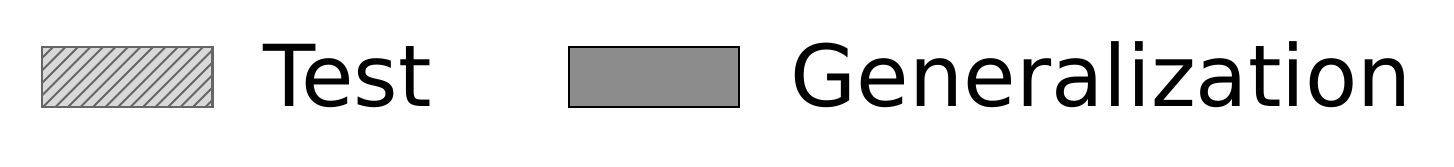}
    \includegraphics[width=\linewidth]{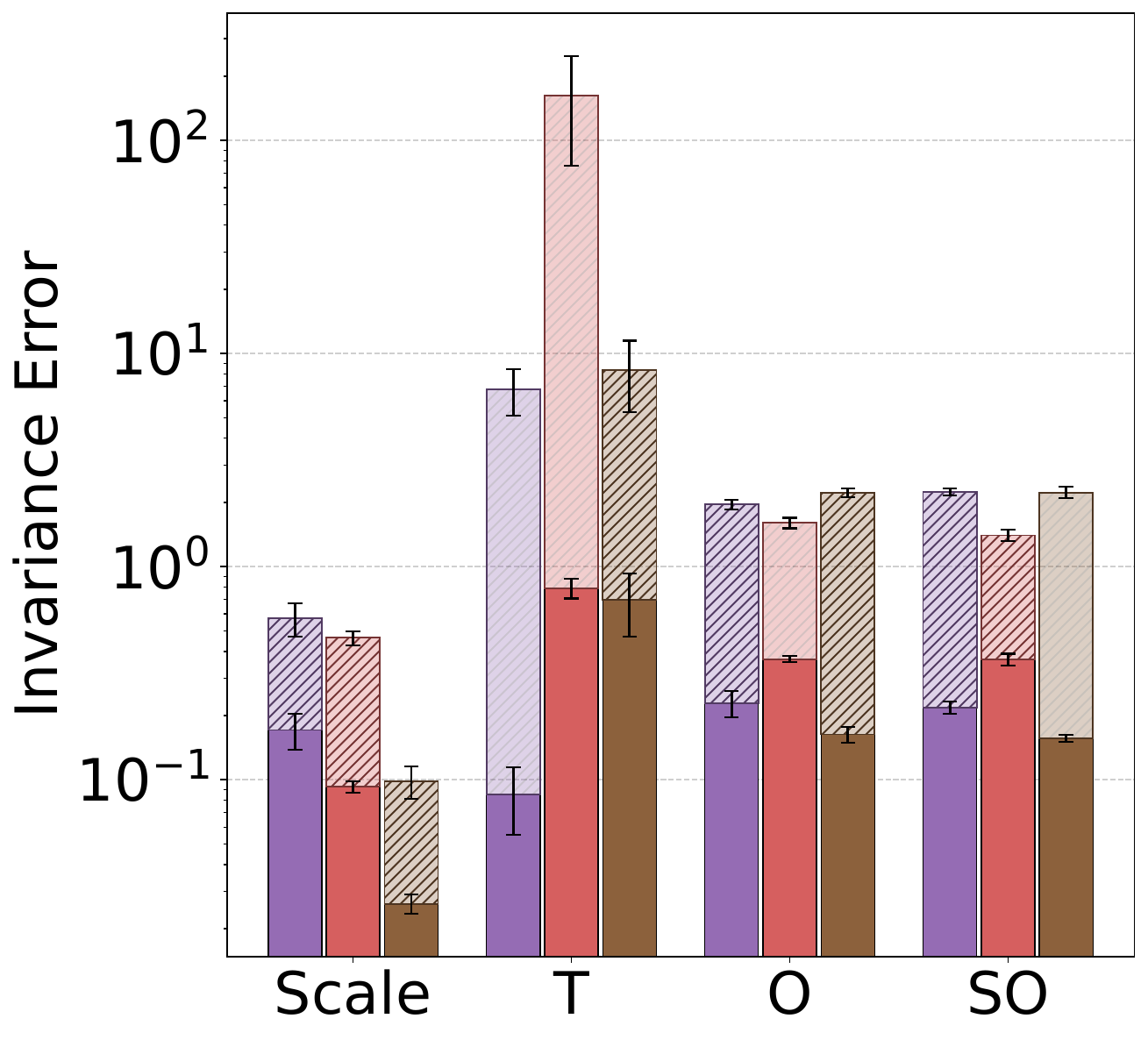}\\
    \hspace{1.2cm}\includegraphics[width=0.7\linewidth]{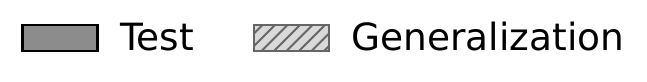}
    \includegraphics[width=\linewidth]{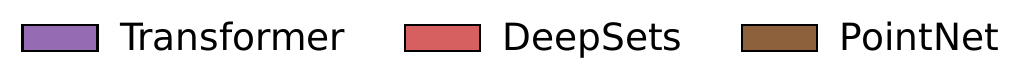}
    \vspace{-6mm}
    \caption{Accuracy (\textit{top}) and invariance error (\textit{bottom}) for different architectures on base angle-preserving symmetries under OOD evaluation. DeepSets exhibits the weakest performance, PointNet the strongest with Transformers as close second. 
    Across architectures, scaling invariance appears easiest to learn, whereas translation invariance appears most difficult.
    }
    \label{fig:modelnet}
\end{wrapfigure}

\noindent\textbf{OOD setting.}
In practice, the augmentation budget may not cover the full range of transformations that the model encounters at test time.
Thus, a learned invariance might need to hold on transformation ranges not seen at training time.
This is especially critical for non-compact groups, e.g., translation and scale, where the magnitude of the augmentation could change drastically at test time.

To explore this scenario, we define two disjoint augmentation ranges corresponding to transformations of different magnitudes.
During training, the model is exposed only to augmentations sampled from the smaller-magnitude range. We then consider two augmented versions of the test set: one using transformations from the same range as training, which we refer to as the \emph{test set}, and one using transformations from a disjoint, larger-magnitude range, which we refer to as the \emph{generalization set}. Further details are provided in \S\ref{app:ood}.

\noindent\textbf{Compared models: DeepSets and PointNet.}
To contextualize the performance of the considered vanilla Transformer architecture, we investigate whether other permutation invariant coordinate-token set architectures, i.e., DeepSets \citep{zaheerDeepSets2017} and PointNet \citep{qi2017pointnet}, can efficiently learn the base angle-preserving groups as well. PointNet is a more specialized point cloud model that extends DeepSets by so-called T-nets, which predict an affine matrix that is multiplied to the current representation matrix.
Thus they can learn to canonicalize orientation with respect to the group $\operatorname{O}$. For architectural details, see \S\ref{app:ood}.
Note that our goal is not to obtain SOTA performance, but to isolate how well each model can learn different invariances.

\noindent\textbf{Results.}
Fig.~\ref{fig:modelnet} shows that scaling poses no major challenge to any of the models in or out of distribution. Differences become more pronounced for the other groups. DeepSets struggles to express any of the other invariances, with by far the lowest test accuracies across groups and poor extrapolation
-- in the case of translation, prediction is almost reduced to random guessing. Introducing T-nets drastically improves the overall performance of PointNet over DeepSets as expected. Maybe more surprisingly, the Transformer falls only slightly short of the accuracy of PointNet for rotations and reflections. Hence the point-wise interaction of self-attention supports these transformations almost as reliably as PointNet's dedicated canonicalization modules.
We observe that, for translations, Transformers exhibit significantly smaller invariance errors and a little higher generalization accuracy than the competing PointNets. 

\begin{tcolorbox}[colback=mycolor!10, left=8pt,
right=8pt,
top=3pt,
bottom=3pt, colframe=mycolor!70!black,, title=Finding 2: Extrapolation on base angle-preserving symmetries]
For base angle-preserving symmetries, the Transformer learns to become invariant to the group structure beyond the limited augmentation ranges seen during training.
\end{tcolorbox}

\section{Mechanisms that can induce invariance to angle-preserving symmetries}
\label{sec:mechanisms}

Motivated by \textit{Finding 2},
we study how Transformers can learn features that are invariant to rotation and reflection, translation, and scaling. 
For each group, we train ablations with only the modules involved in the respective mechanism and verify that they can indeed become invariant. In addition, we identify the signature of each mechanism in the activations of full vanilla Transformers trained with augmentations from the corresponding group.
We refer to \S\ref{app:signature_mechanisms} for a comprehensive overview. 

The main components of the Transformer are central to our analysis.
We first define the self-attention module. 
For an input $\mX\in \RR^{n \times d}$,  the self-attention module is defined by 
\begin{equation}
    \begin{split}
        \SA(\mX) = \A(\mX)\mV, \quad\quad \A(\mX) = \operatorname{softmax} \left({\mQ\mK^\top}/{\sqrt{d}}\right),
    \end{split}
\end{equation}
where $
    \mQ = \mX\mW_Q, \mK = \mX\mW_K, \mV = \mX\mW_V
$, 
with parameters $\mW_Q, \mW_K,\mW_V \in \R^{d \times d}$. 
We next consider LayerNorm, which normalizes each token independently. For the $i$-th token, it computes
\begin{equation}
    \begin{split}
	\LN(\mX_i) = \vw \odot \frac{\mX_i - \mu_i}{\sqrt{\sigma_i^2 + \epsilon}} + \vb, \qquad
	\mu_i = \frac{1}{d}\sum\nolimits_{j=1}^{d} \mX_{ij}, \qquad
	\sigma_i^2 = \frac{1}{d}\sum\nolimits_{j=1}^{d}(\mX_{ij} - \mu_i)^2 
    \end{split}
\end{equation}
with learnable parameters $\vw,\vb \in \R^d$, where $\odot$ denotes elementwise multiplication and $\epsilon>0$ for numerical stability.
Finally, our analysis also relies on the residual connections. 
Let $\operatorname{MLP}$ denote a two-layer MLP with $\operatorname{ReLU}$ activations. 
A single Transformer block then computes
\begin{equation}
    \mY = \mX + \SA(\LN(\mX)), \qquad \mX' = \mY + \operatorname{MLP}(\LN(\mY)).
    \label{eq:Transformer-block}
\end{equation}
A full Transformer sequentially forwards through an in-projection, multiple such blocks, and a final out-projection. 
To isolate the mechanisms identified in our analysis, we use a single attention head and omit masks and positional encodings throughout this work. 
We present additional experiments with different positional encoding methods and multi-head attention setup  in \S\ref{app:additional_components}. 

By construction, this architecture is permutation equivariant. 
We next study how it can learn invariance to rotations, translations, and scaling.

\subsection{Rotation and reflection: stable attention over scalar relations}

We first  consider symmetries that fix the origin. Let $
    \operatorname{O}(d) = \{\mR \in \RR^{d \times d} : \mR^\top \mR = \mI_d\}$
act on a point cloud $\mX \in \RR^{n \times d}$ by right multiplication, $\mX \mapsto \mX \mR$.
This group contains both rotations
$\operatorname{SO}(d)$ and reflections.
A natural way to  build rotation- and reflection-invariant features is to use scalar quantities. By the first fundamental theorem of invariant theory for $\operatorname{O}(d)$, polynomial scalar functions of vector inputs that are invariant under $\operatorname{O}(d)$ can be written as functions of pairwise inner products. Equivalently, for point clouds, the Gram matrix $\mG(\mX) = \mX\mX^\top$ contains the canonical scalar information preserved by rotations and reflections. This view is in line with Villar et al. (2021) \cite{villar2021scalars}, where the invariant scalar products and scalar contractions provide a universal way to parameterize invariant and equivariant polynomial models for $\operatorname{O}(d)$ and related groups.

A Transformer can expose this type of scalar information through its attention scores. Indeed, the pre-softmax attention scores can be written as 
\begin{equation}
        \mQ\mK^\top/\sqrt{d}
    =
    \mX \mB \mX^\top /\sqrt{d},
    \qquad
    \mB = \mW_Q\mW_K^\top.
\end{equation}
Thus each score is a learned bilinear pairing $\vx_i^\top \mB \vx_j$. In the special case $\mB = \alpha \mI_d$, the scores are proportional to the Gram matrix and are invariant to every $\mR \in \operatorname{O}(d)$. More generally, exact isotropy of $\mB$ is not necessary on a finite augmentation orbit: because attention is row-wise softmax-normalized, only score differences within each row must be preserved.

\begin{lemma}
\label{lem:orbit-wise-attention-invariance}
    Let $\mX\in\RR^{n\times d}$ and let $\mB = \mW_Q\mW_K^\top\in\RR^{d\times d}$ and $\mS_{\mB}(\mX)
    =
    \frac{1}{\sqrt d}\mX\mB\mX^\top$. 
    Let $\mP_n
    =
    \mI_n - \frac{1}{n}\vone_n\vone_n^\top$ denote the centering matrix acting on the columns of a score matrix. Then, for
any $\mR\in\operatorname{O}(d)$, the following are equivalent: $
     \A(\mX\mR)=\A(\mX)$, and $
        \left(
        \mS_{\mB}(\mX\mR)-\mS_{\mB}(\mX)
    \right)\mP_n
    =
    \mZeros$.
Equivalently, we have
\begin{equation}
        \mX
    \left(
        \mR\mB\mR^\top-\mB
    \right)
    \mX^\top
    \mP_n
    =
    \mZeros .
\end{equation}
In row-wise notation, this means that for all query indices $i$ and all key indices $j,k$, $    \vx_i^\top\left(\mR\mB\mR^\top-\mB\right) (\vx_j-\vx_k)=0 .$
Thus attention-map invariance along an augmentation orbit only requires the row-wise score differences to be invariant. 
\end{lemma}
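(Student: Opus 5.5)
The proof reduces to the shift invariance of the softmax, applied row by row. First we record the key fact. For a row vector $\vz\in\RR^n$ and $\vu\in\RR^n$, $\operatorname{softmax}(\vz)=\operatorname{softmax}(\vu)$ holds if and only if $\vz-\vu=c\,\vone_n^\top$ for some scalar $c$.

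The "if" direction is immediate, since a common factor $e^{c}$ cancels in the normalization. For the converse, equality of the softmax outputs gives $e^{z_j}/\sum_k e^{z_k}=e^{u_j}/\sum_k e^{u_k}$ for every $j$. Taking logarithms yields $z_j-u_j=\log\sum_k e^{z_k}-\log\sum_k e^{u_k}$, which does not depend on $j$.

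Next we translate "constant along each row" into the centering matrix. A matrix $\mD\in\RR^{n\times n}$ has every row a multiple of $\vone_n^\top$ exactly when $\mD\mP_n=\mZeros$. Indeed, $\mD\mP_n=\mD-\frac1n(\mD\vone_n)\vone_n^\top$, so $\mD\mP_n=\mZeros$ says each row equals its own mean times $\vone_n^\top$.

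Now apply the first step row-wise to $\mD=\mS_{\mB}(\mX\mR)-\mS_{\mB}(\mX)$, recalling that $\A(\mX)=\operatorname{softmax}(\mS_{\mB}(\mX))$ row-wise, since $\mQ\mK^\top/\sqrt d=\mX\mB\mX^\top/\sqrt d$. This gives the equivalence $\A(\mX\mR)=\A(\mX)\iff \mD\mP_n=\mZeros$. Expanding, $\mS_{\mB}(\mX\mR)=\frac1{\sqrt d}\mX\mR\mB\mR^\top\mX^\top$, so $\mD=\frac1{\sqrt d}\mX(\mR\mB\mR^\top-\mB)\mX^\top$. Dropping the nonzero factor $1/\sqrt d$ gives the displayed matrix identity.

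For the row-wise form, write $\mM=\mR\mB\mR^\top-\mB$. The $(i,j)$ entry of $\mX\mM\mX^\top$ is $\vx_i^\top\mM\vx_j$. Row $i$ is constant exactly when, for all $j,k$, we have $\vx_i^\top\mM\vx_j=\vx_i^\top\mM\vx_k$, that is, $\vx_i^\top\mM(\vx_j-\vx_k)=0$.

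The only step needing care is the converse direction of the softmax shift invariance, namely the injectivity of softmax modulo constants, and it is handled by the logarithm argument above. Orthogonality of $\mR$ is not actually used beyond writing the action as $\mX\mapsto\mX\mR$; the argument works for any invertible $\mR$.
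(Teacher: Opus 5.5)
Your proof is correct and follows the paper's argument essentially step for step. Both use softmax equality modulo a row-constant shift (with the same logarithm argument for the converse), the centering-matrix characterization of row-constant matrices, the expansion of $\mS_{\mB}(\mX\mR)-\mS_{\mB}(\mX)$ with the nonzero factor $1/\sqrt d$ removed, and the entrywise reading of the row-wise condition. Your closing remark is also right, and in fact the identity $(\mX\mR)\mB(\mX\mR)^\top=\mX\mR\mB\mR^\top\mX^\top$ does not require $\mR$ to be invertible either.
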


We next test whether trained models exhibit this scalar-score mechanism.
We train a vanilla Transformer on ModelNet10 with $16$ rotated augmentations per sample and evaluate its attention maps across the augmentation orbit of held-out inputs. Fig.~\ref{fig:rotation} shows representative attention maps for several inputs and their rotations. The maps remain nearly unchanged across rotations,  and the corresponding invariance errors are small.
 This behavior matches the activation-level signature predicted by Lemma~\ref{lem:orbit-wise-attention-invariance}: the learned score differences are approximately preserved under rotations of the input.

\begin{figure}[t]
    \centering
    
    \includegraphics[width=0.2\linewidth]{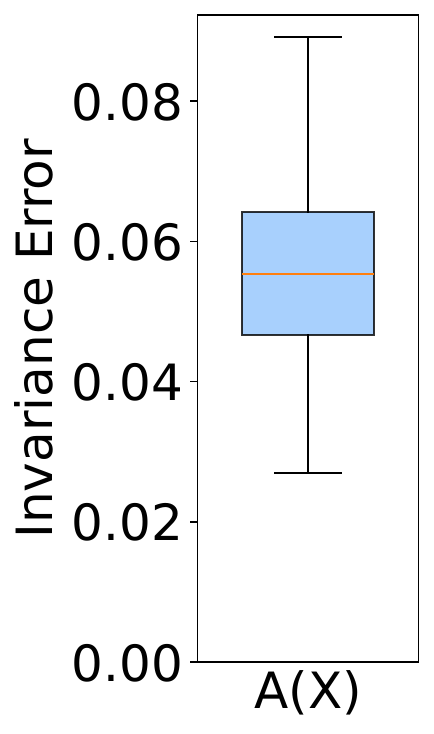}
    \hfill
    \includegraphics[width=0.75\linewidth]{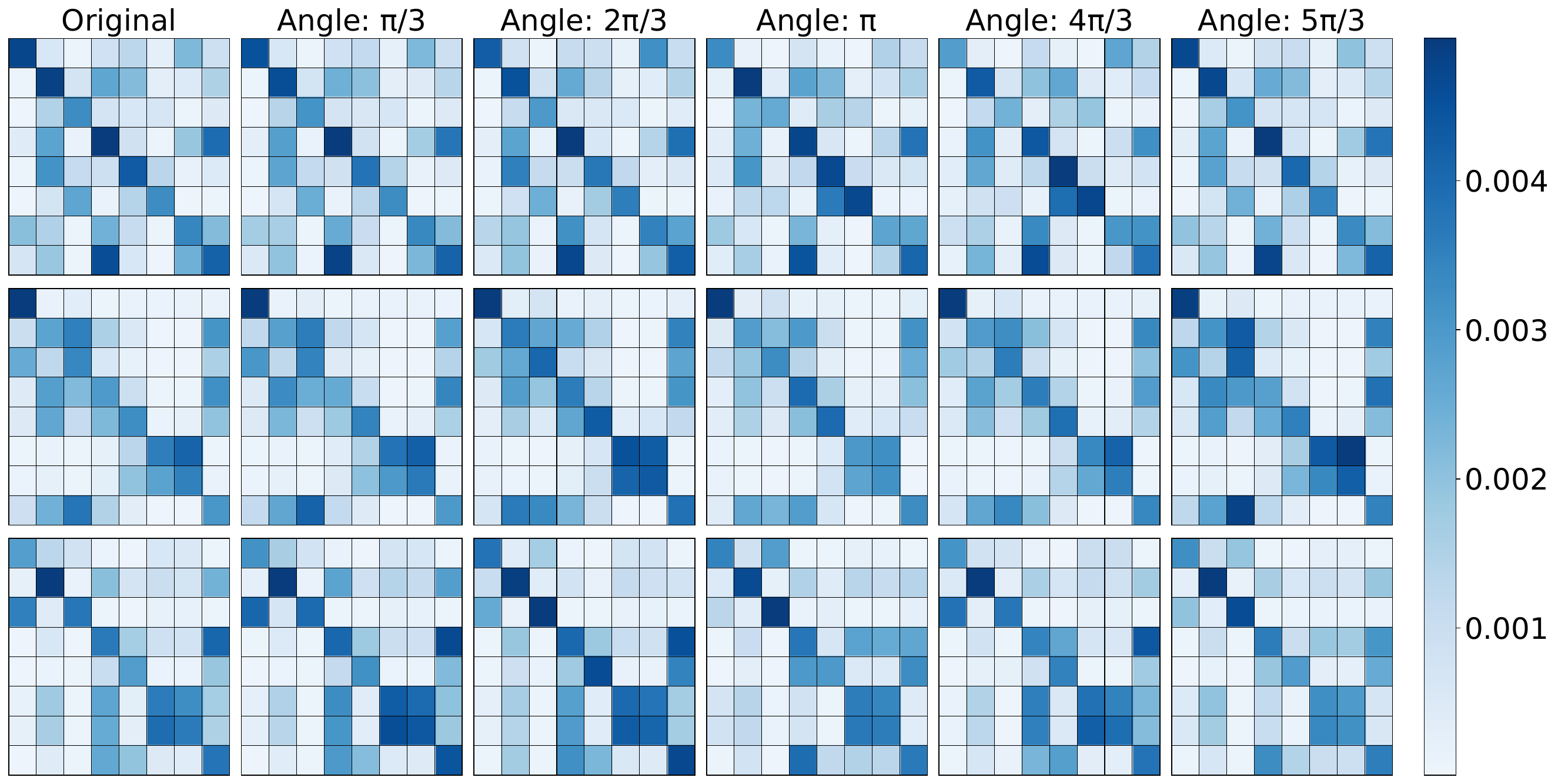}
        \vspace{-2mm}
     \caption{Attention maps of a full Transformer trained on ModelNet10 augmented with 16 rotational augmentations per sample remain highly stable across augmentations.
     (\textit{Left}) Invariance error of $\operatorname{A}$ across the entire test set. (\textit{Right}) Examples of the first $8\times8$  attention scores. Rows correspond to different samples, and columns correspond to rotations by the indicated angles.
     }
             \vspace{-1em}
    \label{fig:rotation}
\end{figure}

\subsection{Translation: constructing relative coordinates}

\begin{wrapfigure}[21]{r}{0.4\textwidth}
\vspace{-5 mm}
    \centering
        \includegraphics[width=\linewidth]{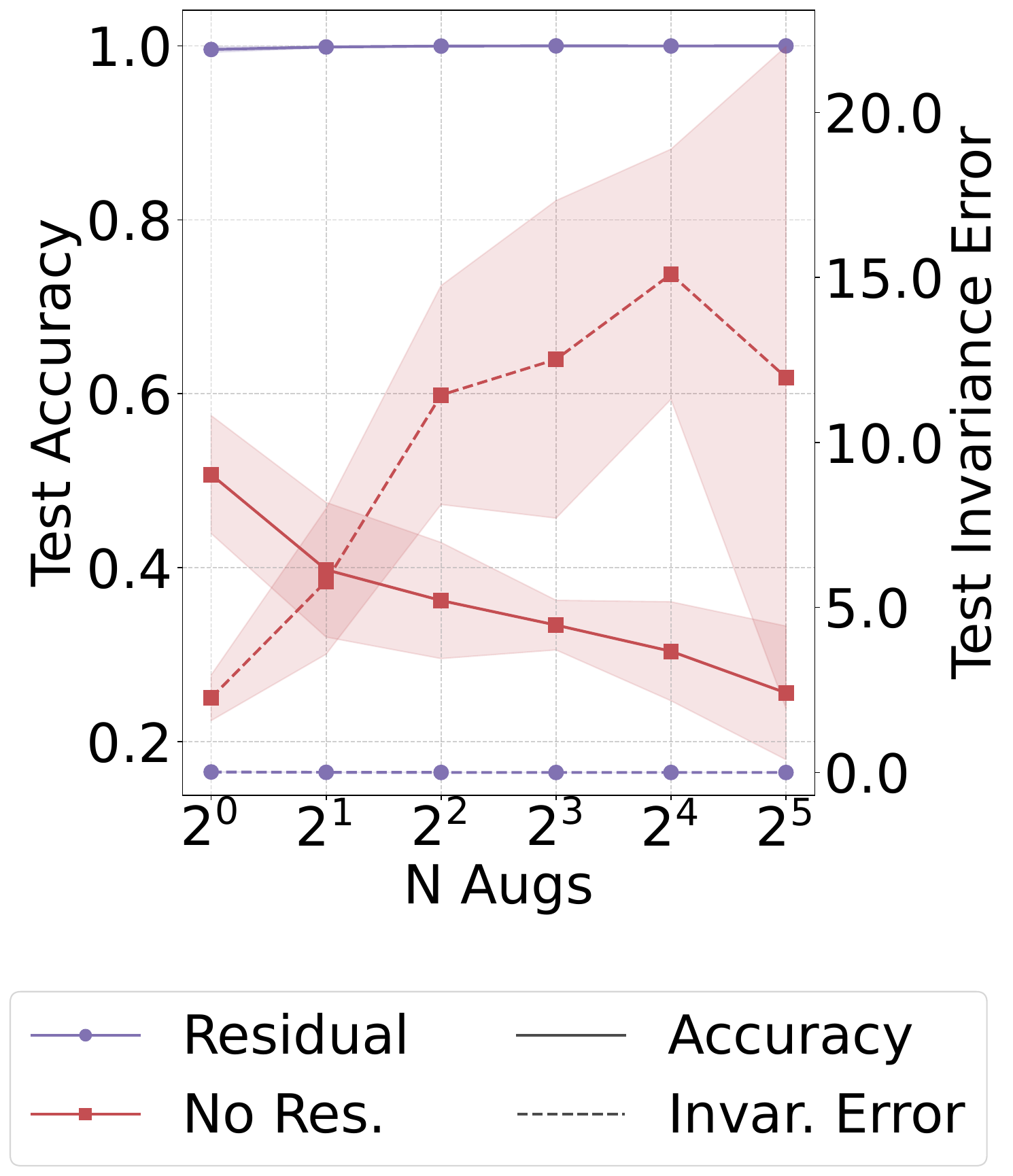}
            \vspace{-6mm}
     \caption{A model with 2 blocks of $\mX + \SA(\LN(\mX))$ (\emph{Residual}) can learn to be translation invariant (dashed) and solve the 3D Topology task (solid lines), but $\SA(\LN(\mX))$ (\emph{No Res.}) cannot.
     }
     
\vspace{-12 mm}
    \label{fig:translation-ablation}
\end{wrapfigure}

Several mechanisms can make a neural network invariant to translation. 
For example,  Villar et al. (2021) \cite{villar2021scalars} show that subtracting one of the points as a landmark to obtain relative coordinates, $f(\vx_1, \dots,\vx_n) = \Tilde{f}(\vx_2 -\vx_1, \dots,\vx_n -\vx_1)$, is a universal approximator when $\Tilde{f}$ is sufficiently expressive.
However, they do not test whether Transformers learn this mechanism.
Alternatively, Cordonnier et al. (2020) \cite{Cordonnier2020On} show that when using relative positional encodings, self-attention can become convolution-like by focusing attention on local neighborhoods.
 Since many forms of positional encoding are used in practice, we remain agnostic to a specific choice and omit positional encodings from our analysis.
A third option could be to center the input by setting $\mW_Q = \mW_K = \mZeros_{d \times d}$ and $\mW_V = - \mI_d$ in a self-attention block, which would deduct the mean from the residual stream.

\begin{figure}[t]
    \centering
    \begin{minipage}[c]{0.4\linewidth}
        \centering
        \includegraphics[width=\linewidth]{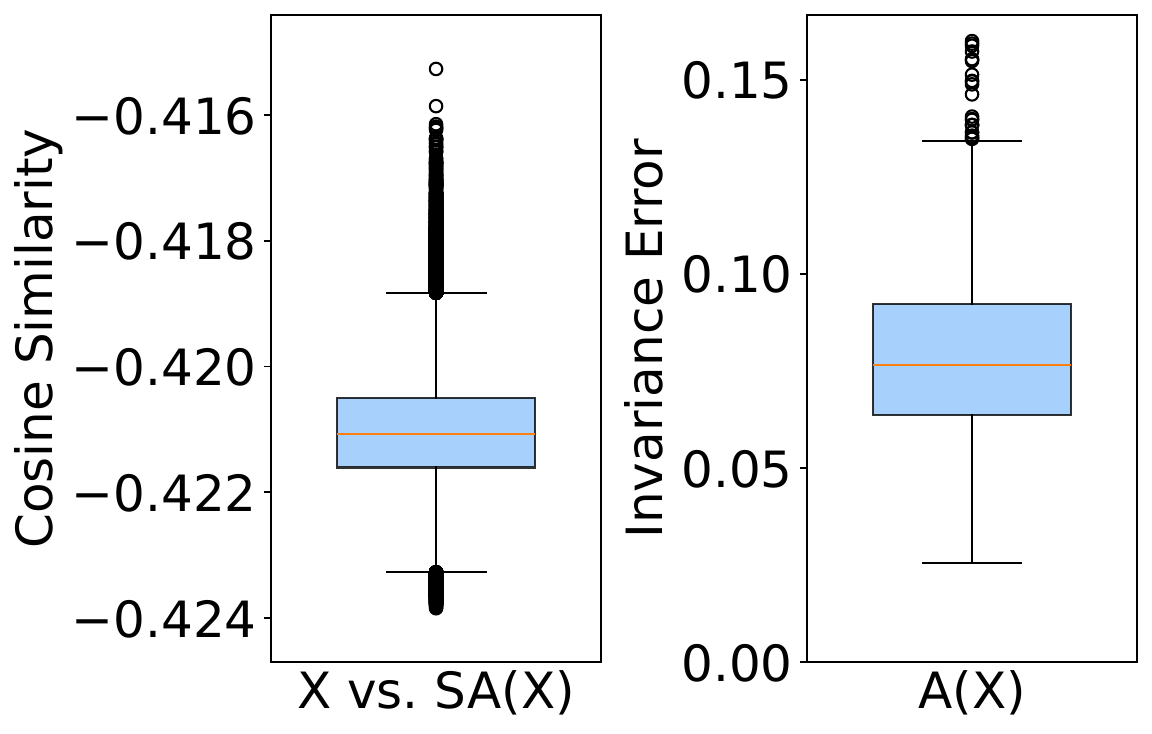}
    \end{minipage}
    \hfill
    \begin{minipage}[c]{0.55\linewidth}
        \centering
        \includegraphics[width=\linewidth]{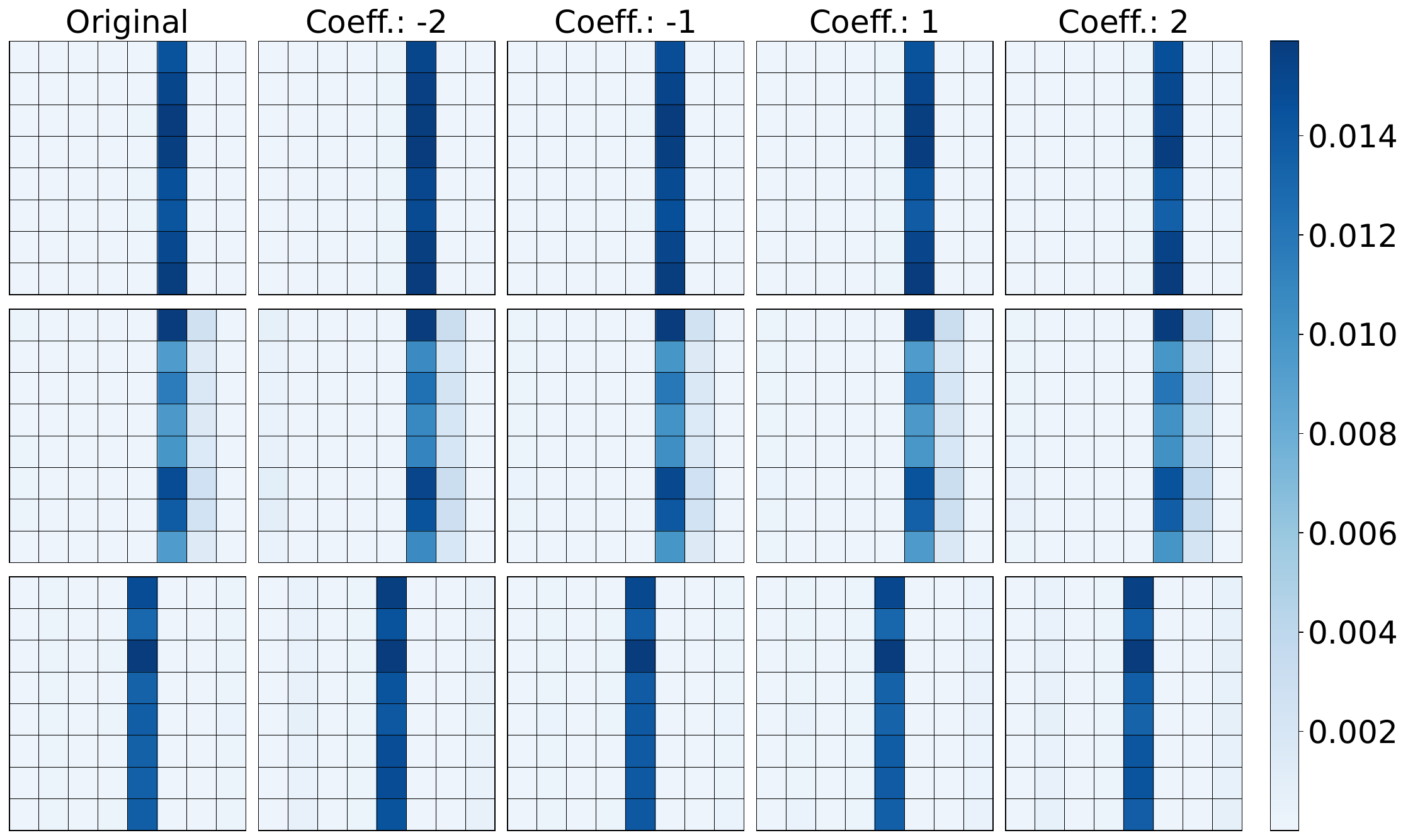}
    \end{minipage}
            \vspace{-2mm}
     \caption{Results for the first block of a full Transformer trained on ModelNet10 with 16 translation augmentations per sample. 
     (\textit{Left}) Cosine similarity between $\mX$ and $\SA(\mX)$ is strictly negative, indicating that the attention output has signs opposite to those of the residuals.
     (\textit{Middle}) The attention maps have low invariance error across the full test set.
     (\textit{Right}) Each row shows the first $8\times8$ attention scores of a sample and four of its translated augmentations spanning the training range. A translation coefficient of, e.g., 2 means the points are shifted by 2 in the positive direction for each axis. Samples from the same orbit often focus their attention on the same landmark point.
     }
    \label{fig:translation-mechanism}
\end{figure}

\begin{figure}[t]
    \centering
    \begin{minipage}[c]{0.45\linewidth}
        \centering
        \includegraphics[width=\linewidth]{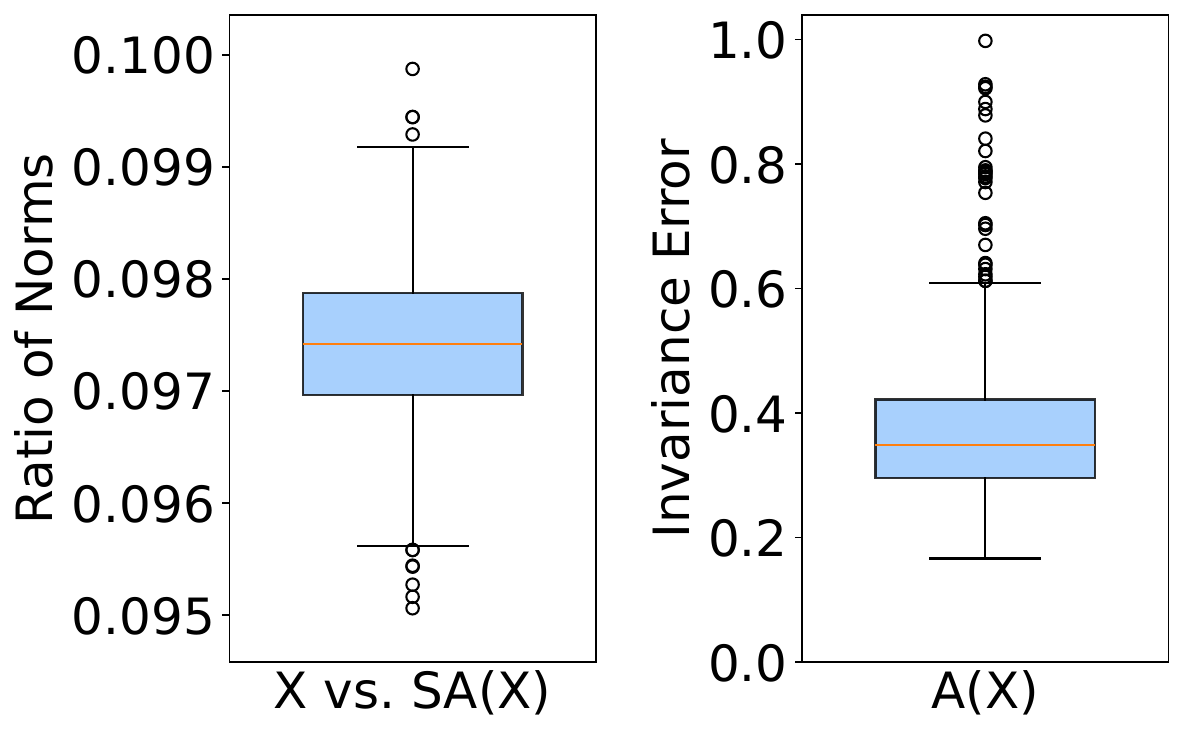}
    \end{minipage}
    \hfill
    \begin{minipage}[c]{0.5\linewidth}
        \centering
        \includegraphics[width=\linewidth]{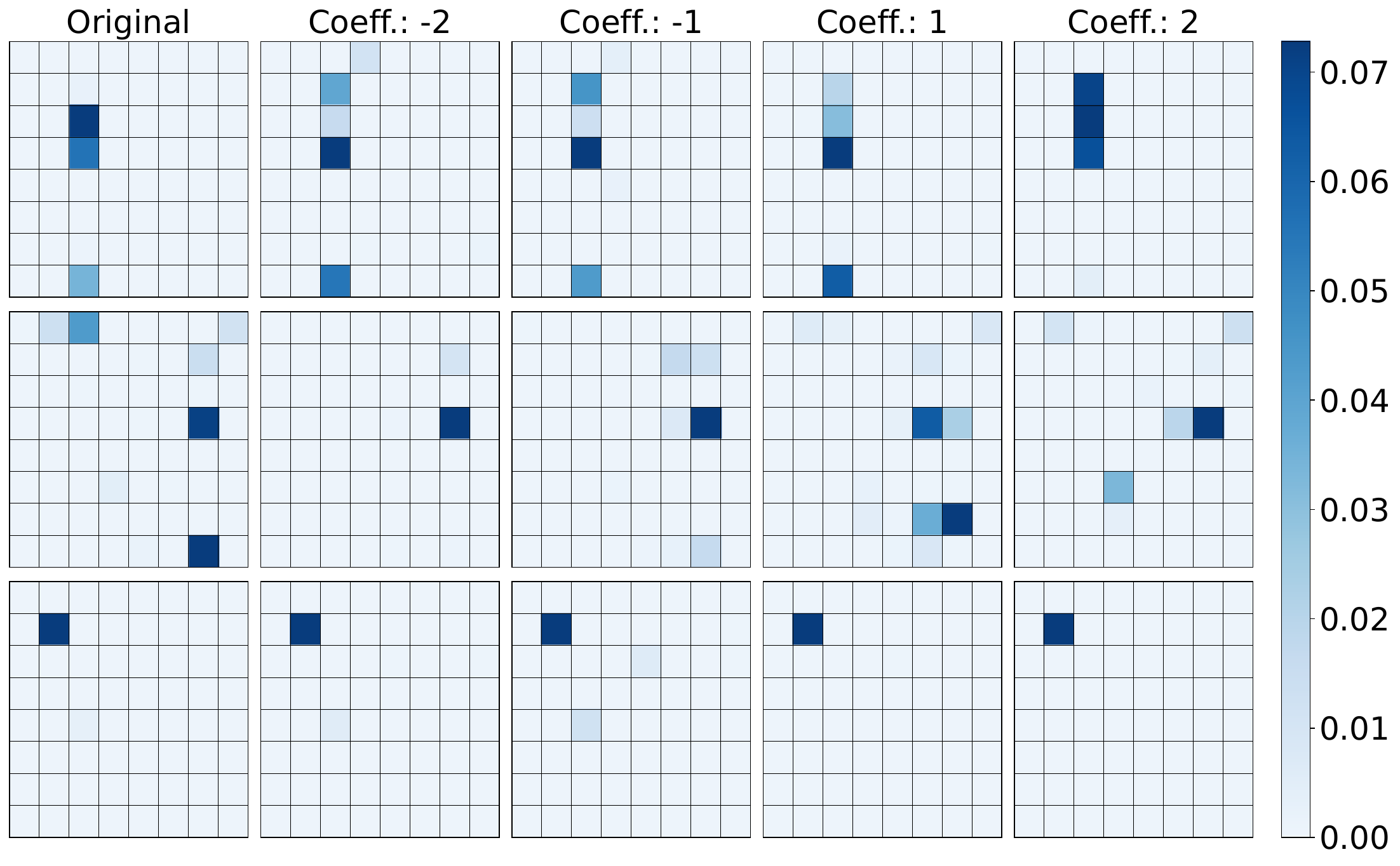}
    \end{minipage}
                \vspace{-2mm}
     \caption{Mechanism indicators for the second block of a Transformer trained on ModelNet10 with 16 scale augmentations. (\textit{Left}) 
     The feature vectors in the attention output have larger norms than those in the residual stream, reducing the residual non-invariance to approximately the noise level. 
     (\textit{Middle and Right}) Attention maps are mostly invariant under scalings.
     Rows correspond to different samples. 
     }
    \label{fig:scale-mechanism}
        \vspace{-1em}
\end{figure}

To identify whether any of the above suggestions matches the actual computation carried out by a Transformer trained with translations, we examine the attention maps $\A(\mX)$ for three different test samples and their augmentations in Fig.~\ref{fig:translation-mechanism} (\textit{right}). In each case, we see that all attention concentrates on a single point. This immediately rules out the dependence on local neighborhoods needed to implement a convolution, as well as the uniform attention for the mean computation. Instead, the pattern matches that of choosing a landmark.
We next proceed to investigate how the self-attention output interacts with the residual stream.
We compute the cosine similarity between $\mX$ and $\SA(\mX)$ and show its distribution over the entire test set in Fig.~\ref{fig:translation-mechanism} (\textit{left}).
 The cosine similarities are consistently negative, indicating that the self-attention output tends to oppose the residual stream. This behavior is consistent with the subtraction required by both the landmark- and mean-based mechanisms. In fact, both mechanisms are special cases of the following observation.

\begin{lemma}
\label{lemma:translation}
    Let $\operatorname{\Lambda}: \RR^{n \times d} \to \R^{n \times n}$ be translation invariant with row-stochastic outputs. Set $\mLambda = \operatorname{\Lambda}(\mX)$. Then $f(\mX) = g(\mX - \mLambda^\top \mX)$ is translation invariant for any function $g$.
    
\end{lemma}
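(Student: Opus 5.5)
The plan is a direct computation: write a translation as a rank-one update of $\mX$, and check that the argument passed to $g$ does not change. Invariance of $f$ then holds for any $g$, with no assumptions on $g$. For $\vt\in\RR^d$, translation acts on a point cloud by $\mX\mapsto \mX+\vone_n\vt^\top$, since every row (point) is shifted by $\vt$. Two facts are needed. First, translation invariance of $\operatorname{\Lambda}$ gives $\operatorname{\Lambda}(\mX+\vone_n\vt^\top)=\mLambda$, so the mixing matrix is unchanged. Second, stochasticity of the mixing matrix makes it fix $\vone_n$.

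The key steps, in order, are as follows.

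\textbf{(i)} Write $\mX'=\mX+\vone_n\vt^\top$ and note that $\operatorname{\Lambda}(\mX')=\mLambda$.

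\textbf{(ii)} Expand the argument of $g$:
\begin{equation*}
\mX'-\mLambda^\top\mX' = \mX-\mLambda^\top\mX + \left(\vone_n-\mLambda^\top\vone_n\right)\vt^\top .
\end{equation*}

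\textbf{(iii)} Show that the correction term vanishes, i.e.\ that $\mLambda^\top\vone_n=\vone_n$.

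\textbf{(iv)} Conclude $f(\mX')=g(\mX-\mLambda^\top\mX)=f(\mX)$ for every $\vt$ and every $g$.

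Step (iii) is the only place where care is needed, and it is the main obstacle. It depends on the convention for which index is normalized. Row-stochasticity in the usual sense gives $\mLambda\vone_n=\vone_n$. What step (iii) needs is that the matrix actually multiplying $\mX$, here $\mLambda^\top$, has rows summing to one. Equivalently, each output token must be an affine combination of input tokens. This holds for an attention map $\A(\mX)$ when it is applied as $\A(\mX)\mX$.

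I would handle this by stating the intended reading explicitly. Either the stochastic normalization refers to the matrix applied to $\mX$ (so that $\mLambda^\top\vone_n=\vone_n$), or, equivalently, the subtracted term is $\mLambda\mX$ with $\mLambda$ row-stochastic. Under the literal reading, with $\mLambda$ row-stochastic and $\mLambda^\top$ applied, the proof still goes through verbatim provided $\mLambda$ is also column-stochastic (doubly stochastic). Without that, a generic row-stochastic $\mLambda$ leaves a nonzero term $(\vone_n-\mLambda^\top\vone_n)\vt^\top$, so the assumption cannot be dropped.

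Finally, I would point out that the two mechanisms discussed above are special cases. The mean-centering mechanism corresponds to $\mLambda=\frac1n\vone_n\vone_n^\top$, which is doubly stochastic and constant, hence trivially invariant. The landmark mechanism corresponds to $\mLambda=\vone_n\vv^\top$, where $\vv$ is the one-hot indicator of a landmark chosen in a translation-invariant way; applied to $\mX$, this matrix has rows summing to one, and subtracting its product with $\mX$ yields relative coordinates $\vx_i-\vx_{\ell}$. For the attention-based instance one can take $\mLambda=\A(\mX)$, whose scores $\mX\mB\mX^\top$ are not translation invariant in general. I would therefore state invariance of $\operatorname{\Lambda}$ as a hypothesis of the lemma rather than as something derived.
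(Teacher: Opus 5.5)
Your proof is correct and is the same direct computation the paper gives: substitute $\mX+\vone_n\vt^\top$, keep $\mLambda$ fixed by invariance of $\operatorname{\Lambda}$, and cancel the translation term. Your caveat in step (iii) is well founded and identifies a slip in the paper's proof, which cancels $-\mLambda^\top\mT+\mT$ without comment even though this needs $\mLambda^\top\vone_n=\vone_n$ (unit column sums), not row-stochasticity---indeed your landmark matrix $\mLambda=\vone_n\vv^\top$, taken constant and with $g$ the identity, is a counterexample to the literal statement for $n\ge 2$, so the $\mLambda\mX$ reading you adopt is the right one.
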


So even when attention does not focus on a landmark, its invariance along with opposite directions of $\mX$ and $\operatorname{SA}(\mX)$ can ensure stable output under translation. 

One may note that the above formulation already requires attention maps to be invariant in the first place. However, it does not require them to be expressive. Meanwhile, the residual connection enables the landmark shift mechanism $f(\vx_1, \dots,\vx_n) = \Tilde{f}(\vx_2 -\vx_1, \dots,\vx_n -\vx_1)$, which is fully expressive among translation invariant functions \cite{villar2021scalars}. 
Hence, we probe with ablations whether the residual connections are needed to ensure both high test accuracy and low invariance loss.
Fig.~\ref{fig:translation-ablation} shows that, as Lemma \ref{lemma:translation} suggests, self-attention with residual connections achieves both of these properties simultaneously, while removing them breaks expressive invariance.

\subsection{Scale: LayerNorm and residual suppression}

\begin{wrapfigure}[18]{r}{0.42\textwidth}
     \vspace{-5mm}
    \centering
        \includegraphics[width=\linewidth]{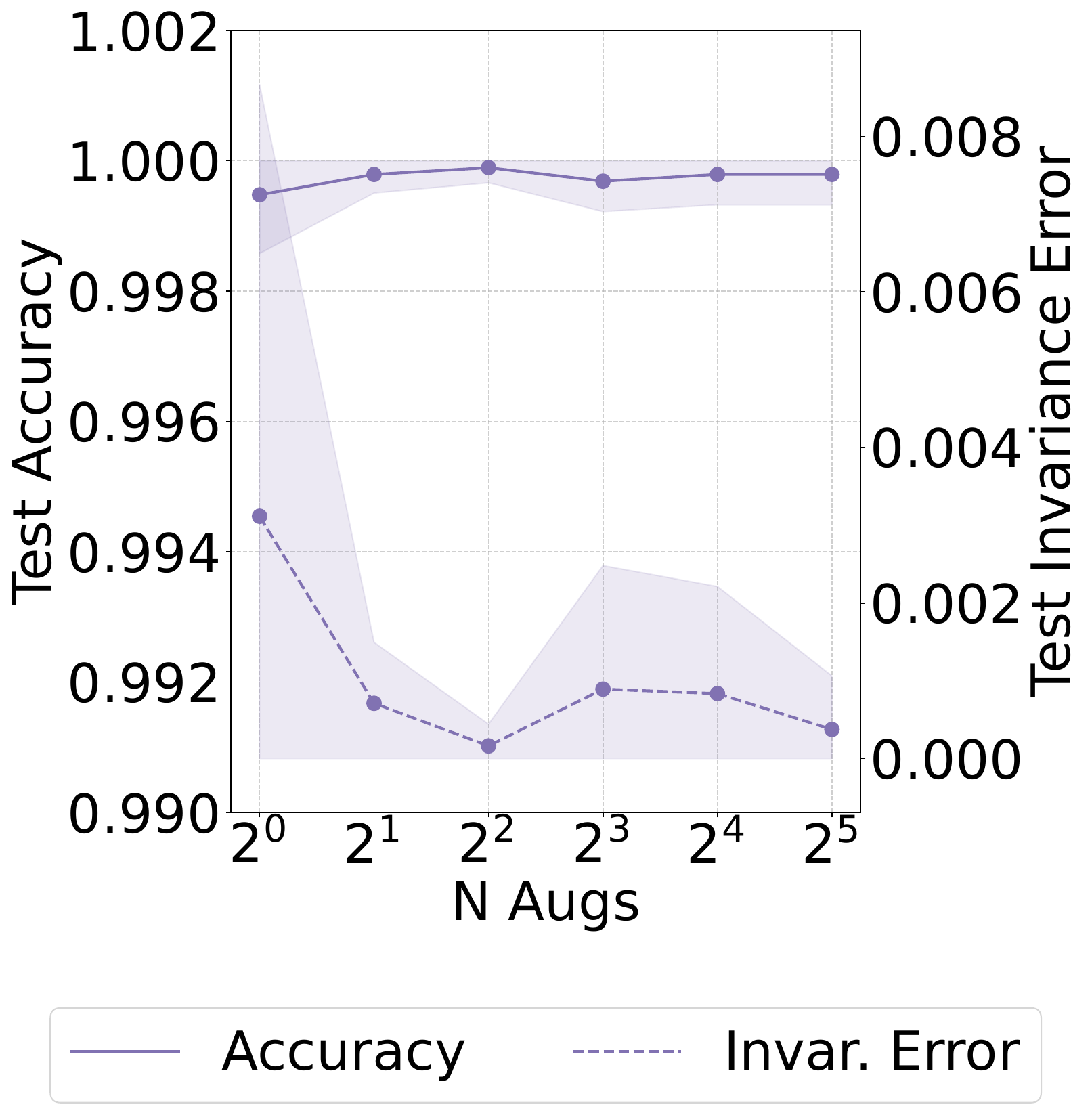}
            \vspace{-6mm}
     \caption{A model consisting only of $\SA(\LN(\mX))$ blocks can learn to be scale invariant and solve the 3D Topology task. 
     }
    \label{fig:scale-ablation}
\end{wrapfigure}

The LayerNorm $\LN$ is a natural candidate for learning scale invariance.
Without additive transformations like residual connections and bias terms, this extends to the entire computation for classification tasks.

\begin{lemma}
\label{lemma:scale}
    Let $\SA(\mX)=\A(\mX)\mX\mW_V$ be a single attention-only layer. 
    If for some $c>0$, $\A(c\mX)=\A(\mX)$, then $\SA(c\mX)=c\,\SA(\mX)$.
Hence, an attention-only stack whose attention maps are stable over the scale augmentation range is positively homogeneous $\Phi(c\mX)=c\,\Phi(\mX)$.
If the final prediction is made by taking an $\operatorname{argmax}$ over logits that are scaled by this same positive factor, then the predicted class is invariant to $c$.
\end{lemma}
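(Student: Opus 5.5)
The plan is to argue in three stages that mirror the three sentences of Lemma~\ref{lemma:scale}: a single-layer identity, an induction over the attention-only stack, and a final remark about the $\operatorname{argmax}$.

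\emph{Single layer.} This step is purely algebraic. Since $\SA(\mX)=\A(\mX)\mX\mW_V$ and, by hypothesis, $\A(c\mX)=\A(\mX)$, we have
\begin{equation}
\SA(c\mX)=\A(c\mX)\,(c\mX)\mW_V=\A(\mX)\,c\,\mX\mW_V=c\,\SA(\mX),
\end{equation}
using only that matrix multiplication is bilinear. No bias terms appear in $\mV=\mX\mW_V$, so no additive offset spoils the homogeneity.

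\emph{Stack.} Write $\Phi=\SA^{(L)}\circ\cdots\circ\SA^{(1)}$, possibly pre- and post-composed with bias-free linear in/out projections. Bias-free linear maps are trivially $1$-homogeneous. I will proceed by induction on $\ell$, with the hypothesis that the layer-$\ell$ input satisfies $\mX^{(\ell)}(c\mX)=c\,\mX^{(\ell)}(\mX)$.

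The stability assumption must be read as $\A^{(\ell)}(c\mX^{(\ell)})=\A^{(\ell)}(\mX^{(\ell)})$ for every layer, along the scale augmentation range. Under the inductive hypothesis, this is exactly the hypothesis of the single-layer step applied to the input $\mX^{(\ell)}$. That step then gives $\mX^{(\ell+1)}(c\mX)=c\,\mX^{(\ell+1)}(\mX)$, and composing yields $\Phi(c\mX)=c\,\Phi(\mX)$.

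If LayerNorm is present, as in the ablation $\SA(\LN(\mX))$, I would note separately that $\LN$ without bias is scale invariant, up to $\epsilon$. That gives invariance rather than homogeneity, which is even stronger for the conclusion. The lemma as stated, however, concerns the attention-only stack, so I would only mention this in passing.

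\emph{Prediction.} Let the logits be $\vz(\mX)$, obtained from $\Phi$ via a bias-free readout (a linear map, or mean pooling followed by a linear map). Then $\vz(c\mX)=c\,\vz(\mX)$ with $c>0$. Multiplication by a positive scalar preserves the order of coordinates, so
\begin{equation}
\argmax_k z_k(c\mX)=\argmax_k c\,z_k(\mX)=\argmax_k z_k(\mX).
\end{equation}
Ties are preserved as well.

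The only real subtlety is the inductive step. The hypothesis has to be applied to intermediate representations, not to the raw input, so I will state the stability assumption layer-wise. I will also stress that the argument breaks as soon as residual connections or biases are added: a term like $\mX+\SA(\mX)$ is still homogeneous, but a bias $\vb$ is not. That is why the statement restricts to attention-only stacks with bias-free maps.
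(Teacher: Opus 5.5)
Your proposal is correct and follows essentially the same route as the paper's proof: the single-layer bilinearity computation, then an induction over layers with the stability assumption imposed on each layer's actual input, then order-preservation of $\argmax$ under a positive factor. One wording fix: residual connections do not break the argument (the paper's proof shows explicitly that $\mH^{(\ell)} + \SA_\ell(\mH^{(\ell)})$ inherits homogeneity under the same induction); only additive bias terms do, as your own parenthetical already suggests.
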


Fig.~\ref{fig:scale-ablation} shows that a model with two layers of $\SA(\LN(\mX))$ easily becomes scale invariant. 
 In this setting, the bias in the input projection is the only source of invariance error. 
In a full Transformer, the residual stream introduces an additional source of non-invariance. 
Fig. \ref{fig:scale-mechanism} shows how the model compensates for this effect: the feature vectors in the attention output have larger norms than the corresponding vectors in the residual stream, which reduces the influence of the residual on the block output.

\section{From invariance to equivariance}
\label{sec:equivariance}

\begin{wrapfigure}[31]{r}{0.4\textwidth}
\vspace{-5mm}
    \centering
    \includegraphics[width=\linewidth]{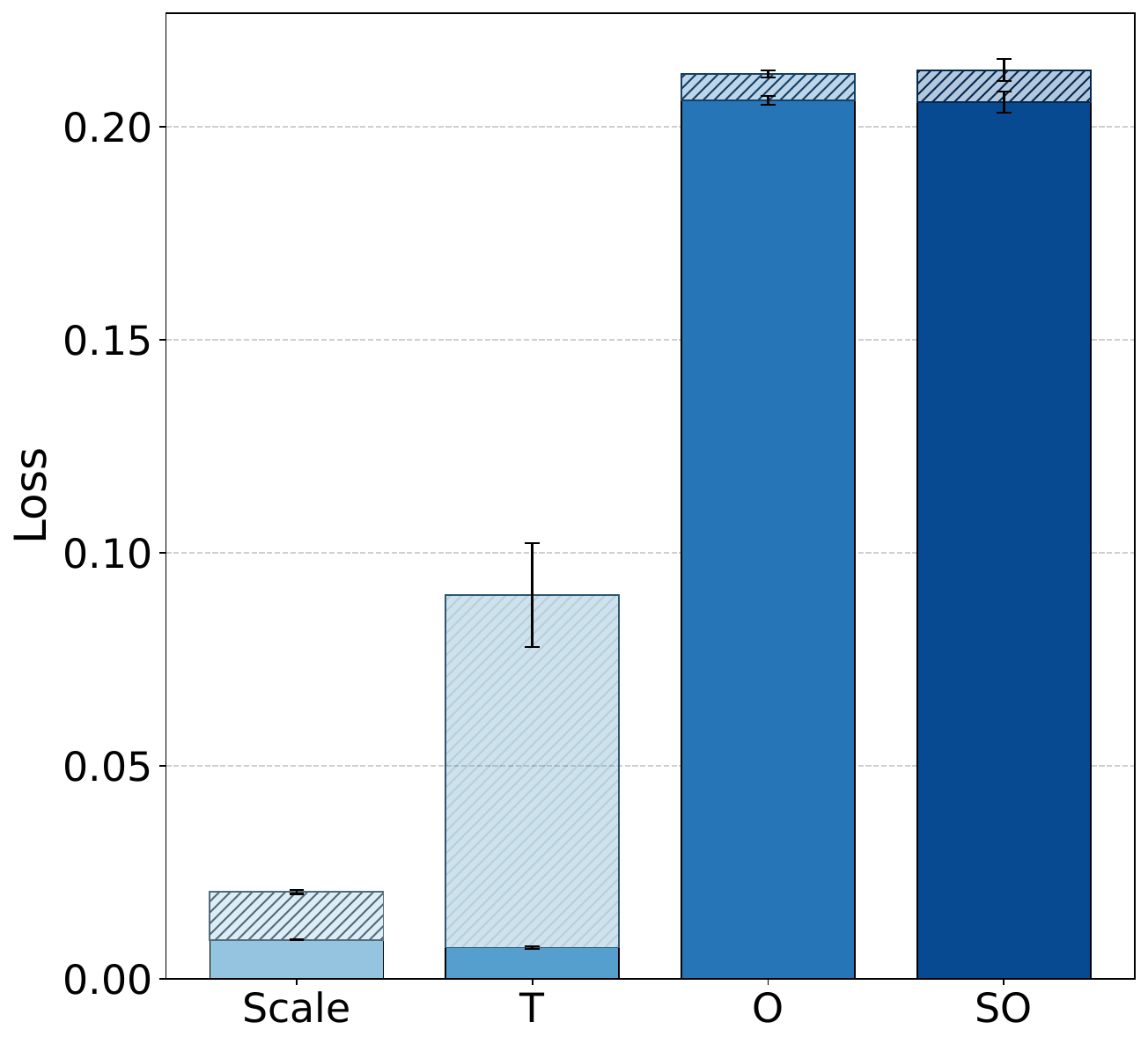}
    
    \includegraphics[width=\linewidth]{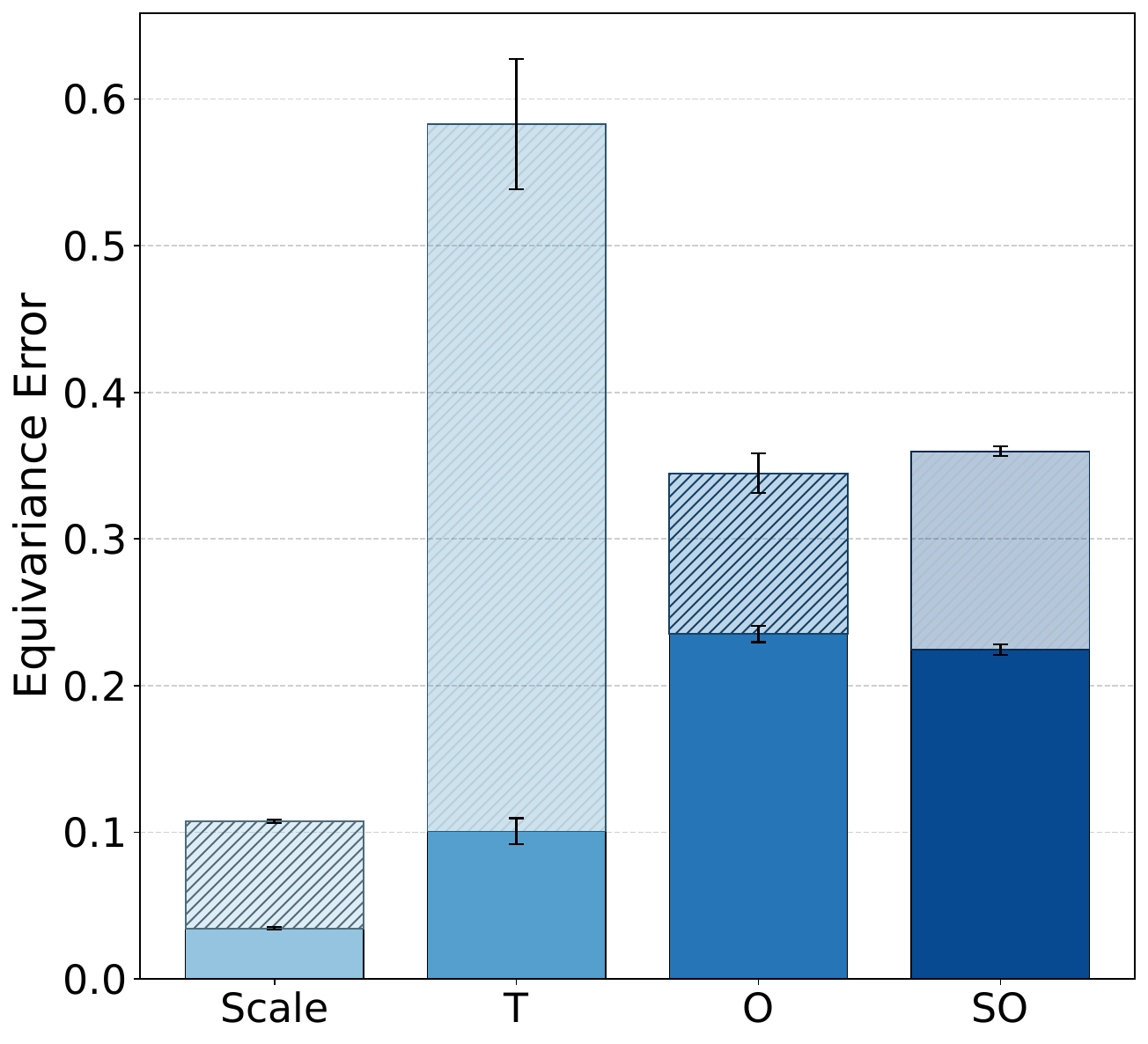}
    \includegraphics[width=0.75\linewidth]{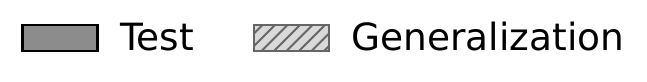}
                \vspace{-1.4 mm}
    \caption{
    Task losses and equivariance errors of Transformers, trained with 16 augmentations on ShapeNet tasks, are reasonably low. 
    Note that $\operatorname{Scale}$ and $\operatorname{T}$ consider a different task than $\operatorname{O}$ and $\operatorname{SO}$.}
    \label{fig:shapenet}
\end{wrapfigure}

Our previous analysis focused on how Transformers can learn approximately invariant predictions from limited data augmentation. 
However, many real-world tasks require the more general notion of \emph{equivariance}: when the input transforms under a group action, the output should transform in a corresponding way.
Formally, for all 
$g \in G, x \in \mathcal{X}$, we have $\Psi(\rho_\mathcal{X}(g) x) = \rho_\mathcal{Y}(g) \Psi(x)$, where $\rho_\mathcal{X}$ and $\rho_\mathcal{Y}$ are linear representations of the group on the input and output spaces, respectively.
As a result, we also investigate whether attention-based structures related to the previously identified invariance mechanisms can also emerge in equivariant vector-valued tasks.

\paragraph{Experimental setup.}
To this end, we conduct additional experiments using ShapeNet \cite{yi2016scalable}, another common point cloud dataset that is annotated by the surface-normal vector for each point of the input, as well as segmentation information. Predicting surface normals poses an $\operatorname{O}(3)$- and $\operatorname{SO}(3)$-equivariant task. To construct $\operatorname{T}(3)$ and $\operatorname{Scale}(3)$-equivariant targets, we let each point predict the centroid of the part it belongs to in the segmentation map.
Vanilla Transformer models are trained using data augmentation from each respective base angle-preserving group under the same OOD setting as in \S\ref{sec:ood}. In Fig.~\ref{fig:shapenet}, we report task loss and equivariance error (see \S\ref {app:experiment_details_invariance} for their definitions).
The models achieve low task loss and mostly low equivariance error both in and out of distribution, suggesting that they learn approximately equivariant representations.

\paragraph{Potential mechanism for $\operatorname{O}(3)$-equivariance.} We next investigate $\operatorname{O}(3)$-equivariance as a representative case. 
To identify a possible mechanism,  we build on Proposition 4 of  Villar et al. (2021)  \cite{villar2021scalars}, which establishes the following:
\begin{lemma}[\cite{villar2021scalars}]
    If $h$ is an $\operatorname{O}(d)$-equivariant vector function of $n$ vector inputs $\vv_1,\vv_2,..,\vv_n$, then $n$ $\operatorname{O}(d)$-invariant scalar functions $f_t(\cdot)$ exist such that $h(\vv_1,\vv_2,..,\vv_n) = \sum_{t=1}^1 f_t(\vv_1,\vv_2,..,\vv_n) \vv_t$. 
\end{lemma}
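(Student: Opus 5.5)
We read the sum as running over $t=1,\dots,n$; the upper limit $1$ in the display appears to be a typo. The plan is a direct linear-algebra argument. Write $\mV=[\vv_1,\dots,\vv_n]\in\RR^{d\times n}$ and let $W=\operatorname{span}\{\vv_1,\dots,\vv_n\}$. The group acts by $\vv_t\mapsto\mR\vv_t$, so equivariance reads $h(\mR\vv_1,\dots,\mR\vv_n)=\mR\,h(\vv_1,\dots,\vv_n)$ for all $\mR\in\operatorname{O}(d)$. We prove two facts: first that $h$ always lies in $W$, and second that its coordinates in $W$ can be chosen as invariant scalars. The first fact uses the inputs only through $W$; the second uses the Gram matrix, in line with the scalar viewpoint of Lemma~\ref{lem:orbit-wise-attention-invariance}.

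\textbf{Step 1: $h$ takes values in $W$.} Decompose $h(\vv_1,\dots,\vv_n)=\vw+\vu$ with $\vw\in W$ and $\vu\in W^\perp$. Suppose $\vu\neq\vzero$, and consider the reflection $\mR=\mI_d-2\vu\vu^\top/\|\vu\|^2\in\operatorname{O}(d)$.
\begin{itemize}
\item Since $\vu\perp W$, the reflection fixes every input: $\mR\vv_t=\vv_t$ for all $t$.
\item Equivariance then gives $h(\vv_1,\dots,\vv_n)=h(\mR\vv_1,\dots,\mR\vv_n)=\mR\,h(\vv_1,\dots,\vv_n)=\vw-\vu$.
\end{itemize}
Comparing with $\vw+\vu$ forces $\vu=\vzero$, a contradiction. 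Hence $h(\vv_1,\dots,\vv_n)\in W$. If $W=\RR^d$, this step is vacuous.

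\textbf{Step 2: canonical invariant coefficients.} The main obstacle is that the $\vv_t$ may be linearly dependent, so coefficients are not unique. We must choose them in a way that is itself invariant. We fix the choice through the Gram matrix $\mG=\mV^\top\mV$ and its Moore--Penrose pseudoinverse $\mG^{+}$, and define
\begin{equation}
f(\vv_1,\dots,\vv_n)=\mG^{+}\mV^\top h(\vv_1,\dots,\vv_n)\in\RR^n,
\end{equation}
with $f_t$ its $t$-th entry. The matrix $\mV\mG^{+}\mV^\top=\mV\mV^{+}$ is the orthogonal projector onto $W$. By Step~1 it therefore fixes $h$, so $\mV f=h$. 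This is exactly $h=\sum_{t=1}^n f_t\,\vv_t$.

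\textbf{Step 3: invariance of the $f_t$.} Replace each input by $\mR\vv_t$, so that $\mV\mapsto\mR\mV$.
\begin{itemize}
\item The Gram matrix is unchanged: $\mV^\top\mR^\top\mR\mV=\mG$, and hence so is $\mG^{+}$.
\item The vector $\mV^\top h$ becomes $\mV^\top\mR^\top\mR\,h=\mV^\top h$, using equivariance of $h$.
\end{itemize}
Thus $f(\mR\vv_1,\dots,\mR\vv_n)=f(\vv_1,\dots,\vv_n)$, so each $f_t$ is $\operatorname{O}(d)$-invariant. In fact each $f_t$ depends only on the scalars $\vv_i^\top\vv_j$ and $\vv_i^\top h$.

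The statement asks only for functions, so no regularity is required. If continuity were required, the pseudoinverse would be discontinuous where the rank of $\mV$ changes, and a more careful construction would be needed.
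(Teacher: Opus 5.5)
Your proof is correct, and reading the upper limit $1$ as a typo for $n$ is right. The paper itself does not prove this lemma: it is quoted from \cite{villar2021scalars} (their Proposition~4), and Appendix~\ref{app:proofs} only proves the paper's own three lemmas. Your argument therefore supplies a self-contained proof rather than competing with one in the text.

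Your route has two parts. A reflection through the orthogonal component forces $h$ into $\operatorname{span}\{\vv_t\}$. Then the canonical choice $f=\mG^{+}\mV^\top h$ depends only on the invariant scalars $\vv_i^\top\vv_j$ and $\vv_i^\top h$. This needs no regularity of $h$ at all.

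The paper's appeal to the first fundamental theorem in \S\ref{sec:mechanisms} suggests an alternative. Apply that theorem to the invariant polynomial $(\vv_1,\dots,\vv_n,\vw)\mapsto h(\vv_1,\dots,\vv_n)^\top\vw$ and keep the part of degree one in $\vw$. This works only for polynomial $h$, but in exchange it yields polynomial coefficients $f_t$.

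Two small remarks. First, Step~1 is the only place where reflections enter, and they are indispensable. For $\operatorname{SO}(3)$, the map $h(\vv_1,\vv_2)=\vv_1\times\vv_2$ is equivariant, but for linearly independent $\vv_1,\vv_2$ it is a nonzero vector orthogonal to their span. So the lemma genuinely requires $\operatorname{O}(d)$.

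Second, your closing comment understates the continuity issue. It is not that a more careful construction would be needed; continuous invariant coefficients need not exist at all. For $n=1$, take $h(\vv)=\|\vv\|^{-1/2}\vv$ with $h(\vzero)=\vzero$. This $h$ is continuous and equivariant, yet the coefficient is forced to equal $\|\vv\|^{-1/2}$ for every $\vv\neq\vzero$, which is unbounded near $\vzero$. So the regularity-free statement you proved is the appropriate one for general $h$.
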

Self-attention has an analogous weighted-sum structure: $\operatorname{SA}(\mX) = \sum_j \operatorname{A}(\mX)_{ij} (\mX \mW_V)_j$. Therefore, we can see that if \textit{(i)} $\operatorname{A}(\mX \mR) = \operatorname{A}(\mX)$ and \textit{(ii)} $\mX\mR \mW_V = \mX \mW_V \mR$, then we have $\operatorname{SA}(\mX\mR)= \operatorname{A}(\mX\mR) \mX\mR \mW_V = \operatorname{A}(\mX\mR) \mX \mW_V \mR = \operatorname{SA}(\mX)\mR$. 
That is, self-attention is equivariant whenever its attention map is invariant and its value projection transforms equivariantly. 
Requiring \textit{(ii)} to hold for every $\mX$ and every $\mR \in \operatorname{O}(d)$ is equivalent to enforcing $\mW_V$ to commute with the standard action of $\operatorname{O}(d)$. This would restrict to an isotropic map, namely a scalar multiple of the identity.

\begin{wrapfigure}[17]{r}{0.4\textwidth}
     \vspace{-4.5mm}
    \centering
        \includegraphics[width=\linewidth]{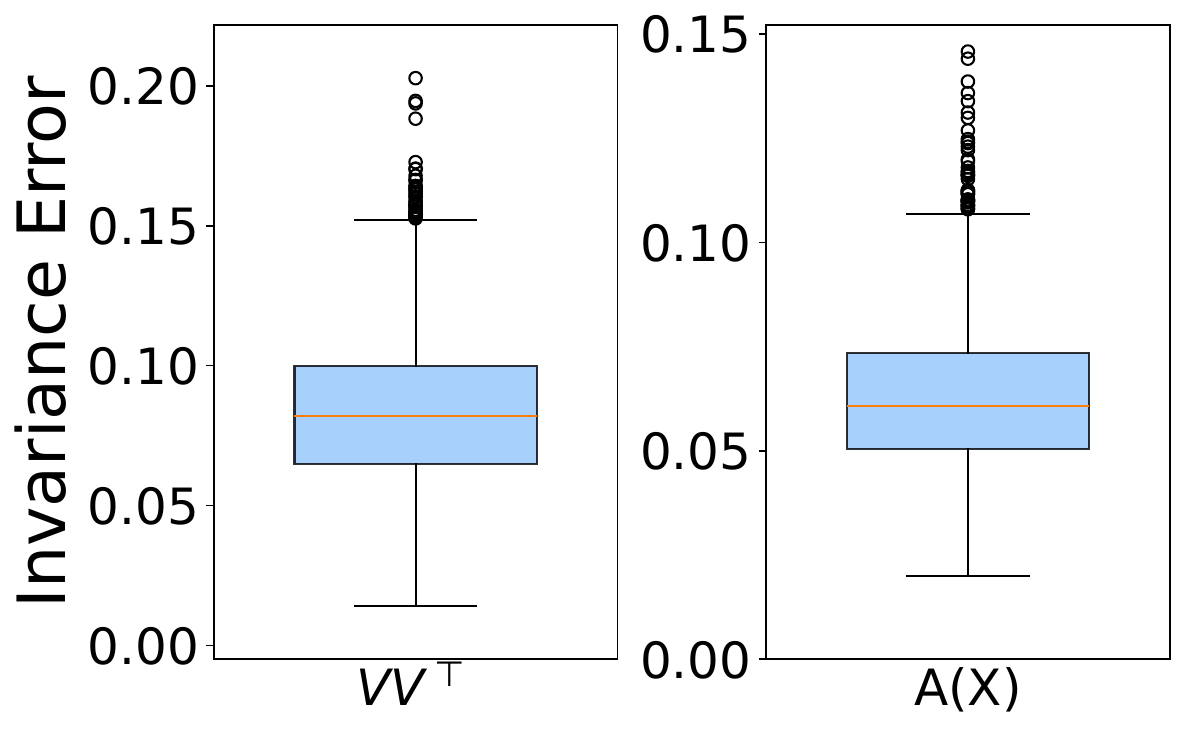}
                    \vspace{-4mm}
     \caption
     {
     Results for the first layer of a Transformer trained on ShapeNet with $16$ $\operatorname{O}(3)$ augmentations per sample.
     On the test set, the value representations are approximately $\operatorname{O}(3)$-equivariant (\textit{left}) and the attention maps are approximately $\operatorname{O}(3)$-invariant (\textit{right}).
     }
     \vspace{-10mm}
    \label{fig:mech_equiv}
\end{wrapfigure}

We first test condition \textit{(i)}. Fig.~\ref{fig:mech_equiv} (\textit{right}) reports the invariance error of the attention map $\operatorname{A}$ for a trained Transformer. The low error suggests that the attention map is approximately invariant and therefore approximately satisfies condition \textit{(i)}.
In contrast, the learned value projection is \textit{not} globally isotropic. The strongest form of condition \textit{(ii)}, i.e., exact commutation with every $\mR\in\operatorname{O}(d)$ over the full ambient input space, is \textit{not} reflected in the learned parameters. Instead, we hypothesize that the required equivariance may hold only approximately on the data support and along the augmentation orbits.
We test this hypothesis directly on the test set. Measuring the equivariance of $\mV=\mX\mW_V$ requires specifying how $\operatorname{O}(3)$ acts in the higher-dimensional latent space. We instead measure the invariance of the Gram matrix $\mV\mV^\top$ as a proxy. If $\mV$ transforms equivariantly under an orthogonal action, then $\mV\mV^\top$ remains invariant. Fig.~\ref{fig:mech_equiv} (\textit{left}) shows that the corresponding invariance error is small, which supports approximate equivariance of $\mV$ on the data support and along the augmentation orbits.
We provide further investigation of the remaining architectural components and symmetry groups in \S\ref{app:equivariance_results}. 
Overall, the Transformers trained for equivariant prediction exhibit mechanisms similar to those identified in our invariance analysis. This suggests that invariant components, e.g., those studied in \S\ref{sec:mechanisms}, can be used as building blocks for equivariance.

\section{Conclusions}
\label{sec:conclusions}

In this paper, we study how a vanilla Transformer can learn symmetries from limited data augmentation on point cloud data. 
Across the transformation groups we consider, Transformers learn angle-preserving symmetries more reliably than non-angle-preserving ones.
Among the angle-preserving groups, they learn the base symmetries (i.e., translation, rotation, and scale) more efficiently than compositions of these transformations.
Following those observations, we perform structural analyses of the trained Transformer models and identify mechanisms that implement invariance to each base angle-preserving symmetry.
We also extend this analysis from invariant to equivariant prediction, suggesting that the invariant mechanism identified in our analysis can also serve as building blocks for equivariant computation.
Our observations suggest that data augmentation should not be treated as a uniform substitute for architectural equivariance, and highlight the persisting need to understand when \textit{soft} equivariance is sufficient and when explicit equivariant structure remains necessary.
In addition, they also suggest that the recent success of Transformer-based soft-equivariant approaches could be partly attributed to the intrinsic alignment of that architecture with base angle-preserving symmetries, which happen to be among the most prevalent ones in scientific domains.

\noindent\textbf{Limitation discussion.} This work focuses on small vanilla Transformers and point cloud dataset to isolate the interplay of the basic architecture with the considered symmetries.
An important direction for future work is to test whether similar mechanisms arise in images, meshes, graphs, molecular structures, and other forms of geometric or scientific data.
In addition, studying larger datasets and more elaborate scientific tasks would help determine whether the same mechanisms persist when the data contain richer structure, noise, and interacting symmetries.
Lastly, the extension of our analysis to equivariance remains preliminary. Future work could identify mechanisms for additional equivariant tasks and symmetry groups, characterize how group actions are represented in latent spaces, and test the causal role of the observed mechanisms through targeted interventions.

\newpage

\section*{Acknowledgments}
We thank Leibniz Supercomputing Centre (LRZ) for providing computational resources and the anonymous reviewers for their constructive comments. 
We also thank Vincent Bürgin for feedback on an
earlier version of this manuscript. 
ESE acknowledges support from a PhD fellowship of the Munich Center for Machine Learning (MCML).
YEL acknowledges support from the Schmidt Futures Israeli Women’s Postdoctoral Award and the Viterbi Fellowship from the Faculty of Electrical and Computer Engineering at the Technion.
This research was supported by an Alexander von Humboldt Professorship.

\bibliographystyle{unsrt}
\bibliography{bib}


\newpage
\appendix

\begin{center}
    \LARGE  \textbf{Supplementary Material}    
\end{center}

\addcontentsline{toc}{section}{Appendices}

\startcontents[appendices]
\printcontents[appendices]{l}{1}{\setcounter{tocdepth}{2}}

\newpage

\section{Extended related work}
\label{app:extended_related_work}

In this section, we expand on the surrounding literature presented in \S\ref{sec:introduction} and \S\ref{sec:related_work} by discussing additional related works.

\subsection{The shift from hard equivariance towards data augmentation}

Recently, a trend has emerged that is moving away from the design and implementation of hard equivariant architectures especially when working with generative tasks in large-scale data regimes. The main motivation behind this development has been the empirical observation that data augmentation is able to provide a similar performance with a much simpler implementation and a reduced computational cost in such scenarios.
Most notable is the example of the departure of AlphaFold3 \cite{abramson2024accurate} from the constrained architectural equivariance of AlphaFold2 \cite{jumper2021highly}, but many other works have recently followed the same path, e.g., MCFs \cite{wang2024swallowing}, TransIP \cite{elhag2025learning}, Zatom-1 \cite{morehead2026zatom}, or TABASCO \cite{vonessen2025tabasco}.
All of these works have two main aspects in common: (1) they propose Transformer-based models, (2) they work with real-world scientific tasks (e.g., molecule and material generation, interatomic potential prediction) which are naturally restricted to a reduced set of possible symmetries, including the angle-preserving ones that we show the Transformer efficiently learns to be invariant to. 

\subsection{Approximate equivariance}

A limitation of equivariant architectures arises in their application to real-world data, which often deviates from strict mathematical symmetries due to, for instance, noisy or biased measurements and symmetry-breaking effects \cite{manolache2026learning, wang2022approximately}. In such cases, strictly enforcing equivariance can lead to a degradation of the training dynamics and predictive performance of the models. As a result, approximately equivariant networks emerge as a more fitting alternative for such scenarios, where they are still biased towards preserving the symmetry but are not strictly restricted to do so. 

Data augmentation has become the most predominant technique to encourage approximate equivariance in many different practical domains: natural language processing \cite{li2022data, pellicer2023data}, reinforcement learning \cite{ma2025comprehensive}, computer vision \cite{shorten2019survey}, or graph learning \cite{zhao2022graph}. 
A theoretical understanding of the role of data augmentation has been explored from multiple directions: its group-theoretic foundations \cite{chen2020group}, its interpretation as a regularizer \cite{lin2024good, soleymani2025robust}, or the effect it has on the training dynamics of neural networks \cite{shen2022data}.
There have also been multiple works that develop and investigate a kernel-based theory of augmentation \cite{dao2019kernel, patil2023generalized, mei2021learning}.

Other examples of approximate equivariance methods include constrained optimization approaches \cite{manolache2026learning,pertigkiozoglou2024improving}, the relaxed weight-sharing and weight-tying strategy from \cite{wang2022approximately}, the graph coarsening method for GNNs described in \cite{huang2023approximately}, the multi-task learning approach REMUL \cite{elhagRelaxedEquivarianceMultitask2024}, or the soft prior approach from \cite{finzi2021residual}.

\subsection{Learning symmetries with neural networks}

Related to our work, there have been some papers in the literature that explore the ability of different machine learning models to learn specific symmetries.
For instance, FASTColor \cite{villadamigo2025fastcolor} empirically shows that a Transformer learns to become invariant to the $\operatorname{SO}(2)$ symmetries in a particle physics task while avoiding invariance to other symmetries like $\operatorname{SL}(4)$, which aligns with our findings in this paper.
Also related is \cite{neyshabur2020towards}, which shows that fully-connected networks can learn to implement convolution structures when trained with an appropriate regularizer.
Different methods and metrics have been proposed to measure the learned equivariance in neural networks \cite{gruverlie, zhang2019making, karras2021alias, bouchacourt2021grounding}.
However, each of them comes with certain drawbacks, such as requiring various ad-hoc decisions or being designed with only certain transformations and types of data in mind.
Most related to our work is \cite{gruverlie}, which shows that Transformers can learn to be more equivariant than convolutional neural networks after training with data augmentation.

\section{Experimental details}
\label{app:experiment_details}

In this section, we present additional details about the experimental setup and empirical evaluations presented in \S\ref{sec:motivation} and \S\ref{sec:ood}.
All experiments were implemented in Pytorch and executed on NVIDIA A100 and H100 GPUs using CUDA version 12.2. We used the Adam optimizer with a learning rate of $0.001$, keeping the rest of its default hyperparameter values. The embedding dimension as well as the hidden dimension of the MLP in all experiments is 64. All models contain two blocks using single-head self-attention and a $2$-layer MLP.

\subsection{Generating symmetry augmentations}
\label{app:symmetry_augmentations}

We characterize the symmetry groups considered in this work in Table~\ref{tab:symmetry_groups}.
There are multiple ways of generating augmentations based on the action of these groups. In this paper, we follow an approach based on Lie group theory, which we describe in more detail below.

\subsubsection{Preliminaries}

First, we briefly review some relevant background regarding Matrix Lie groups and Lie algebras.
A matrix Lie group $G$ is a subgroup of the general linear group $\operatorname{GL}(d, \mathbb{R})$ that is also a smooth manifold such that multiplication and inversion are both smooth maps \citep{baker2003matrix, erdmann2006introduction}. 
The associated Lie algebra $\mathfrak{g} = T_{\mathrm{Id}} G$ is the tangent space of the group $G$ at the identity element. 
Each element $A\in \mathfrak{g}$ is an infinitesimal generator of one-parameter subgroups $t \mapsto \exp(tA)$ for $t\in \mathbb{R}$. 
Let $\{L_i\}_{i=1}^{\dim \mathfrak{g}}$ be a basis of $\mathfrak{g}$.
For matrix Lie groups, the exponential map $\exp:\mathfrak{g}\to G$ is given by the matrix exponential, which is a local diffeomorphism in a neighborhood of the identity.
If $G$ is connected and compact, $\exp$ is surjective \citep{kirillov2008introduction, hall2013lie}, hence every $g\in G$ can be written as $g=\exp(\sum_i w_i L_i)$ with $w_i\in\mathbb{R}$. 

Second, we recall that we are working with point cloud data, i.e., $x \in \mathcal{X} \subseteq \mathbb{R}^{n \times d}$. As a result, we want to generate augmentations that act on the entire point cloud. For computational simplicity and efficiency, we achieve this by sampling point-wise actions and mapping them across the entire corresponding point cloud. 
Thus, in the following we focus on explaining the generation of such point-wise augmentations.

\subsubsection{Parameterizing augmentations using Lie groups}

In order to represent the generators of the Lie Group that we want to sample from in a compact, consistent, and computationally efficient way, we adopt a reparameterization approach \citep{price2003useful, kingma2013auto} adapted to Lie groups \citep{falorsi2019reparameterizing}, which samples group elements through the associated Lie algebra. 

Let $\mathfrak{g}$ denote the Lie algebra associated with a connected matrix Lie group acting on the representation space of $\mathcal{X}$.
We parameterize the sampling of group elements through a standard Lie algebra and randomly sample coefficients $w=(w_1,\ldots,w_C)\in\mathbb{R}^C$ from a distribution $P(\gamma)$ with parameters $\gamma_0, \gamma_1$.
Specifically, we consider a uniform distribution $P(\gamma) = \mathcal{U}[-\gamma_0, \gamma_1]$.
A group element is obtained by mapping the resulting Lie algebra element to the group via the matrix exponential, given by 
\begin{equation}
    g = \exp \left[\sum\nolimits_{i=1}^C w_i L_i \right], \  w_i \sim P(\gamma).
    \label{eq:group_sampling}
\end{equation}

Given an input $x_i$, we draw $K$ independent group elements $g_{i,2},\ldots,g_{i,K+1}$ according to Eq.~\ref{eq:group_sampling} and form the corresponding augmentations 
\begin{equation}
    x_{i,j} := g_{i,j}\,x_i, \qquad  j \in [2, \ldots, K+1], 
\end{equation}
by multiplying with the original input $x_{i,1}:=x_i$.

\subsubsection{Examples}

In order to illustrate how such a representation of the group looks like for the symmetries that we consider, we present a standard basis for rotation and translation groups in Table~\ref{tab:ex_bases}.

\begin{table}
\centering
\begin{tabular}{lc}
    \toprule
    Symmetry group & Lie algebra basis \\ 
    \midrule
     $\operatorname{SO}(3)$ & $\left\{
        \begin{bmatrix}
        0 & 1 & 0\\
        -1 & 0 & 0\\
        0 & 0 & 0\\
        \end{bmatrix} 
        , \ 
        \begin{bmatrix}
        0 & 0 & 1\\
        0 & 0 & 0\\
        -1 & 0 & 0\\
        \end{bmatrix} 
        , \ 
        \begin{bmatrix}
        0 & 0 & 0\\
        0 & 0 & 1\\
        0 & -1 & 0\\
        \end{bmatrix} 
     \right\}$ \\[20pt]
     $\operatorname{T}(3)$ & $\left\{
        \begin{bmatrix}
        0 & 0 & 0 & 1\\
        0 & 0 & 0 & 0\\
        0 & 0 & 0 & 0\\
        0 & 0 & 0 & 0\\
        \end{bmatrix} 
        , \
        \begin{bmatrix}
        0 & 0 & 0 & 0\\
        0 & 0 & 0 & 1\\
        0 & 0 & 0 & 0\\
        0 & 0 & 0 & 0\\
        \end{bmatrix} 
        , \
        \begin{bmatrix}
        0 & 0 & 0 & 0\\
        0 & 0 & 0 & 0\\
        0 & 0 & 0 & 1\\
        0 & 0 & 0 & 0\\
        \end{bmatrix} 
    \right\}$
\\
    \bottomrule
\end{tabular}
\vspace{1em}
\caption{Example of Lie algebra bases for the rotation $\operatorname{SO}(3)$ and translation groups $\operatorname{T}(3)$.}
\label{tab:ex_bases}
\end{table}

Note that when working with affine transformations, such as translations, we first embed the point representations into homogeneous coordinates. After computing the augmentations, we project the points back to the original space.

\subsubsection{Additional remarks}

We should note that this symmetry augmentation implementation cannot guarantee full coverage of all the considered groups (e.g., $\operatorname{SL}(n), \operatorname{GL}(n)$), since it is restricted to sampling group elements from the connected component of the identity of the Lie group and the map is not always surjective for non-compact groups. 
However, it is still adequate for the purpose of our analysis due to a number of reasons.
First, it allows for a general, uniform and consistent implementation of the augmentation across all symmetry groups, which also improves the ease of their comparability.
In addition, this implementation is more computationally efficient than dedicated alternatives, which allows for a more extensive and comprehensive analysis.
Moreover, given that the goal of our work is to explore the ability of Transformers to learn different symmetries, focusing on local continuous transformations around the identity is a natural choice. It allows a grounded analysis that corresponds to a realistic setting in which the transformations create meaningful invariance targets while avoiding extreme transformations that could destroy the underlying semantic information of the inputs.
Lastly, the limitations of this implementation do not affect the main groups that we focus on in the paper, i.e., angle-preserving symmetries like $\operatorname{SO}(n)$, $\operatorname{T}(n)$, or $\operatorname{Scale}(n)$, all of which are fully covered by this parameterization, since the exponential maps are surjective onto their connected components.
In the case of $\operatorname{O}(n)$, we circumvent the restriction to the identity connected component by multiplying elements sampled from $\operatorname{SO}(n)$ by a randomly sampled reflection mask.

\subsubsection{Hardness of compositional groups}

The mechanistic analysis presented in \S\ref{sec:mechanisms} focuses on the base groups $\operatorname{SO}(3)/\operatorname{O}(3)$, $\operatorname{T}(3)$, and $\operatorname{Scale}(3)$. We selected these groups for our analysis because they showed the strongest finite-augmentation performance and clearest OOD generalization, making them the most suitable for searching for stable internal mechanisms.

When considering groups that compose them, there is an important distinction between representational compatibility and learnability from finite joint augmentations. At the level of exact functions, invariance to the generating base groups is sufficient for invariance to their composition. For example, if $f$ is exactly invariant to both translations and rotations, then for any $g=tR \in SE(3)$, $f(gx)=f(t(Rx))=f(Rx)=f(x)$. The same argument extends to $\operatorname{Sim}(3)$ when scale invariance is also present. The observed difficulty is more plausibly associated with finite sampling and optimization. We currently see three related explanations.

\begin{enumerate}
    \item \textit{Higher-dimensional augmentation coverage}

In three dimensions, translations and rotations each have three continuous degrees of freedom and uniform scaling has one, whereas $\operatorname{SE}(3)$ has six and $\operatorname{Sim}(3)$ has seven. For a fixed number of augmented views per sample, random samples provide substantially sparser coverage of the larger joint transformation space.

Moreover, when augmentations are sampled directly from a compositional group, almost every transformed example combines multiple factors. The model receives relatively little direct supervision corresponding to a pure translation, pure rotation, or pure scaling direction. By contrast, a model trained on a base group receives its entire augmentation budget along that factor.

    \item \textit{Simultaneous constraints on shared components}

The mechanisms identified for the base groups use overlapping architectural components but impose different functional requirements:

    \begin{itemize}
        \item rotation and reflection favor attention-score differences that are stable under orthogonal transformations
        \item translation favors an invariant attention rule followed by subtraction of a selected or weighted reference through the residual stream
        \item scale favors stable attention, positive homogeneity, LayerNorm, and suppression of non-homogeneous residual or bias contributions.
    \end{itemize}

Learning $\operatorname{SE}(3)$ or $\operatorname{Sim}(3)$ requires these conditions to hold simultaneously along joint transformation orbits. Although the mechanisms are compatible in principle, their simultaneous optimization within the same attention and residual pathways may be more difficult than learning each one separately.

    \item \textit{Approximation error under composition}

Exact invariance to each subgroup composes exactly, but finite-data training produces only approximate invariance. Errors associated with the constituent transformations can therefore compound when several transformations are applied jointly. A model that is moderately stable to rotation and moderately stable to translation need not be equally stable to arbitrary roto-translations unless both behaviors are learned with sufficient uniformity.
\end{enumerate}

Our new multi-head experiments in \S\ref{app:additional_components} provide preliminary evidence relevant to the optimization hypothesis. Increasing the number of heads does not materially improve performance for the compositional groups, and we do not observe a clear specialization in which distinct heads separately implement translation, rotation, and scale invariance. This suggests that simply providing additional parallel attention heads is not sufficient to resolve the joint learning problem. 

\subsection{Measuring learned equivariance}
\label{app:experiment_details_invariance}

We assess equivariance by measuring the consistency of model predictions under transformations sampled from the ground-truth symmetry group. Specifically, we compute
\begin{equation}
    \dfrac{1}{N K} \sum_{i=1}^N \sum_{j=1}^K \ell \left(\rho_\mathcal{Y}(g_j) (\Psi(x_i)), \Psi(\rho_\mathcal{X} (g_j)(x_i)) \right), 
\end{equation}
which contrasts the transformed predictions of the original samples and those of the augmented samples, averaged over $K$ sampled group elements and $N$ data points.
In the case of regression tasks and for attention maps, $\ell$ is the L1-norm of the difference, while for classification tasks it corresponds to the KL divergence applied to the probability distributions outputted by the model. 
The representations $\rho_{\mathcal{X}}:G\to\operatorname{GL}(n)$ and $\rho_{\mathcal{Y}}:G\to\operatorname{GL}(m)$ specify linear actions of the symmetry group $G$ on $\mathcal{X} \subseteq \mathbb{R}^n$ and $\mathcal{Y} \subseteq \mathbb{R}^m$, respectively. 
When measuring invariance, as is the case in the majority of our experiments, we set $\rho_\mathcal{Y} = Id$.

In the classification setting, the invariance condition of direct task relevance is equality of predictive distributions. We used KL divergence since it operates directly on the probability simplex, is zero exactly when the two predictive distributions coincide, and weights discrepancies according to the probability mass assigned by the reference prediction. It also ignores a common additive shift of all logits, which leaves the predictive distribution unchanged. By contrast, a raw-logit L2 distance is a stricter, representation-dependent criterion: it can be nonzero even when the two softmax distributions are identical, and it measures changes in logit parameterization in addition to changes in the model's prediction. This makes KL a natural primary measure of task-level predictive invariance.

Nevertheless, for completeness, we study three complementary metrics: symmetric KL divergence, Jensen-Shannon divergence, and the logit-invariance metric of \cite{moskalev2023genuine}. The last metric is an L2-type alternative based on the squared distance between the logits.

Across the base symmetries on Topology 3D, these metrics give consistent qualitative conclusions as can be observed in Figure~\ref{fig:topology_metrics}. 
Their correlations with the original KL measure range from moderate to near-perfect: agreement is strongest for Scale, and weakest for Translation. The weaker agreement for Translation is consistent with it being the most difficult OOD case in our experiments. Thus, the main ordering of the symmetries and the mechanistic conclusions are not artifacts of the choice of invariance metric.

\begin{figure}[h!]
    \centering
    \includegraphics[width=0.75\linewidth]{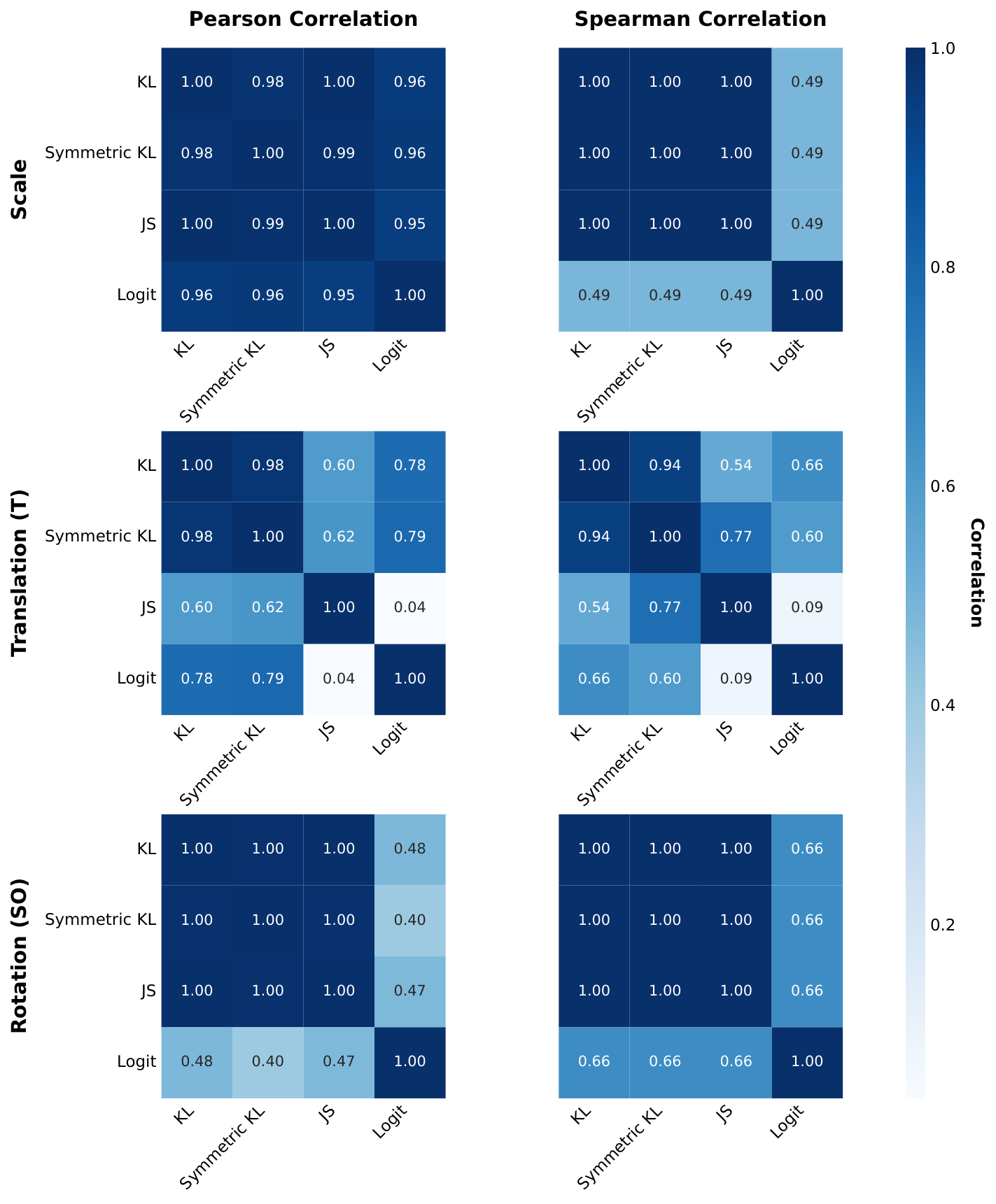}
    \caption{OOD Topology invariance metrics correlation for base angle-preserving symmetries.}
    \label{fig:topology_metrics}
\end{figure}

\subsection{Finite augmentation reveals a symmetry inductive bias in Transformers}
\label{app:experiment_details_motivation}

In this section, we provide additional details regarding the datasets and experiments presented in \S\ref{sec:motivation} of the paper.

\noindent\textbf{3D Topology.}
In order to demonstrate that our previous results are due to the alignment of the Transformer architecture with the symmetries instead of it being an artifact of the input-output mapping being lost through the non-rigid transformations, we define a synthetic benchmark. 
In particular, we construct a very simple dataset consisting of multiple randomly samplings of point clouds for five different shapes: a sphere, a torus, a double torus, two disjoint spheres, and two linked tori.
We randomly generate 100 samples per class for train and 50 for validation and testing.
The sphere, torus, and double torus represent increasing values of genus (or first Betti number, or number of holes), while the two disjoint spheres are differentiated through the zeroth Betti number (or number of connected components), and the linked tori distinguished themselves through the linking number (or ambient entanglement).
All of those properties remain preserved through not only the angle-preserving transformations, but also the the homeomorphism transformations represented by the non-angle-preserving groups we consider (i.e., $\operatorname{SL}(3), \operatorname{GL}(3), \operatorname{Affine}(3)$).

\noindent\textbf{ModelNet10.}
We also work with ModelNet10 \cite{wu20153d}, which following standard practice, we pre-process by centering and normalizing the point coordinates to the range $(-1, 1)$ and sampling $1024$ points per cloud. 
We use the official train-test splits.
Note that ModelNet10 objects are provided in an approximately canonical frame. Moreover, all of our comparisons use the same underlying dataset and differ only in the augmentation group. Thus, any invariance induced by the dataset distribution itself is shared across settings, while differences between models reflect the additional symmetry constraints imposed by augmentation.

\noindent\textbf{The effect of data augmentation.}
We illustrate the effect of the data augmentation process for the different symmetries we consider in our experiments in Figure~\ref{fig:topology_augmentations} for some sample point clouds of the 3D Topology dataset.

\begin{figure}
    \centering
    \includegraphics[width=0.75\linewidth]{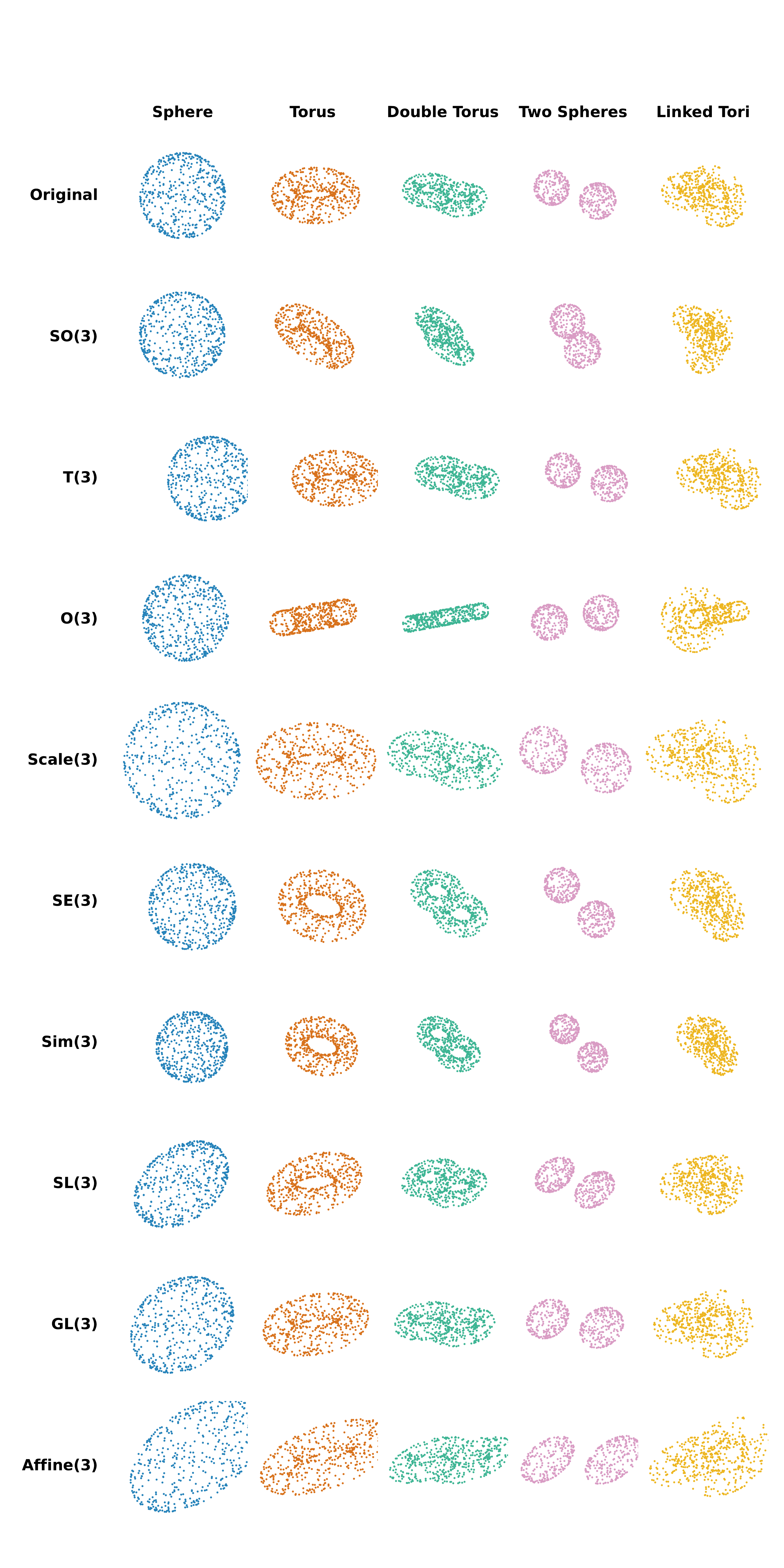}
    \caption{Illustration of augmentations from the different angle-preserving and non-angle-preserving symmetry augmentations that we consider in our experiments for the 3D Topology dataset.
    }
    \label{fig:topology_augmentations}
\end{figure}

\subsection{Learning base angle-preserving symmetries}
\label{app:ood}

In this section, we provide additional details regarding the architectures and experiments presented in \S\ref{sec:ood} of the paper.

\noindent\textbf{DeepSets \citep{zaheerDeepSets2017}.} 
Permutation-invariant architecture designed to work with set inputs, defined as $f(X) = \rho(\sum_{x \in X} \phi(x))$, where $X$ is a set, and $\rho, \phi$ are universal approximators. 
In practice, we implement $\phi$ as a shared MLP composed of point-wise linear layers, batch normalization, and ReLU activations applied independently to each element of the input. 
The global aggregation is performed using sum pooling across the input dimension. 
Finally, $\rho$ is defined as a 2-layer ReLU-MLP with dropout that processes the aggregated feature vector  and produces the final predictions. 

\noindent\textbf{PointNet \citep{qi2017pointnet}.}
A permutation-invariant architecture for point cloud tasks, e.g., object classification or part segmentation, defined similarly to DeepSets, but with the particularities of relying on max pooling and adding alignment networks known as transformation networks (T-nets) to both raw input coordinates and the intermediate feature space.
These T-nets are used to predict transformation matrices and apply said transformations to the coordinates of input points, seeking that the network is invariant to linear transformations, like rotation and and reflection.
In practice, both the overarching PointNet architecture and the T-nets are defined using shared MLPs with point-wise linear layers, batch normalization, and ReLU activations applied independently to each element of the input.
The aligned per-point features are aggregated via max pooling into a single global feature vector, which is then passed through a 3-layer MLP with dropout to produce the final outputs.

\begin{table}[t]
\centering
\caption{Parameter overview of compared architectures. Both operate on 3D point sets ($d_{\text{in}}=3$) and are parameter-matched to $\approx\!51$k. 
$C$ denotes the number of classes.
}
\label{tab:arch_comparison}
\small
\begin{tabular}{llll}
\toprule
Model & Block & Layer sizes & \#Params \\
\midrule
\multirow{2}{*}{DeepSets}
 & Encoder $\phi$          & $3\!\to\!156\!\to\!156$, sum-pool                                          & 25{,}740 \\
 & Decoder $\rho$          & $156\!\to\!156\!\to\!C$ (dropout $0.1$)                                    & 25{,}748 \\
\midrule
\multirow{5}{*}{PointNet}
 & Input T-Net ($k{=}3$)   & $3\!\to\!9\!\to\!18\!\to\!140$, maxpool, $140\!\to\!70\!\to\!35\!\to\!9$   & 16{,}099 \\
 & MLP 1            & $3\!\to\!9\!\to\!9$                                                        & 162 \\
 & Feature T-Net ($k{=}9$) & $9\!\to\!9\!\to\!18\!\to\!140$, maxpool, $140\!\to\!70\!\to\!35\!\to\!81$  & 18{,}745 \\
 & MLP 2            & $9\!\to\!18\!\to\!140$, maxpool                                            & 3{,}156 \\
 & Final MLP                   & $140\!\to\!70\!\to\!35\!\to\!C$ (dropout $0.1$)                            & 12{,}853 \\

\bottomrule
\end{tabular}
\end{table}

We try to roughly match the parameter count of the used Transformer, while approximately maintaining the relative module sizes, which results in the settings listed in Table \ref{tab:arch_comparison}. For the 10 classes of ModelNet10, the total counts come in at 50,954 (Transformer), 51,802 (DeepSets) and 51,087 (PointNet) trainable parameters.

\noindent\textbf{OOD setting.}
Seeking to explore whether the learned invariances also hold for transformations not seen at training time, we define the training, validation, test, and generalization ranges of augmentations as shown in Table~\ref{tab:ood_setup}.
During training, the model is exposed only to augmentations sampled from the smaller-magnitude range, and it is then tested on augmentations sampled from a disjoint larger-magnitude range.

\begin{table}[t]
    \centering
    \begin{tabular}{llcc}
        \toprule
        Transformation & Symmetry Group & Train / Validation / Test & Generalization \\
        \midrule
        Rotation & $\operatorname{SO}(3)$ & $[0^{\circ}, 90^{\circ})$ & $[90^{\circ}, 180^{\circ})$ \\
        Rotation and reflection & $\operatorname{O}(3)$ & $[0^{\circ}, 90^{\circ})$ & $[90^{\circ}, 180^{\circ})$ \\
        Translation & $\operatorname{T}(3)$ & $[0, 1.5)$ & $[1.5, 3)$ \\
        Scaling & $\operatorname{Scale}(3)$ & $[\times 1, \times 1.5)$ & $[\times 1.5, \times 2)$ \\
        \bottomrule
    \end{tabular}
    \vspace{1em}
    \caption{Ranges of augmentations used for training, validation, and testing, as well as for measuring the generalization of the model for the considered based symmetries in \S\ref{sec:ood}.}
    \label{tab:ood_setup}
\end{table}

\subsection{From invariance to equivariance}
\label{app:equivariance_details}

\noindent\textbf{ShapeNet.} ShapeNet \cite{yi2016scalable} consists of 16,881 coordinate point clouds representing 16 different shapes like e.g. \emph{Airplane} or \emph{Bag}. It is annotated by surface normal vectors for each point and two to six segmentation parts per category. 

\noindent\textbf{Equivariant targets.} For a smooth surface defined implicitly as $F(\mathbf{\vx}) = 0$ for $\vx \in \mathbb{R}^3$, the unit normal at a point $\vx$ is

\begin{equation}
    \operatorname{N}(\vx) = \frac{\nabla F(\vx)}{\|\nabla F(\vx)\|}.
\end{equation}

Normal vectors for each point of the input are the $\operatorname{O}(3)$- and $\operatorname{SO(3)}$-equivariant targets in \S\ref{sec:equivariance}. 

Given a segmentation map $\operatorname{S}:\sX \to \sP$ for a point cloud $\sX$ and part identities $\sP = \{1, \dots, p\}$, the centroid of part $k$ can be defined as

\begin{equation}
    \operatorname{C}(k) = \frac{1}{|\operatorname{S}^{-1}(k)|} \sum_{\vx \in \operatorname{S}^{-1}(k)} \vx.
\end{equation}

Mapping each point to the centroid of the part it belongs to requires a model to first determine the correct segment subset and then compute the mean over it. This poses the $\operatorname{T}(3)$- and $\operatorname{Scale}(3)$-equivariant task used in \S\ref{sec:equivariance}.

\noindent\textbf{Training.} We use the same small vanilla Transformer architecture as specified at the beginning of this section and \S\ref{sec:mechanisms}, but drop the final sum aggregation to obtain per-point outputs. The training objective is a mean squared error loss. 

\section{Additional experimental results}
\label{app:additional_experiments}

\subsection{3D Tetris}
\label{app:tetris}

3D Tetris \cite{thomas2018tensor} is a common synthetic benchmark for the study of point cloud invariance \cite{melnyk2021embed, melnyk2022steerable, winter2022unsupervised}, and contains eight shapes, each consisting of four three-dimensional points, representing the center of each Tetris block shape (see bottom-left of Fig.~\ref{fig:3d_tetris}).
We extend this dataset by generating 50 samples per shape by adding small amounts of jitter and noise to the points of each original shape in order to increase the complexity of the task.
Since the topology of Tetris shapes is identical, their only distinguishing features are geometric angles and distances, which are distorted under non-angle-preserving transformations ($\operatorname{SL}(3), \operatorname{GL}(3), \operatorname{Affine}(3)$), therefore the classification problem becomes ill-posed in such scenarios, and those symmetries are not expected to be learned too well.
Similarly, the reflections introduced by the orthogonal group $\operatorname{O}(3)$ would also result in a restricted upper-bound classification performance for the models, since they will collapse the representations of the two classes with opposite chirality together.

As previously observed in \S\ref{sec:motivation}, Figure~\ref{fig:3d_tetris} shows that there is again a stratification in the performance provided by the Transformer according to the type of symmetry groups. In particular, the results reinforce the previously identified increase in the ease of learnability of the symmetry when going from non-angle-preserving groups to compositional angle-preserving ones and then to base angle-preserving groups.
We should note, however, that the labels of this standard dataset are not necessarily exactly invariant to all non-angle-preserving transformations regardless of their strength, which may account for their low observed performance.

\begin{figure}
    \centering
    \begin{minipage}{0.37\textwidth}
    \includegraphics[width=\linewidth]{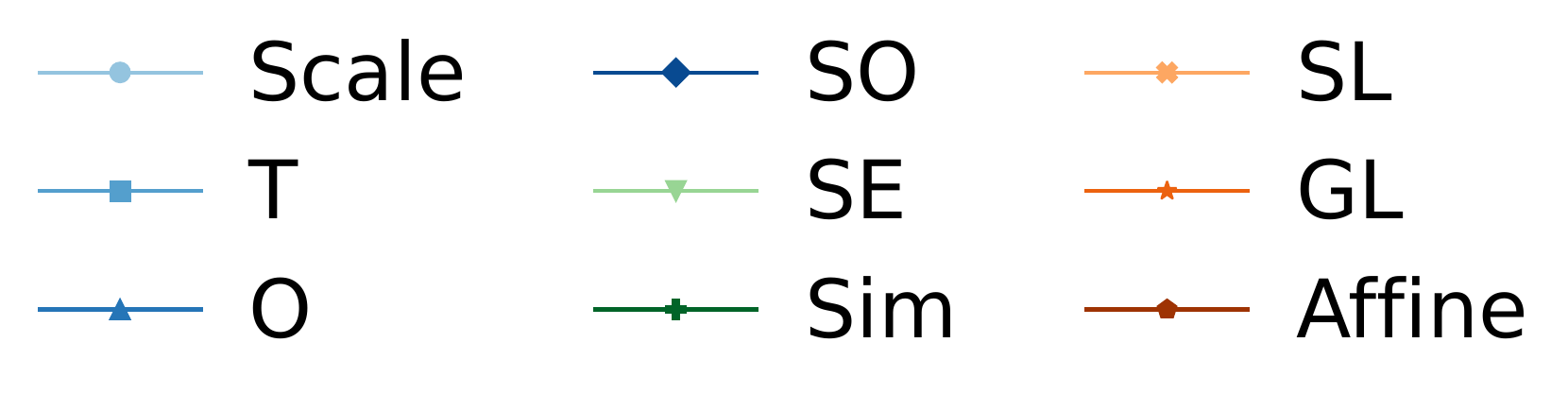} 
    \includegraphics[width=\linewidth]{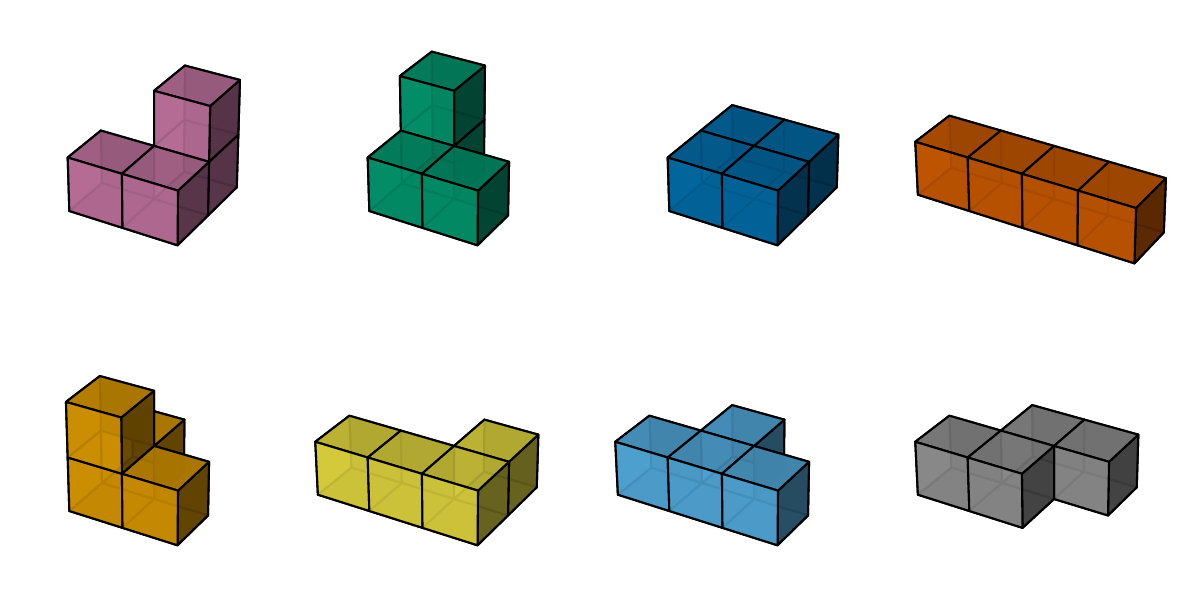}
    \end{minipage}
    \hfill
    \begin{minipage}{0.6\textwidth}
    \includegraphics[width=0.47\linewidth]{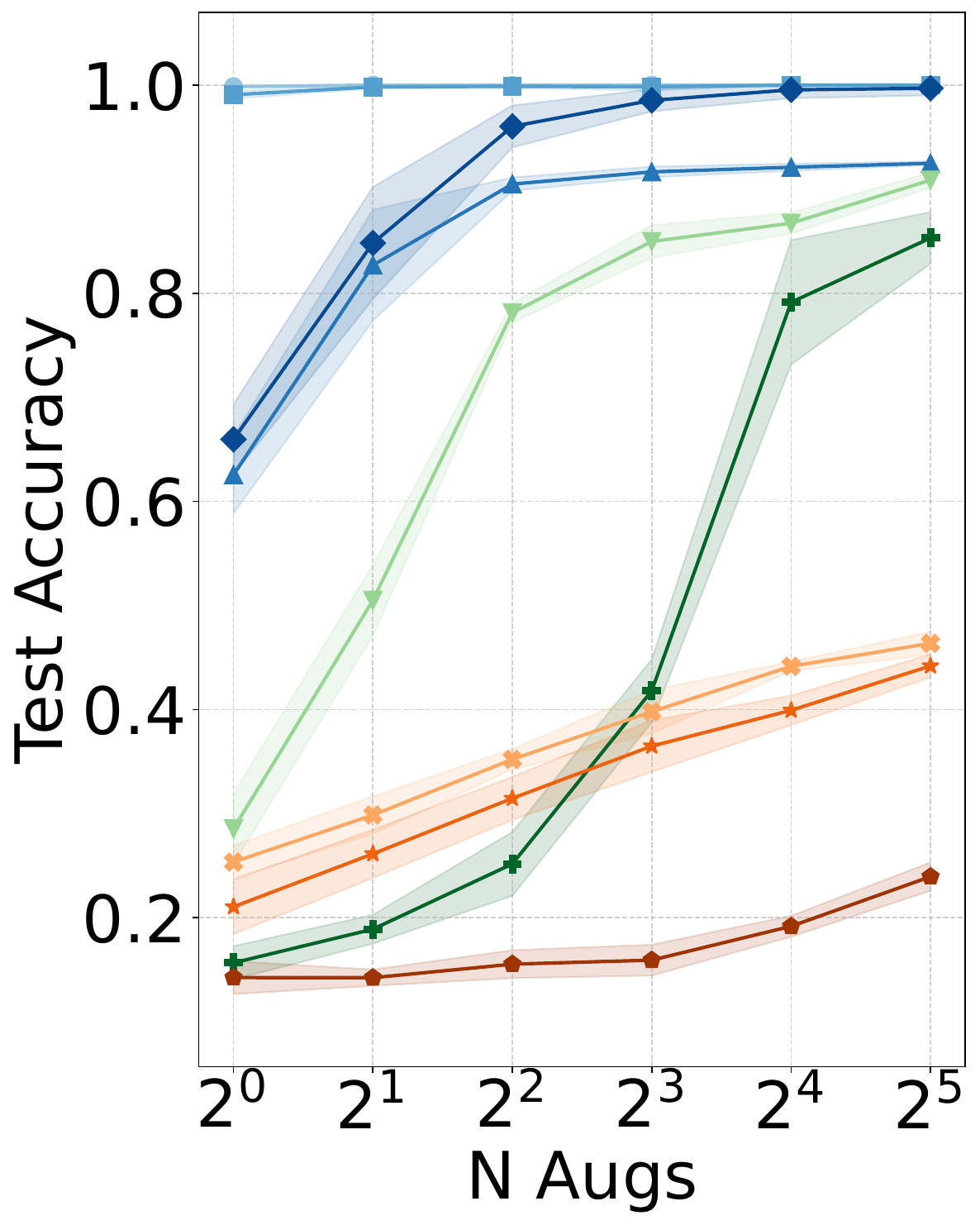}
    \hfill
    \includegraphics[width=0.47\linewidth]{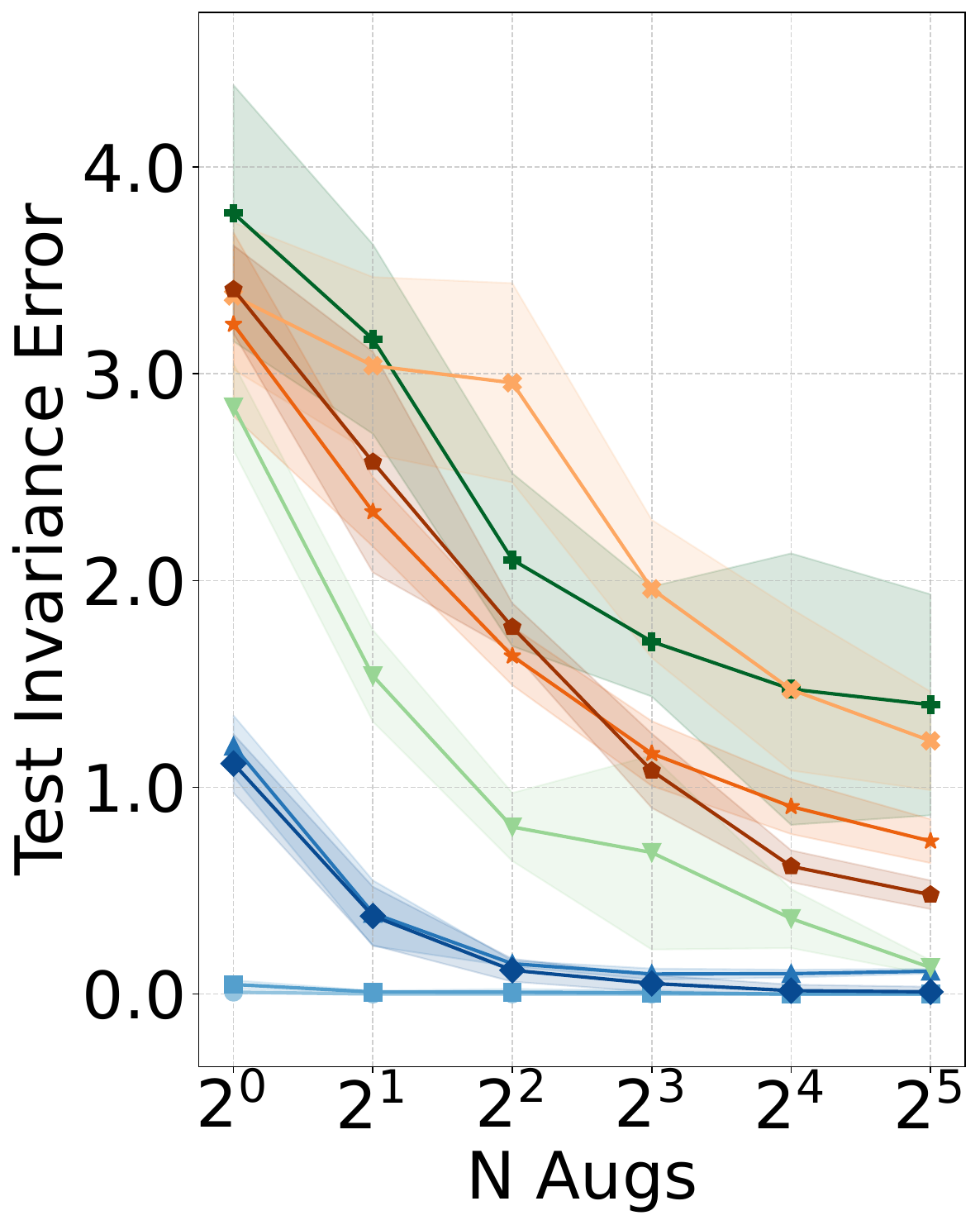}
    \end{minipage}
     \caption{Illustration of samples of 3D Tetris (bottom-left) and results measuring the mean and standard deviation for test accuracies (middle) and invariance losses (right) across 5 runs with different random seeds. Angle-preserving symmetries (blue and green tones) are efficiently learned by the Transformer, while it struggles with non-angle-preserving ones (orange tones).
     }
    \label{fig:3d_tetris}
\end{figure}

In addition, when analyzing the models trained with this dataset (see \Cref{fig:rotation-mechanism_tetris,fig:translation-mechanism_tetris,fig:ablations_tetris,fig:scale-mechanism_tetris}, we find further evidence of the invariance mechanisms identified in \S\ref{sec:mechanisms}.

\begin{figure}[t]
    \centering
    \begin{minipage}[c]{0.21\linewidth}
        \centering
        \includegraphics[width=0.9\linewidth]{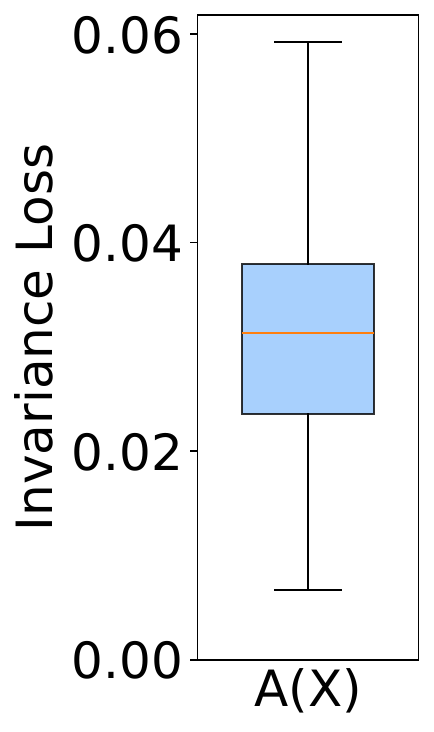}
    \end{minipage}
    \hfill
    \begin{minipage}[c]{0.71\linewidth}
        \centering
        \includegraphics[width=\linewidth]{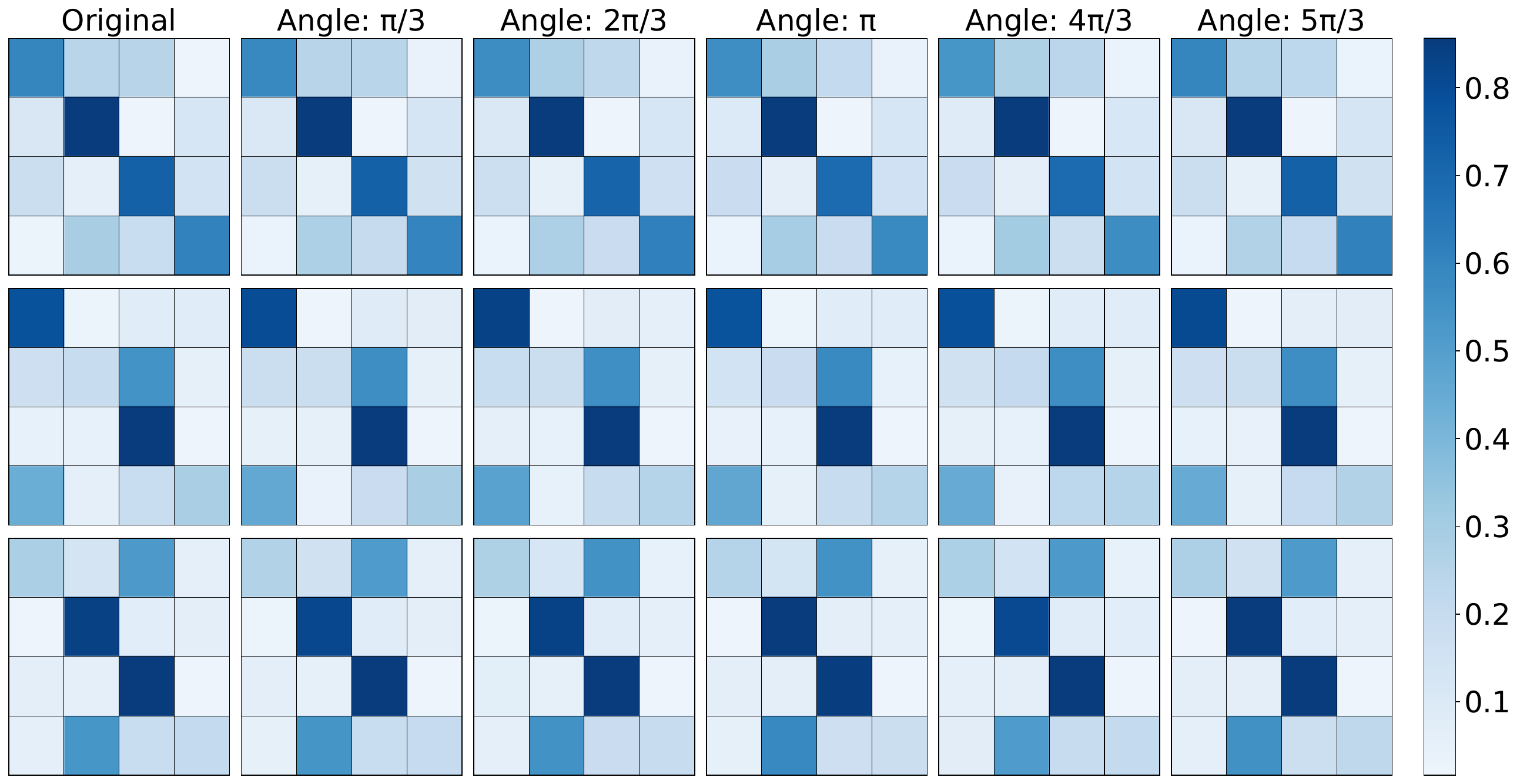}
    \end{minipage}
    
     \caption{Attention maps of a full Transformer trained on Tetris augmented by 32 rotations per sample are very stable across augmentations, as also indicated by their low invariance loss.  
     }
    \label{fig:rotation-mechanism_tetris}
\end{figure}

\begin{figure}[t]
    \centering
    \begin{minipage}[c]{0.45\linewidth}
        \centering
        \includegraphics[width=\linewidth]{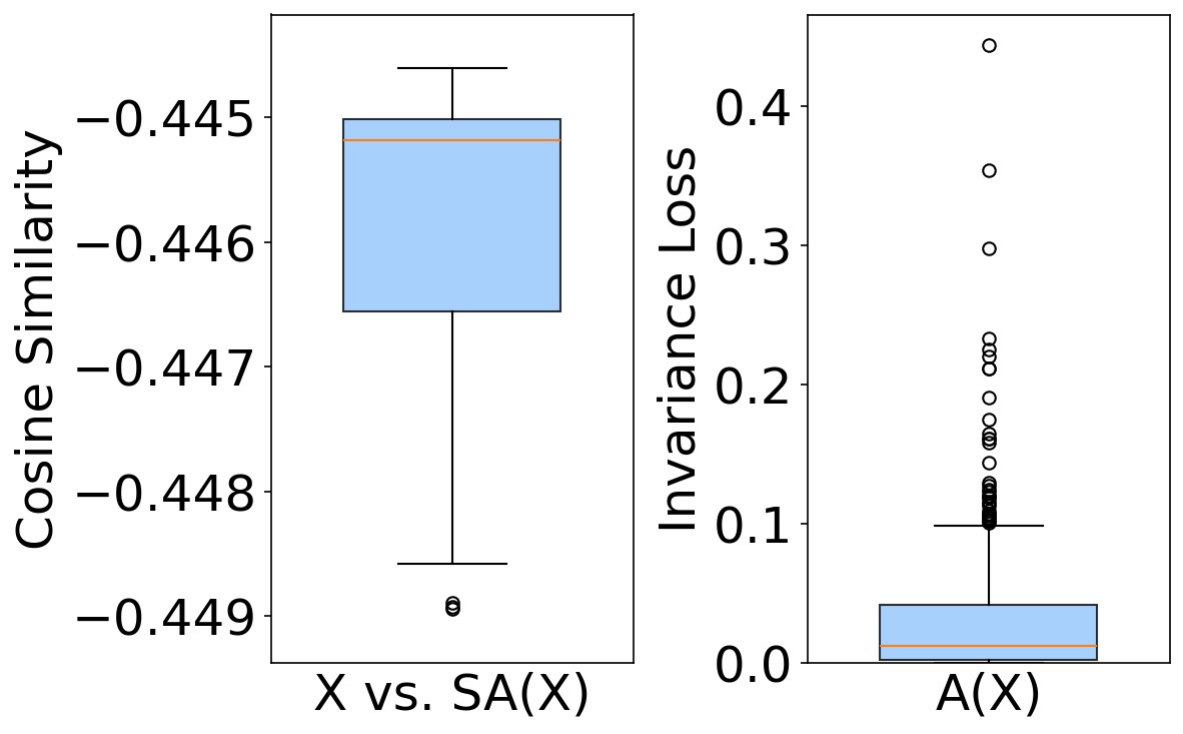}
    \end{minipage}
    \hfill
    \begin{minipage}[c]{0.5\linewidth}
        \centering
        \includegraphics[width=\linewidth]{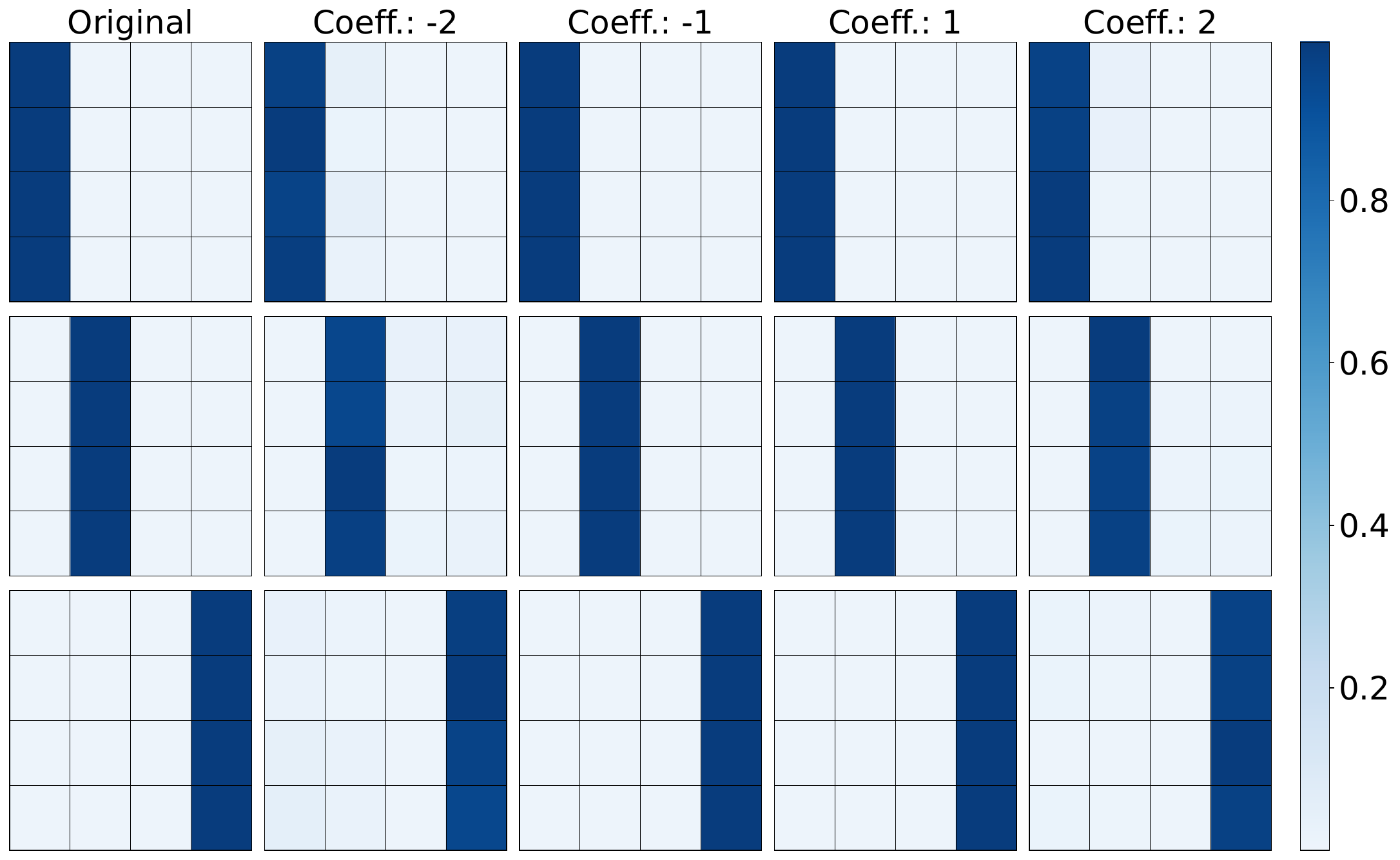}
    \end{minipage}
    
     \caption{Results for the first block a full Transformer trained on Tetris with 32 translations per sample. Right: Each row shows the attention maps of a sample (Original) and four of its augmentations over the full training range. A coefficient of, e.g., 2 means the points are shifted by 2 in the positive direction for each axis. Samples from the same orbit focus their attention on the same landmark point. Middle: Invariance loss of the attention maps is low for the entire test set. Left: Cosine similarity between $\mX$ and $\SA(\mX)$ is strictly negative, i.e. the attention output has signs opposite to those of the residuals.
     }
    \label{fig:translation-mechanism_tetris}
\end{figure}

\begin{figure}
    \centering
    \includegraphics[width=0.45\linewidth]{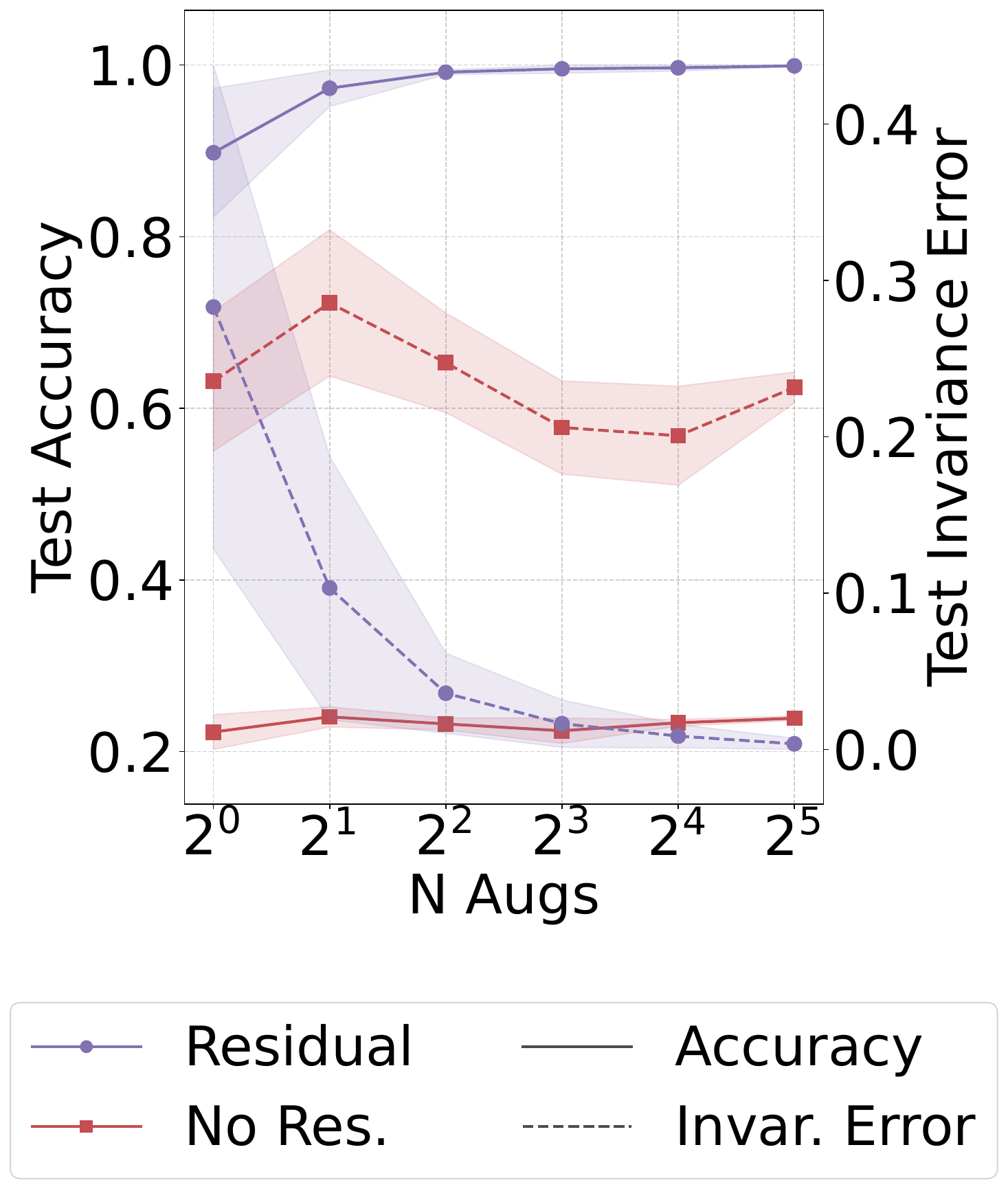}
    \hfill
    \includegraphics[width=0.46\linewidth]{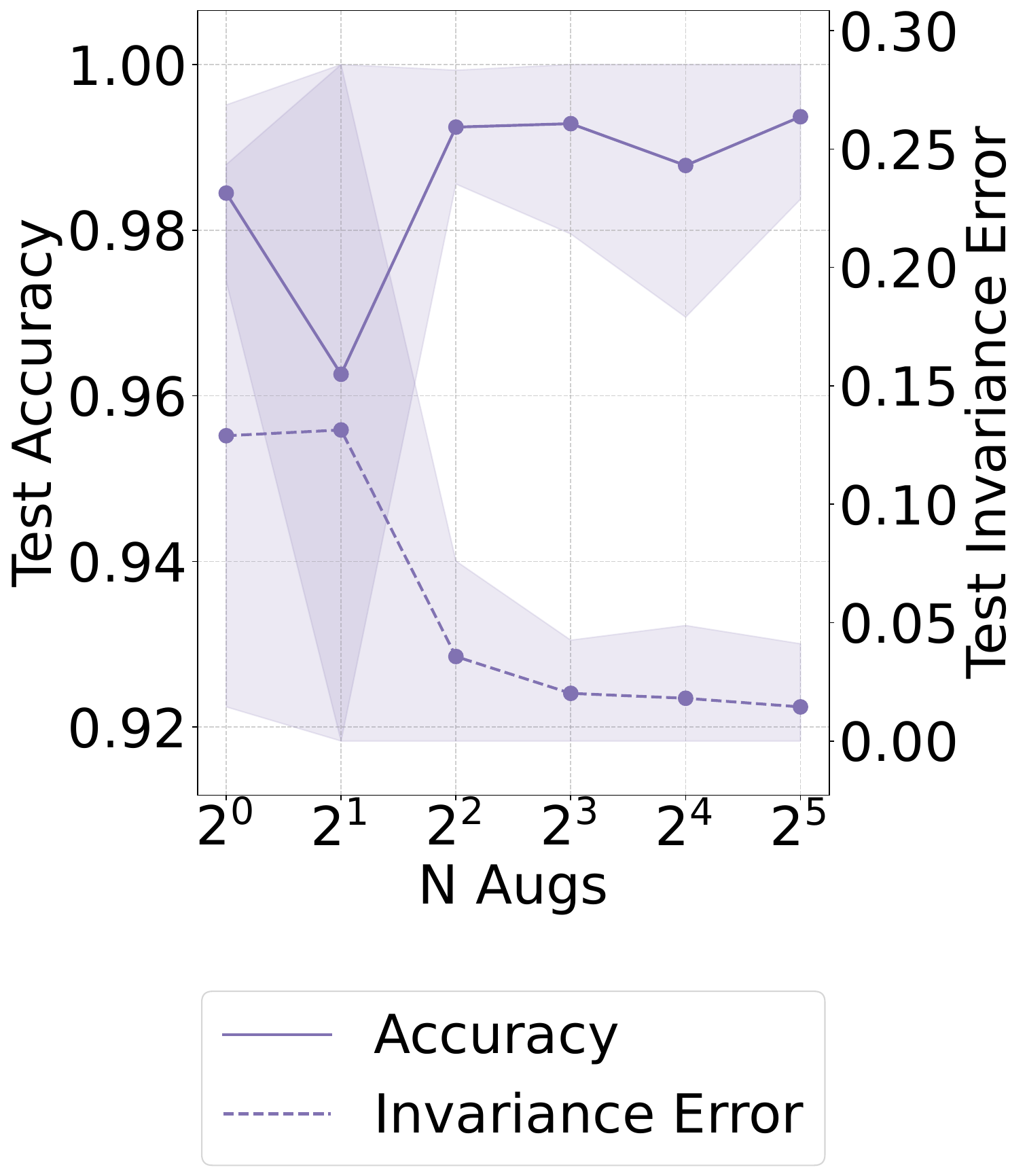}
     \caption{Mechanism ablations for Tetris dataset. Left: A model with 2 blocks of $\mX + \SA(\LN(\mX))$ (\emph{Residual}) can learn to be translation invariant and solve the Tetris task, but $\SA(\LN(\mX))$ (\emph{No Res.}) cannot. Right: A model consisting only of $\SA(\LN(\mX))$ blocks can learn to be scale invariant and solve the Tetris task. 
     }
    \label{fig:ablations_tetris}
\end{figure}

\begin{figure}[t]
    \centering
    \begin{minipage}[c]{0.45\linewidth}
        \centering
        \includegraphics[width=\linewidth]{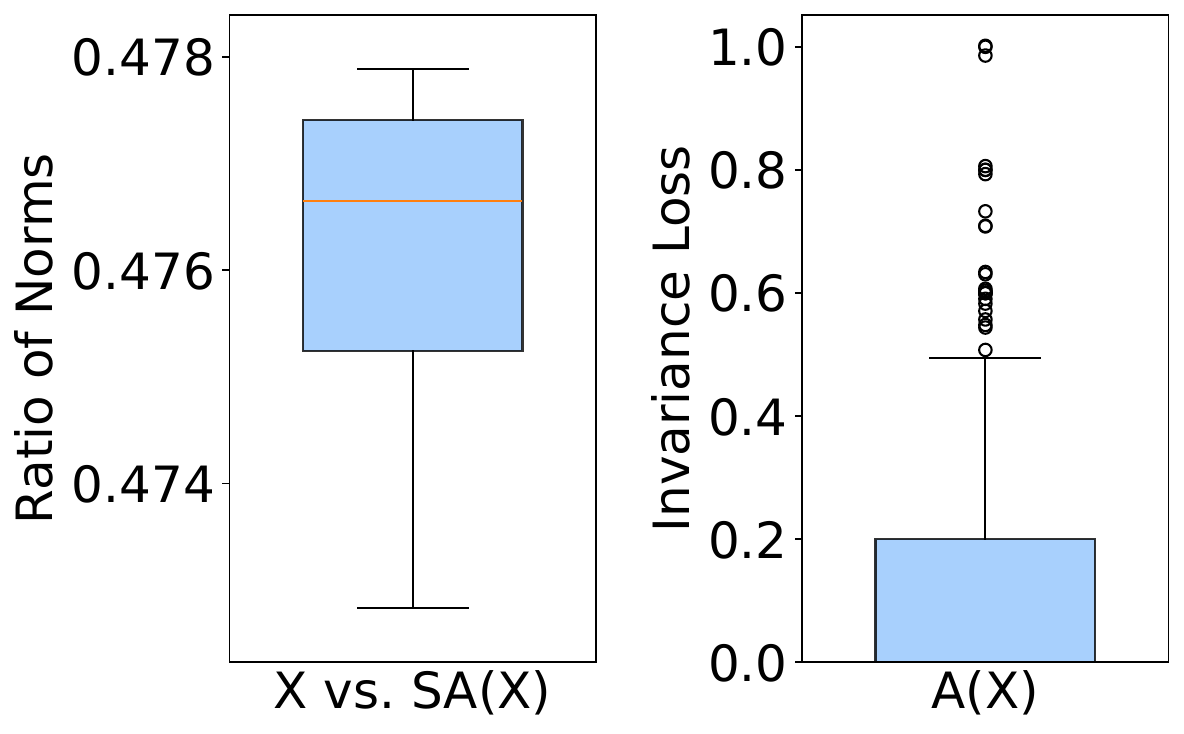}
    \end{minipage}
    \hfill
    \begin{minipage}[c]{0.5\linewidth}
        \centering
        \includegraphics[width=\linewidth]{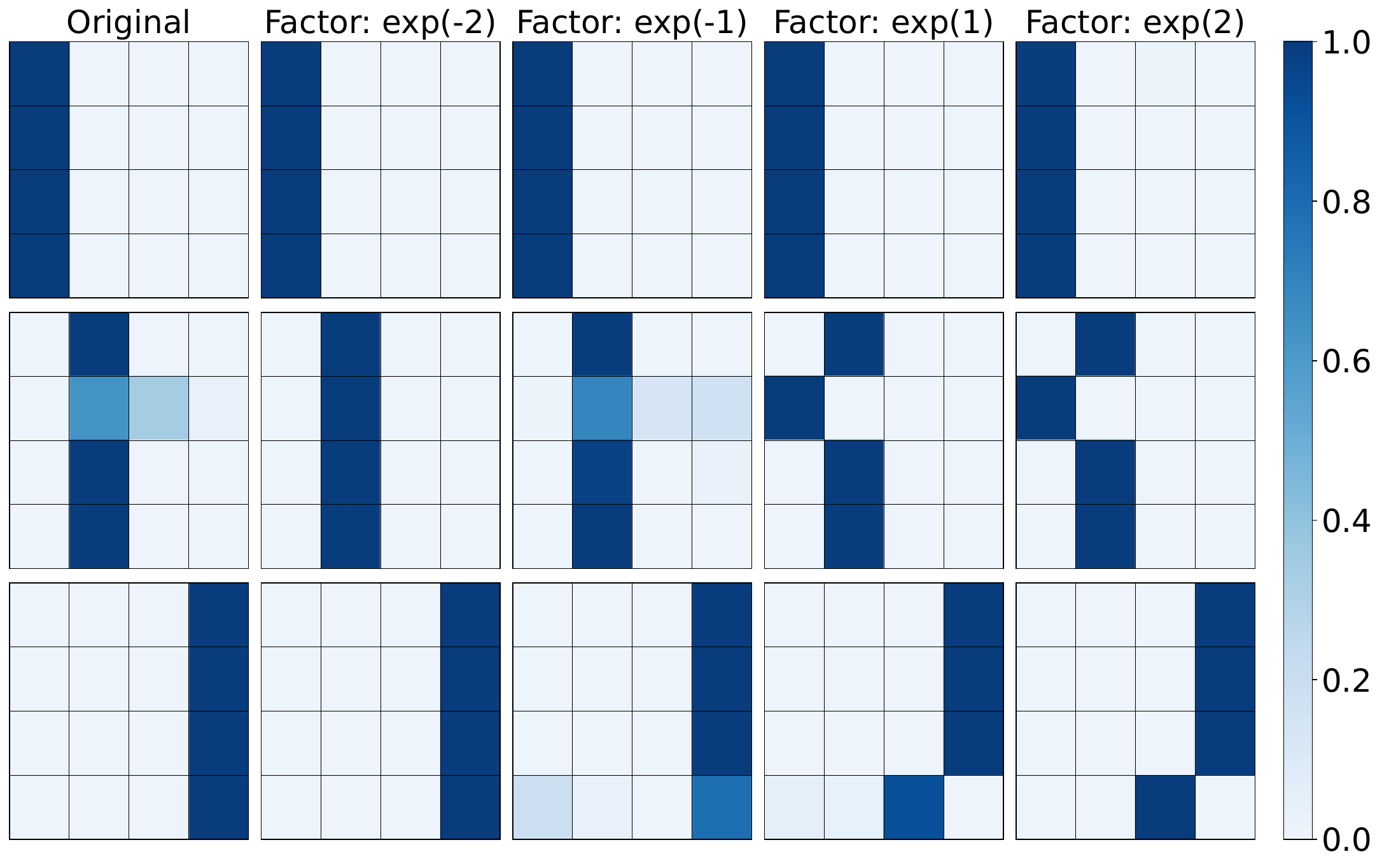}
    \end{minipage}
    
     \caption{Mechanism indicators for the second block of a Transformer trained on Tetris with 32 scale augmentations. Left: points in the attention output have larger norms than points in the residual. The non-invariance of the residuals is therefore reduced to the noise level. Middle and Right: Attention maps are mostly invariant under scalings. 
     Different rows correspond to different samples. 
     }
    \label{fig:scale-mechanism_tetris}
\end{figure}

\subsection{Molecular property prediction}
\label{app:qm9}

\begin{figure}[t]
    \centering
    
    \includegraphics[width=0.85\linewidth]{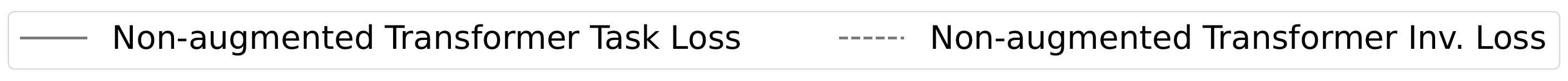} \\[0.5em] 
       
    \begin{minipage}[b]{0.24\linewidth}
        \centering
        \includegraphics[width=\linewidth]{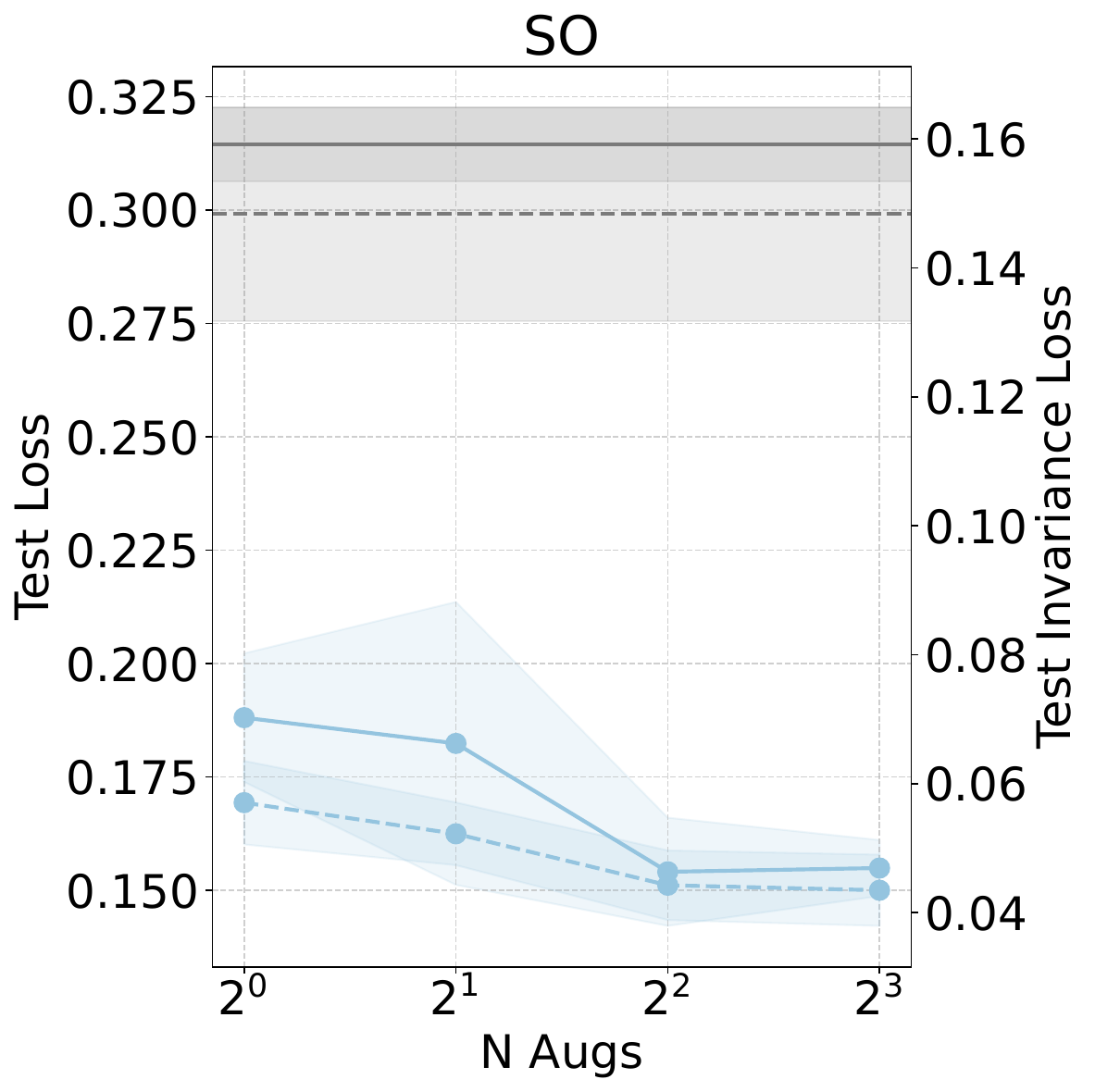}
    \end{minipage}
    \hfill
    \begin{minipage}[b]{0.24\linewidth}
        \centering
        \includegraphics[width=\linewidth]{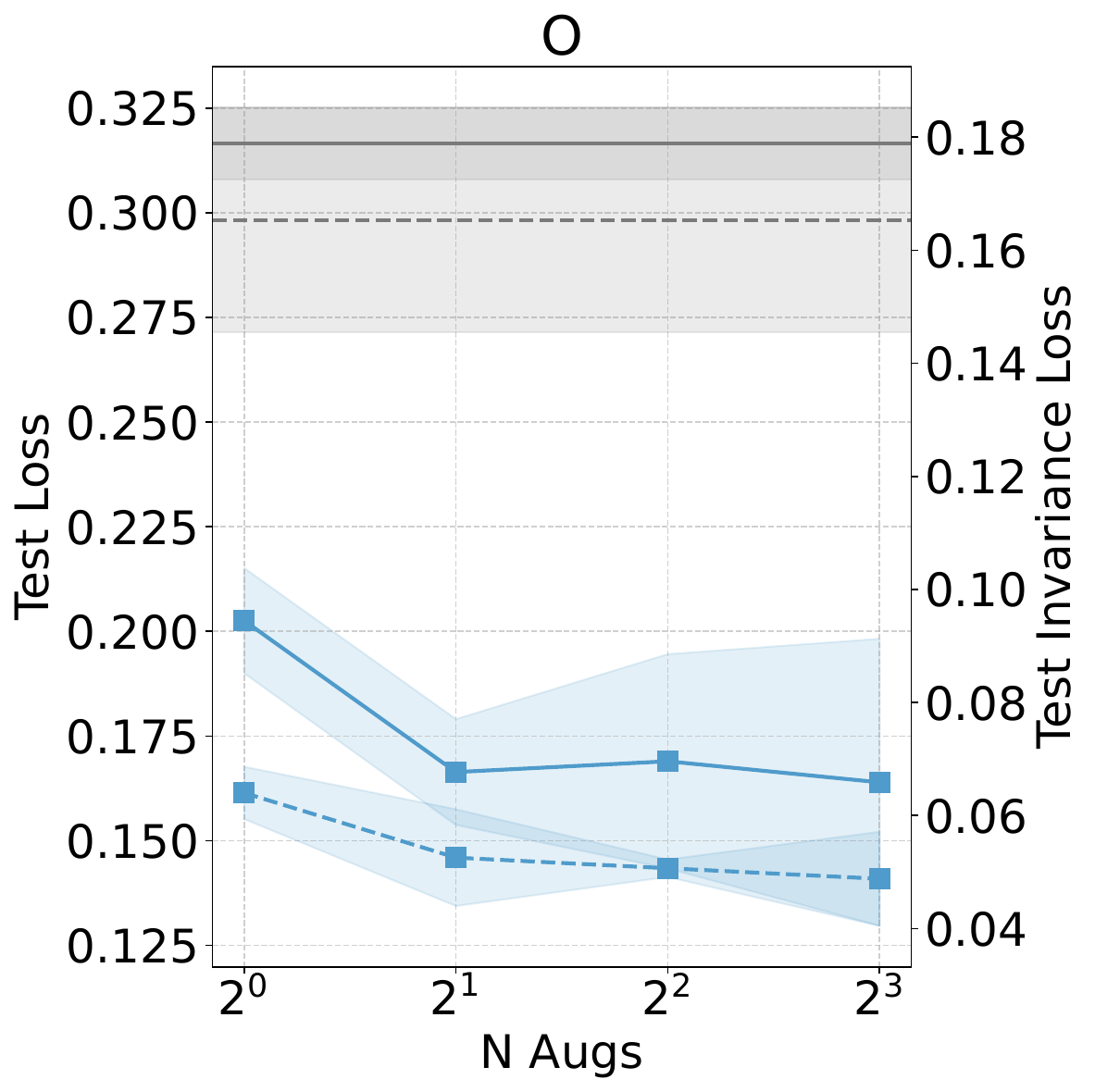}
    \end{minipage}
    \hfill
    \begin{minipage}[b]{0.24\linewidth}
        \centering
        \includegraphics[width=\linewidth]{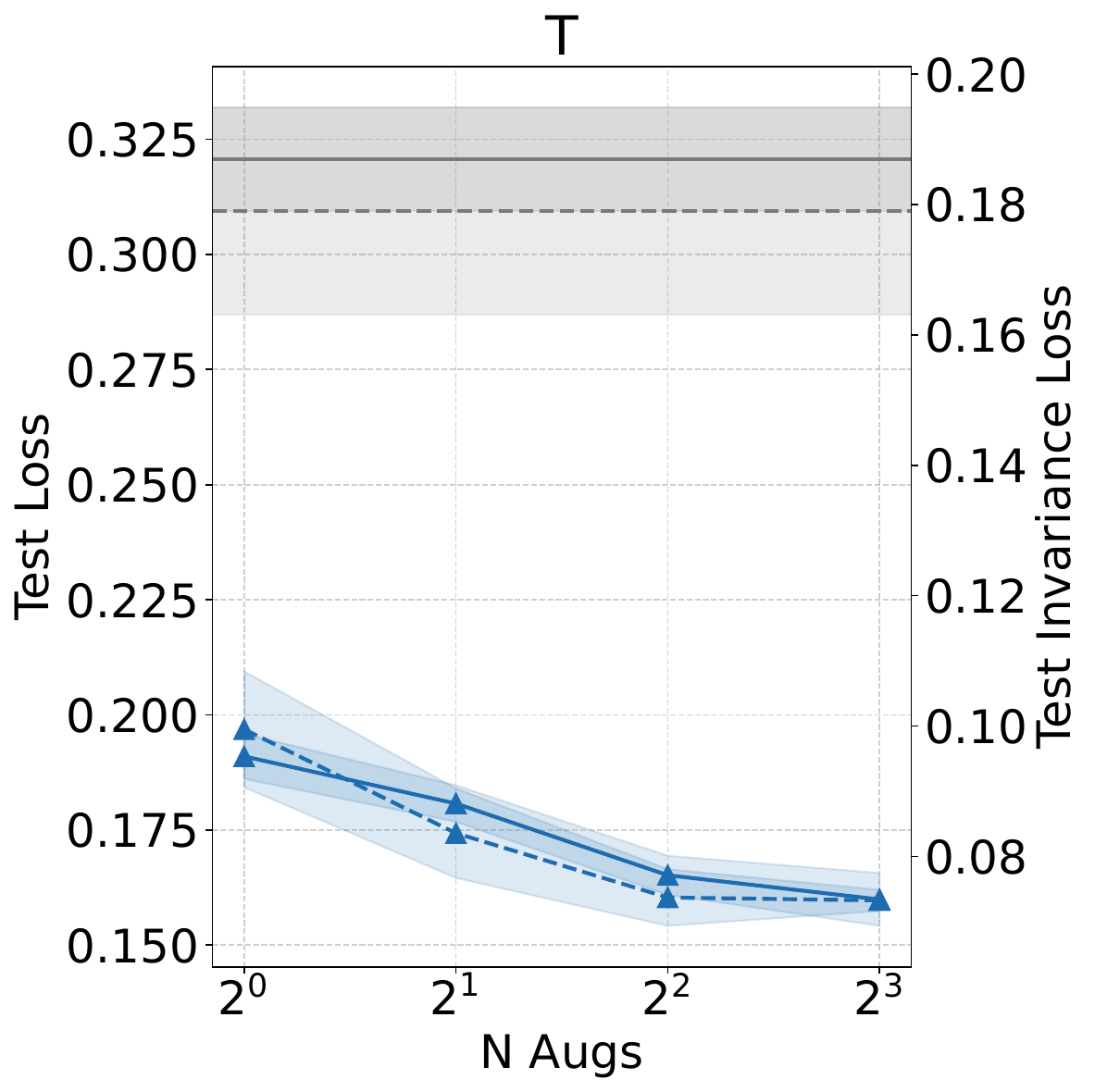}
    \end{minipage}
    \hfill
    \begin{minipage}[b]{0.24\linewidth}
        \centering
        \includegraphics[width=\linewidth]{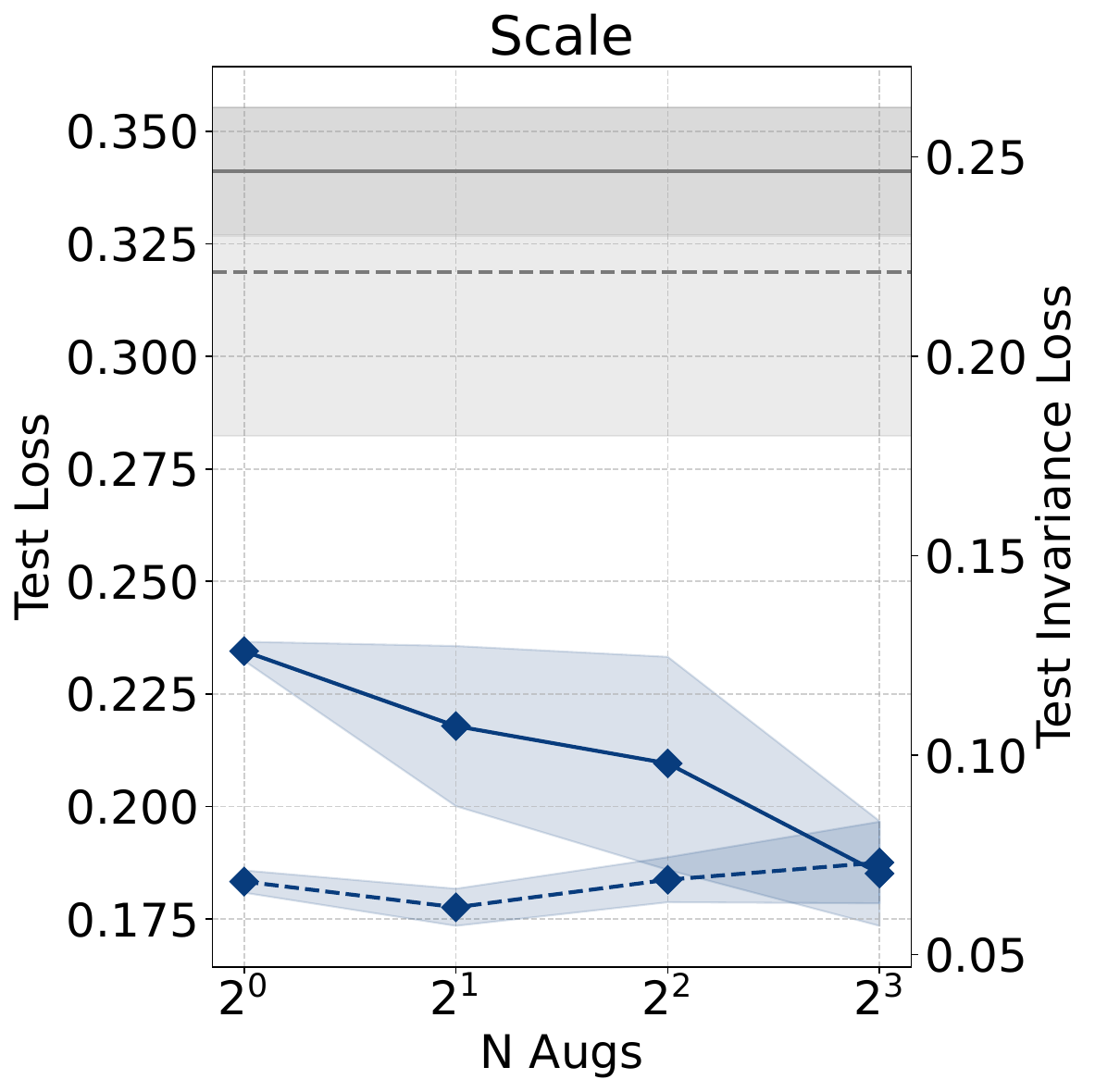}
    \end{minipage}
    \\[0.5em] 
    
    \begin{minipage}[b]{0.24\linewidth}
        \centering
        \includegraphics[width=\linewidth]{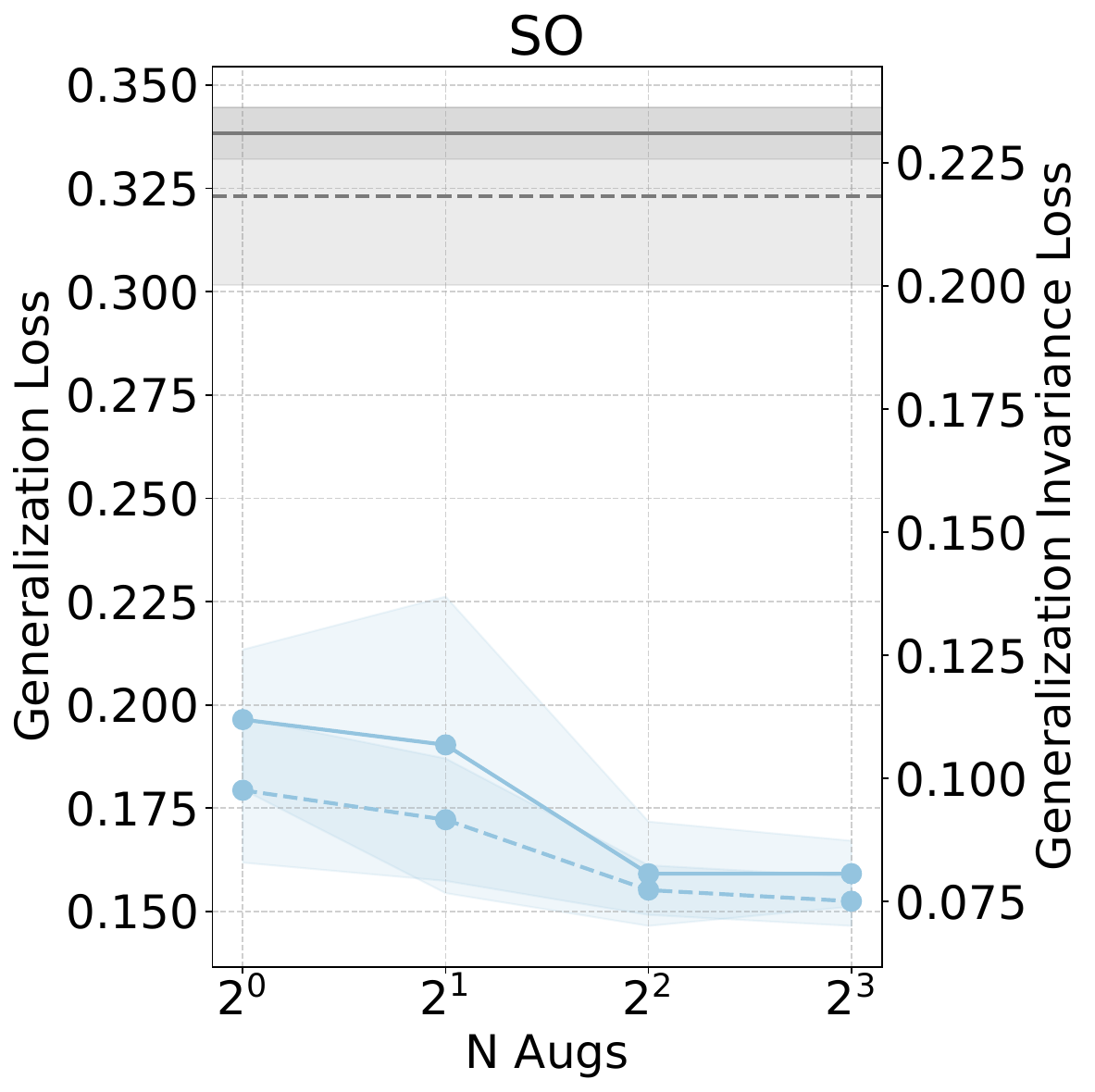}
    \end{minipage}
    \hfill
    \begin{minipage}[b]{0.24\linewidth}
        \centering
        \includegraphics[width=\linewidth]{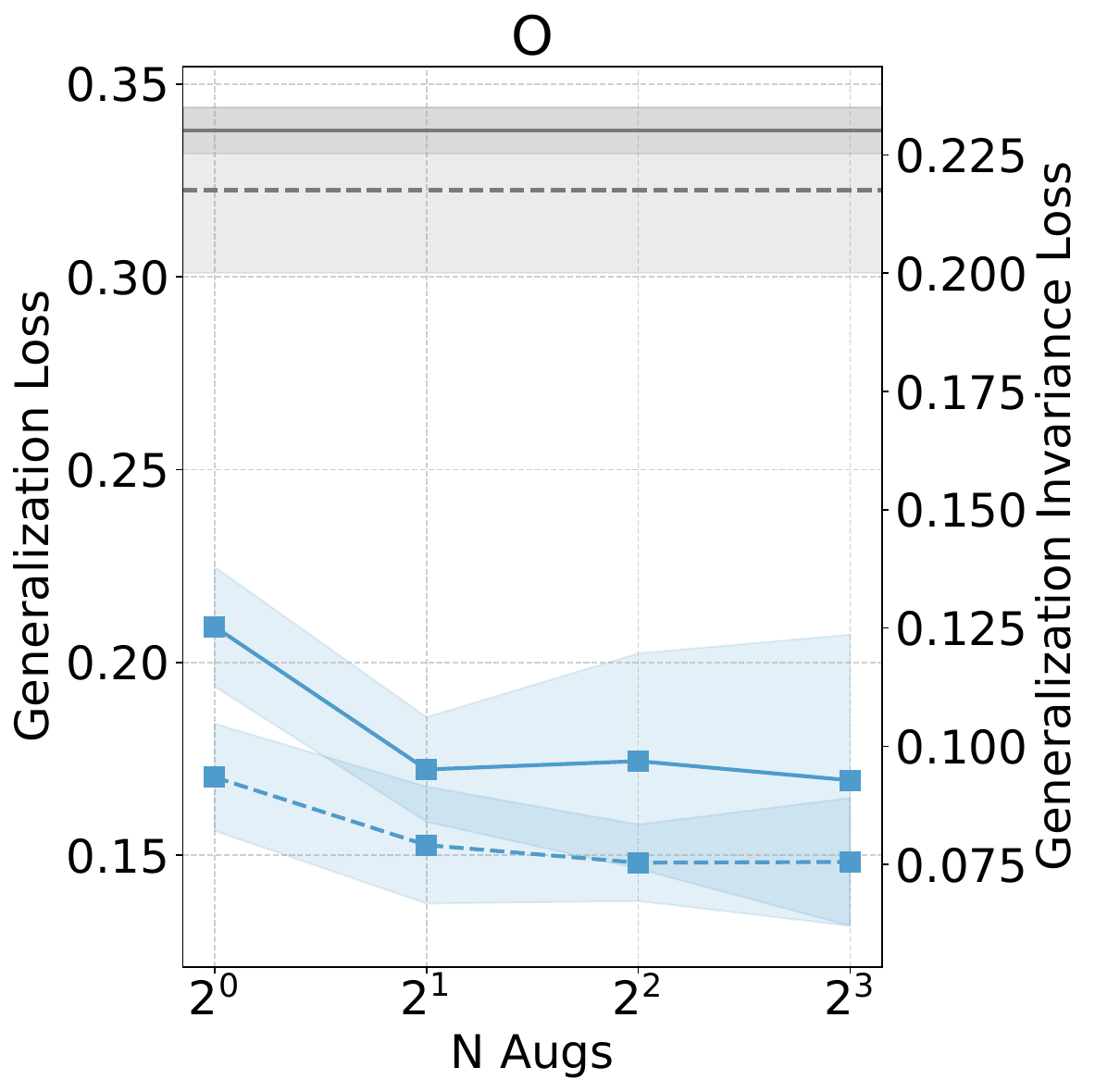}
    \end{minipage}
    \hfill
    \begin{minipage}[b]{0.24\linewidth}
        \centering
        \includegraphics[width=\linewidth]{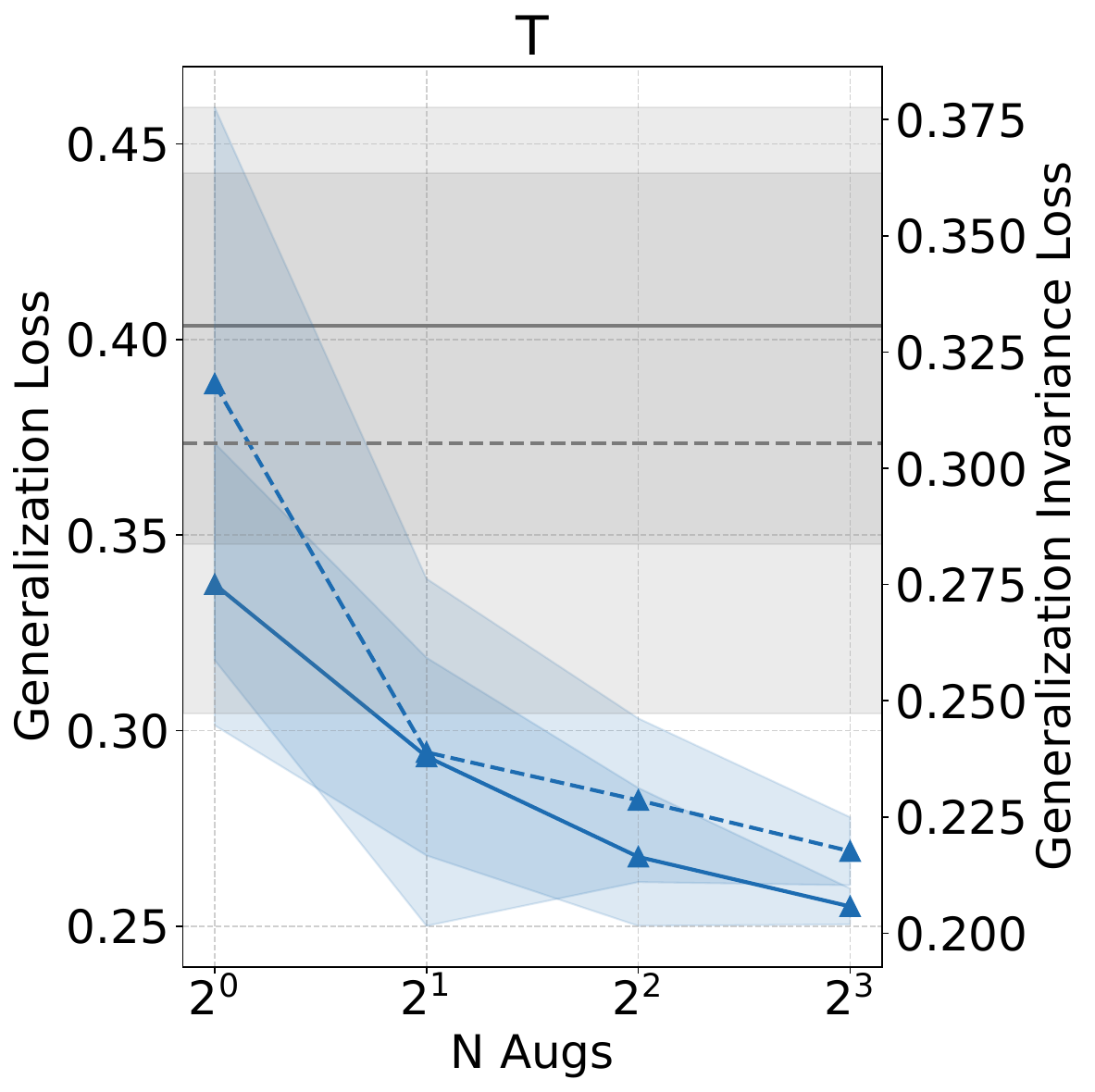}
    \end{minipage}
    \hfill
    \begin{minipage}[b]{0.24\linewidth}
        \centering
        \includegraphics[width=\linewidth]{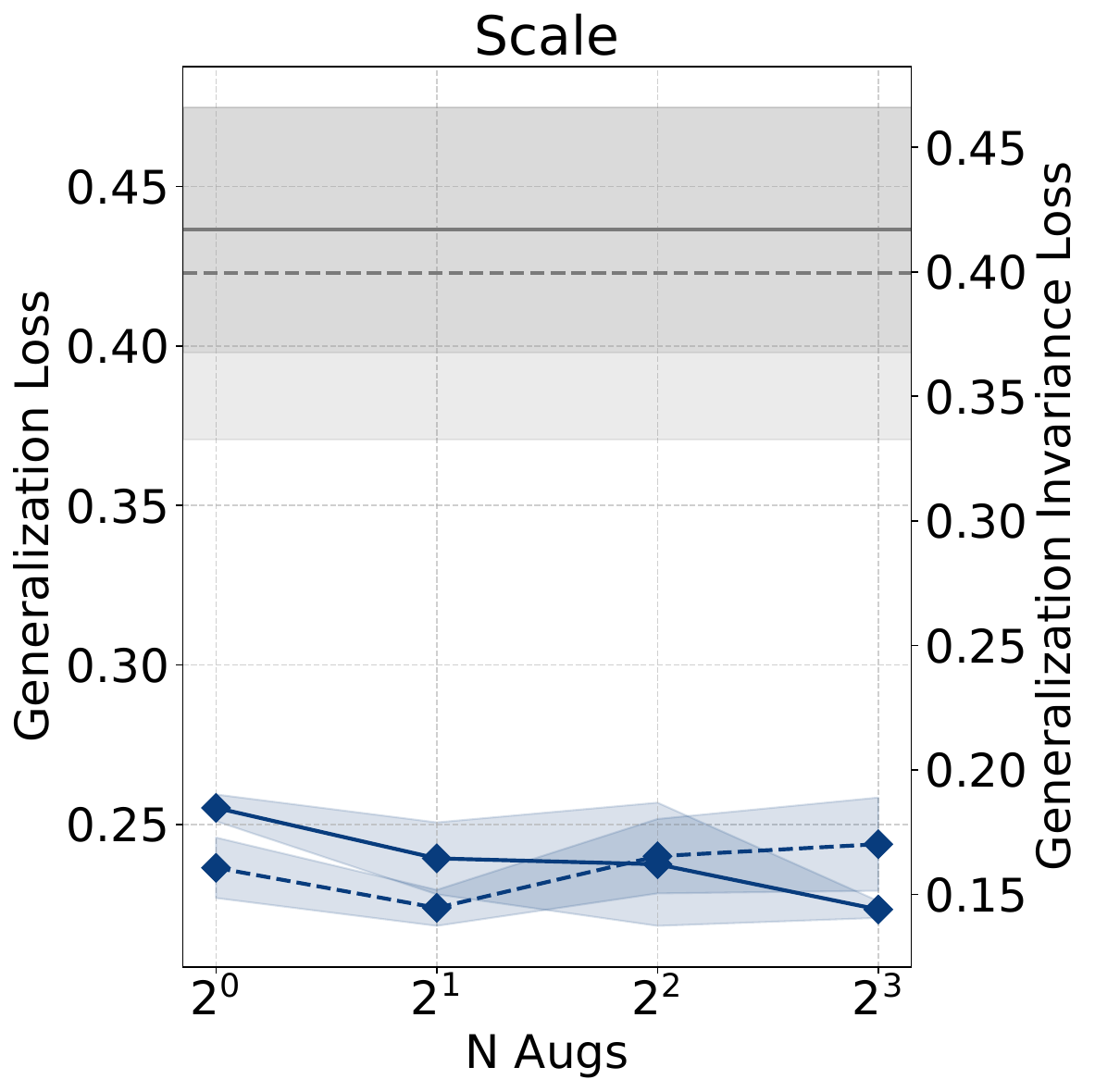}
    \end{minipage}
    
     \caption{Test (in-distribution) and OOD generalization (disjoint distribution) task (solid line) and invariance losses (dashed line) for base angle-preserving symmetries in the QM9 dataset across 3 random seeds reported as MAE in eVs (lower is better). 
     The gray lines represent the performance of a Transformer trained without data augmentation and tested in each augmented testing set as a reference to evaluate the improvement derived from data augmentation.
     All symmetries are learned robustly with enough augmentations, with translations being the most sensitive to distribution shifts.
     }
    \label{fig:qm9_ood_singles2}
\end{figure}

In order to complement the datasets obtained with the previous datasets, we also consider the QM9 dataset \citep{blum2009970, rupp2012fast}, which differs in that it does not consists of point clouds with the same number of points.
In order to overcome this difference, we implement a sequence length bucketing approach \cite{khomenko2016accelerating}, which allows for more efficient training than padding all molecules to the same length. 
We split the dataset into 80\% of training samples, 10\% validation samples, and 10\% test samples. 
Just like for ModelNet10, we carry out an out-of-distribution analysis to measure the robustness with which base angle-preserving symmetries are learned in this dataset. In that sense, we again define the training, validation, test, and generalization ranges of augmentations as shown in Table~\ref{tab:ood_setup}.

Fig.~\ref{fig:qm9_ood_singles2} shows that the Transformer largely preserves its task performance and invariance behavior on the OOD generalization set. Thus, although the model is trained using only a limited number of augmentations sampled from a narrow range of transformations, it is still able to extrapolate approximately invariant behavior to a disjoint range of unseen transformations at test time.
Moreover, the trends in both task and invariance losses are similar on both test and OOD generalization.
Increasing the number of training augmentations within the restricted range also improves performance under larger, unseen transformations. 
This observation provides a strong indication that the model is learning an underlying mechanism that encodes invariance to the symmetry in a robust and generalizable manner. 
The main exception is translation, for which OOD generalization performance lags behind rotations and scaling. 
This indicates that the robustness and generality of the learned mechanism for translation invariance is more sensitive to the number of augmentations observed during training. 

\subsection{Additional architectural components}
\label{app:additional_components}

In order to extend our analysis from the vanilla Transformer architecture towards more realistic models, we explore the effect of additional architectural components, namely positional encodings and multi-head attention. 
These experiments provide preliminary evidence that the principal trends are not solely consequences of single-head attention or the complete absence of positional information.

\subsubsection{Positional encodings}

Positional encodings are a common component of modern Transformer-based architectures. 
Self-attention treats the inputs as an unordered collection of elements and is equivariant to their permutations. Positional encodings break or constrain this symmetry by associating tokens with information about, e.g., their absolute coordinates, their relative displacements, or other geometrical relationships \citep{vaswani2017attention,shaw2018self}. 
For spatial data, encodings can similarly be designed to provide information about distances or directions, thereby introducing an inductive bias adapted to the geometry of the domain. 
There are also methods that have explored the definition of equivariant self-attention layers through the design of positional encodings that are invariant under selected group transformations \citep{romero2021group}.

In the following experiments, we explore how incorporating geometrical information into the model through positional encodings affects the performance observed for the synthetic 3D Tetris dataset.
In particular, for a point cloud $P=\{p_i\}_{i=1}^{T}, \ p_i\in\mathbb{R}^{d}$,
we denote the geometric displacement and distance for a pair of points by
$\Delta_{ij}=p_j-p_i, \ r_{ij}^{(p)}=\|\Delta_{ij}\|_p$, and define the following geometric positional encodings.

\noindent\textbf{Geometric sinusoidal absolute positional encoding.} 
We adapt the sinusoidal absolute positional encoding of \cite{vaswani2017attention} to spatial coordinates by applying sinusoidal features independently to each coordinate axis.
In particular, for each coordinate axis \(a\in\{1,\dots,d\}\) and frequency index
\(m\in\{1,\dots,F\}\), define
$
    \omega_m = B^{-m/F},
$
where $B$ is usually $10,000$. The geometric sinusoidal features are
\[
    \gamma(p_i)
    =
    \Big[
        \sin(\omega_m p_{i,a}),
        \cos(\omega_m p_{i,a})
    \Big]_{a=1,\dots,d;\,m=1,\dots,F}.
\]
These features are then projected to the model dimension and added to the token:
$    \tilde{x}_i
    =
    x_i + W_{\gamma}\gamma(p_i).
$
This encoding provides the model with information about the absolute coordinates of each point.

\noindent\textbf{Geometric ALiBi.}
We adapt Attention with Linear Biases (ALiBi) \citep{press2022train} to geometric distances. Standard ALiBi applies a linear penalty based on separation in the sequence. Here, we instead penalize attention between points according to their geometric distance:
\[ \tilde{a}_{ij} = a_{ij} - m \frac{\|p_j-p_i\|_p}{s}, \] 
where \(a_{ij}\) is the original attention logit, \(m>0\) is the slope associated with the attention head, \(p\) specifies the distance metric, and \(s>0\) is a scale factor. The bias therefore favors interactions between points that are close under the chosen metric.

\noindent\textbf{Geometric learned relative position bias.} 
Following learned relative-position representations and spatial relative-position bias tables \citep{shaw2018self,liu2021swin}, we discretize the pairwise displacement between points and associate each displacement bucket with a learned scalar bias. 

We define the integer displacement as
$
    \delta_{ij}
    =
    \operatorname{clip}
    \left(
        \operatorname{round}(p_j-p_i),
        -K,
        K
    \right)
    \in \{-K,\dots,K\}^{d}.
$
A learned lookup table maps each displacement bucket to a scalar:
$
    b:\{-K,\dots,K\}^{d}\to\mathbb{R}.
$
The attention logit is then modified as
$
    \tilde{a}_{ij}
    =
    a_{ij} + b(\delta_{ij}).
$

\noindent\textbf{Geometric RoPE.}
We also consider a multidimensional, coordinate-based version of Rotary Position Embedding (RoPE) \citep{su2024roformer}. In standard RoPE, query and key vectors are rotated by phases determined by their sequence positions. In the geometric setting, these phases can instead be determined by the associated spatial coordinates.

For each rotary pair \(r\), we define a frequency vector
$
    \omega_r\in\mathbb{R}^{d},
$
and the phase assoicated with point $p_i$
$
    \theta_r(p_i)=\langle p_i,\omega_r\rangle.
$
Within each two-dimensional rotary subspace, we apply the rotation matrix 
\[
    R(\theta)
    =
    \begin{pmatrix}
        \cos\theta & -\sin\theta\\
        \sin\theta & \cos\theta
    \end{pmatrix}.
\]
As a result, we obtain the modified queries and keys
$
    \tilde{q}_i = R(\theta(p_i))q_i, \ 
    \tilde{k}_j = R(\theta(p_j))k_j,
$ where \(R(\theta(p_i))\) denotes the block-diagonal operator formed from the pairwise rotations, 
and the attention logits become 
$
    \tilde{a}_{ij}
    =
    \frac{\langle \tilde{q}_i,\tilde{k}_j\rangle}{\sqrt{d}}.
$
Because planar rotations compose through differences of their angles, each rotary pair depends on the relative phase
$ 
\theta_r(p_j)-\theta_r(p_i)
=
\langle p_j-p_i,\omega_r\rangle.
$ Thus, although the rotations themselves are defined using absolute coordinates, the contribution to the query-key inner product depends on the relative displacement between the pair of points.

\noindent\textbf{Results.} 
We carry out experiments using the synthetic 3D Tetris dataset discussed in \S\ref{app:tetris}.
As we can see in Figures~\ref{fig:pes_geom_accuracy} and \ref{fig:pes_geom_invariance}, the test performance of the Transformer remains stable across each of the previously defined geometric positional encodings, and is itself similar to the original results without positional encodings (\texttt{no\_pe}). 
In addition, repeating the mechanistic diagnostics for the base groups (Fig.~\ref{fig:pe_attention_maps}) also shows that orbit-stable attention maps and the principal activation signatures remain qualitatively present.
As a result, these experiments suggest that, within the coordinate-token setting, adding auxiliary geometric positional information does not eliminate the tendency to learn the reported invariant computations. 

\begin{figure}
    \centering
    \includegraphics[width=0.8\linewidth]{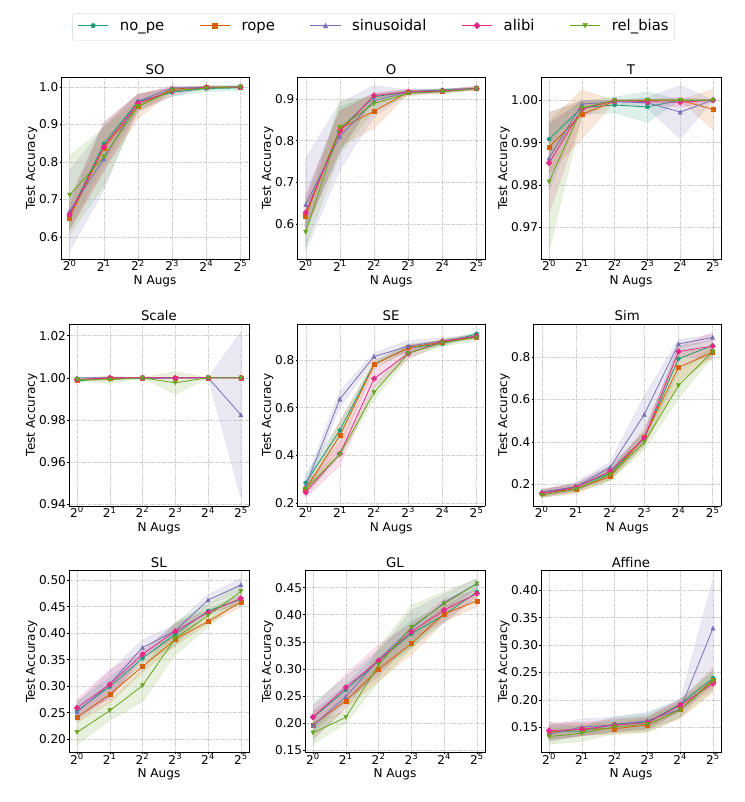}
    \caption{Test accuracy for different positional encoding approaches as measured for each symmetry group considered in the 3D Tetris dataset.}
    \label{fig:pes_geom_accuracy}
\end{figure}

\begin{figure}
    \centering
    \includegraphics[width=0.8\linewidth]{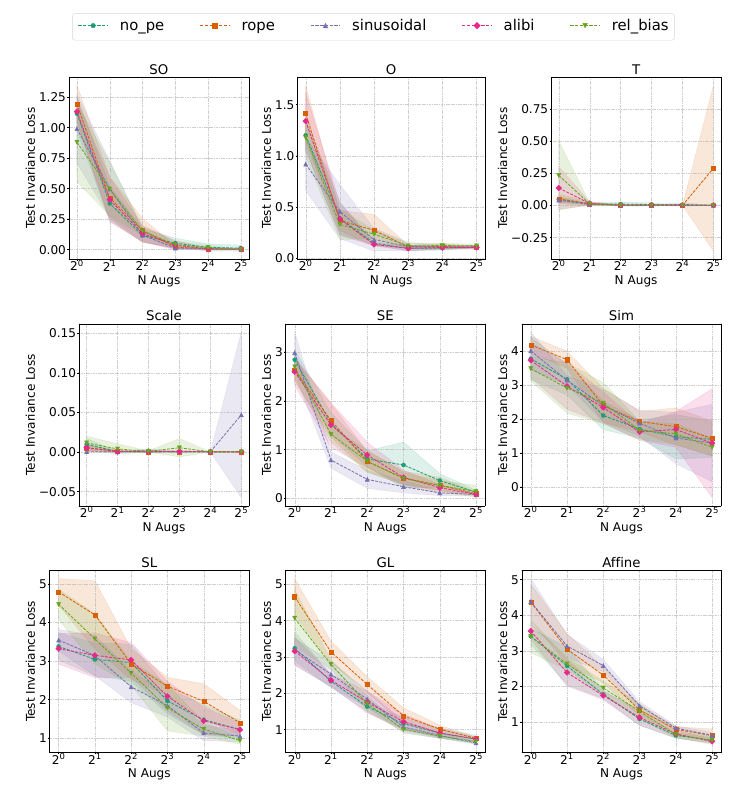}
    \caption{Test invariant error for different positional encoding approaches as measured for each symmetry group considered in the 3D Tetris dataset.}
    \label{fig:pes_geom_invariance}
\end{figure}

\begin{figure}[h!]
    \centering
    \includegraphics[width=\linewidth]{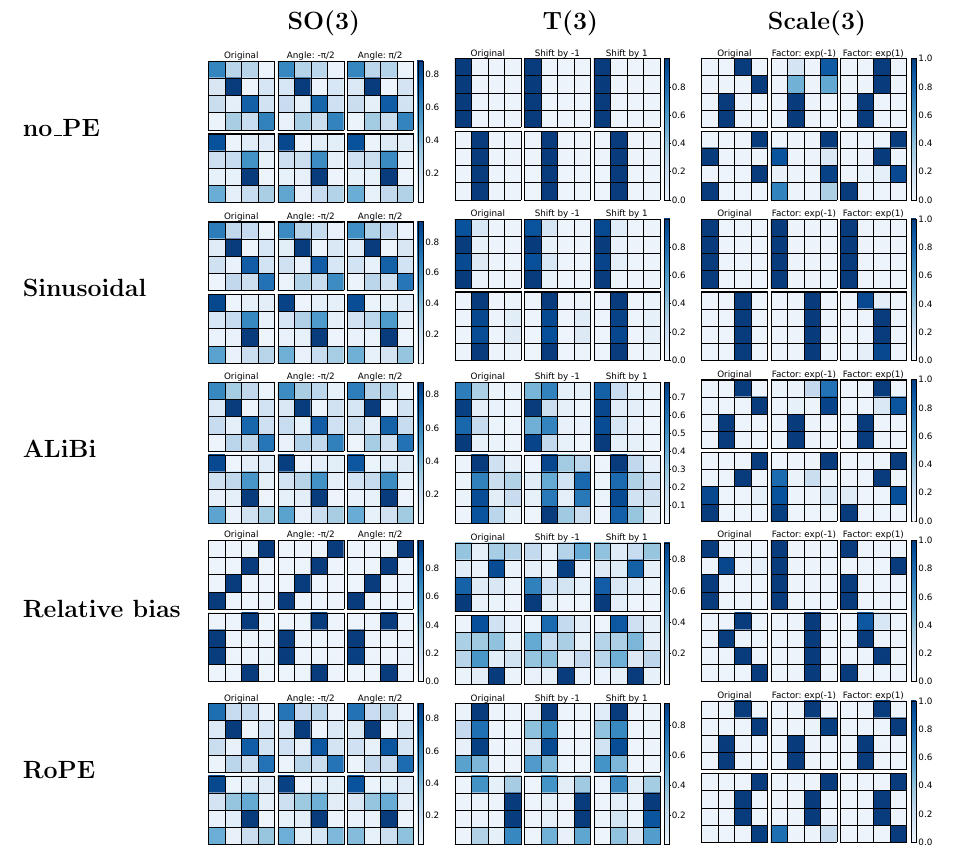}
    \caption{Attention softmax for first layer of a Transformer trained on $32$ augmentations using each considered geometric positional encoding and for each base angle-preserving symmetry. 
    The attention maps remain approximately invariant across augmentations for all positional encodings.}
    \label{fig:pe_attention_maps}
\end{figure}

\clearpage
\subsubsection{Multi-head attention}

\begin{wrapfigure}[19]{r}{0.45\textwidth}
\vspace{-5mm}
    \includegraphics[width=0.48\linewidth]{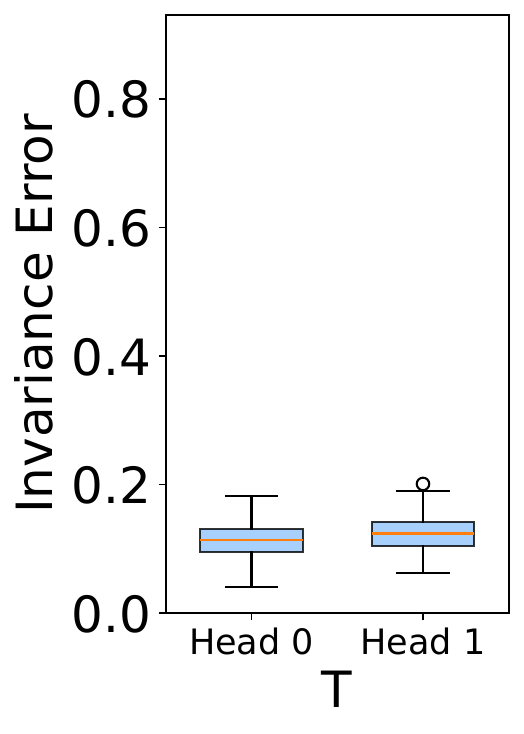}\hfill
    \includegraphics[width=0.48\linewidth]{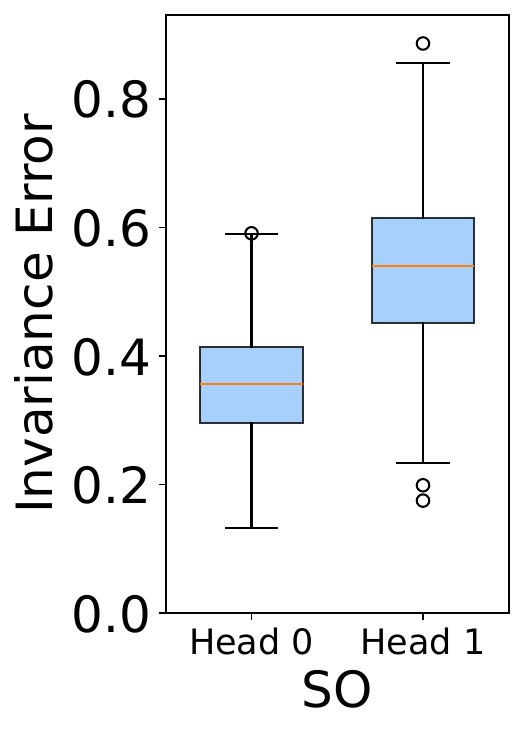}
    \caption{Invariance of attention heads to $\operatorname{T}(3)$ (left) and $\operatorname{SO}(3)$ (right) for first layer of a Transformer trained with $32$ $\operatorname{SE}(3)$-augmentations per sample. Head 0 is more invariant than head 1 to both base subgroups, so they do not seem to specialize in either of these groups. }
    \label{fig:mha_se}
\end{wrapfigure}

We carry out further experiments on the 3D Tetris dataset using 2 and 4 attention heads, respectively. To observe extrapolation behavior, we use the OOD Setting defined in \S\ref{sec:ood} and \S\ref{app:ood} for base angle-preserving groups. As can be seen in Figure \ref{fig:mha_tetris}, the qualitative behavior is very consistent across head counts. The only visible exception is extrapolation on $\operatorname{T}(3)$, which seems sensitive to hyperparameter settings in general with no clear trend regarding number of attention heads. Quantitatively, the performance on compositional groups both in terms off accuracy and invariance shows small improvements when using more heads. This suggests that heads might be specializing in base subgroups, easing the pressure to become invariant to multiple transformation types at once. However, we see no empirical evidence for this hypothesis. As shown in Figure \ref{fig:mha_se} by way of example, it is often one head that is more invariant than the other on all present base subgroups. Either the attention heads work together by a different mechanism or since performance gains are small in the first place, they might be attributable to increased parameter counts. Overall, the number of attention heads seems to have no major impact on the results presented in this paper.

\begin{figure}[t]
\centering

\newlength{\mhaplotwidth}
\newlength{\mhalabelwidth}
\setlength{\mhaplotwidth}{0.22\textwidth}
\setlength{\mhalabelwidth}{0.04\textwidth}

\begin{tabular}{
  >{\centering\arraybackslash}m{\mhalabelwidth}
  @{\hspace{2pt}}>{\centering\arraybackslash}m{\mhaplotwidth}
  @{\hspace{2pt}}>{\centering\arraybackslash}m{\mhaplotwidth}
  @{\hspace{6pt}}>{\centering\arraybackslash}m{\mhaplotwidth}
  @{\hspace{2pt}}>{\centering\arraybackslash}m{\mhaplotwidth}
}

  & \multicolumn{2}{c}{\textbf{In Distribution}} & \multicolumn{2}{c}{\textbf{Out of Distribution}} \\[2pt]

  \rotatebox[origin=c]{90}{\small \textbf{1 Head}}
  & \includegraphics[width=\mhaplotwidth]{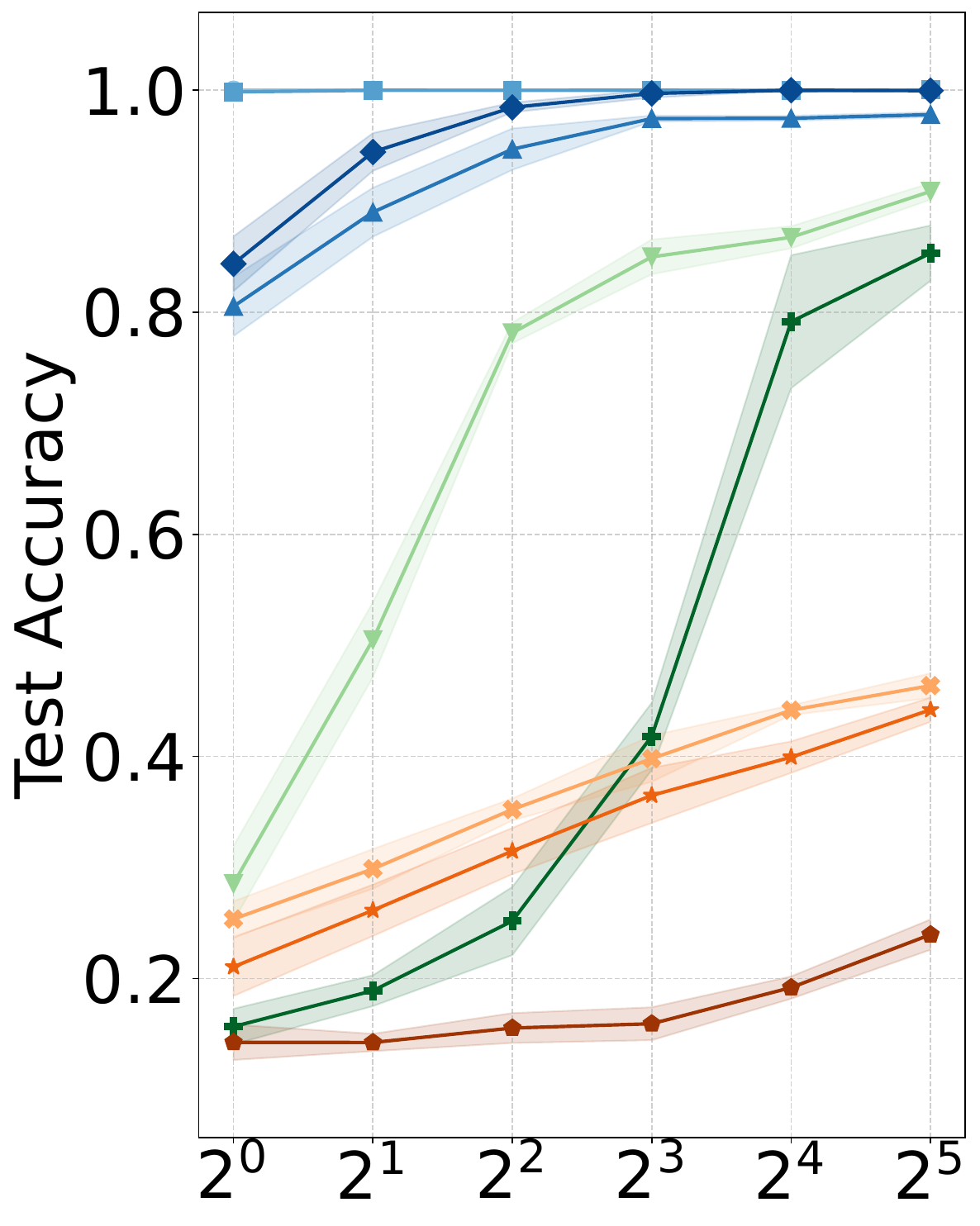}
  & \includegraphics[width=\mhaplotwidth]{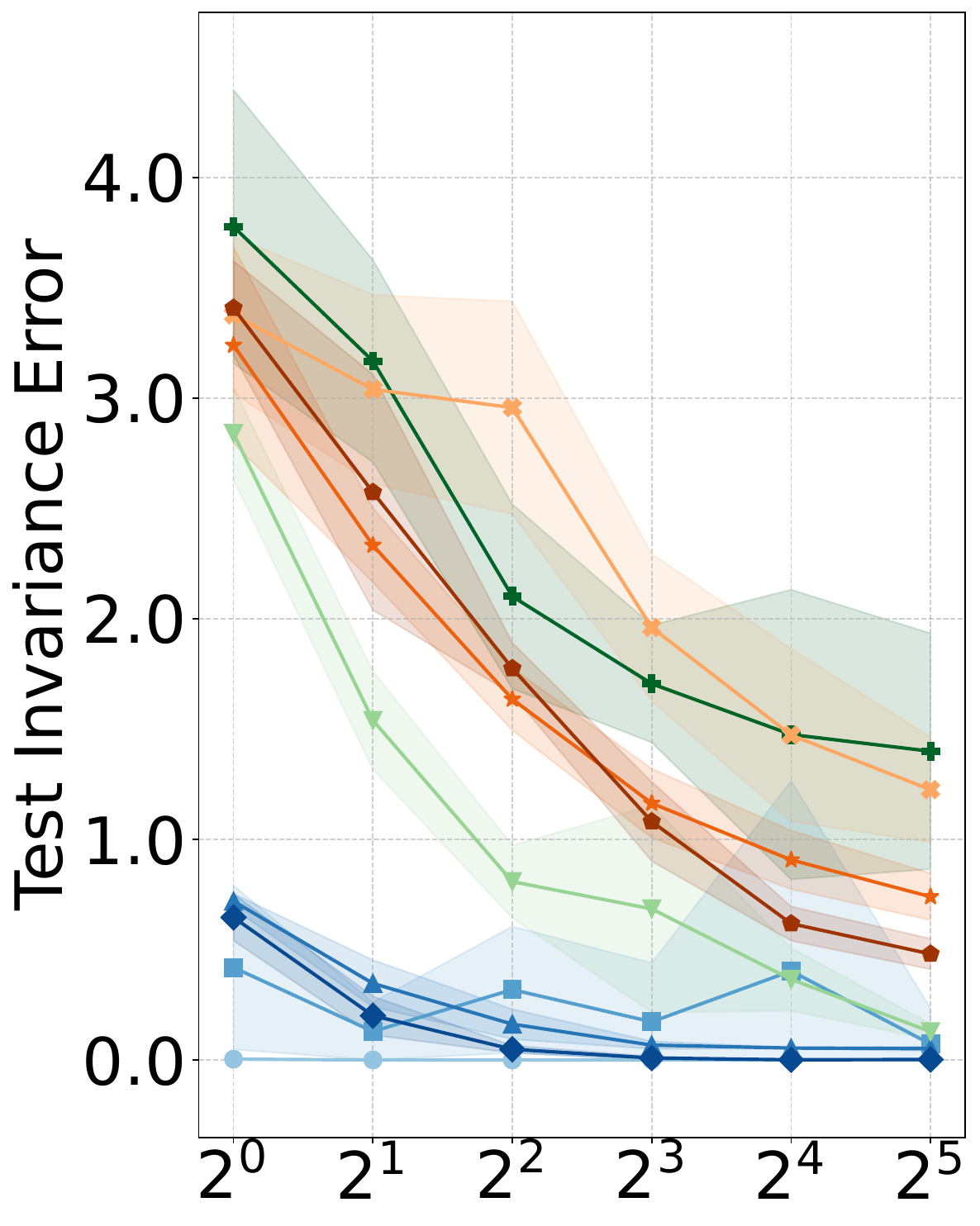}
  & \includegraphics[width=\mhaplotwidth]{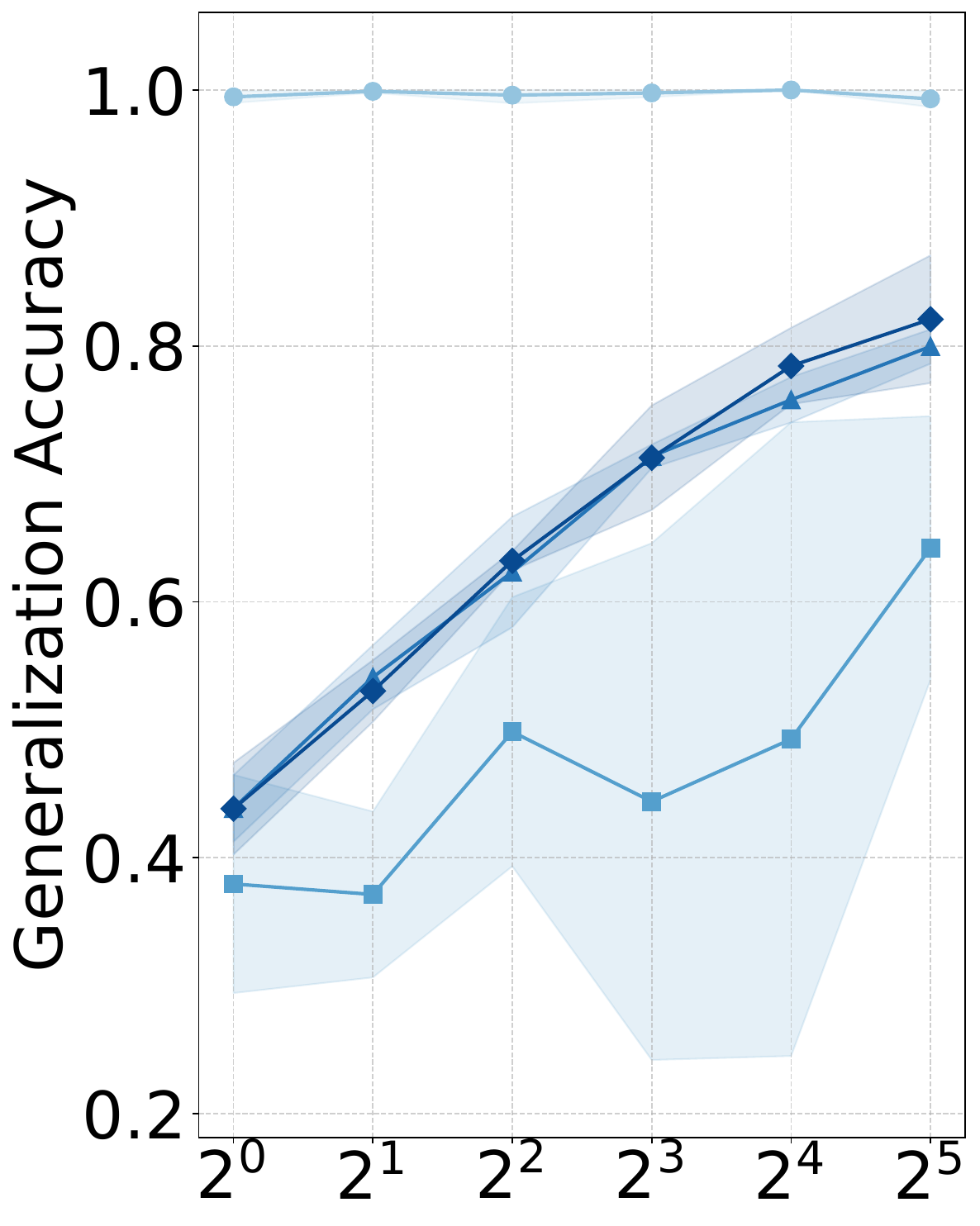}
  & \includegraphics[width=\mhaplotwidth]{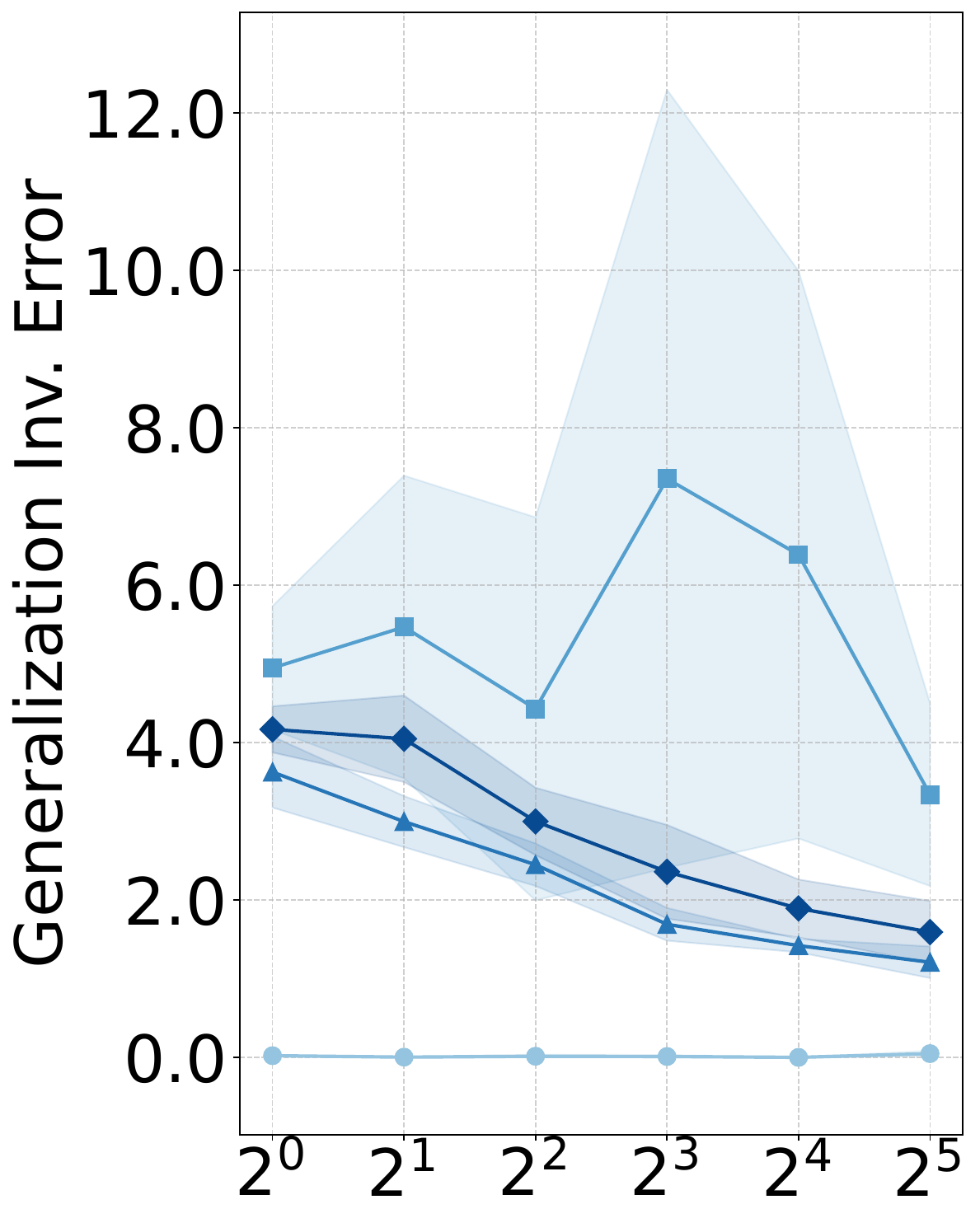} \\[4pt]

  \rotatebox[origin=c]{90}{\small \textbf{2 Heads}}
  & \includegraphics[width=\mhaplotwidth]{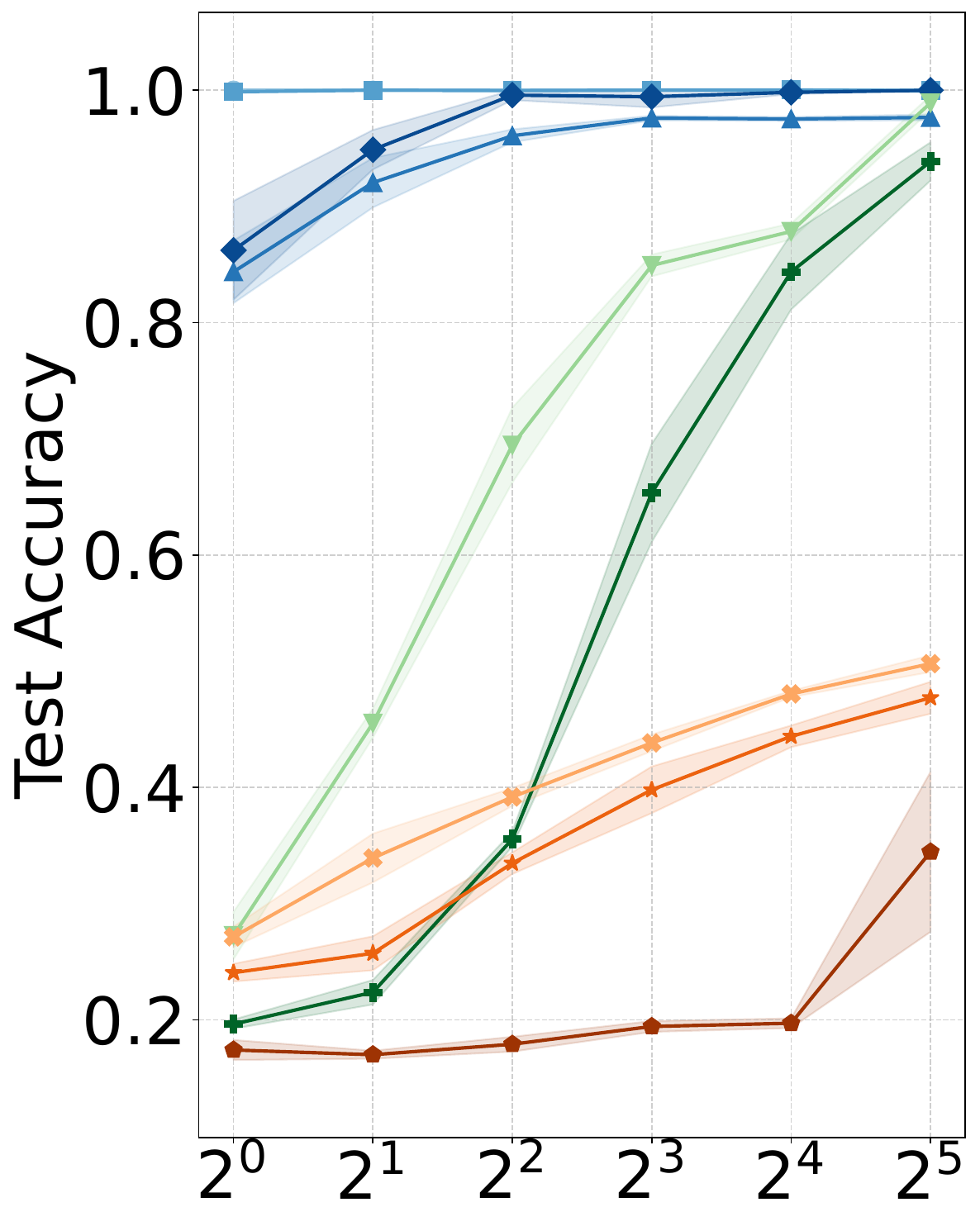}
  & \includegraphics[width=\mhaplotwidth]{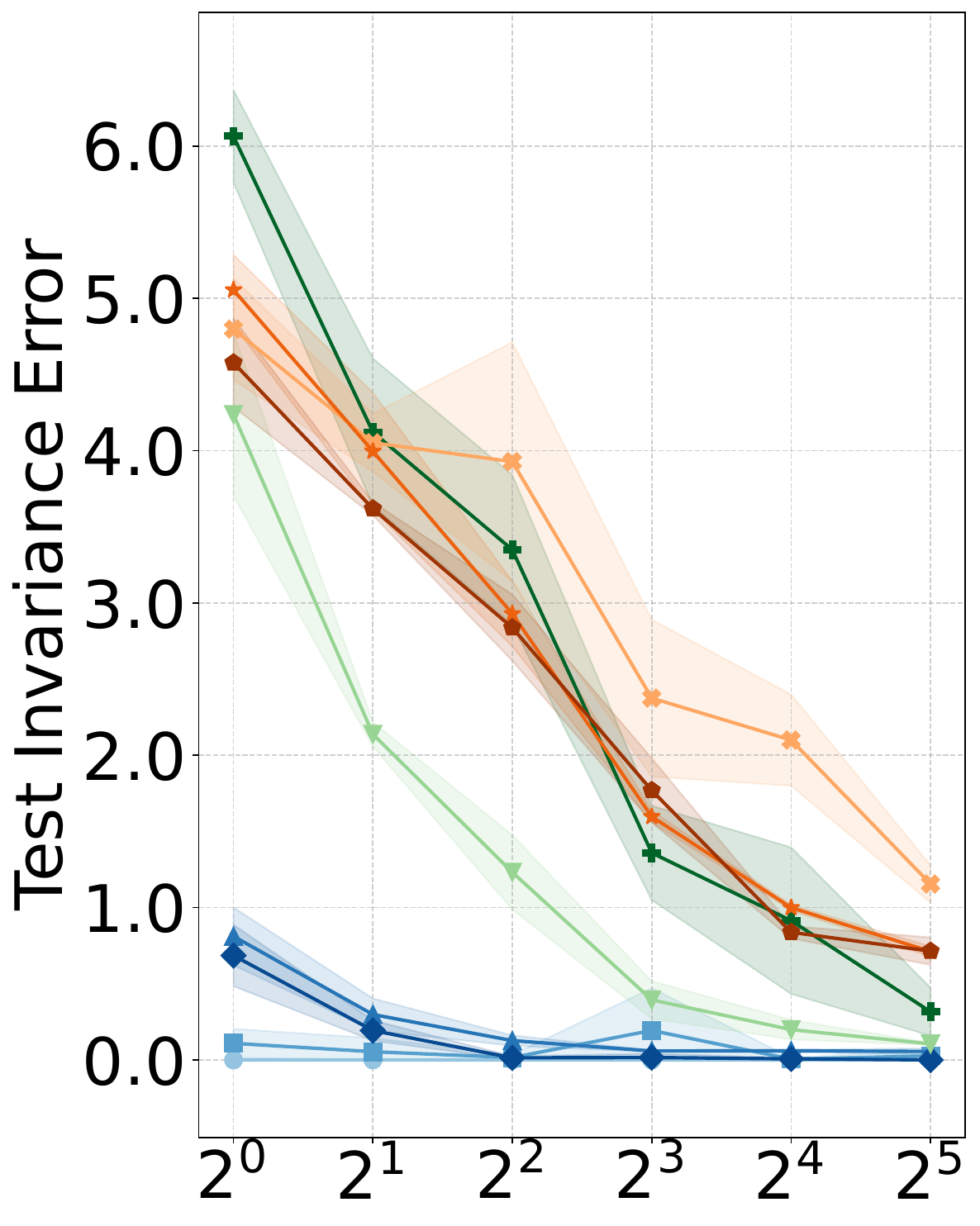}
  & \includegraphics[width=\mhaplotwidth]{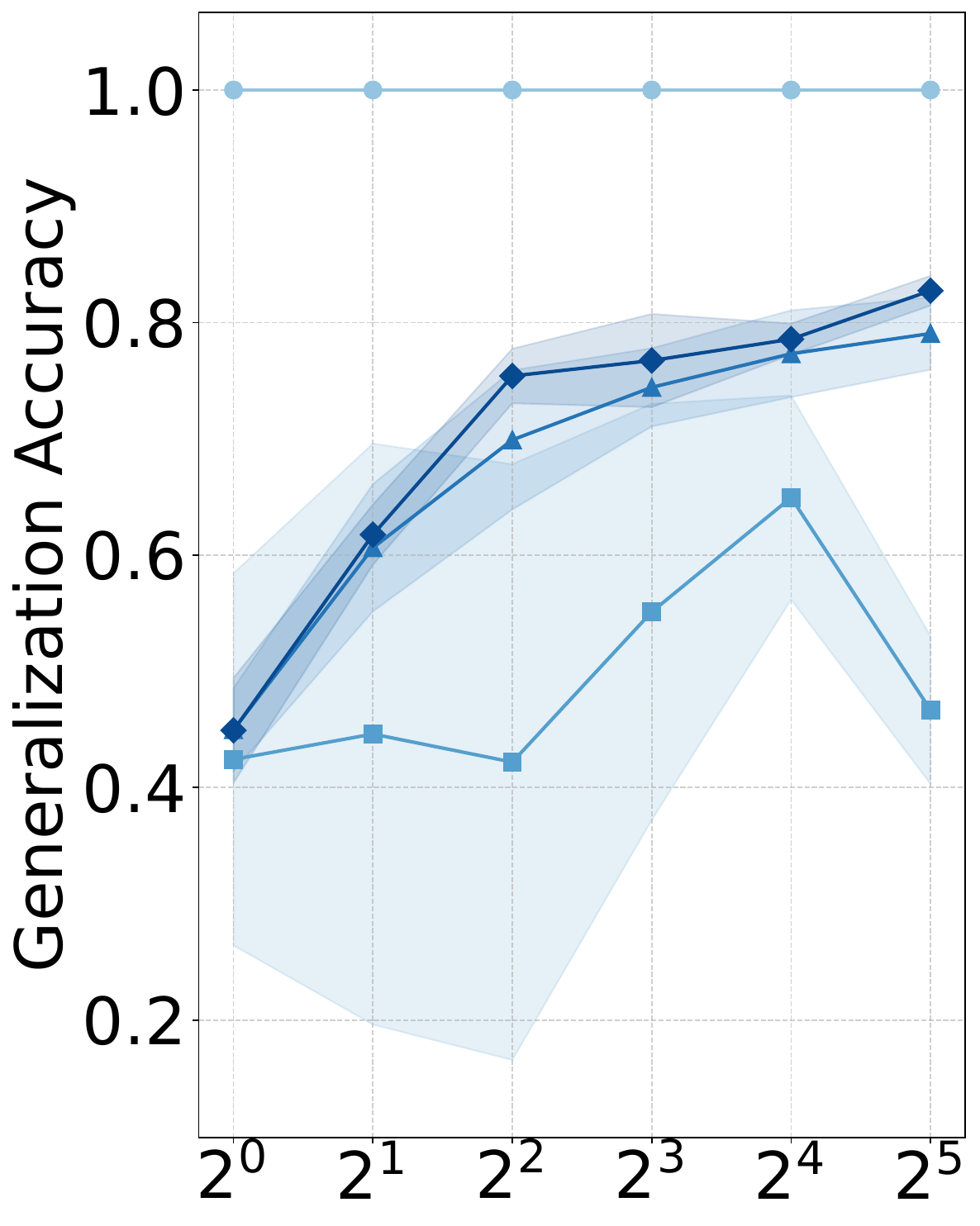}
  & \includegraphics[width=\mhaplotwidth]{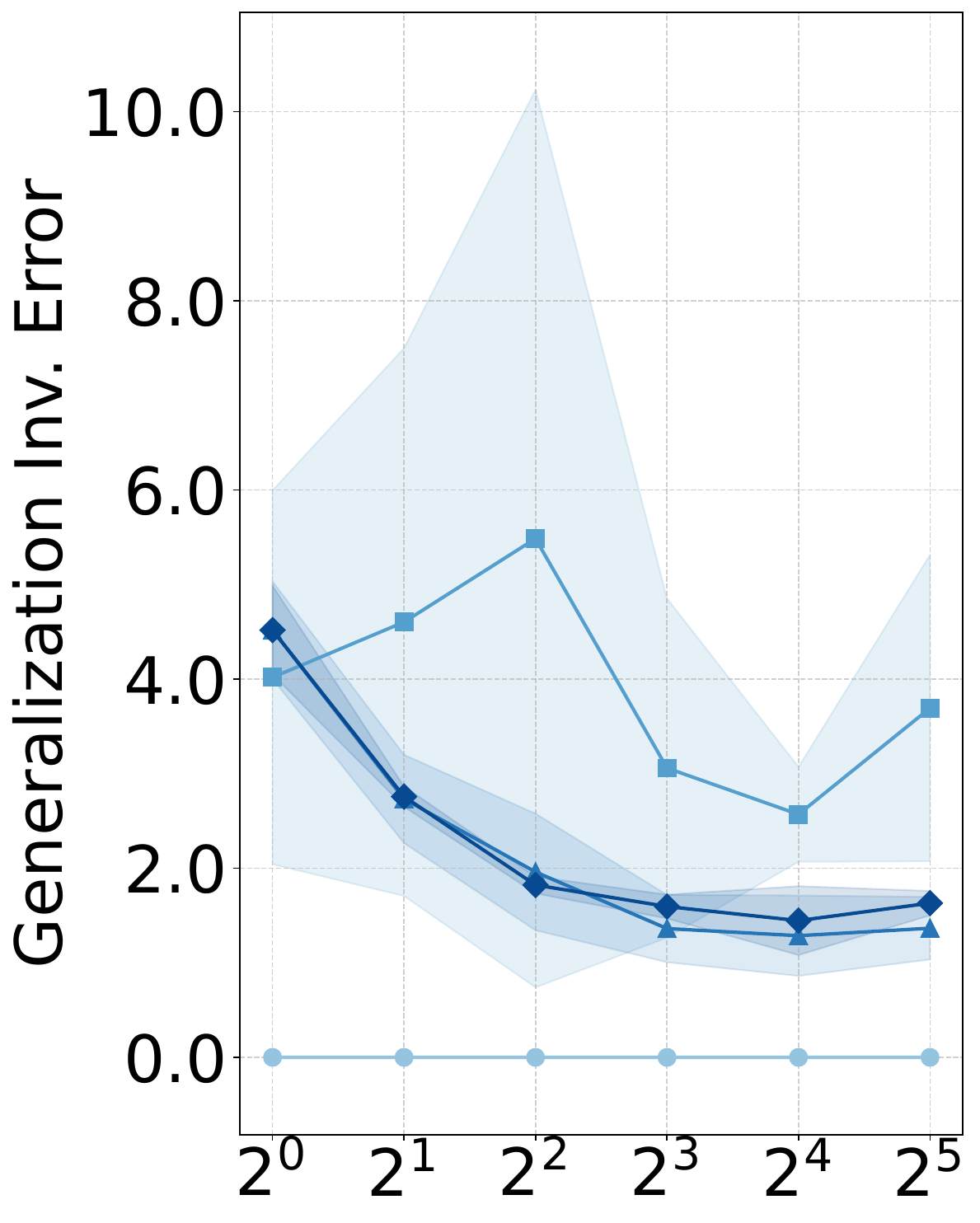} \\[4pt]

  \rotatebox[origin=c]{90}{\small \textbf{4 Heads}}
  & \includegraphics[width=\mhaplotwidth]{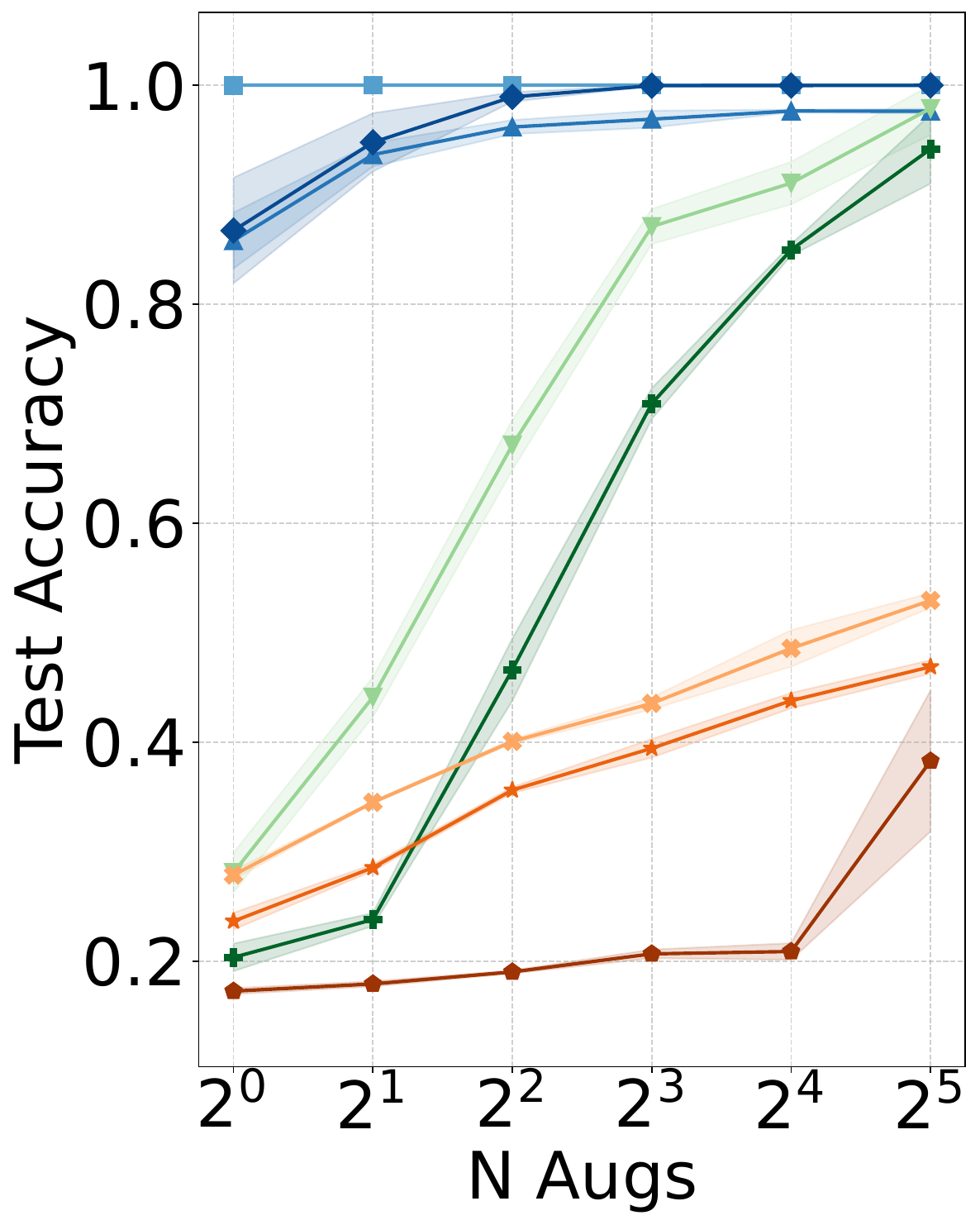}
  & \includegraphics[width=\mhaplotwidth]{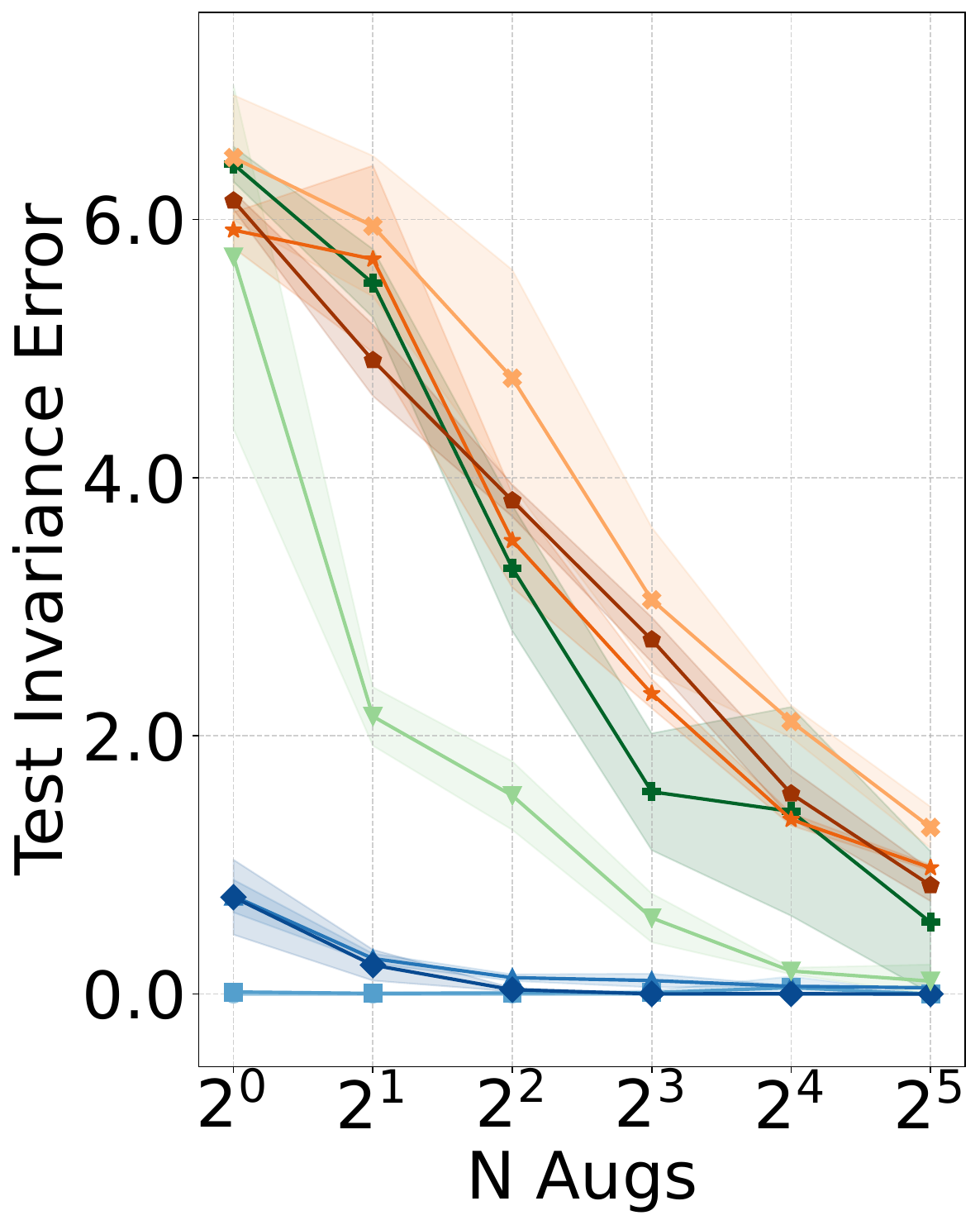}
  & \includegraphics[width=\mhaplotwidth]{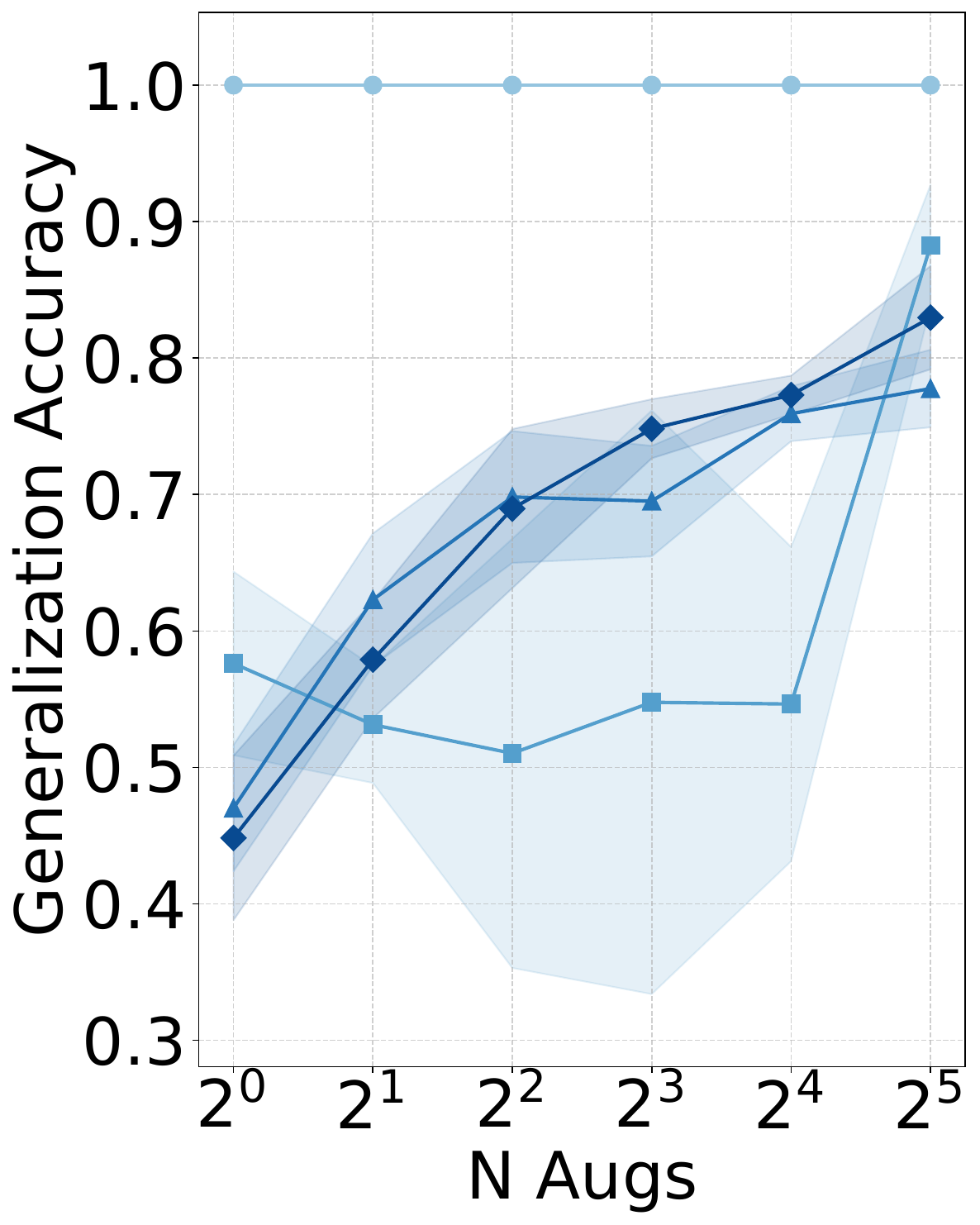}
  & \includegraphics[width=\mhaplotwidth]{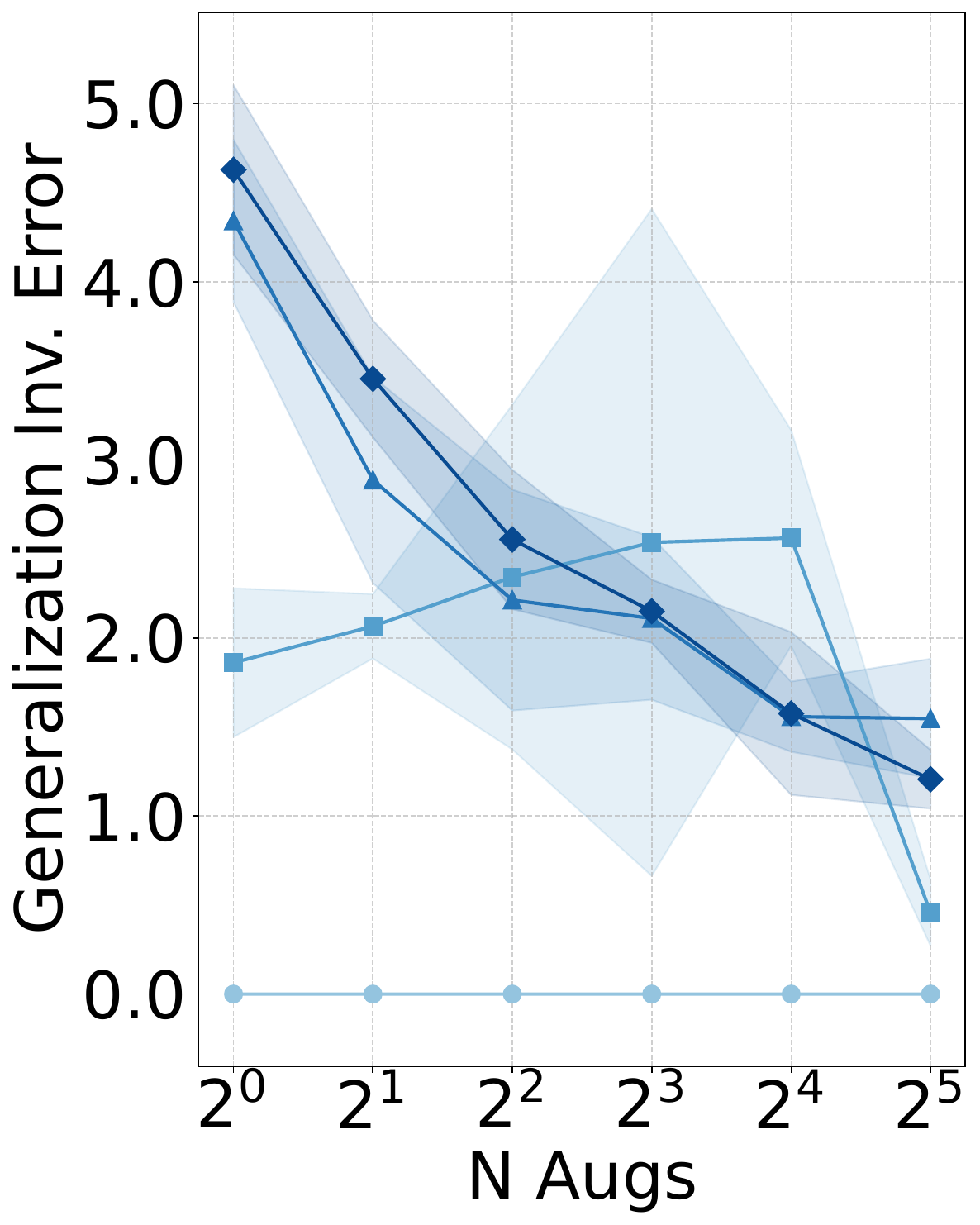} \\
\end{tabular}

\includegraphics[width=\linewidth]{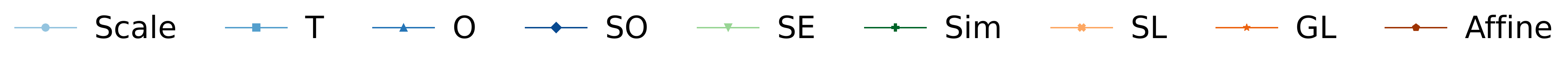}

\caption{In- and out-of-distribution results for 3D Tetris comparing 1, 2, and 4 attention heads. Increasing the number of heads slightly increases performance on compositional groups. The only visible qualitative difference seems to be extrapolation on $\operatorname{T}(3)$, which seems sensitive to hyperparameter settings with no clear trend. }
\label{fig:mha_tetris}
\end{figure}

\subsection{Signature of detected mechanisms for additional models}
\label{app:signature_mechanisms}

\subsubsection{Translation}

As we can see in Figure~\ref{fig:modelnet_translation_grid_layer0}, the detected mechanism prevails across seeds and number of augmentations for the first block of Transformers trained on ModelNet10 with translations. That is, in the first block, self-attention subtracts approximately equivariant features from the residuals. The second block then displays highly invariant attention maps, as shown in Figure~\ref{fig:modelnet_translation_grid_layer1}.

\begin{figure}[h]
\centering
\renewcommand{\arraystretch}{1.5}
\begin{tabular}{ccc}
  Pointwise Cosine Similarity of $\mX$ and $\SA(\mX)$ & Invariance Loss of $\A(\mX)$ & N Augs \\[6pt]
  \includegraphics[width=0.45\linewidth]{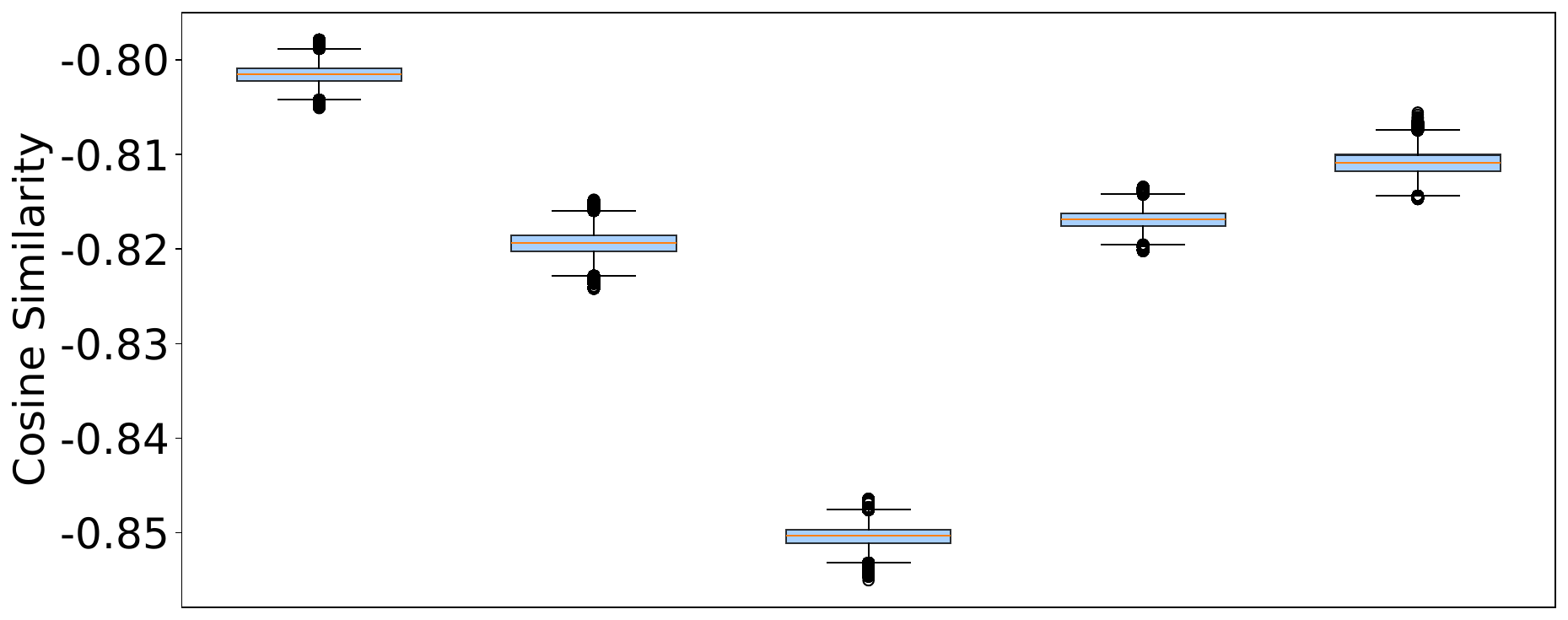} &
  \includegraphics[width=0.45\linewidth]{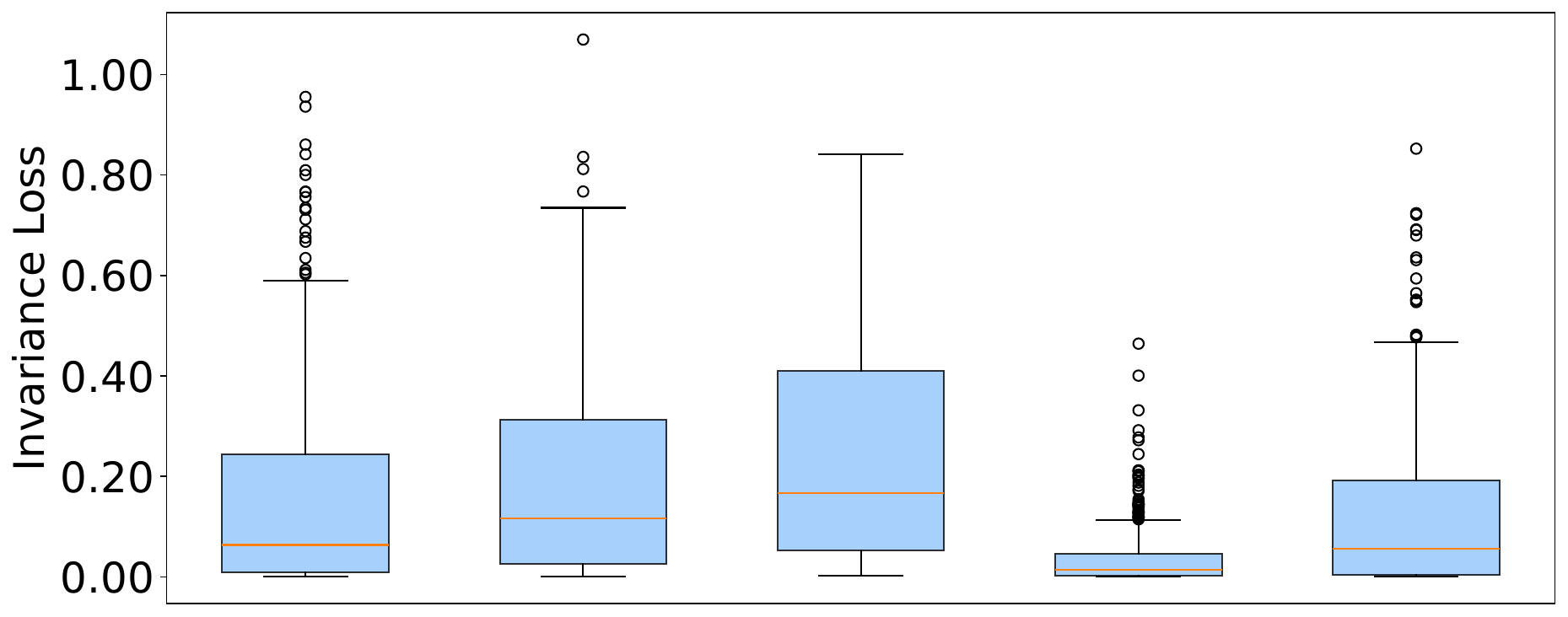} &
  \raisebox{1.2cm}{1} \\[4pt]
  \includegraphics[width=0.45\linewidth]{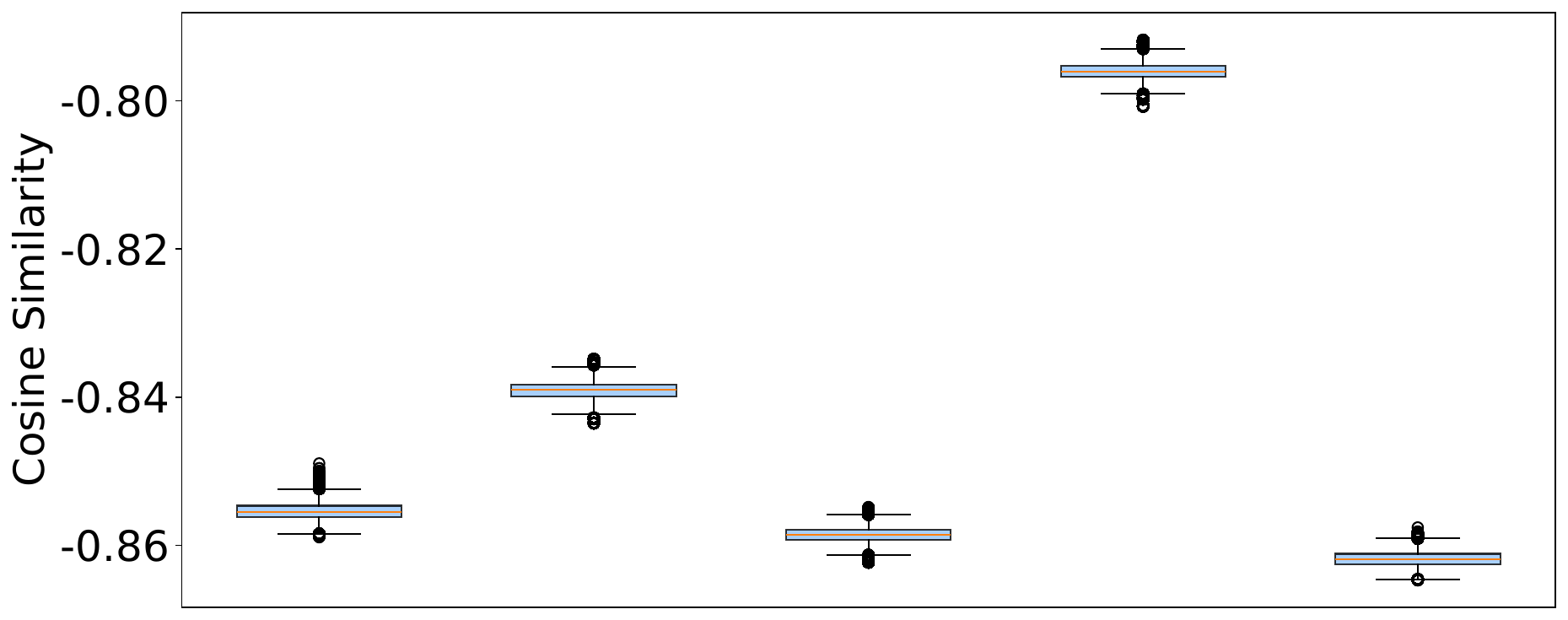} &
  \includegraphics[width=0.45\linewidth]{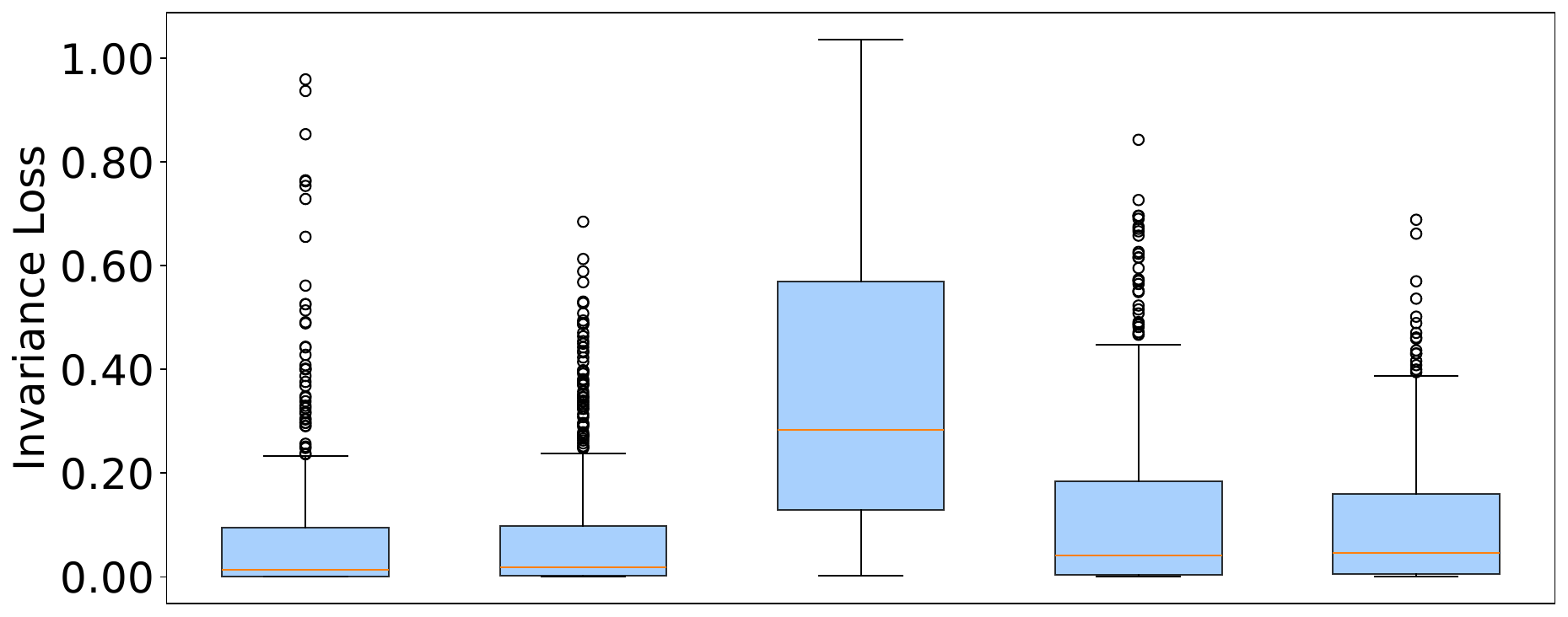} &
  \raisebox{1.2cm}{2} \\[4pt]
  \includegraphics[width=0.45\linewidth]{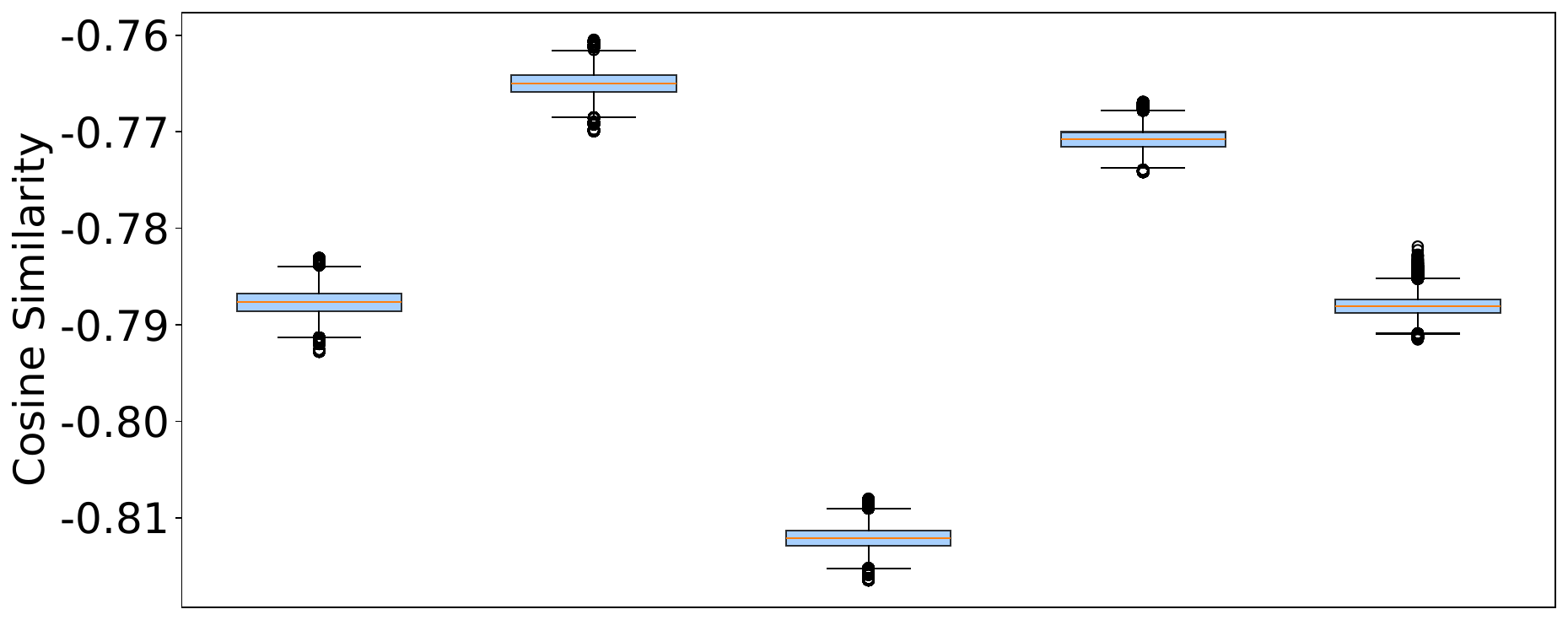} &
  \includegraphics[width=0.45\linewidth]{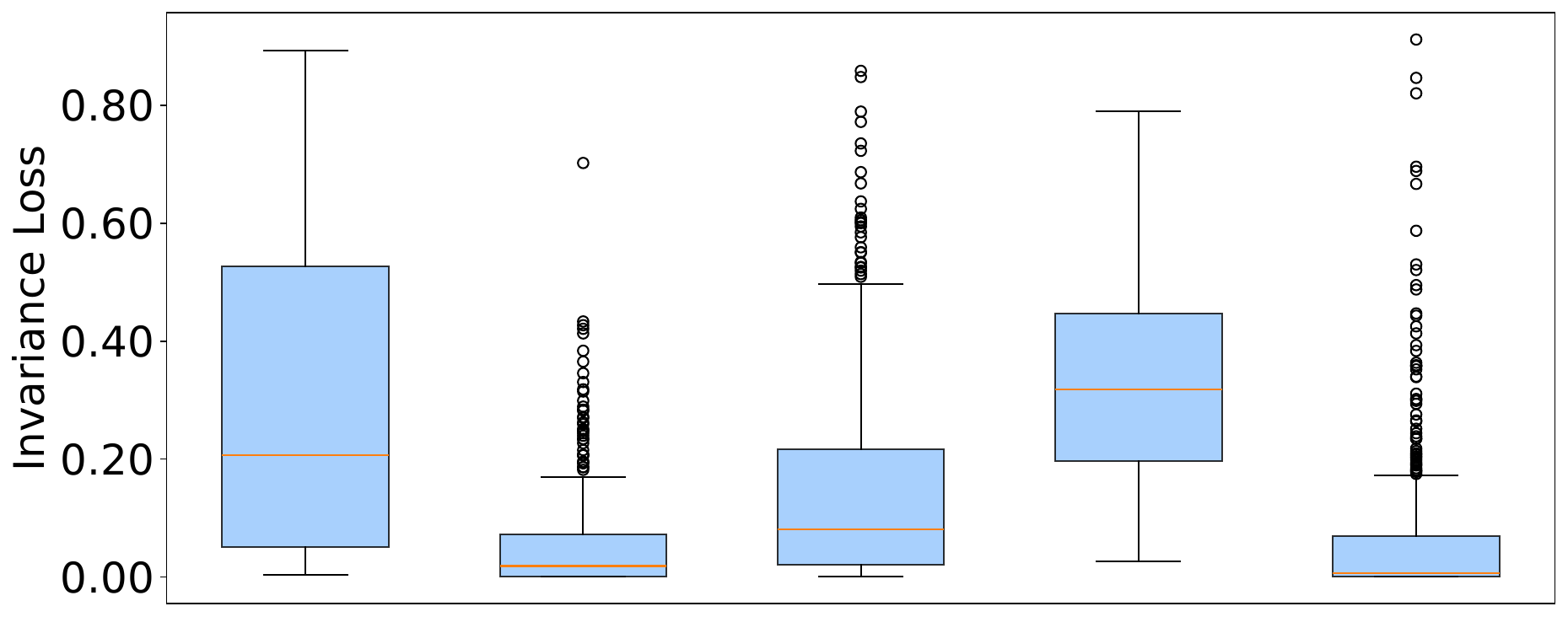} &
  \raisebox{1.2cm}{4} \\[4pt]
  \includegraphics[width=0.45\linewidth]{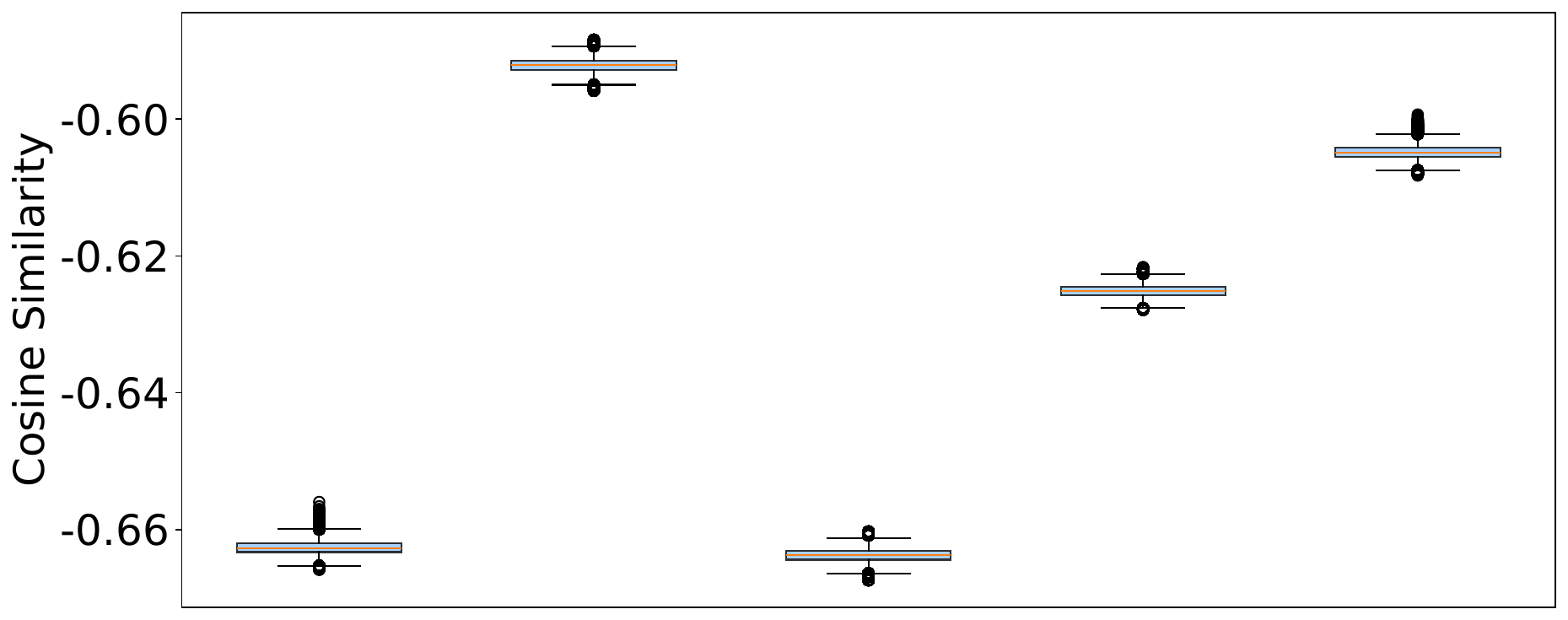} &
  \includegraphics[width=0.45\linewidth]{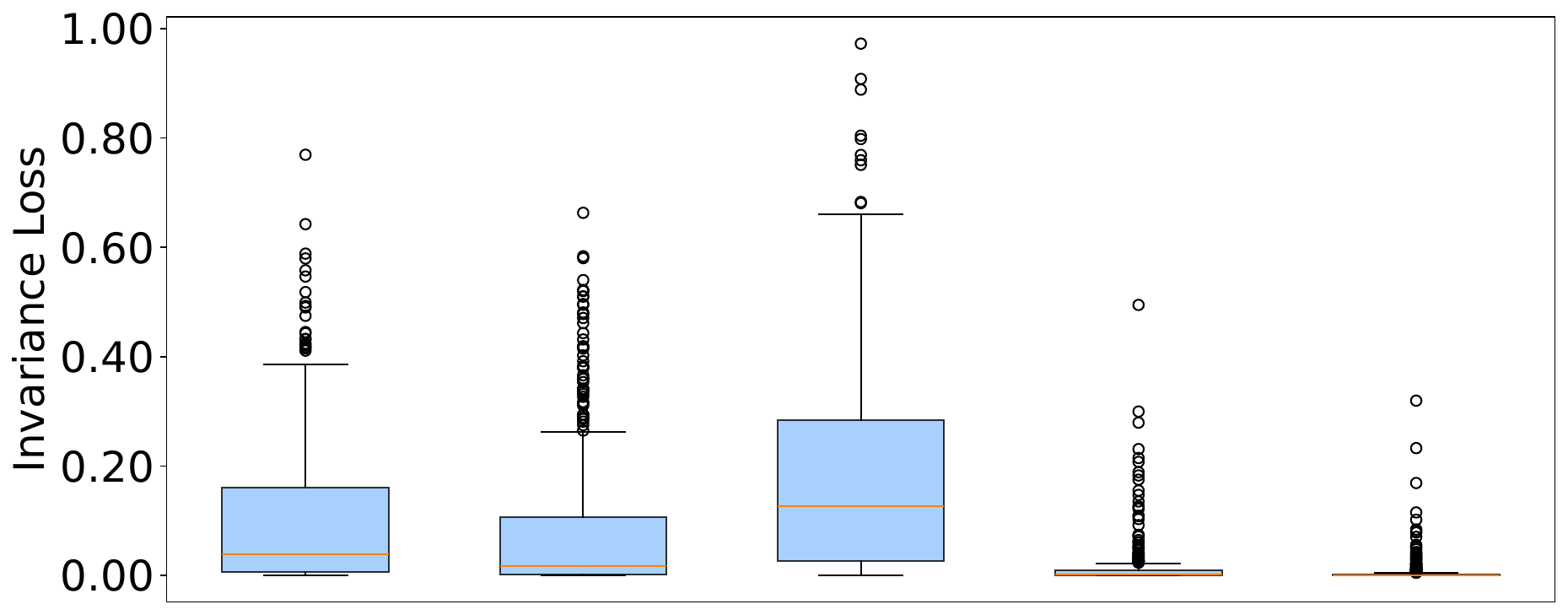} &
  \raisebox{1.2cm}{8} \\[4pt]
  \includegraphics[width=0.45\linewidth]{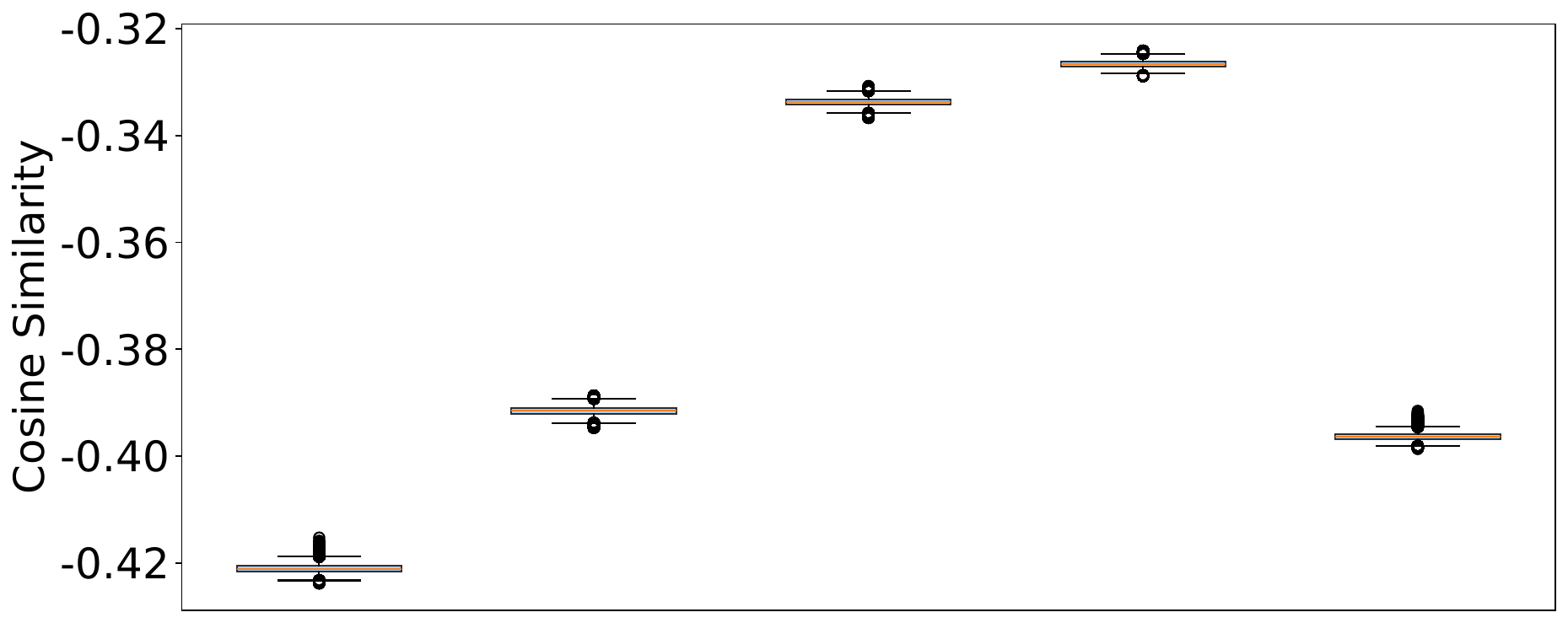} &
  \includegraphics[width=0.45\linewidth]{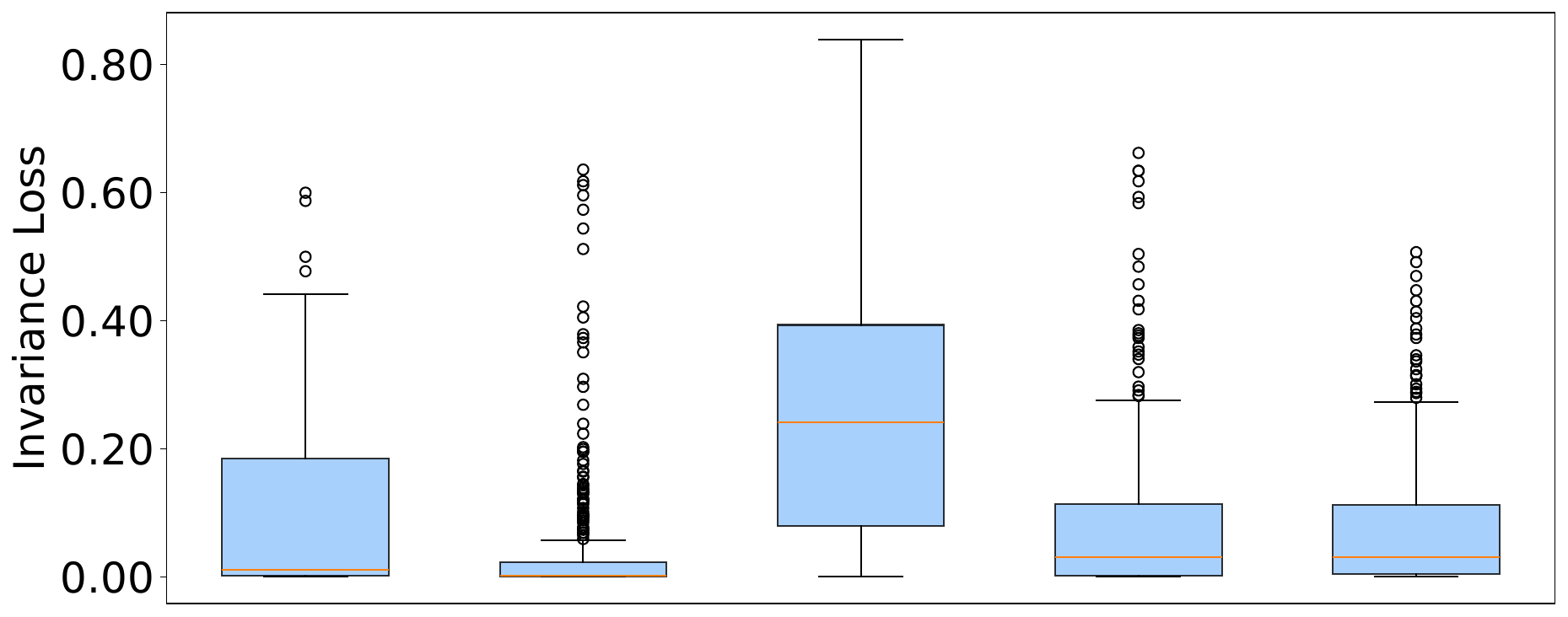} &
  \raisebox{1.2cm}{16} \\[4pt]
\end{tabular}
\caption{Mechanism indicators for first layer of all models trained on ModelNet10 with translations.}
\label{fig:modelnet_translation_grid_layer0}
\end{figure}

\begin{figure}[h]
\centering
\renewcommand{\arraystretch}{1.5}
\begin{tabular}{ccc}
  Pointwise Cosine Similarity of $\mX$ and $\SA(\mX)$ & Invariance Loss of $\A(\mX)$ & N Augs \\[6pt]
  \includegraphics[width=0.45\linewidth]{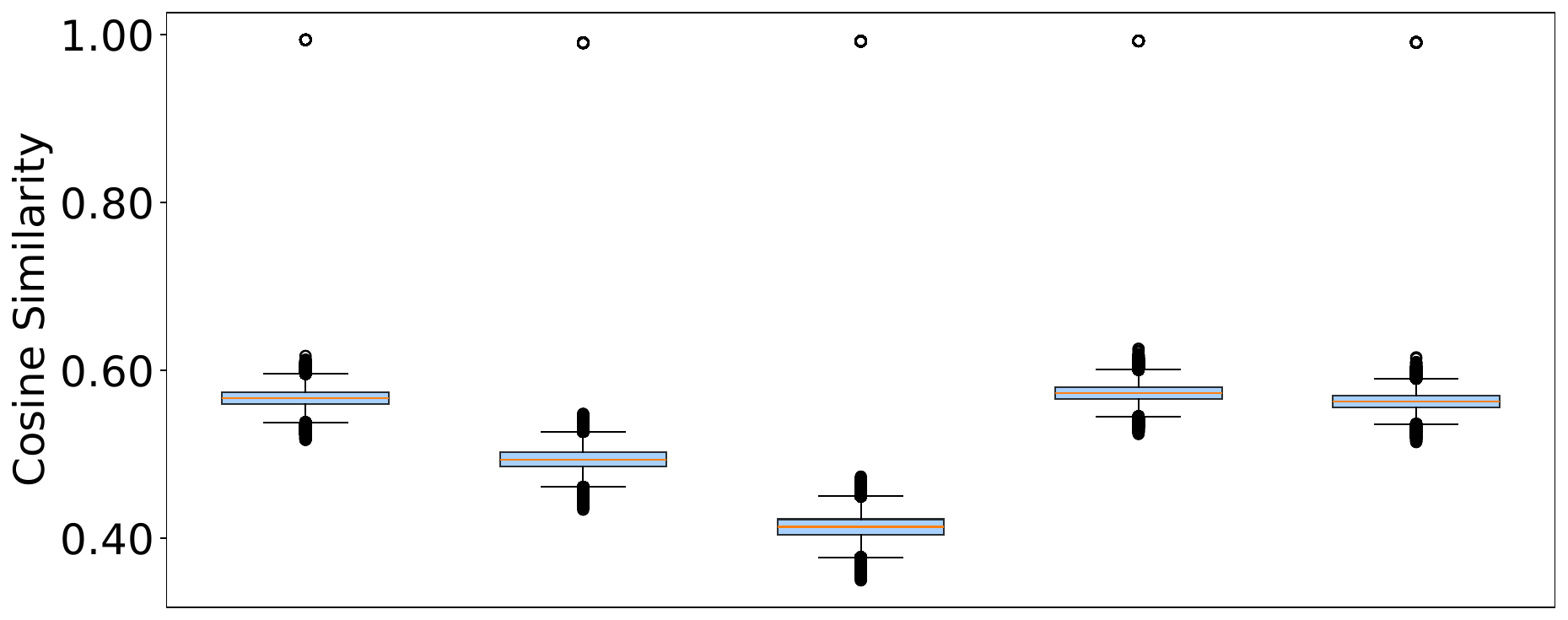} &
  \includegraphics[width=0.45\linewidth]{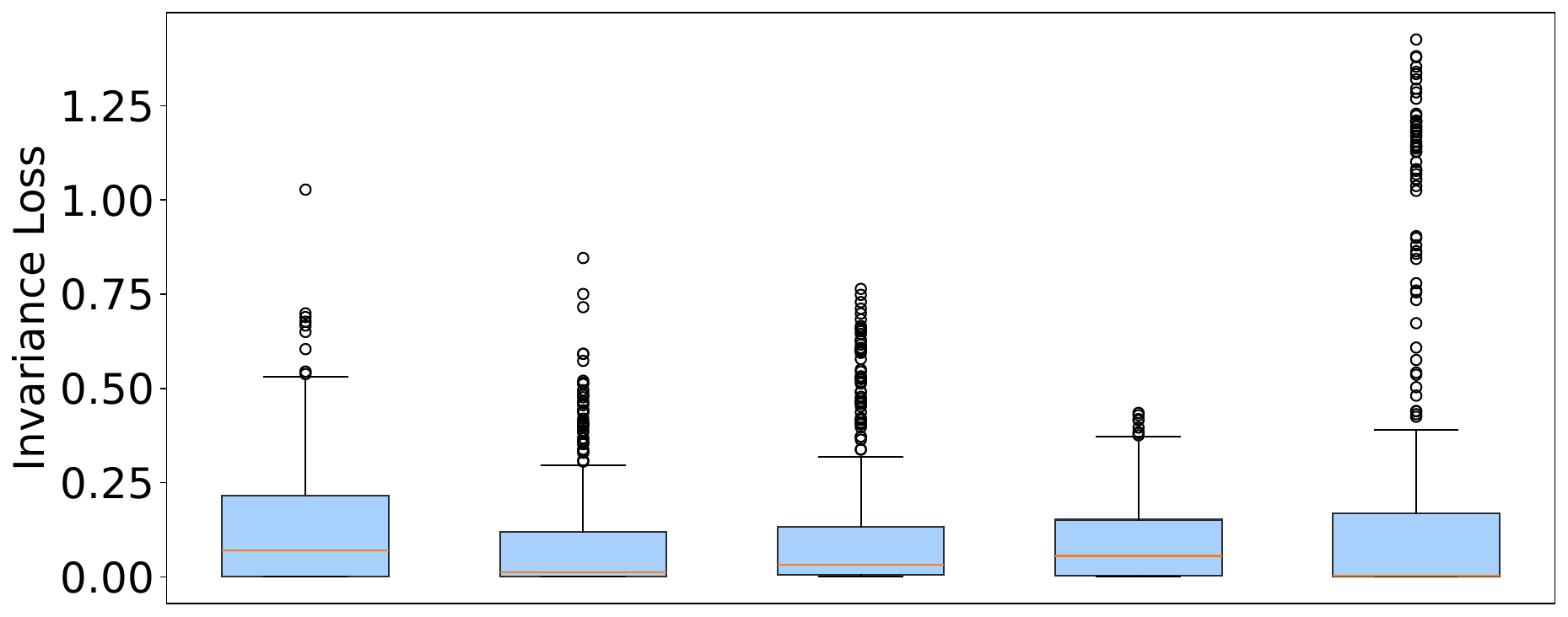} &
  \raisebox{1.2cm}{1} \\[4pt]
  \includegraphics[width=0.45\linewidth]{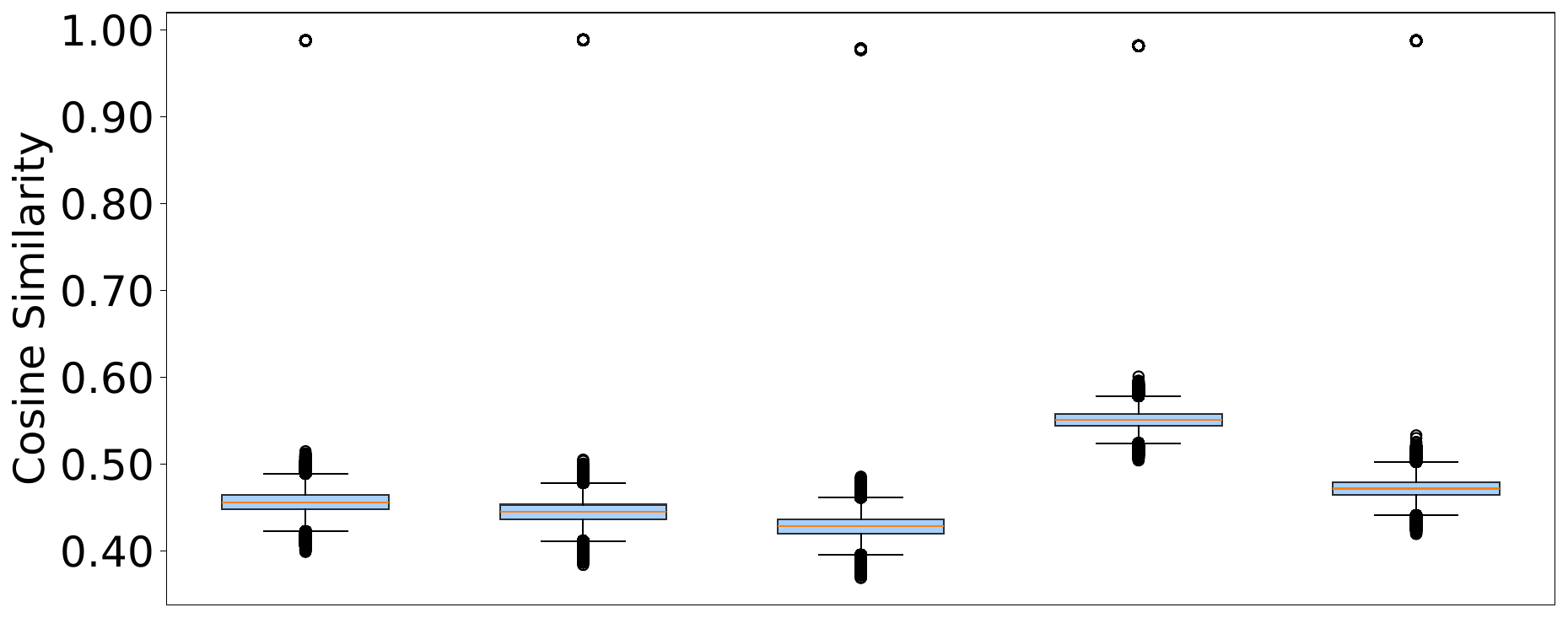} &
  \includegraphics[width=0.45\linewidth]{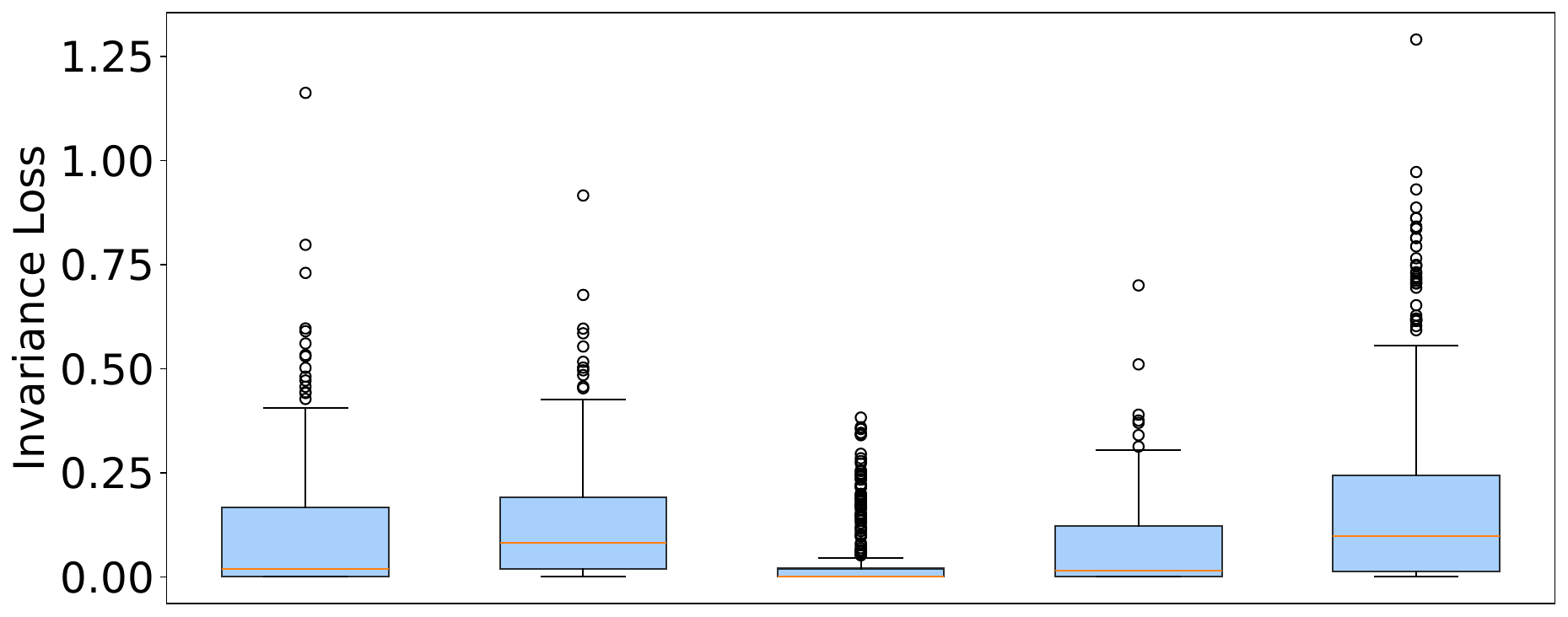} &
  \raisebox{1.2cm}{2} \\[4pt]
  \includegraphics[width=0.45\linewidth]{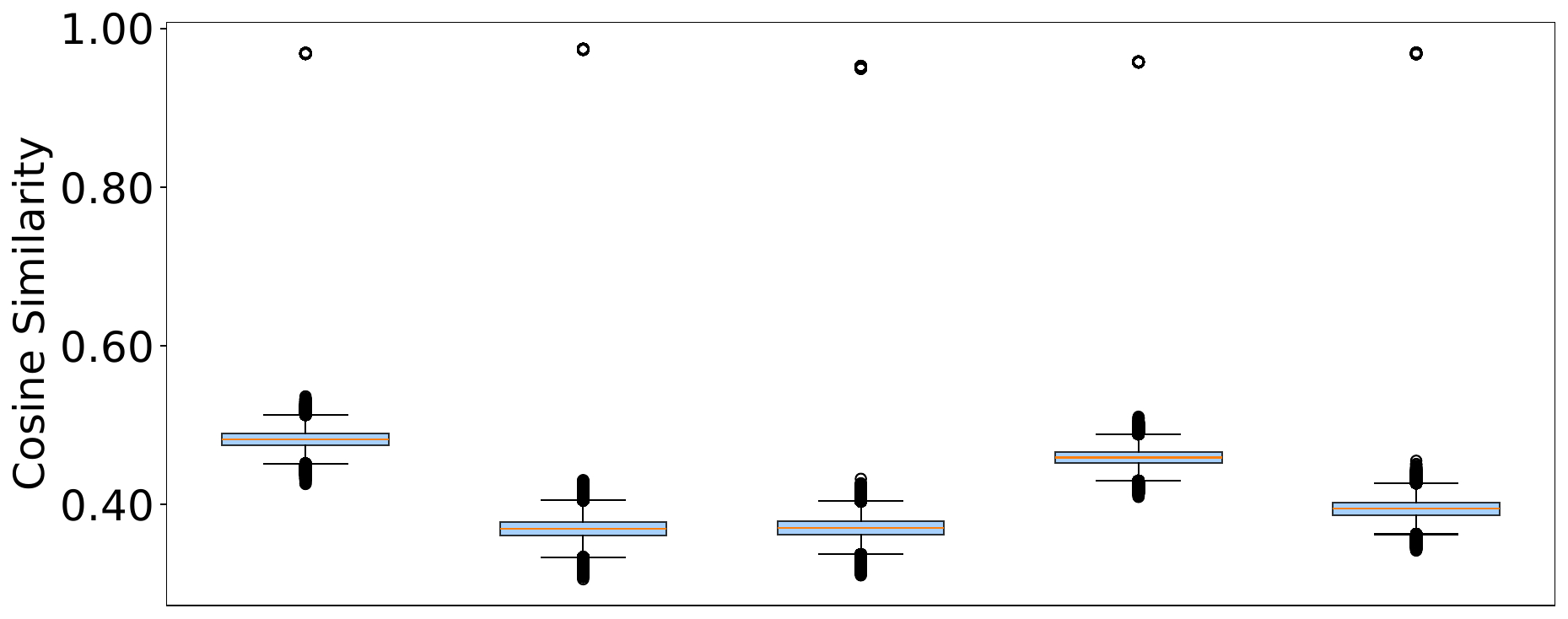} &
  \includegraphics[width=0.45\linewidth]{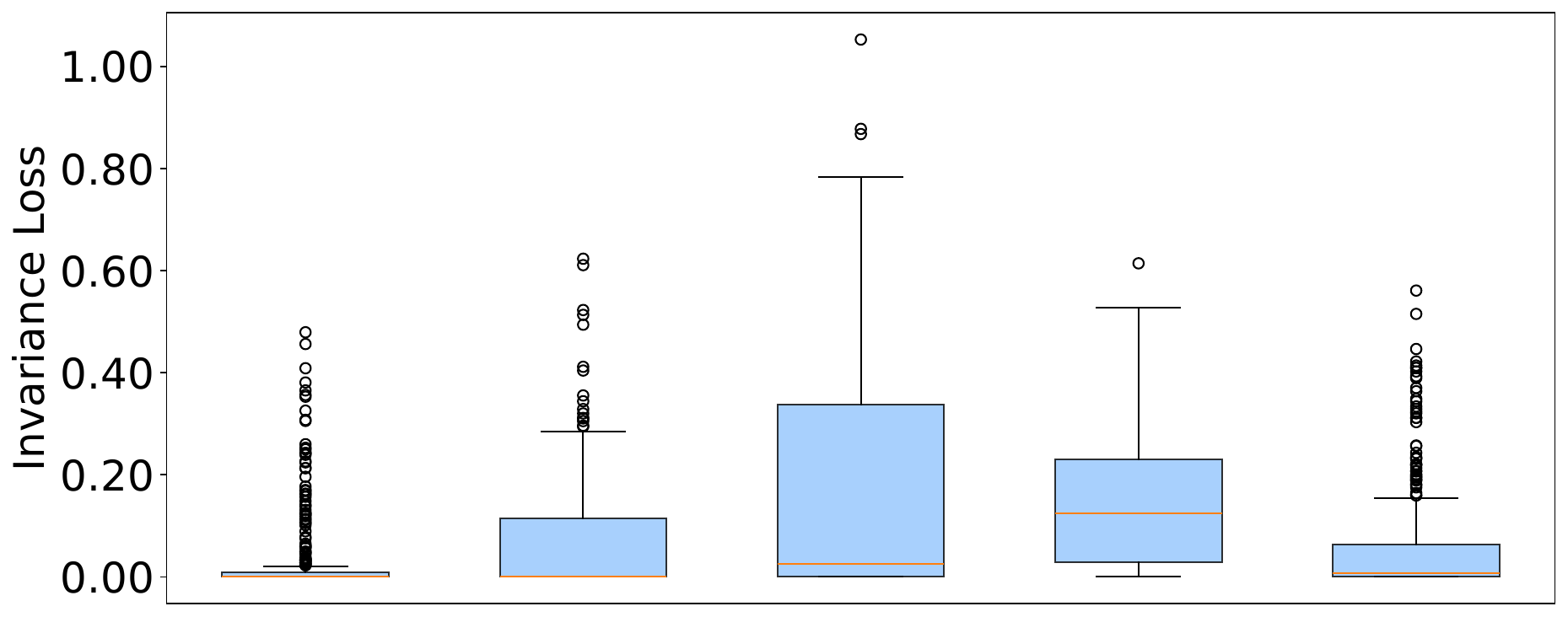} &
  \raisebox{1.2cm}{4} \\[4pt]
  \includegraphics[width=0.45\linewidth]{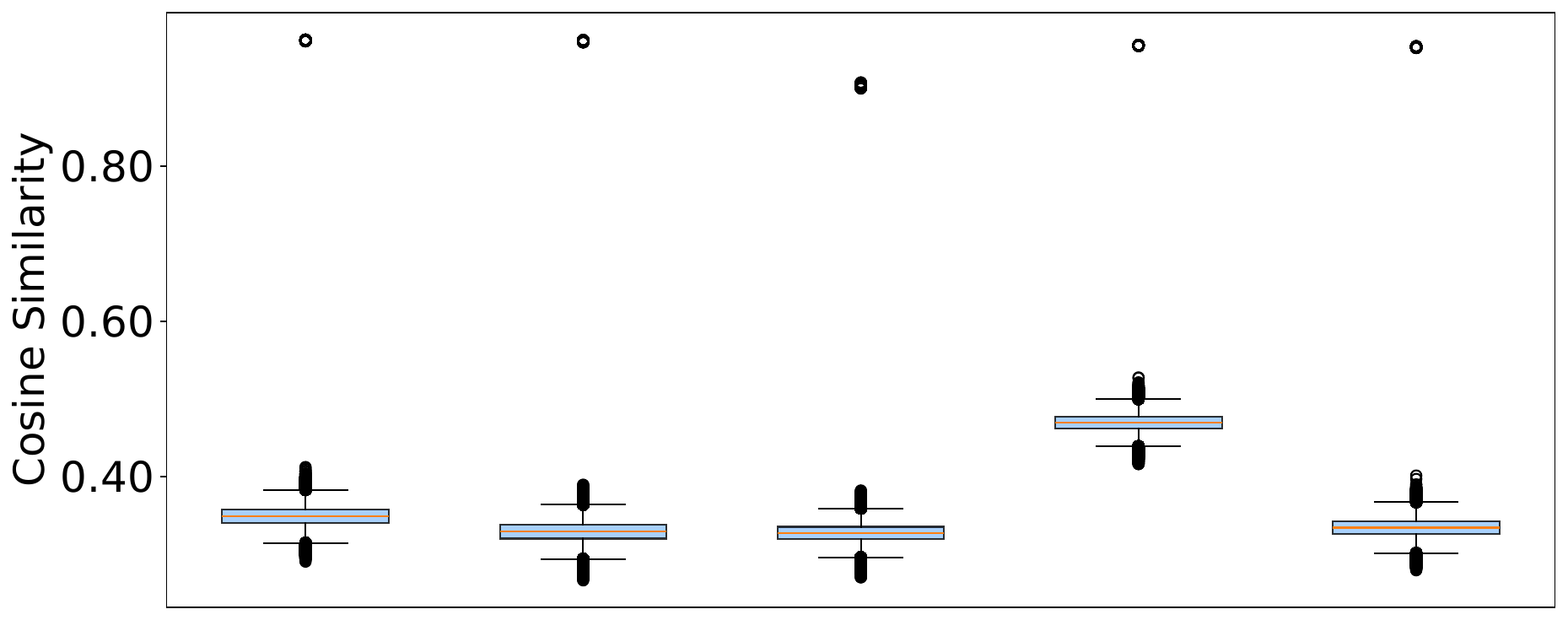} &
  \includegraphics[width=0.45\linewidth]{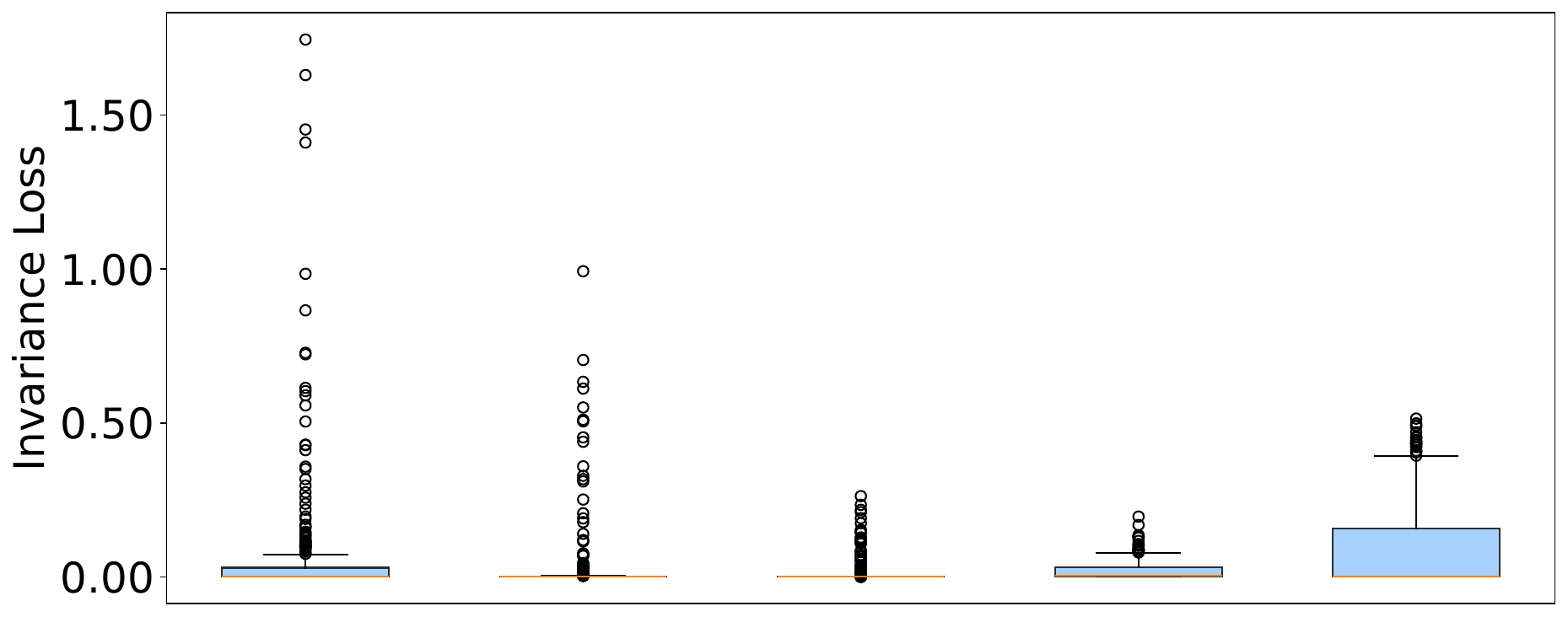} &
  \raisebox{1.2cm}{8} \\[4pt]
  \includegraphics[width=0.45\linewidth]{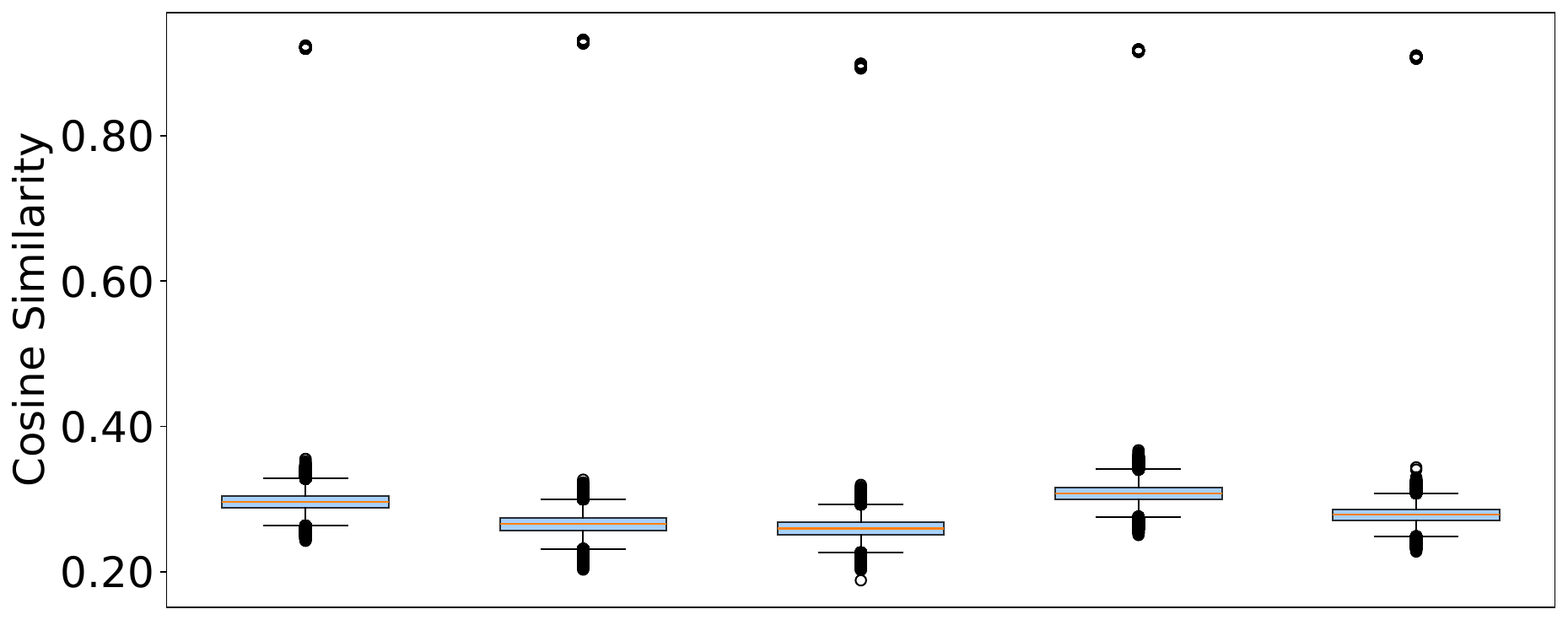} &
  \includegraphics[width=0.45\linewidth]{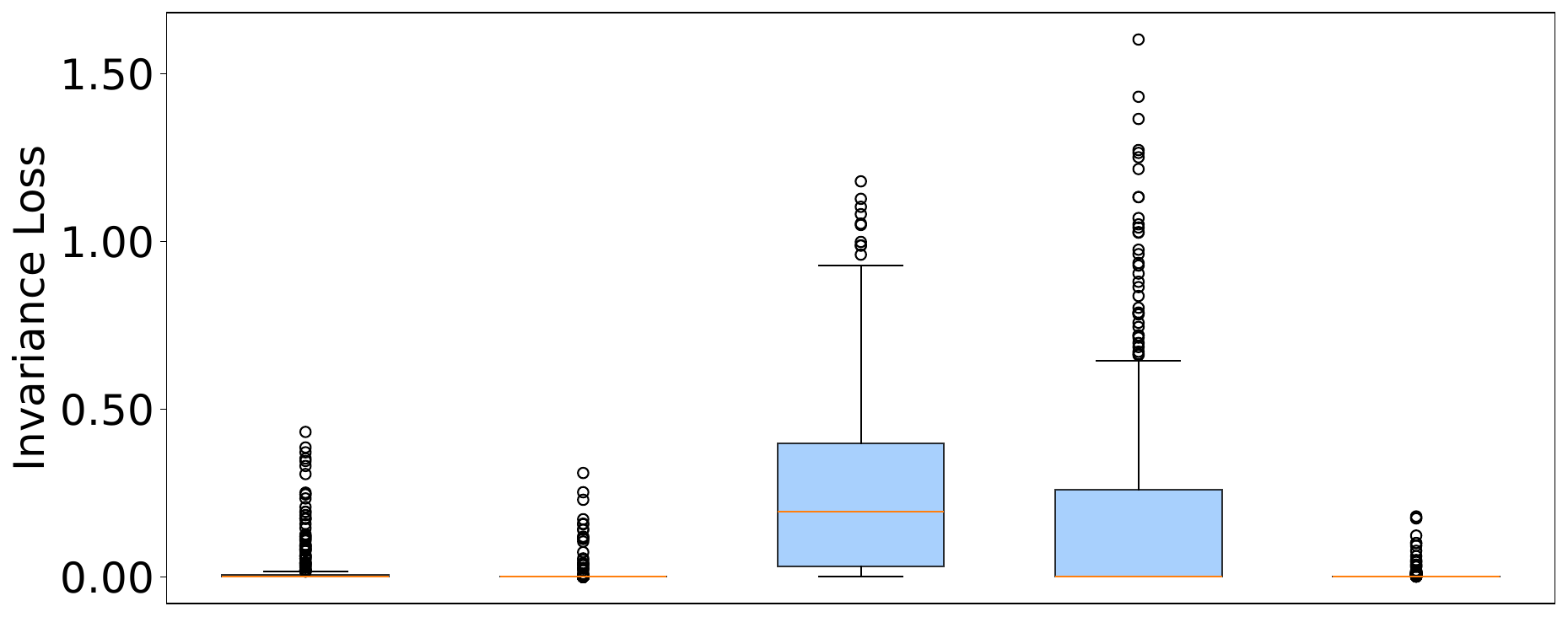} &
  \raisebox{1.2cm}{16} \\[4pt]
\end{tabular}
\caption{Mechanism indicators for second layer of all models trained on ModelNet10 with translations.}
\label{fig:modelnet_translation_grid_layer1}
\end{figure}

\subsubsection{Rotation}

\begin{figure}[h]
\centering
\renewcommand{\arraystretch}{1.5}
\begin{tabular}{cccc}
  Block 1 & Block 2 & N Augs \\[6pt]
  \includegraphics[width=0.45\linewidth]{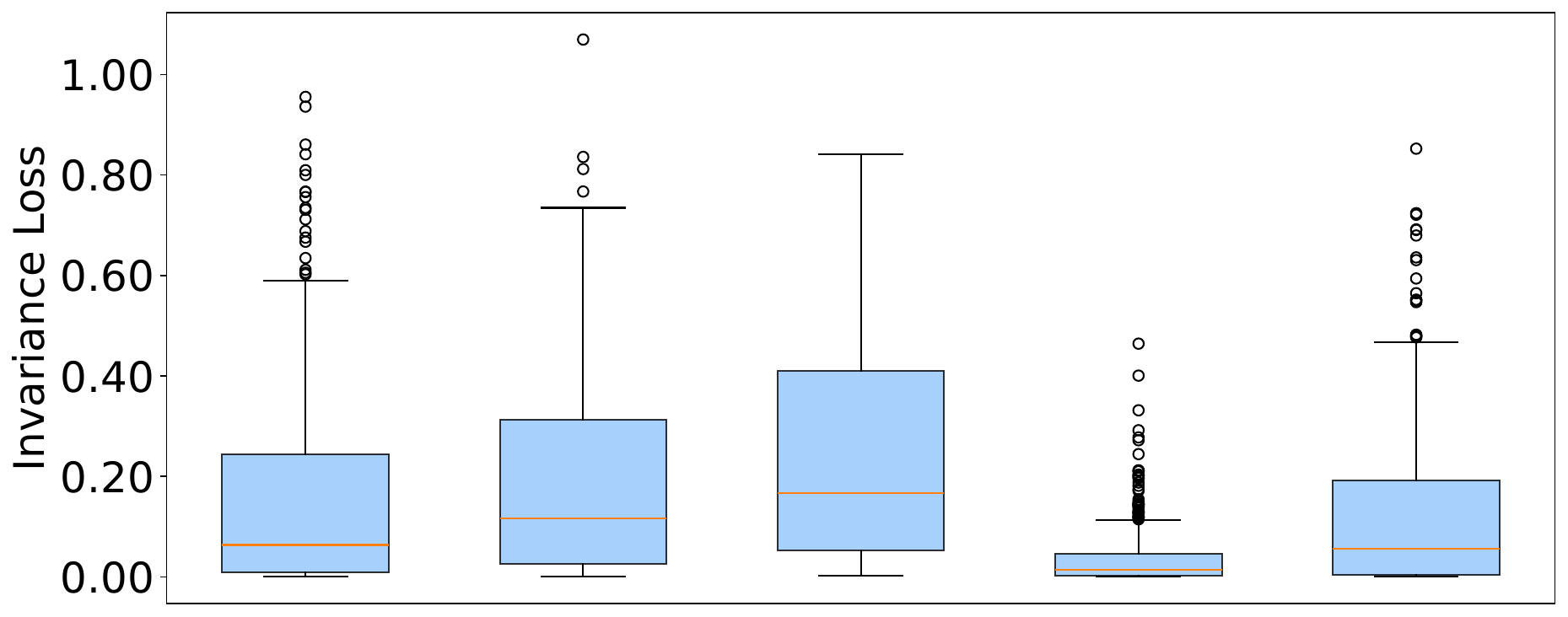} &
  \includegraphics[width=0.45\linewidth]{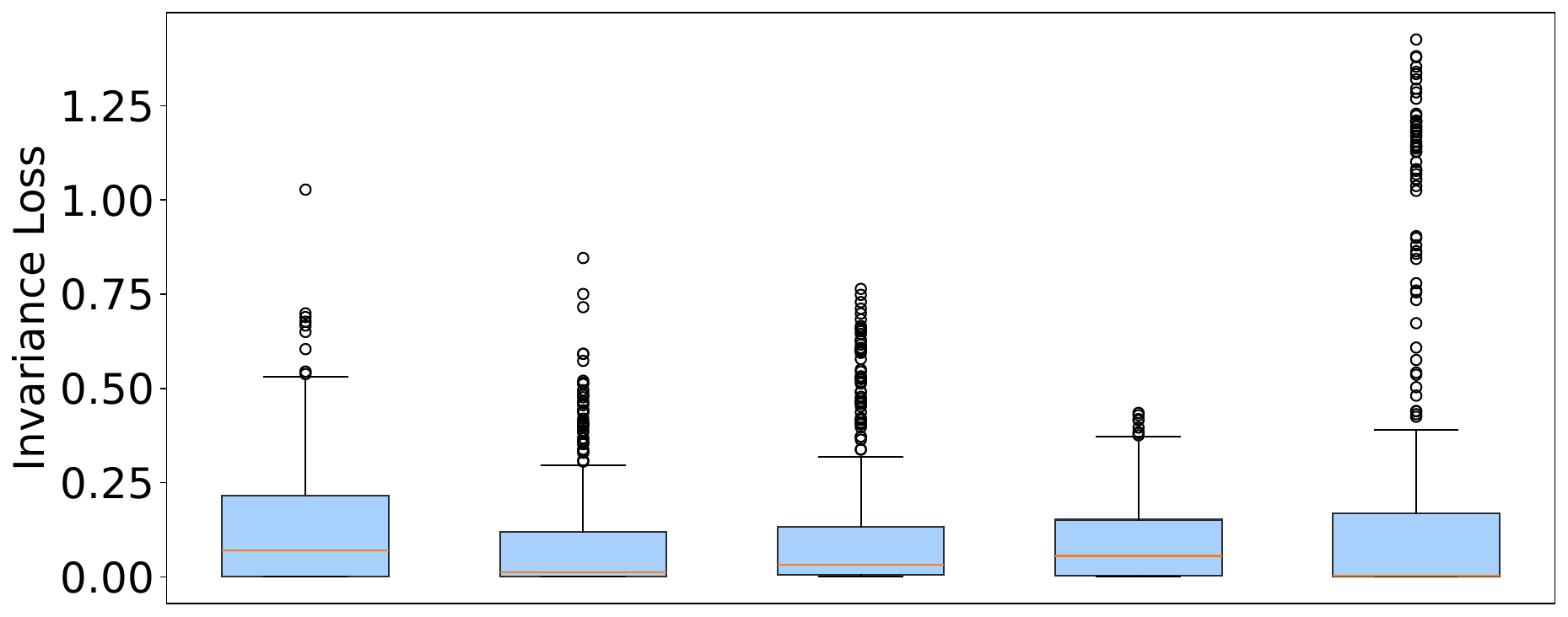} &
  \raisebox{1.2cm}{1} \\[4pt]
  \includegraphics[width=0.45\linewidth]{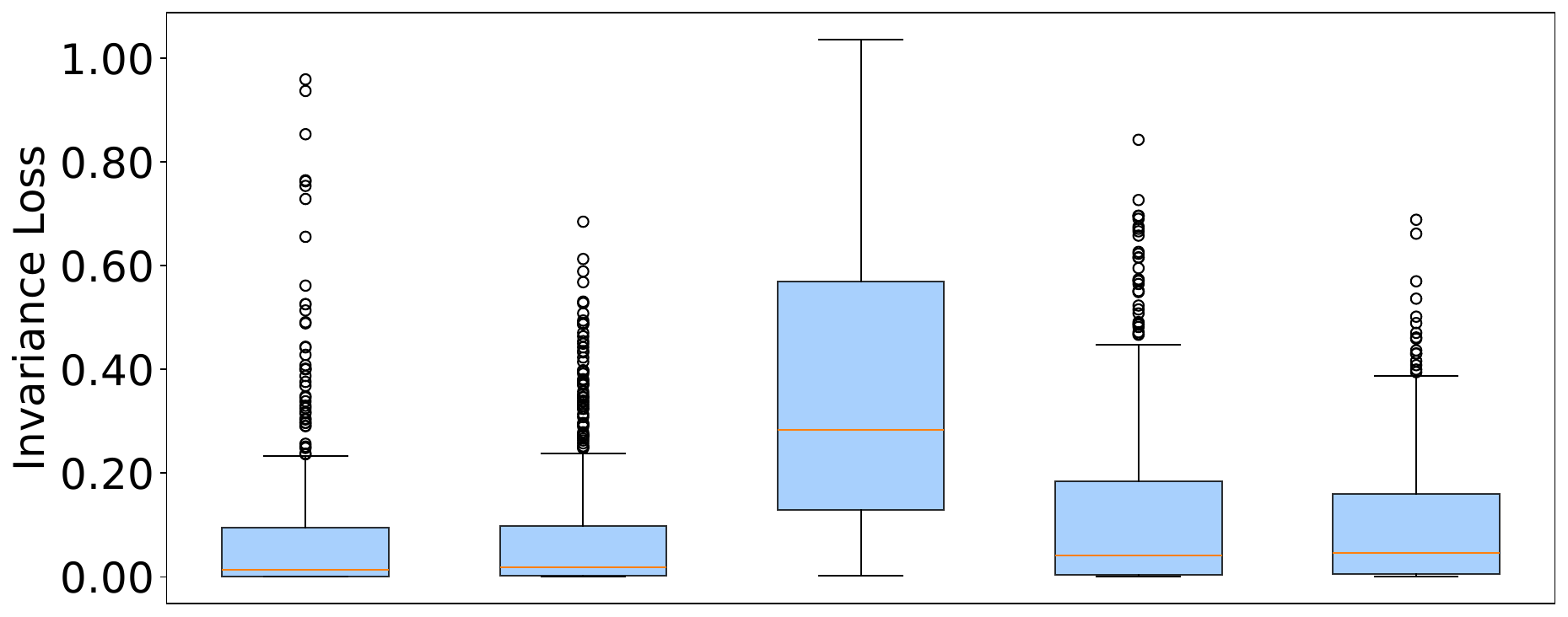} &
  \includegraphics[width=0.45\linewidth]{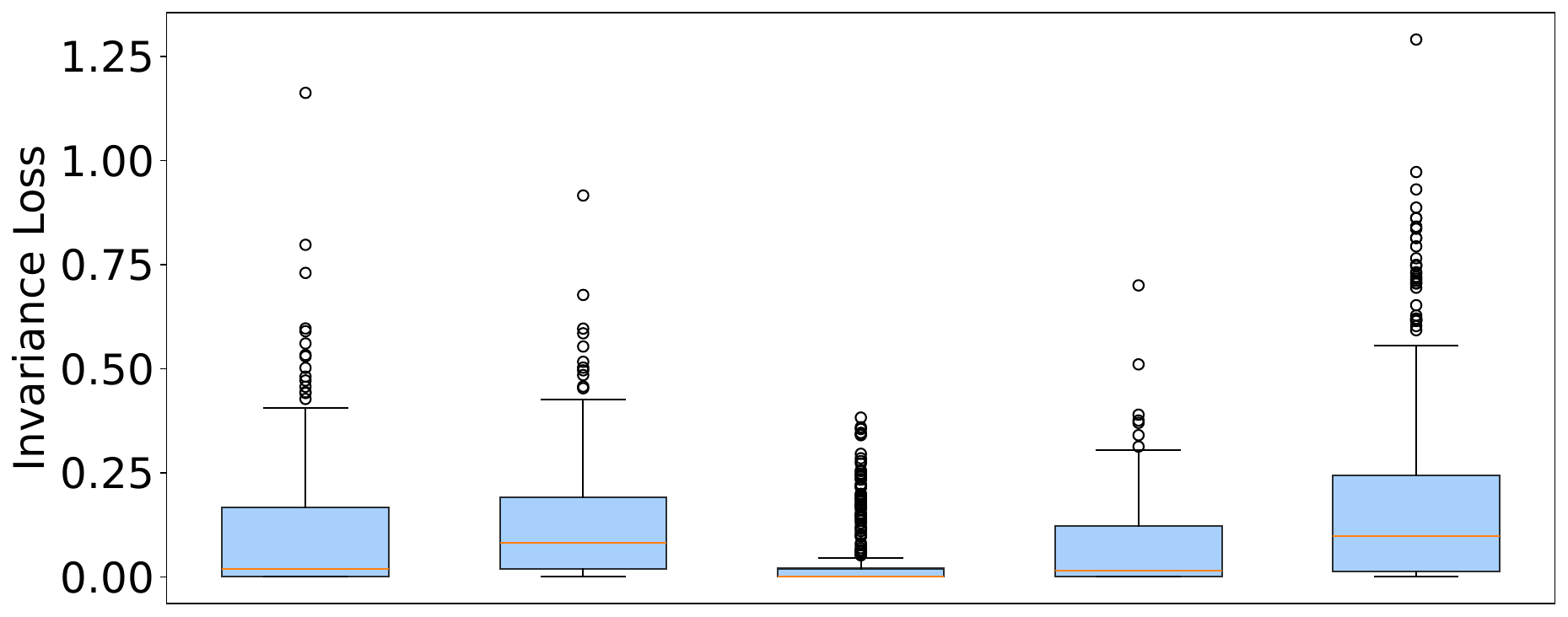} &
  \raisebox{1.2cm}{2} \\[4pt]
  \includegraphics[width=0.45\linewidth]{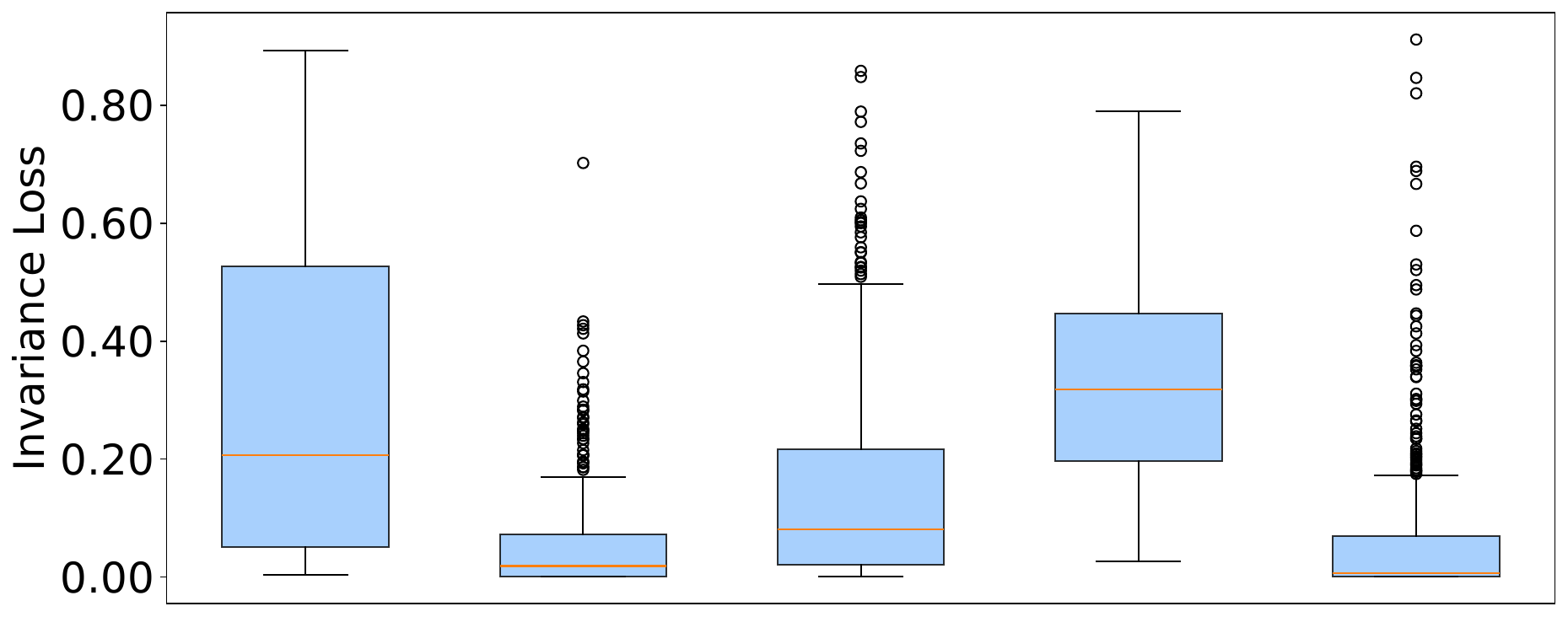} &
  \includegraphics[width=0.45\linewidth]{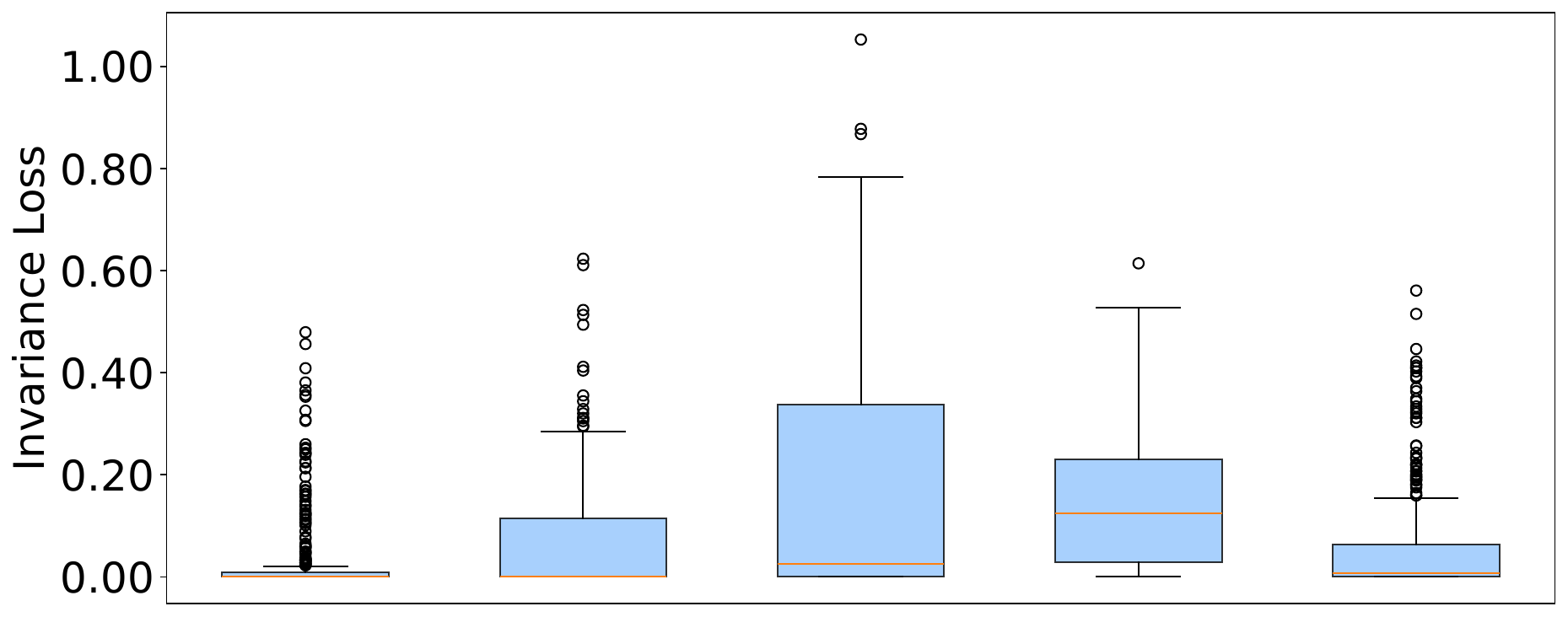} &
  \raisebox{1.2cm}{4} \\[4pt]
  \includegraphics[width=0.45\linewidth]{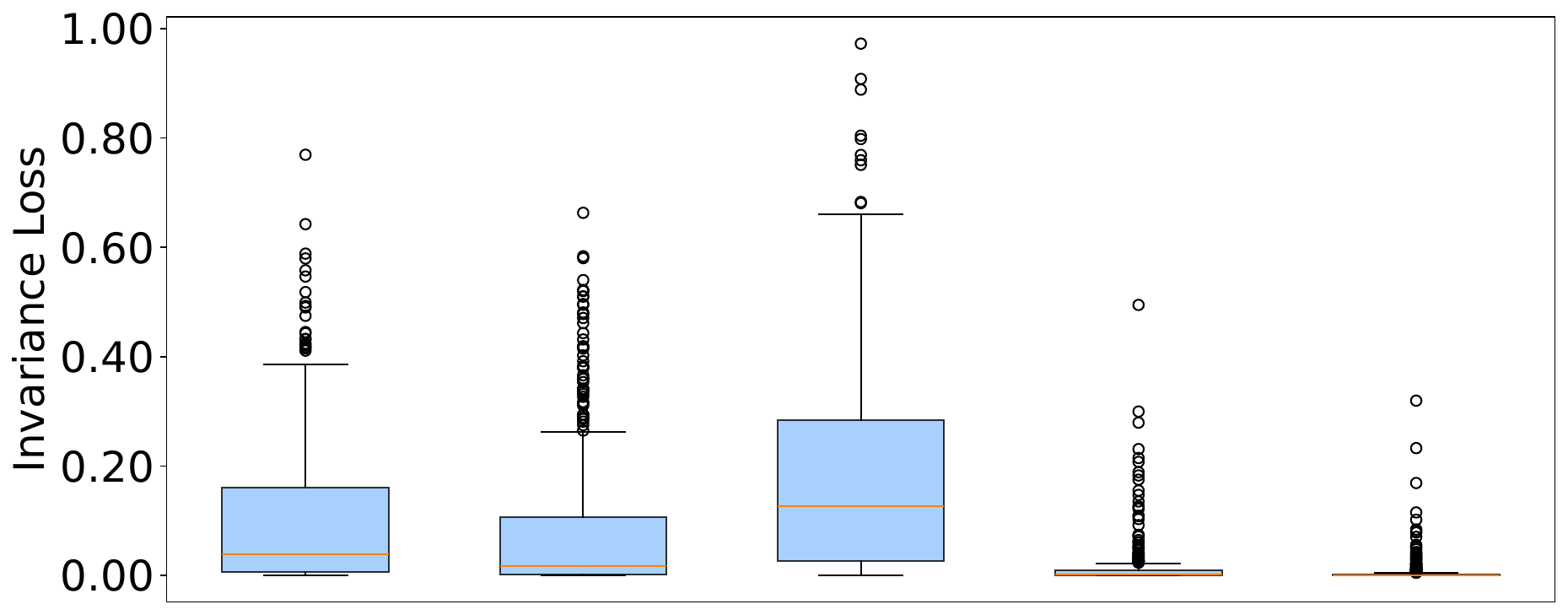} &
  \includegraphics[width=0.45\linewidth]{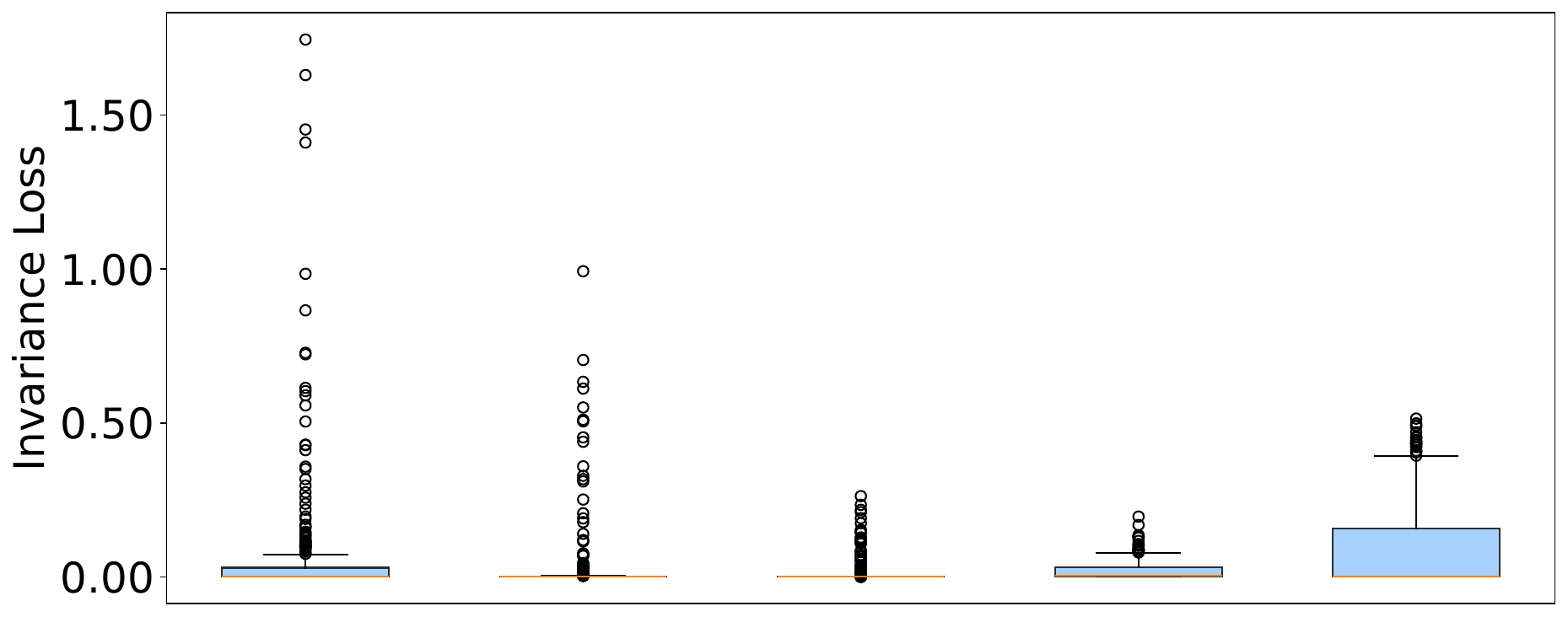} &
  \raisebox{1.2cm}{8} \\[4pt]
  \includegraphics[width=0.45\linewidth]{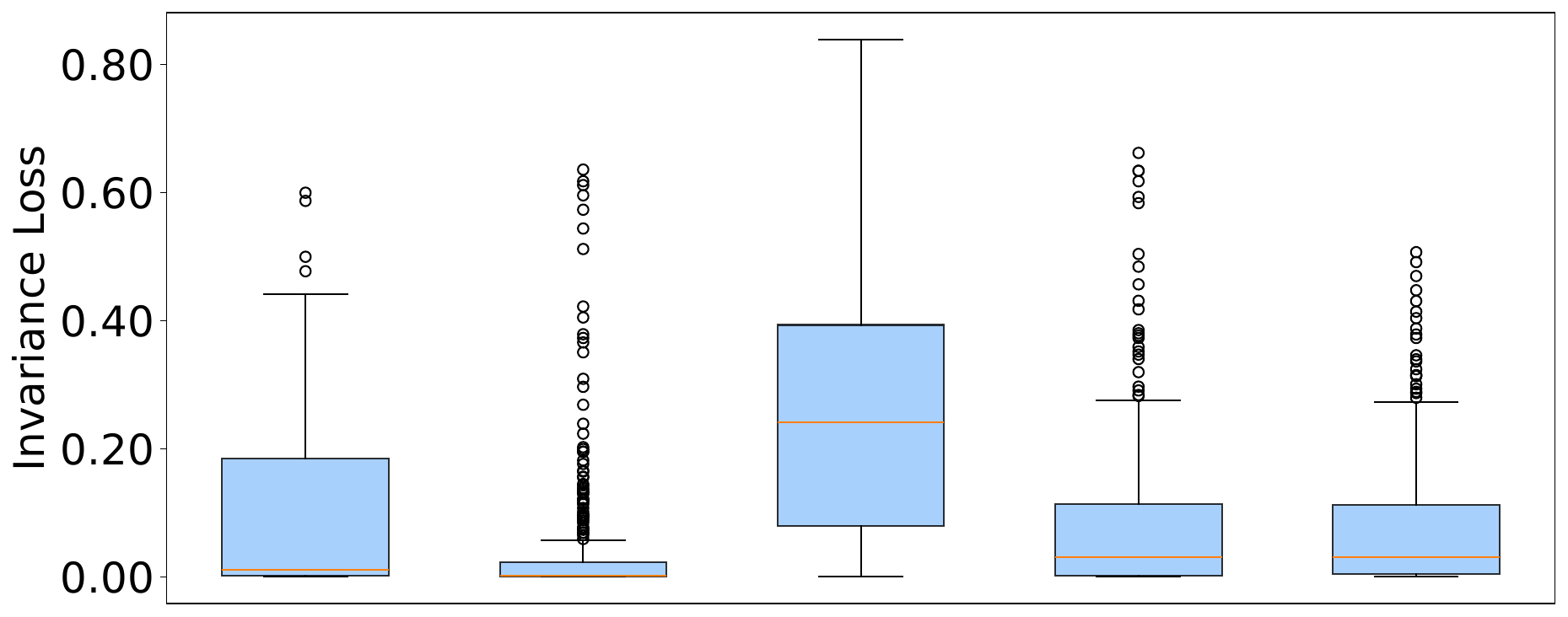} &
  \includegraphics[width=0.45\linewidth]{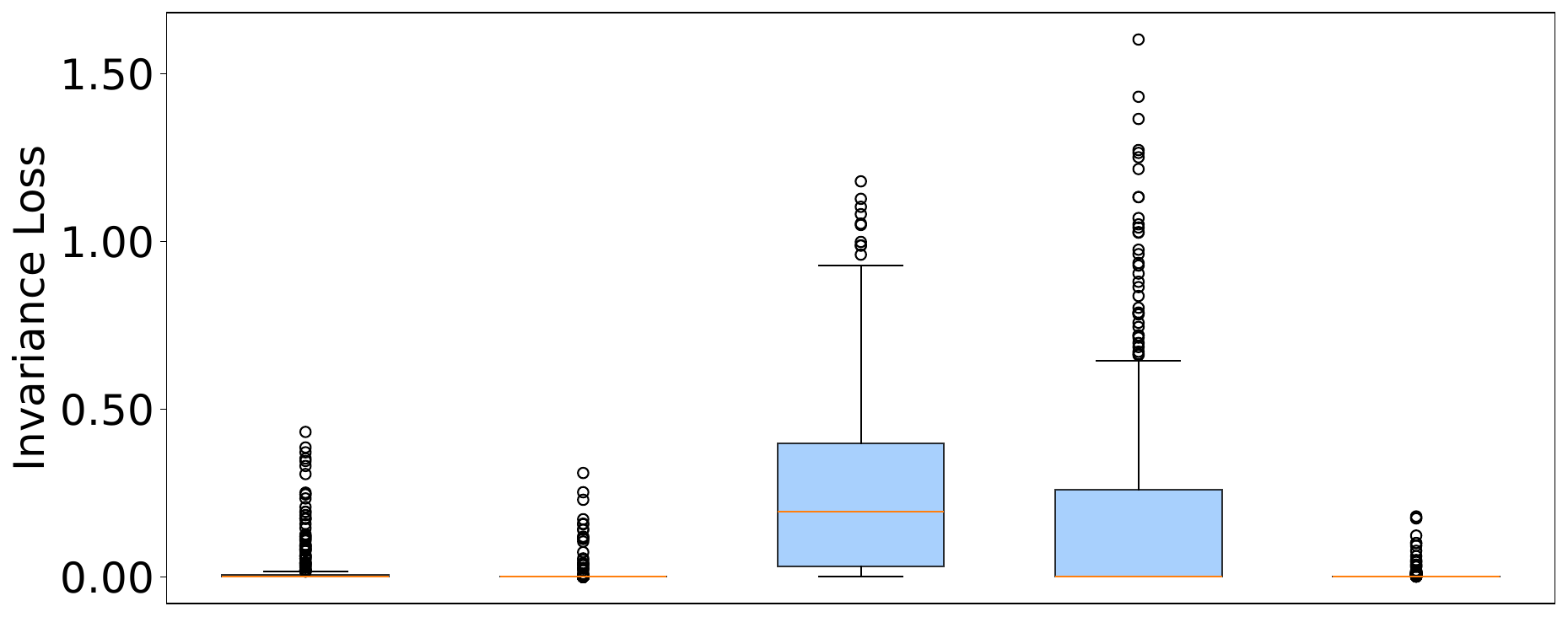} &
  \raisebox{1.2cm}{16} \\
\end{tabular}
\caption{Invariance loss per block of the attention maps of all models trained with rotations on ModelNet10.}
\label{fig:rotation_grid_modelnet}
\end{figure}

Figure~\ref{fig:rotation_grid_modelnet} shows that both blocks of Transformers trained on ModelNet10 with rotation augmentations exhibit strong rotation invariance. In both cases, increasing the number of augmentations induces higher invariance, as can be expected.

\subsubsection{Scale}

When trained with scale augmentations, Transformers only exhibit the signature of scaling up self-attention outputs with respect to the residuals in the second block. This can be observed by comparing the measurements on the first block shown in Figure~\ref{fig:scale_modelnet_grid_layer0} with those on the second block shown in Figure~\ref{fig:scale_modelnet_grid_layer1}. 

\begin{figure}[h]
\centering
\renewcommand{\arraystretch}{1.5}
\begin{tabular}{ccc}
  Invariance Loss of $\A(\mX)$ & $\| \mX_{i,:}\|/ \| \SA(\mX)_{i,:}\|$ & N Augs \\[6pt]
  \includegraphics[width=0.45\linewidth]{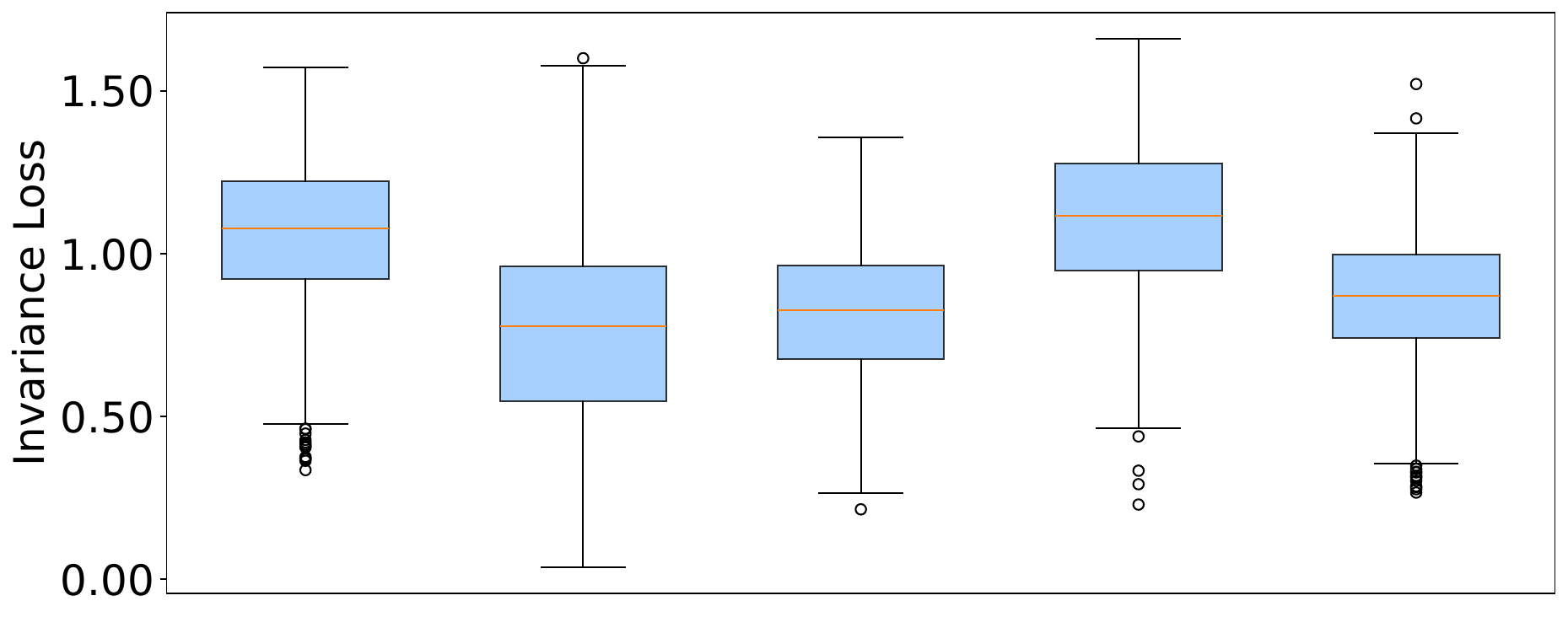} &
  \includegraphics[width=0.45\linewidth]{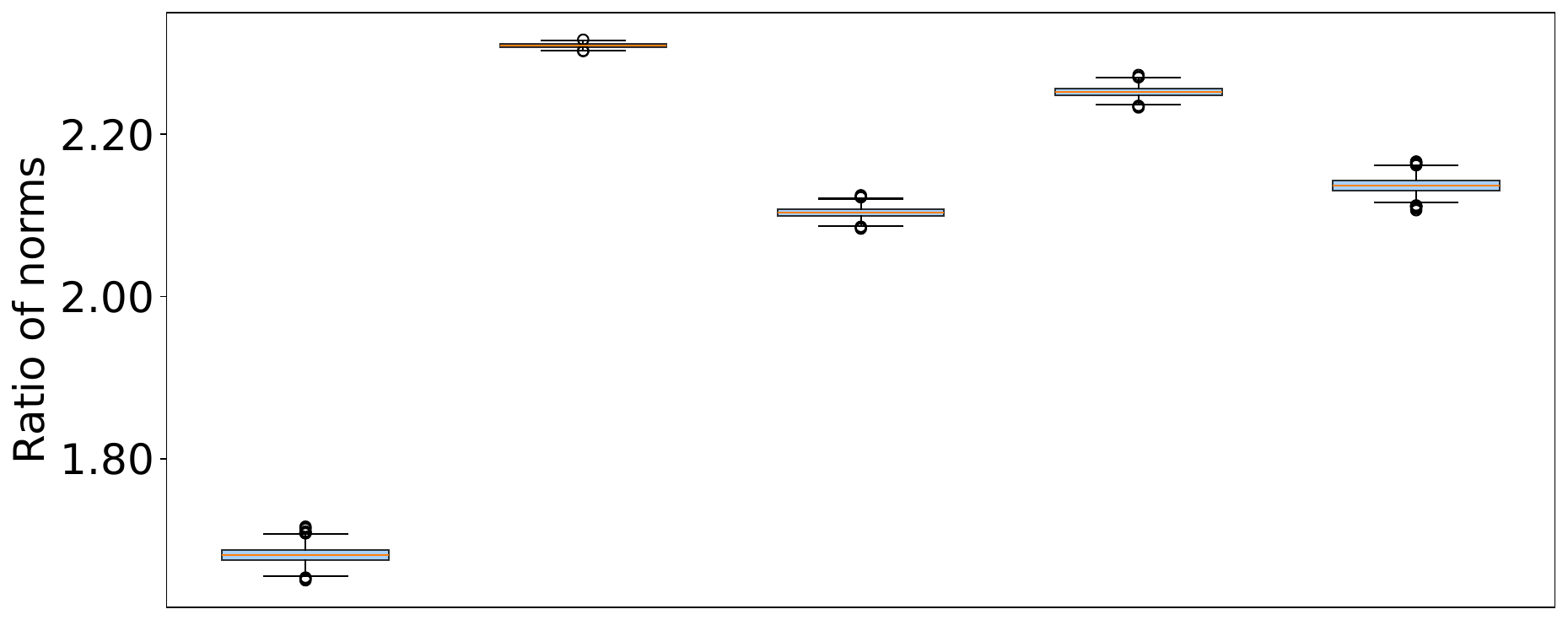} &
  \raisebox{1.2cm}{1} \\[4pt]
  \includegraphics[width=0.45\linewidth]{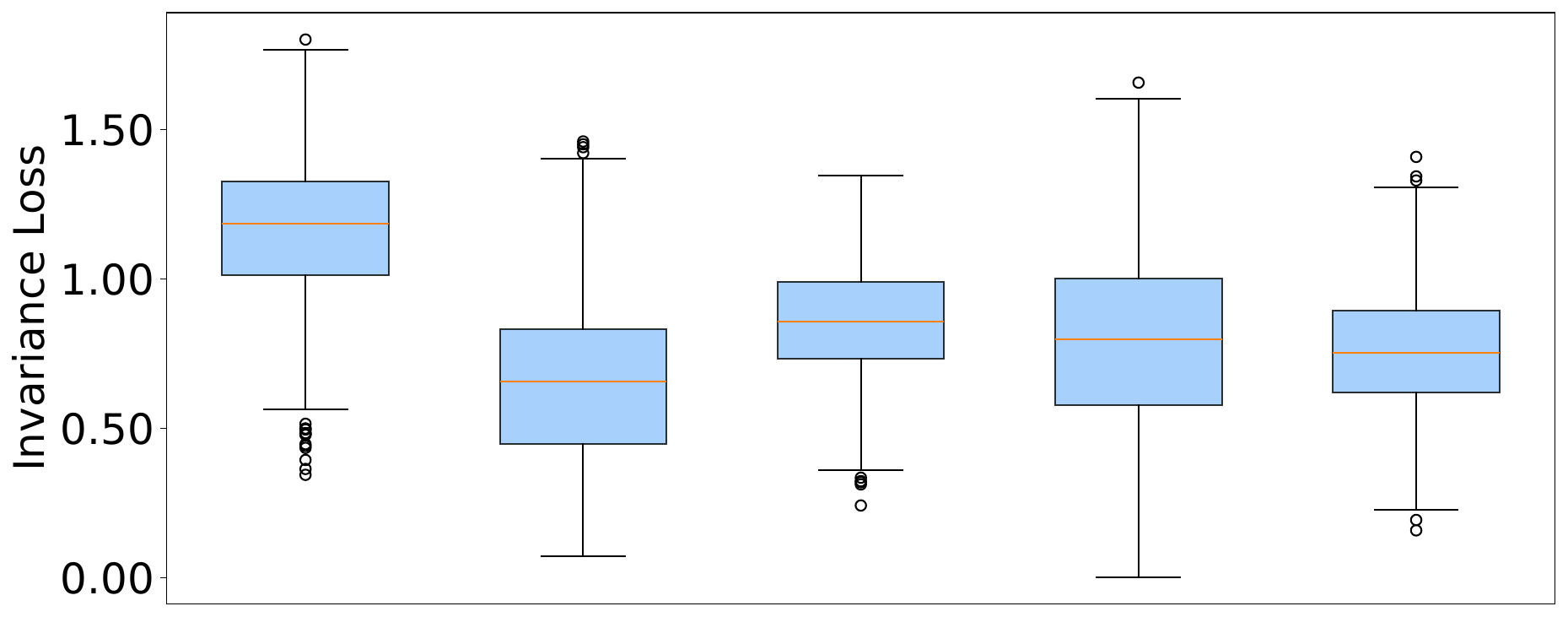} &
  \includegraphics[width=0.45\linewidth]{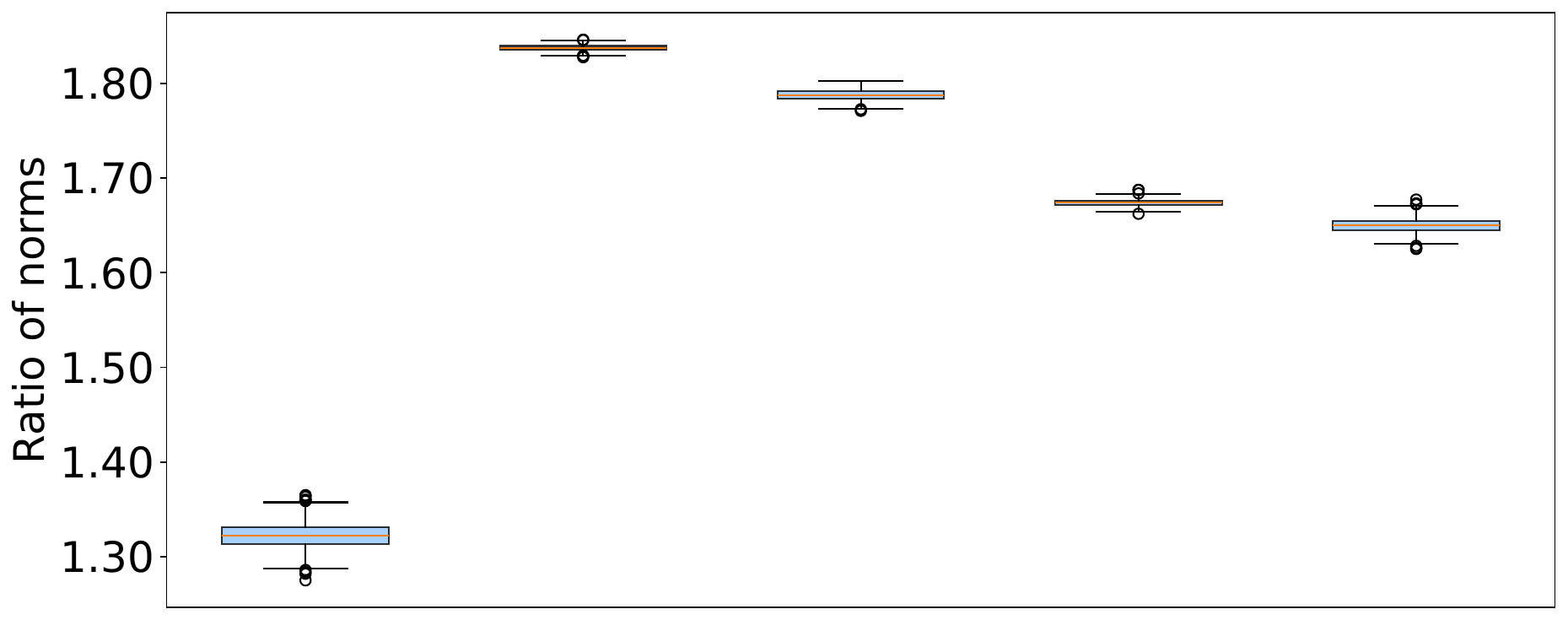} &
  \raisebox{1.2cm}{2} \\[4pt]
  \includegraphics[width=0.45\linewidth]{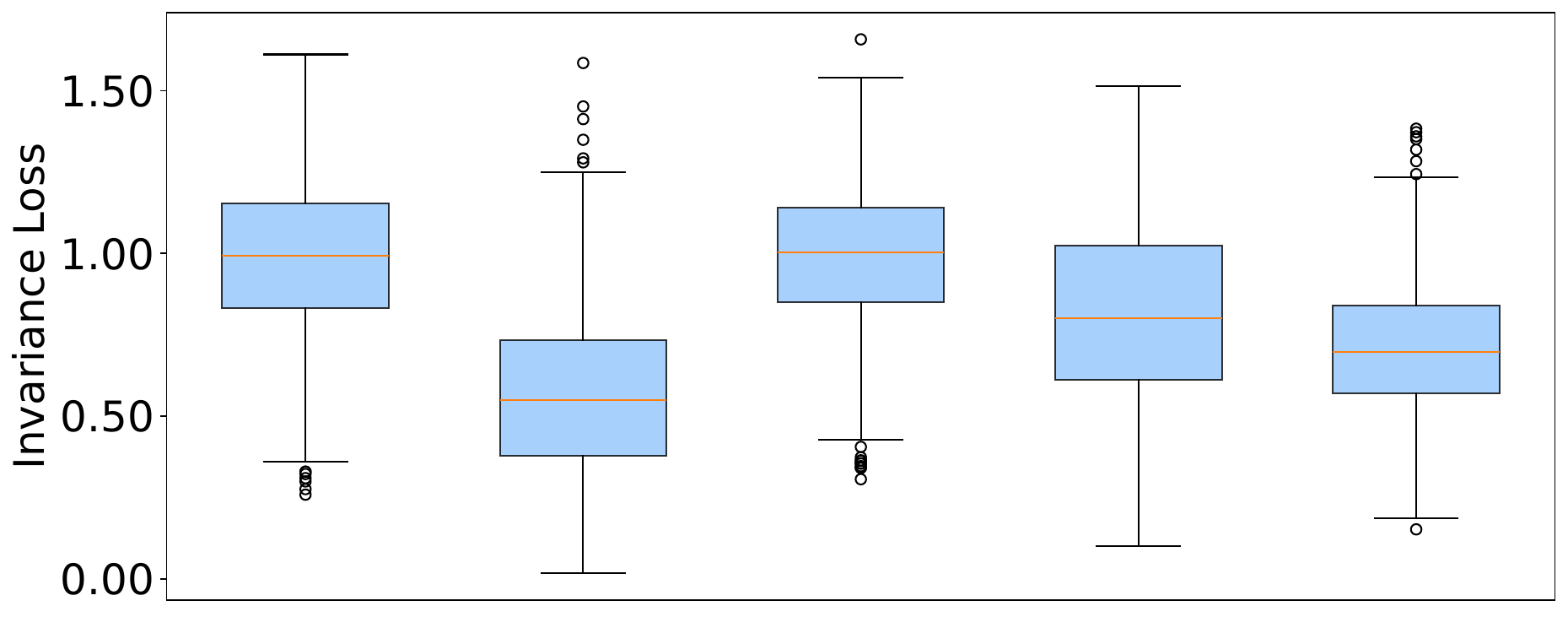} &
  \includegraphics[width=0.45\linewidth]{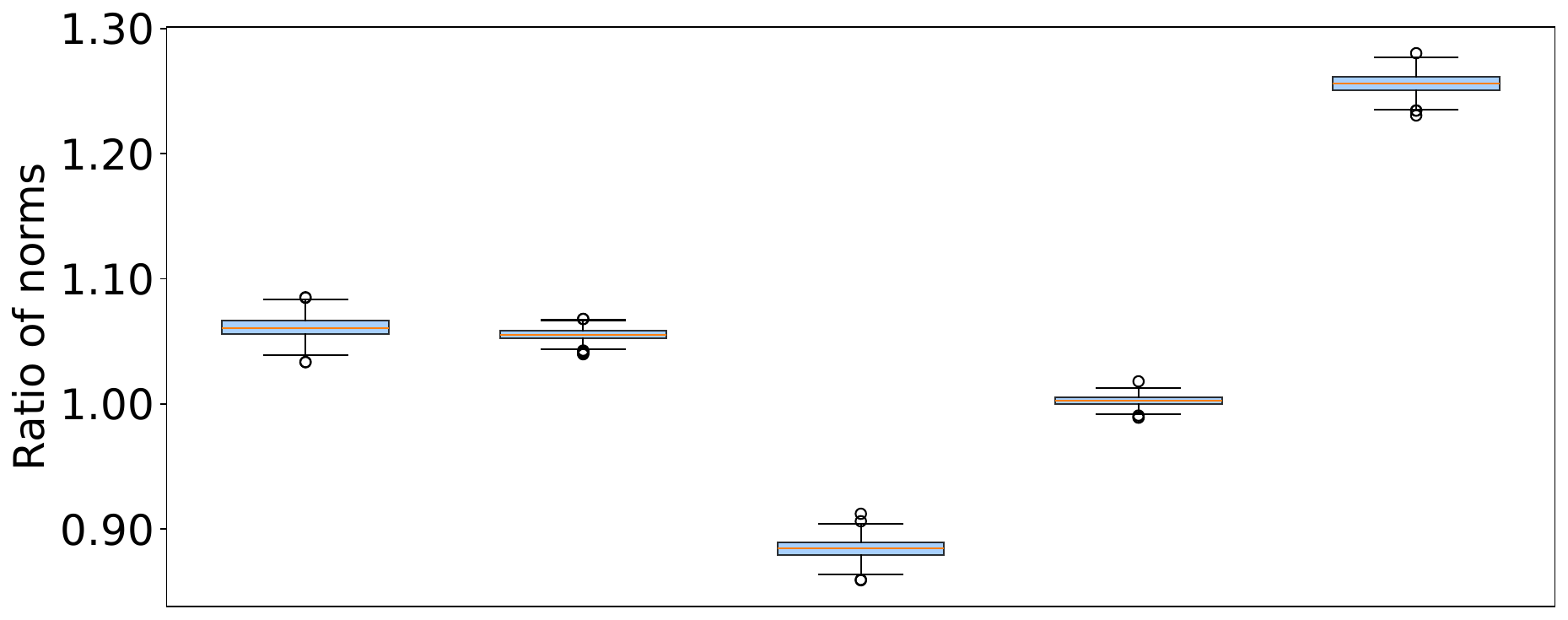} &
  \raisebox{1.2cm}{4} \\[4pt]
  \includegraphics[width=0.45\linewidth]{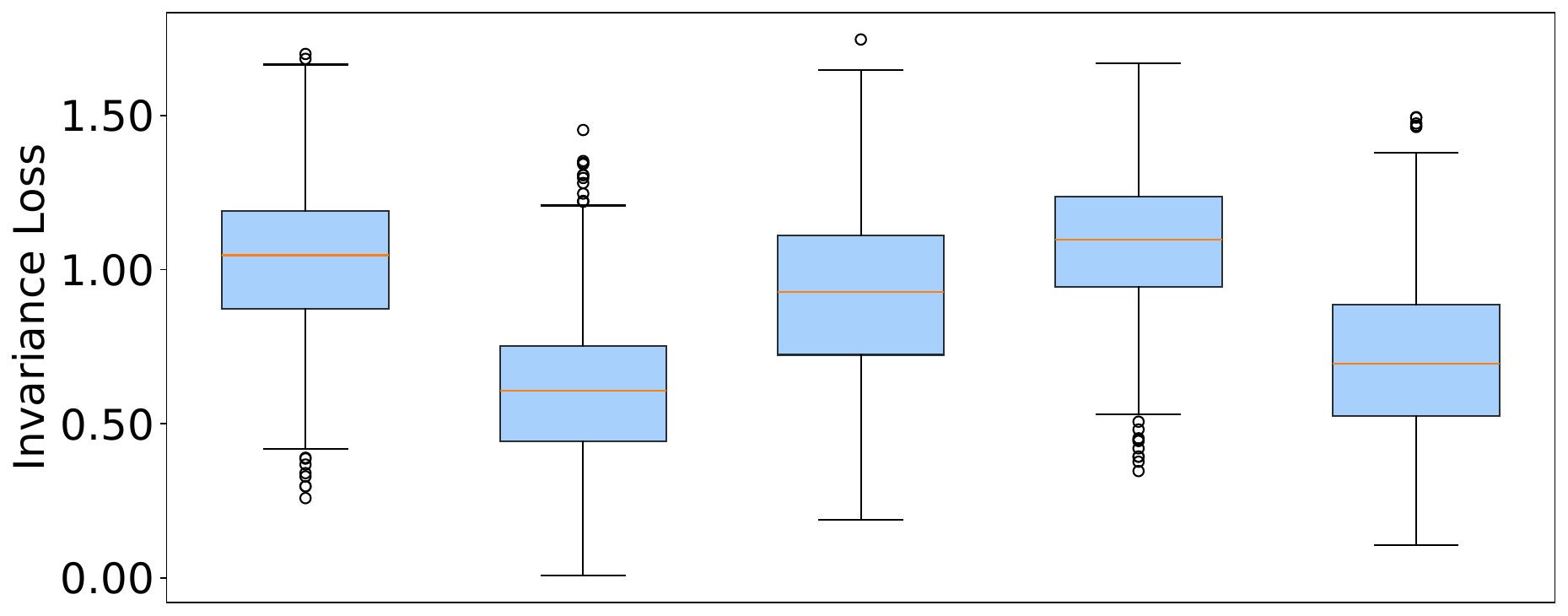} &
  \includegraphics[width=0.45\linewidth]{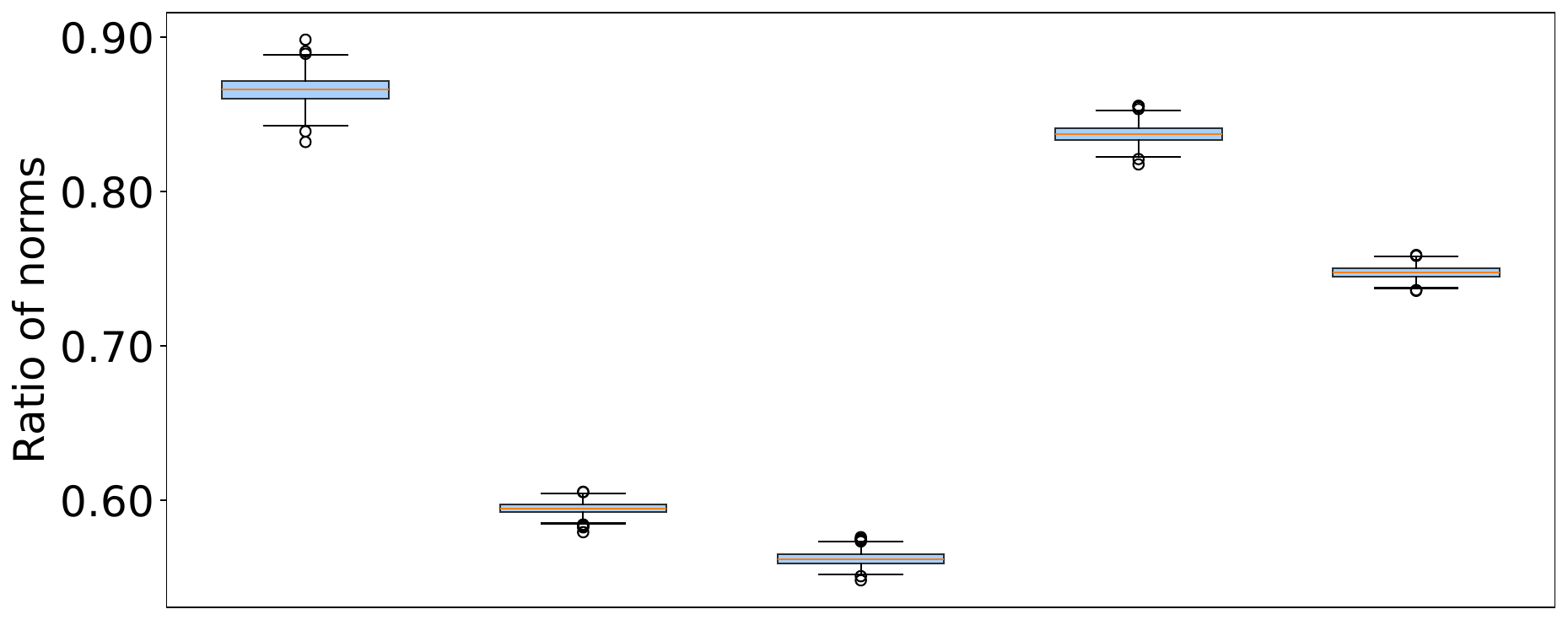} &
  \raisebox{1.2cm}{8} \\[4pt]
  \includegraphics[width=0.45\linewidth]{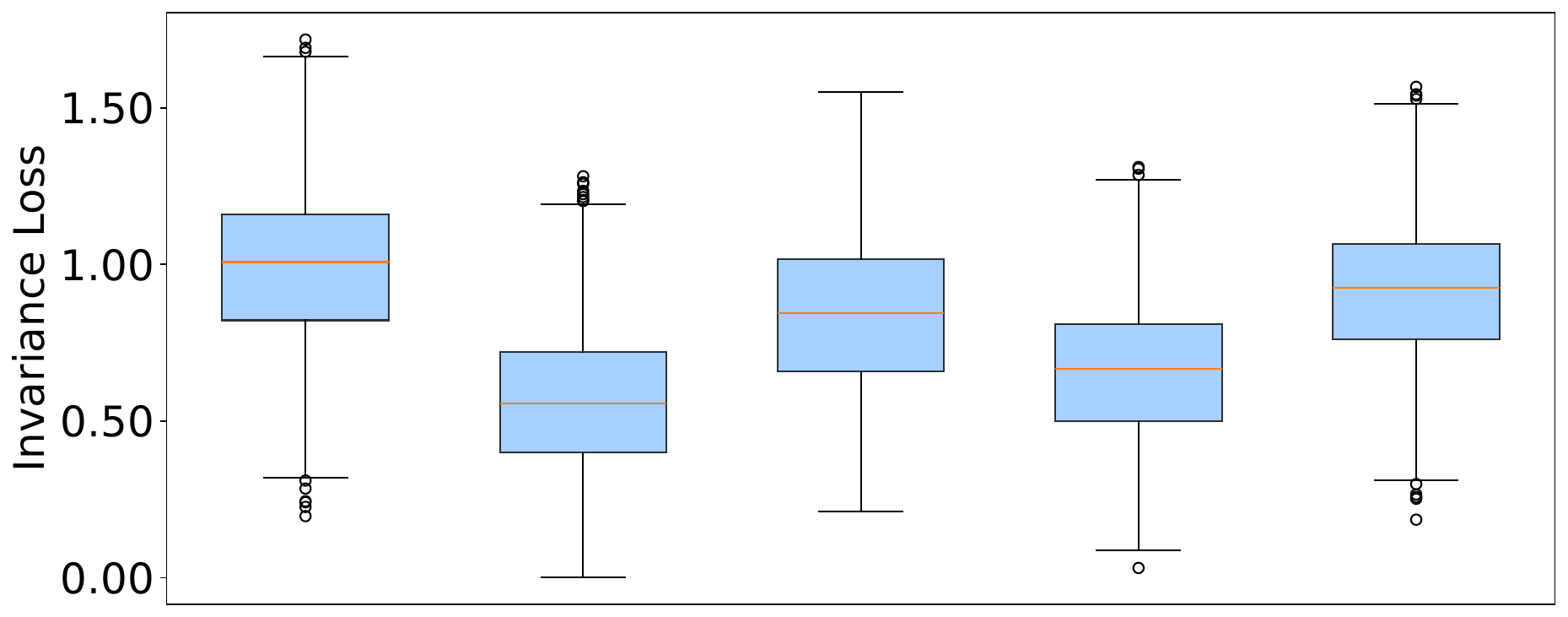} &
  \includegraphics[width=0.45\linewidth]{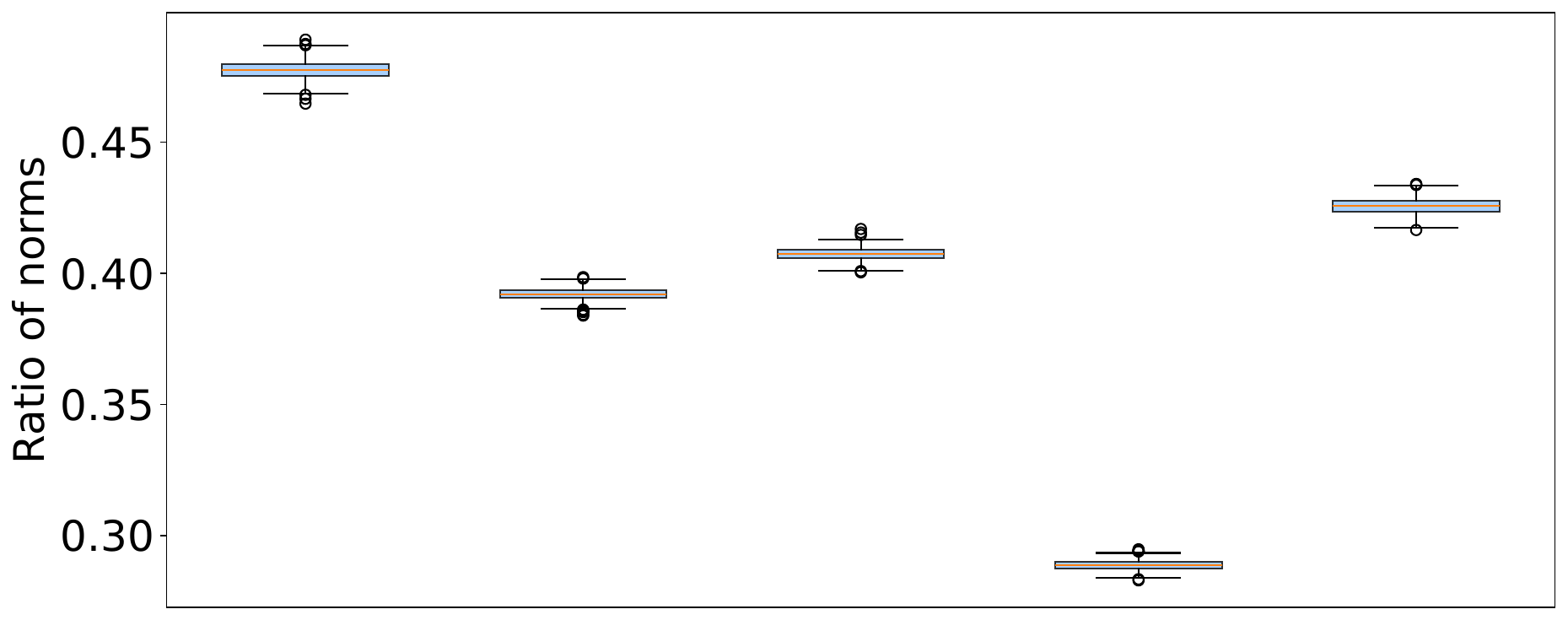} &
  \raisebox{1.2cm}{16} \\
\end{tabular}
\caption{Mechanism indicators of first layer (layer 0) of all models trained on ModelNet10 with scale augmentations.}
\label{fig:scale_modelnet_grid_layer0}
\end{figure}

\begin{figure}[h]
\centering
\renewcommand{\arraystretch}{1.5}
\begin{tabular}{ccc}
  Invariance Loss of $\A(\mX)$ & $\| \mX_{i,:}\|/ \| \SA(\mX)_{i,:}\|$ & N Augs \\[6pt]
  \includegraphics[width=0.45\linewidth]{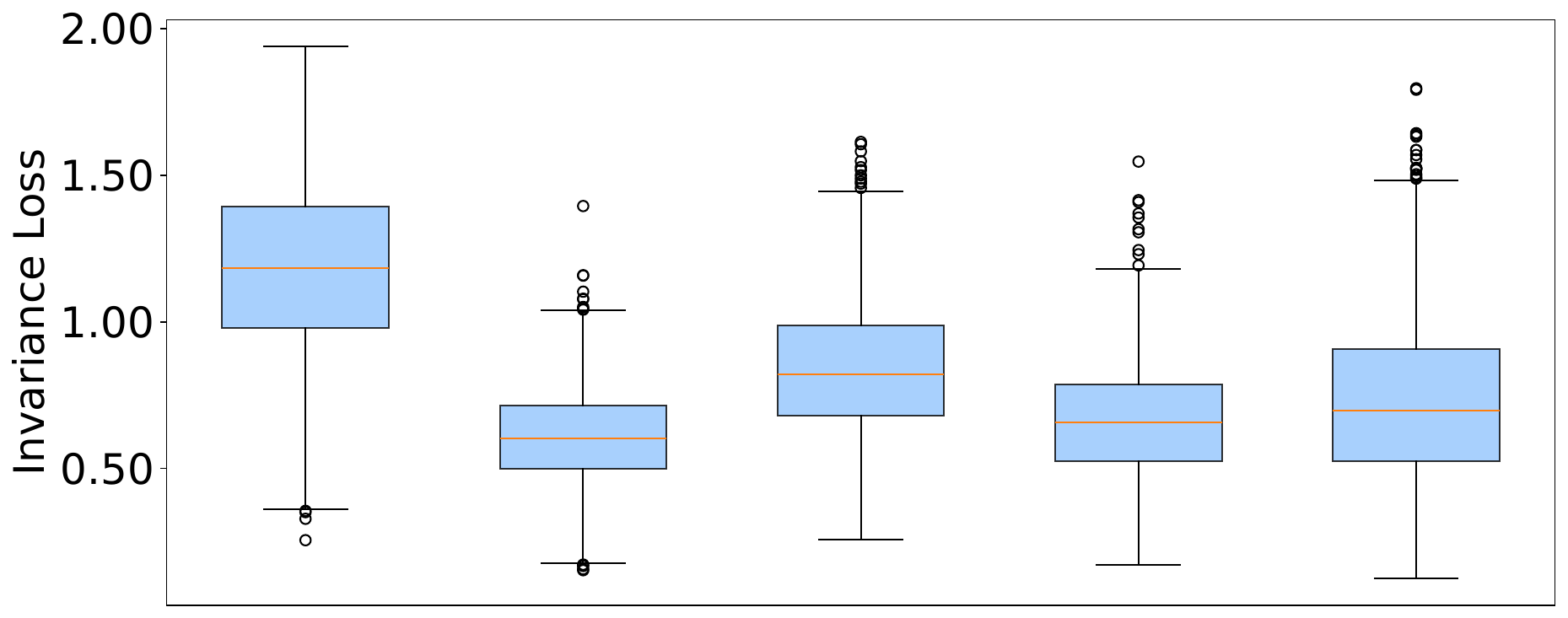} &
  \includegraphics[width=0.45\linewidth]{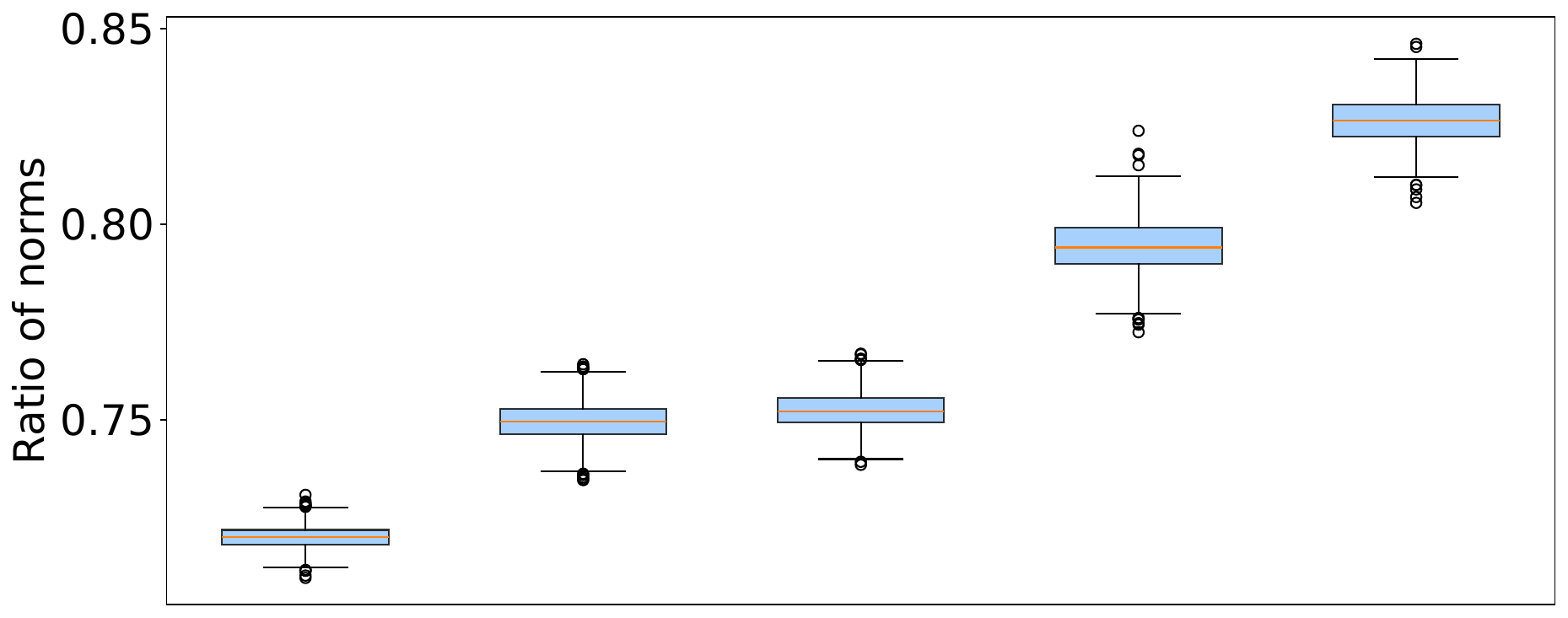} &
  \raisebox{1.2cm}{1} \\[4pt]
  \includegraphics[width=0.45\linewidth]{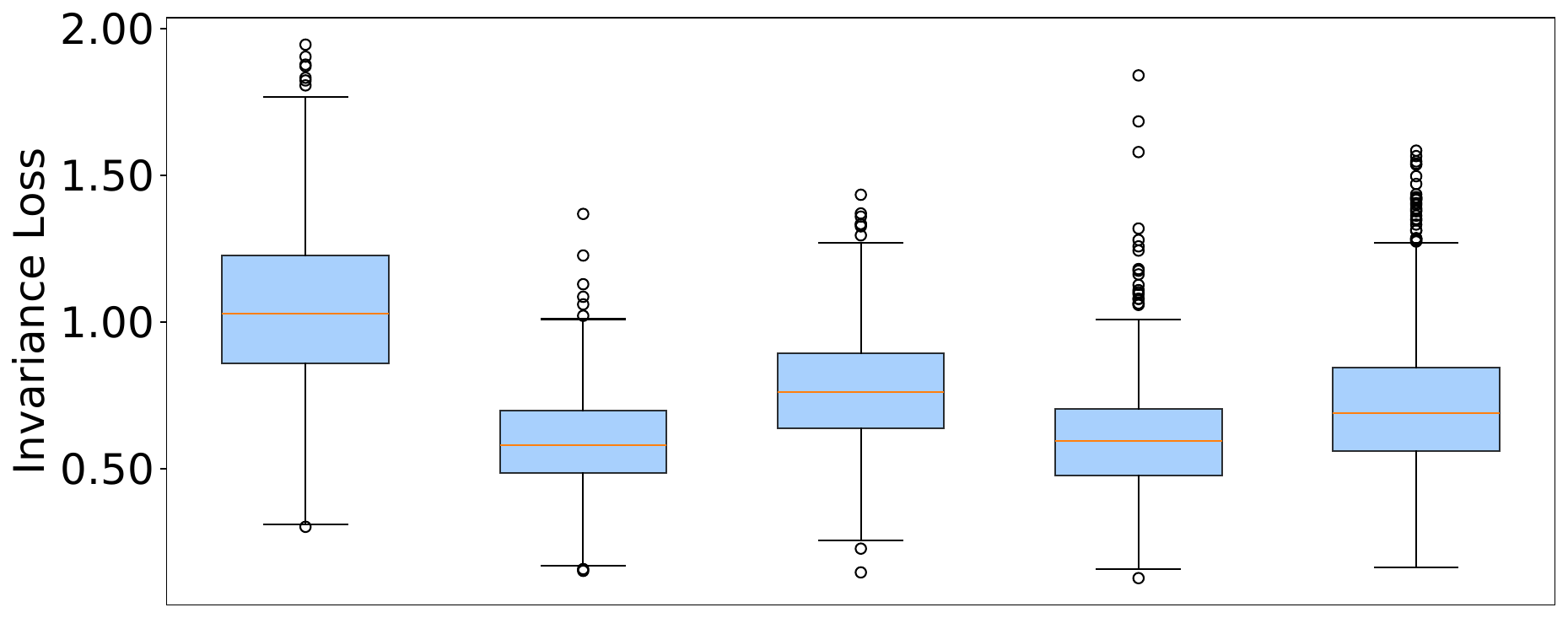} &
  \includegraphics[width=0.45\linewidth]{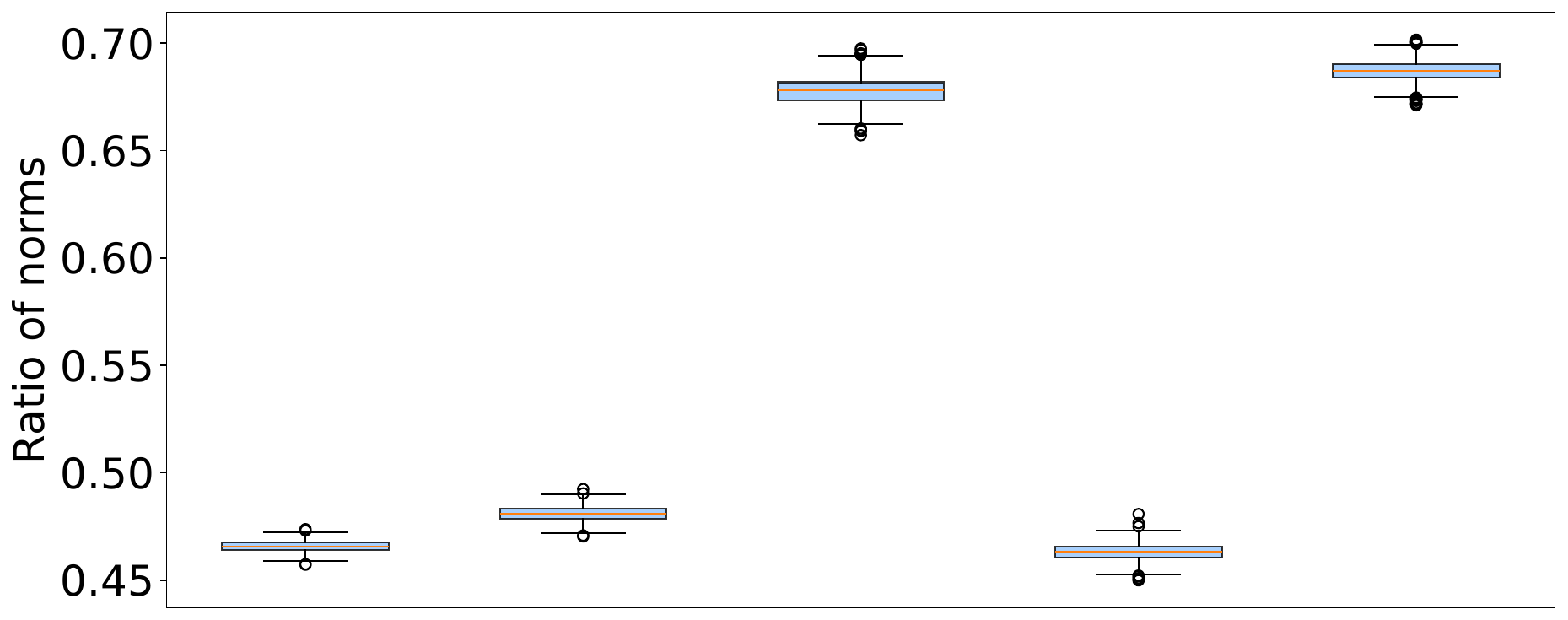} &
  \raisebox{1.2cm}{2} \\[4pt]
  \includegraphics[width=0.45\linewidth]{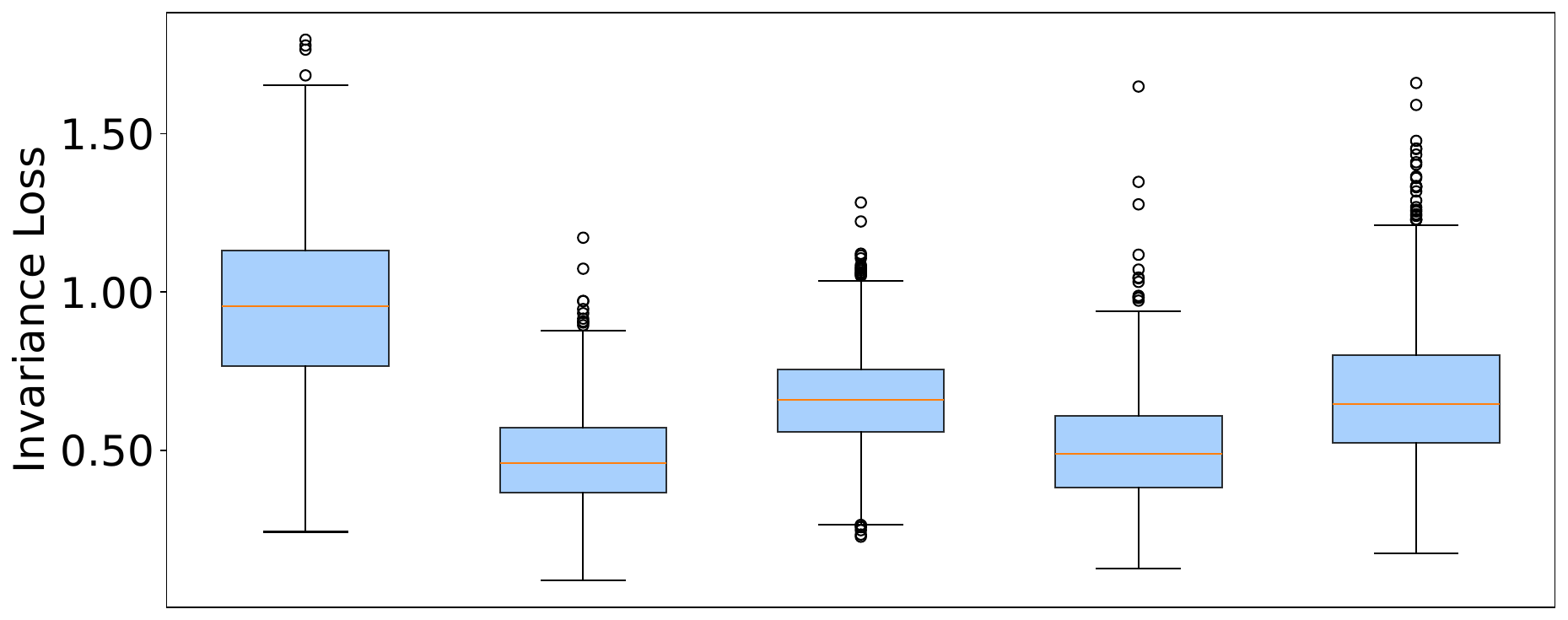} &
  \includegraphics[width=0.45\linewidth]{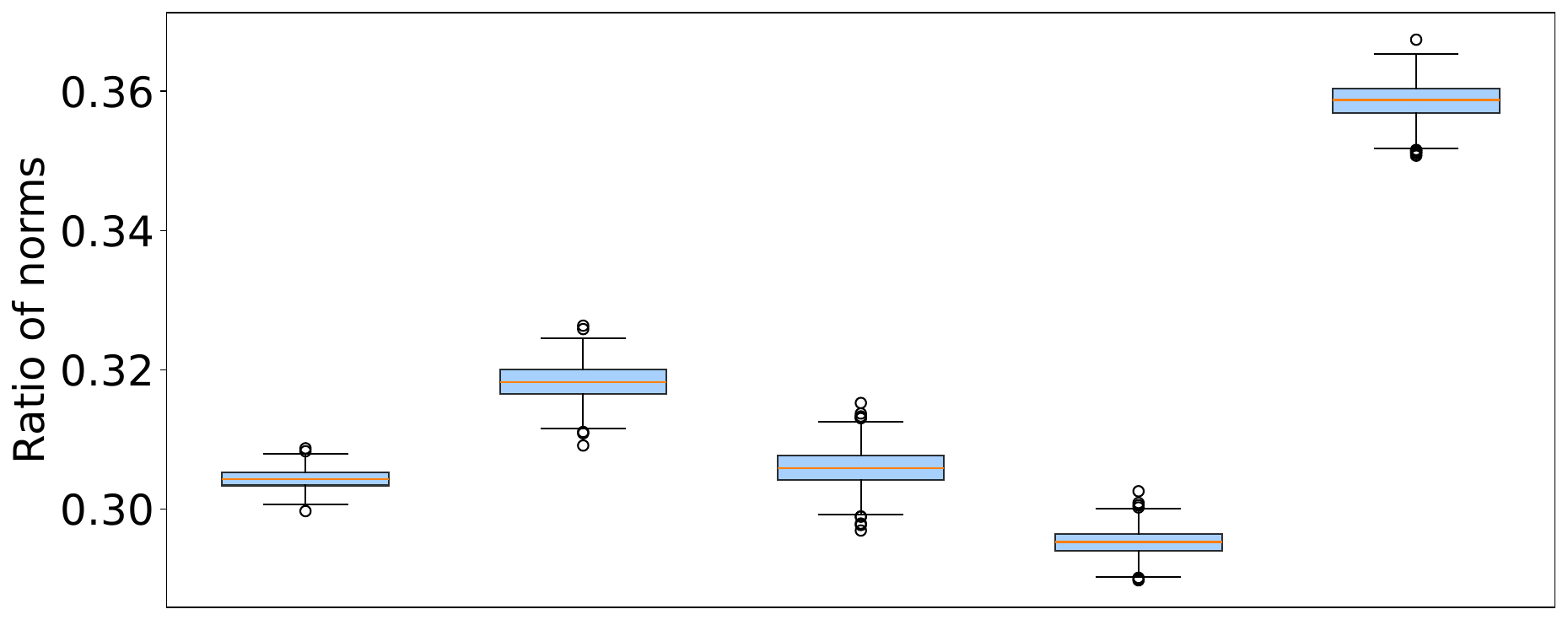} &
  \raisebox{1.2cm}{4} \\[4pt]
  \includegraphics[width=0.45\linewidth]{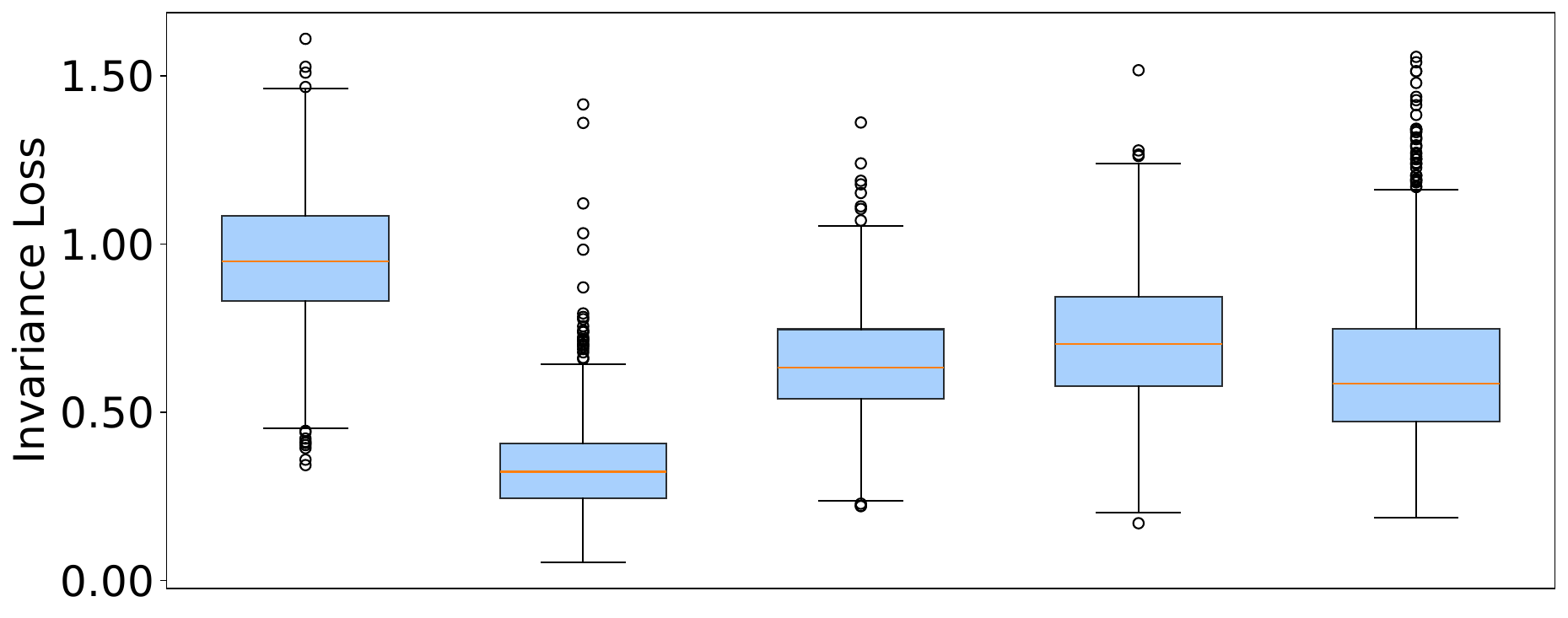} &
  \includegraphics[width=0.45\linewidth]{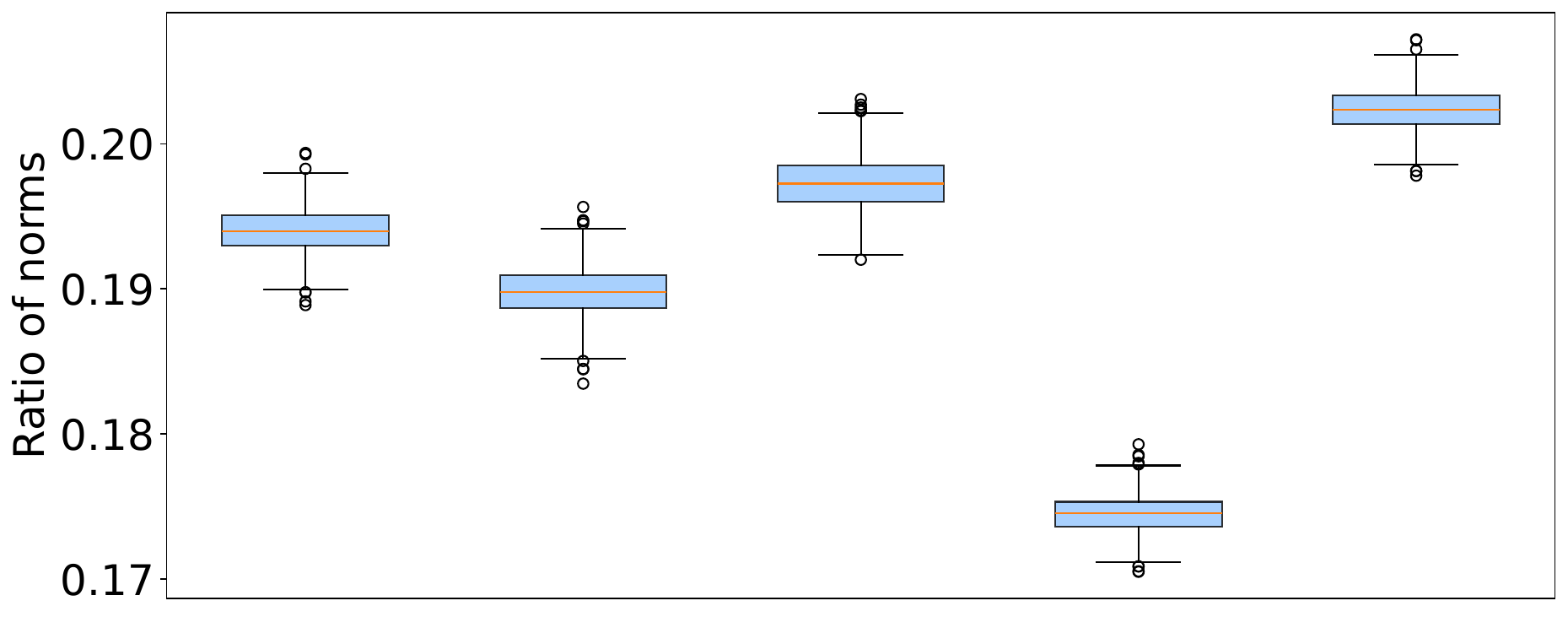} &
  \raisebox{1.2cm}{8} \\[4pt]
  \includegraphics[width=0.45\linewidth]{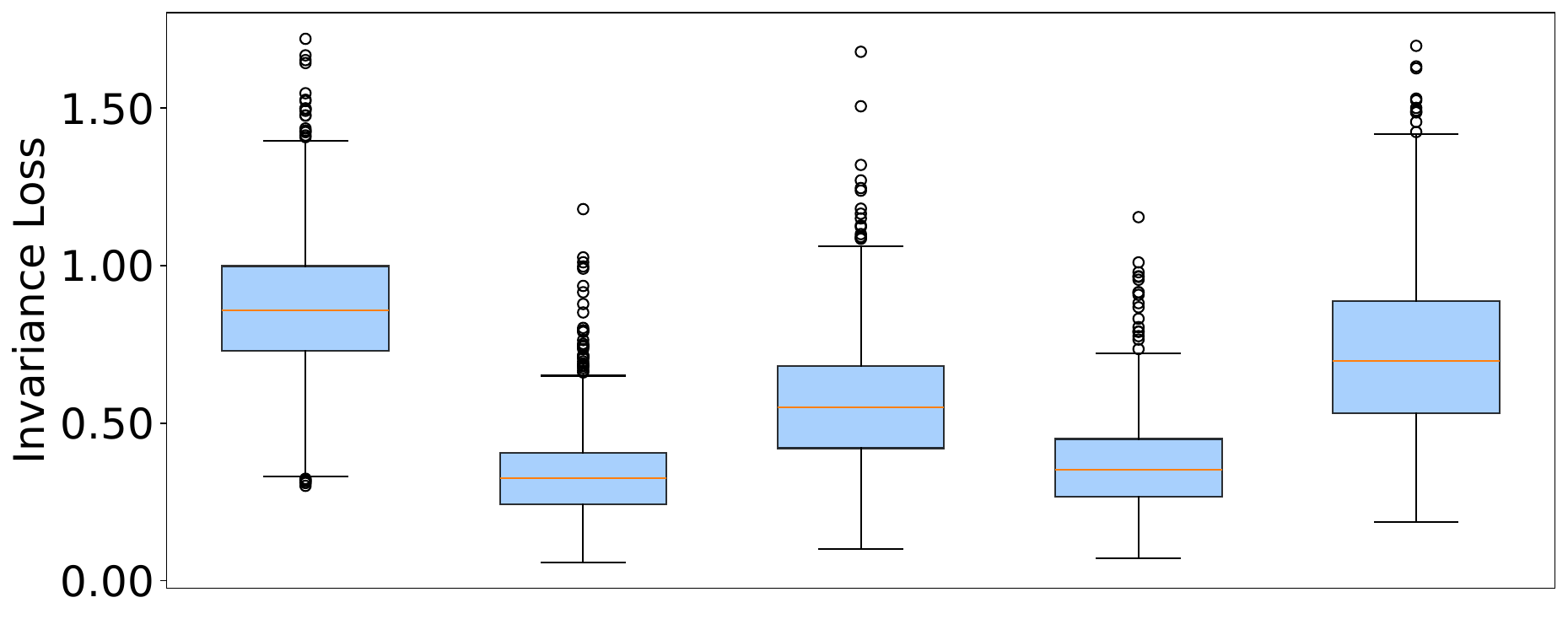} &
  \includegraphics[width=0.45\linewidth]{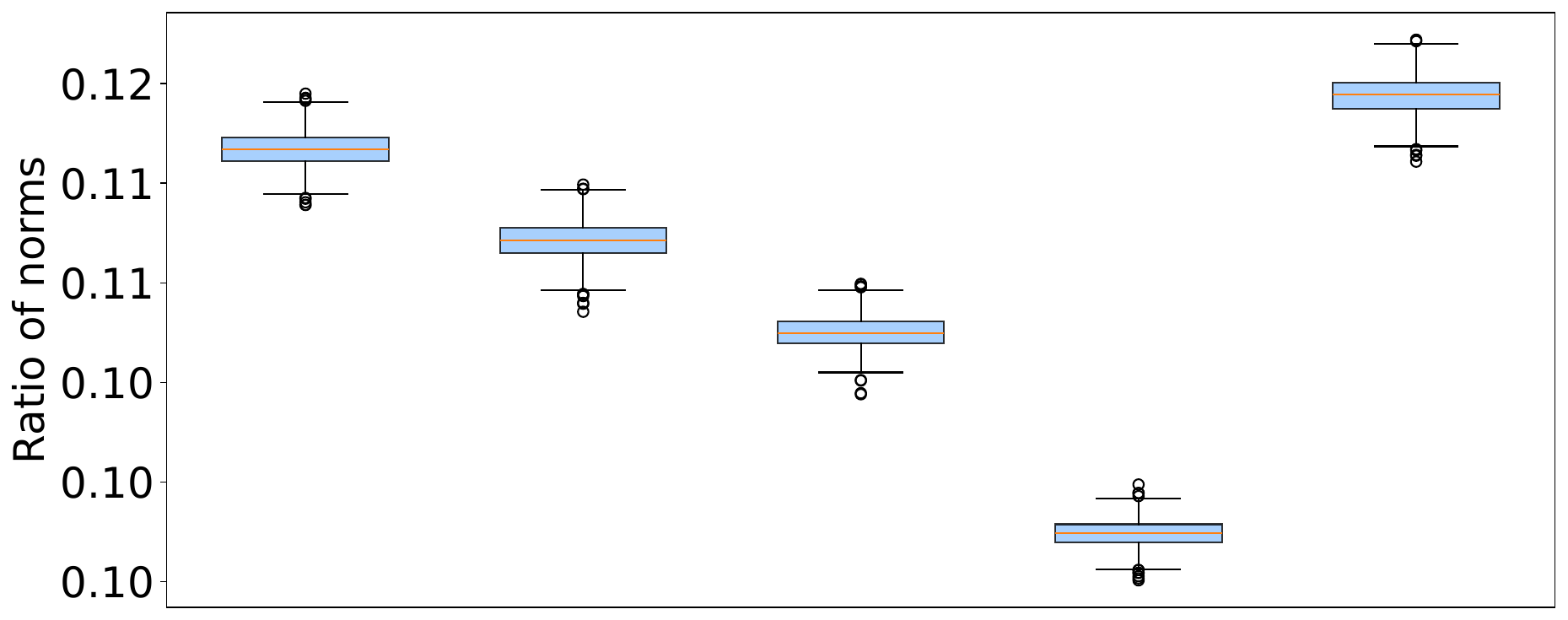} &
  \raisebox{1.2cm}{16} \\
\end{tabular}
\caption{Mechanism indicators of second layer (layer 1) of all models trained on ModelNet10 with scale augmentations.}
\label{fig:scale_modelnet_grid_layer1}
\end{figure}

\clearpage
\subsection{Equivariance}
\label{app:equivariance_results}

We provide further insight into how approximate $\operatorname{O}(3)$-equivariance is maintained throughout a forward pass and isolate first components that may be relevant to the learned equivariant mechanisms for other base angle preserving groups.

\subsubsection{$\operatorname{O}(3)$-equivariance of remaining architecture}

Once invariance is induced in a model, it cannot be destroyed by subsequent modules. The same does not hold true for equivariance: Once established, it needs to be maintained by the entire following computation to hold for the output. 

So while we have proposed a mechanism to ensure $\operatorname{O}$-equivariance in the self-attention module $\operatorname{SA}$ in \S\ref{sec:equivariance}, this property needs to be supported by the remaining architectural components as well.

The residual connections do not pose a problem here, since for any module $\operatorname{M}$ and $\mR \in \operatorname{O}(d)$, if 
$\operatorname{M}(\mX\mR)= \operatorname{M}(\mX)\mR$, then $\operatorname{M}(\mX\mR) + \mX\mR = \big(\operatorname{M}(\mX) +\mX \big) \mR$. 

The MLP on the other hand is more problematic in this context. Strict equivariance to $\operatorname{O}$ would require zero bias terms and isotropic projection matrices, which again we do not observe in trained models. So as for the value projection, we resort to measuring a proxy for the empirical equivariance error of the MLP's output by computing the invariance error of its inner product. To make the metric more comparable between modules, we normalize by a random baseline. Concretely, we divide by the error obtained when matching original samples to augmentations of random other samples. This should compensate for varying overall spread in the latent space at different layers. As we can see in Figure \ref{fig:comparison_V_MLP}, forwarding through $\operatorname{MLP}$ increases the relative invariance error of inner products, while still remaining significantly below one. 

\begin{figure}
    \centering
    \includegraphics[width=0.48\linewidth]{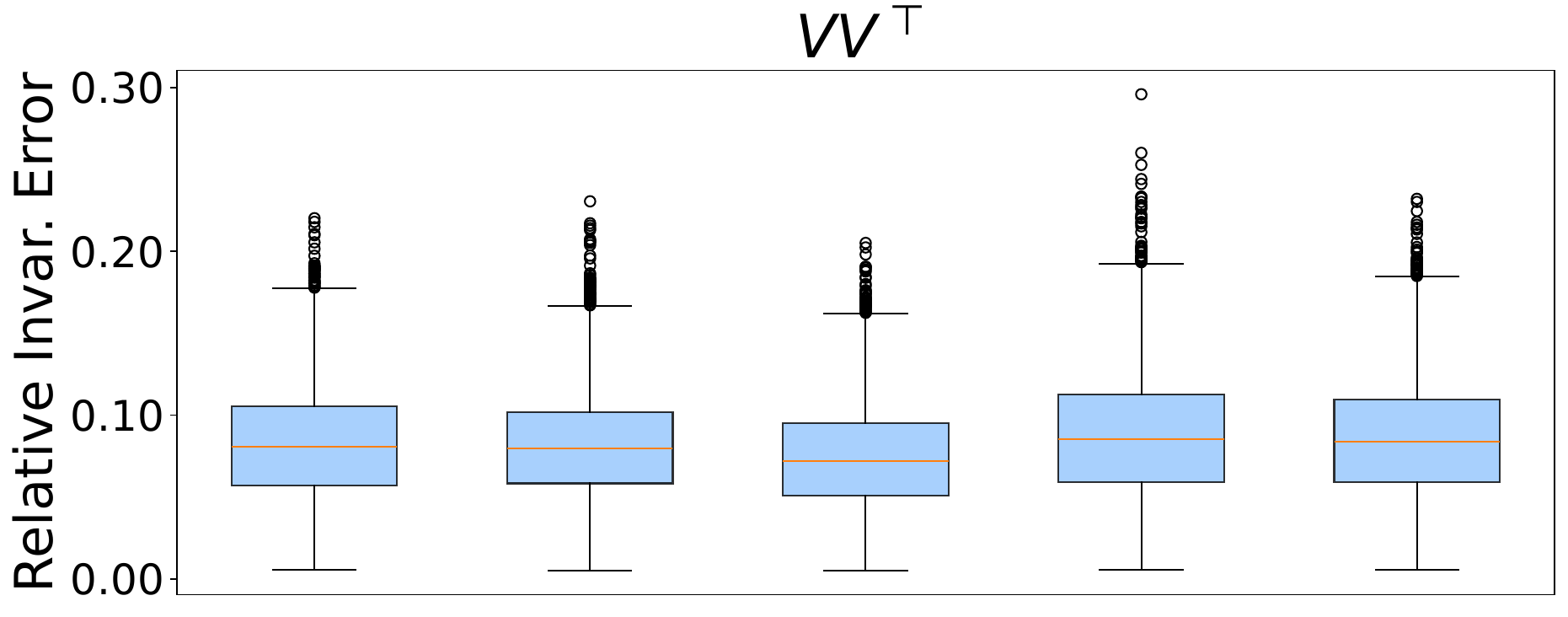} \hfill
    \includegraphics[width=0.48\linewidth]{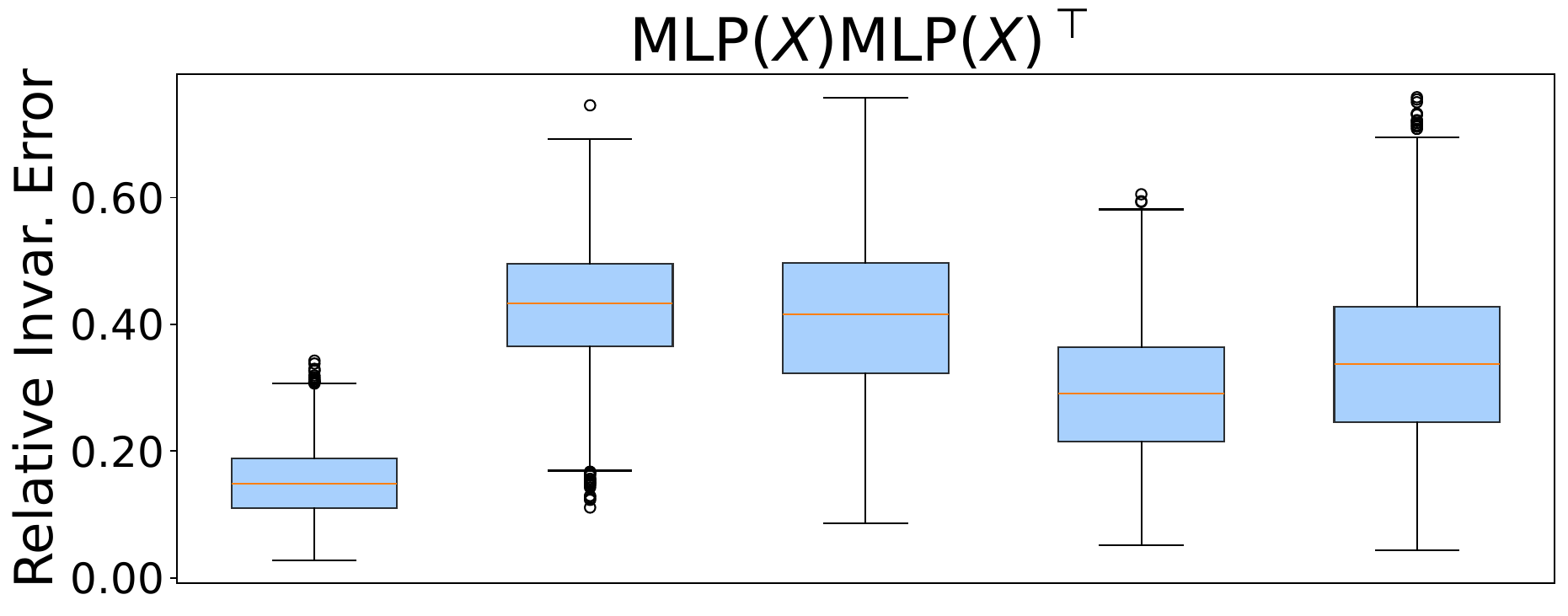}
    \caption{Comparison of equivariance of $\mV$ and $\operatorname{MLP}(\mX)$ in the first block of a Transformer trained on ShapeNet normals prediction with $16$ $\operatorname{O}(3)$ augmentations. As a proxy, we measure the invariance of inner products, divided by a random baseline. Latent representations after the MLP appear less equivariant than those after the value projection, but more equivariant than the random baseline. }
    \label{fig:comparison_V_MLP}
\end{figure}

\begin{wrapfigure}[15]{r}{0.4\textwidth}
\vspace{-6mm}
    \centering
    \begin{subfigure}[t]{0.48\linewidth}
        \centering
        \includegraphics[width=\linewidth]{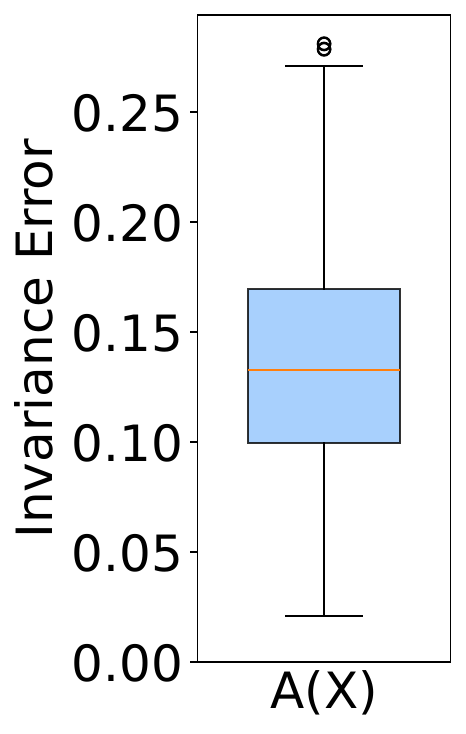}
        \caption{Scale}
    \end{subfigure}
    \hfill
    \begin{subfigure}[t]{0.45\linewidth}
        \centering
        \includegraphics[width=\linewidth]{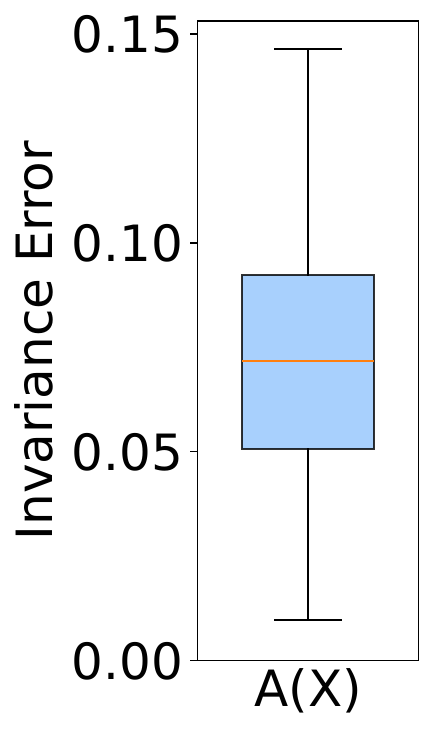}
        \caption{Translation}
    \end{subfigure}
    \caption{Results for first block of Transformers trained on ShapeNet per-part centroid prediction with $16$ $\operatorname{Scale}(3)$ (left) or $\operatorname{T}(3)$ augmentations (right) per sample. Both attention maps are approximately invariant to the respective group.}
    \label{fig:equivariance_T_Scale_A_invariant}
\end{wrapfigure}

Overall, our findings suggest that approximate equivariance is maintained by each submodule individually, instead of being introduced towards the end of the forward pass, potentially by a shared mechanism spanning all components. Wherever equivariance on the full ambient space of the input would be too restrictive on the parameter space, models rely on data-dependent approximations instead. 

\subsubsection{Other base groups:\\ attention maps are invariant}

For other base angle preserving groups, namely translation and scale, we also observe approximately invariant attention maps, see Figure \ref{fig:equivariance_T_Scale_A_invariant}. This further supports our claim that learning invariant building blocks can be helpful in learning equivariance. However as of now, the actual mechanisms and the role of the invariant attention maps remain to be identified in future work.

\section{Proofs}
\label{app:proofs}

We include the proof of our theoretical result in \S\ref{sec:mechanisms}. The numbering of the statements follows the numbering used in the paper.

\subsection{Proof of Lemma \ref{lem:orbit-wise-attention-invariance}}

\noindent\textbf{Lemma \ref{lem:orbit-wise-attention-invariance}.}
\textit{    Let $\mX\in\RR^{n\times d}$ and let $\mB = \mW_Q\mW_K^\top\in\RR^{d\times d}$ and $\mS_{\mB}(\mX)
    =
    \frac{1}{\sqrt h}\mX\mB\mX^\top$. 
    Let $\mP_n
    =
    \mI_n - \frac{1}{n}\vone_n\vone_n^\top$ denote the centering matrix acting on the columns of a score matrix. Then, for
any $\mR\in\operatorname{O}(d)$, the following are equivalent: $
     \A(\mX\mR)=\A(\mX)$, and $
        \left(
        \mS_{\mB}(\mX\mR)-\mS_{\mB}(\mX)
    \right)\mP_n
    =
    \mZeros$.
Equivalently, we have
\begin{equation*}
        \mX
    \left(
        \mR\mB\mR^\top-\mB
    \right)
    \mX^\top
    \mP_n
    =
    \mZeros .
\end{equation*}
In row-wise notation, this means that for all query indices $i$ and all key indices $j,k$, $    \vx_i^\top\left(\mR\mB\mR^\top-\mB\right) (\vx_j-\vx_k)=0 .$
Thus attention-map invariance along an augmentation orbit only requires the row-wise score differences to be invariant. }

\begin{proof}
Let $ \mS = \mS_{\mB}(\mX)$, $\widetilde{\mS} = \mS_{\mB}(\mX\mR),$ and $\mD = \widetilde{\mS}-\mS$.
Note that  for two row vectors
$\vu,\vv\in\RR^n$,
\[
    \operatorname{softmax}(\vu)
    =
    \operatorname{softmax}(\vv)
\]
if and only if there exists a scalar $c\in\RR$ such that $
    \vu-\vv = c\vone_n^\top$.
If $\vu=\vv+c\vone_n^\top$, then the factor $e^c$ cancels between the numerator and denominator of the softmax. Conversely, if the softmaxes are equal, then for every coordinate $j$, $
    \frac{e^{u_j}}{\sum_{\ell=1}^n e^{u_\ell}}
    =
    \frac{e^{v_j}}{\sum_{\ell=1}^n e^{v_\ell}}$.

Taking logarithms gives $
    u_j-v_j
    =
    \log\sum_{\ell=1}^n e^{u_\ell}
    -
    \log\sum_{\ell=1}^n e^{v_\ell}$,
which is independent of $j$. Hence $\vu-\vv$ is constant across coordinates.

Since $\A(\mX)$ is obtained by applying softmax row-wise to $\mS_{\mB}(\mX)$, we therefore have $
    \A(\mX\mR)=\A(\mX)$
if and only if, for every row index $i$, there exists a scalar $c_i$ such that $
    \mD_{i,:} = c_i\vone_n^\top$.
Equivalently, every row of $\mD$ is constant across columns.

Now recall that $
    \mP_n
    =
    \mI_n - \frac{1}{n}\vone_n\vone_n^\top$
is the centering matrix. For any row vector $\vz\in\RR^n$, $
    \vz\mP_n
    =
    \vz - \frac{1}{n}\left(\sum_{j=1}^n z_j\right)\vone_n^\top $.
Thus $\vz\mP_n=\vzero^\top$ if and only if $\vz$ is constant across coordinates. Applying this row by row, we obtain 
\begin{equation*}
    \A(\mX\mR)=\A(\mX)
    \quad\Longleftrightarrow\quad
    \mD\mP_n=\mZeros.
\end{equation*}

By substituting the definition of $\mD$, we have $
    \left(
        \mS_{\mB}(\mX\mR)-\mS_{\mB}(\mX)
    \right)\mP_n
    =
    \mZeros$.

Since $
    \mS_{\mB}(\mX)
    =
    \frac{1}{\sqrt h}\mX\mB\mX^\top$  and $
    \mS_{\mB}(\mX\mR)
    =
    \frac{1}{\sqrt h}
    (\mX\mR)\mB(\mX\mR)^\top
    =
    \frac{1}{\sqrt h}
    \mX\mR\mB\mR^\top\mX^\top$,
we have $
    \mS_{\mB}(\mX\mR)-\mS_{\mB}(\mX)
    =
    \frac{1}{\sqrt h}
    \mX
    \left(
        \mR\mB\mR^\top-\mB
    \right)
    \mX^\top$. 

Since the scalar factor $1/\sqrt h$ is nonzero, the condition above is equivalent to $
    \mX
    \left(
        \mR\mB\mR^\top-\mB
    \right)
    \mX^\top
    \mP_n
    =
    \mZeros$.

Finally, the row-wise form follows by looking at differences between two columns $j$ and $k$ of the same row. The $(i,j)$ entry of $
    \mX
    \left(
        \mR\mB\mR^\top-\mB
    \right)
    \mX^\top$
is $
    \vx_i^\top
    \left(
        \mR\mB\mR^\top-\mB
    \right)
    \vx_j$.
Henece, each row is constant across columns if and only if, for all $i,j,k$, $
    \vx_i^\top
    \left(
        \mR\mB\mR^\top-\mB
    \right)
    \vx_j
    -
    \vx_i^\top
    \left(
        \mR\mB\mR^\top-\mB
    \right)
    \vx_k
    =
    0$,
which is  $
    \vx_i^\top
    \left(
        \mR\mB\mR^\top-\mB
    \right)
    (\vx_j-\vx_k)
    =
    0$.

\end{proof}

\subsection{Proof of Lemma \ref{lemma:translation}}

\noindent\textbf{Lemma \ref{lemma:translation}.}
\textit{    Let $\operatorname{\Lambda}: \RR^{d \times n} \to \R^{n \times n}$ be translation invariant with row-stochastic outputs. Set $\mLambda = \operatorname{\Lambda}(\mX)$. Then $f(\mX) = g(\mX - \mLambda^\top \mX)$ is translation invariant for any function $g$.}

\begin{proof}
    Let $\vt \in \R^d$ and $\mT = \vone_n \vt^\top$ the matrix with $\vt$ in each row. Then $\operatorname{\Lambda}(\mX + \mT) = \operatorname{\Lambda}(\mX) = \mLambda$, such that
    \begin{equation*}
        f(\mX + \mT) = \mX + \mT - \mLambda^\top (\mX + \mT)
        = \mX - \mLambda^\top \mX - \mLambda^\top \mT + \mT = \mX - \mLambda^\top \mX = f(\mX).
    \end{equation*}
\end{proof}

\subsection{Proof of Lemma \ref{lemma:scale}}

\noindent\textbf{Lemma \ref{lemma:scale}.}
\textit{    Let $\SA(\mX)=\A(\mX)\mX\mW_V$ be a single attention-only layer. 
    If for some $c>0$, $\A(c\mX)=\A(\mX)$, then $\SA(c\mX)=c\,\SA(\mX)$.
Hence, an attention-only stack whose attention maps are stable over the scale augmentation range is positively homogeneous $\Phi(c\mX)=c\,\Phi(\mX)$.
If the final prediction is made by taking an $\operatorname{argmax}$ over logits that are scaled by this same positive factor, then the predicted class is invariant to $c$.}
\begin{proof}
For a single attention-only layer, we compute directly. Since 
    $\SA(\mX)=\A(\mX)\mX\mW_V$,
we have $
    \SA(c\mX)
    =
    \A(c\mX)(c\mX)\mW_V$.
By assumption, $\A(c\mX)=\A(\mX)$. Therefore $
    \SA(c\mX)
    =
    \A(\mX)(c\mX)\mW_V
    =
    c\,\A(\mX)\mX\mW_V
    =
    c\,\SA(\mX)$.
Thus the layer is positively homogeneous of degree one on any scale orbit on which its attention map is unchanged.

We now extend this argument to a stack of attention-only layers. Let $
    \mH^{(0)}(\mX)=\mX$,
and $    \mH^{(\ell+1)}(\mX)
    =
    \SA_\ell(\mH^{(\ell)}(\mX))$,
where $
    \SA_\ell(\mH)
    =
    \A_\ell(\mH)\mH\mW_{V,\ell}$.
Assume that, over the scale augmentation range under consideration, the attention maps are stable at every layer, meaning $
    \A_\ell(\mH^{(\ell)}(c\mX))
    =
    \A_\ell(\mH^{(\ell)}(\mX))$
for every layer $\ell$.

We prove by induction that $
    \mH^{(\ell)}(c\mX)
    =
    c\,\mH^{(\ell)}(\mX) $
for all layers $\ell$. Then, we have $
    \mH^{(0)}(c\mX)=c\mX=c\,\mH^{(0)}(\mX)$.
Now suppose that $
    \mH^{(\ell)}(c\mX)
    =
    c\,\mH^{(\ell)}(\mX)$.
Then, 
\[
\begin{aligned}
    \mH^{(\ell+1)}(c\mX)
    &=
    \SA_\ell(\mH^{(\ell)}(c\mX)) \\
    &=
    \A_\ell(\mH^{(\ell)}(c\mX))
    \mH^{(\ell)}(c\mX)
    \mW_{V,\ell} \\
    &=
    \A_\ell(\mH^{(\ell)}(\mX))
    \left(c\,\mH^{(\ell)}(\mX)\right)
    \mW_{V,\ell} \\
    &=
    c\,
    \A_\ell(\mH^{(\ell)}(\mX))
    \mH^{(\ell)}(\mX)
    \mW_{V,\ell} \\
    &=
    c\,\mH^{(\ell+1)}(\mX).
\end{aligned}
\]
Hence, by induction, the whole attention-only stack is positively homogeneous $ 
    \Phi(c\mX)=c\,\Phi(\mX)$.

The same conclusion holds if the attention-only stack includes residual connections of the form $
    \mH^{(\ell+1)}(\mX)
    =
    \mH^{(\ell)}(\mX)
    +
    \SA_\ell(\mH^{(\ell)}(\mX))$.
    
We see that using the same induction hypothesis and the single-layer homogeneity above,
\[
\begin{aligned}
    \mH^{(\ell+1)}(c\mX)
    &=
    \mH^{(\ell)}(c\mX)
    +
    \SA_\ell(\mH^{(\ell)}(c\mX)) \\
    &=
    c\,\mH^{(\ell)}(\mX)
    +
    c\,\SA_\ell(\mH^{(\ell)}(\mX)) \\
    &=
    c\,\mH^{(\ell+1)}(\mX).
\end{aligned}
\]
Thus residual addition preserves positive homogeneity.

Finally, suppose the final logits satisfy $
    \vz(c\mX)=c\,\vz(\mX)$  with $c>0$. Then, we have $
    \argmax_k z_k(c\mX)
    =
    \argmax_k c\,z_k(\mX)$.
Multiplication by a positive scalar preserves the ordering of the coordinates, thus, $
    \argmax_k c\,z_k(\mX)
    =
    \argmax_k z_k(\mX)$.
Therefore the predicted class is invariant to the scale factor $c$.
\end{proof}


\end{document}